\documentclass[11pt]{article}

\usepackage[margin=1in]{geometry}

\usepackage[numbers,sort&compress]{natbib}

\usepackage[utf8]{inputenc} 
\usepackage[T1]{fontenc}    
\usepackage{hyperref}       
\hypersetup{hidelinks}
\usepackage{url}            
\usepackage{booktabs}       
\usepackage{makecell}       
\usepackage{tabularx}       
\usepackage[ruled,vlined,linesnumbered]{algorithm2e} 
\usepackage{amsmath}        
\usepackage{amsfonts}       
\usepackage{dsfont}         
\usepackage{amsthm}         
\usepackage{nicefrac}       
\usepackage{microtype}      
\usepackage{xcolor}
\usepackage{pifont}
\usepackage{graphicx}
\usepackage{subcaption}     
\usepackage{float}          
\usepackage{placeins}
\usepackage{enumitem}
\usepackage{tikz}
\usetikzlibrary{arrows.meta,positioning,calc,fit}

\setlist[enumerate]{leftmargin=0pt, labelsep=0.5em, align=left}

\newcommand{\bmedit}[1]{}
\DeclareMathOperator*{\argmin}{arg\,min}

\newtheorem{theorem}{Theorem}[section]
\newtheorem{proposition}[theorem]{Proposition}
\newtheorem{lemma}[theorem]{Lemma}
\newtheorem{corollary}[theorem]{Corollary}

\theoremstyle{definition}
\newtheorem{definition}[theorem]{Definition}

\newtheorem{example}[theorem]{Example}

\theoremstyle{remark}
\newtheorem{remark}[theorem]{Remark}

\newcommand{\cmark}{\ding{51}} 
\newcommand{\xmark}{\ding{55}} 

\begin{document}

\title{Sparse Dictionary Learning for Measure-Valued Data via Linearized Optimal Transport: Efficient Algorithms and Sample Complexity}
\author{
Brendan Mallery$^{1}$ \quad Shuchin Aeron$^{2,3,*}$ \quad James M. Murphy$^{4,*}$\\[0.6em]
\small $^1$Independent Researcher\\
\small $^2$Department of Electrical and Computer Engineering, Tufts University\\
\small $^3$The NSF AI Institute for Artificial Intelligence and Fundamental Interactions (IAIFI)\\
\small $^4$Department of Mathematics, Tufts University\\[0.4em]
\small $^*$Equal contribution.
}
\date{}
\maketitle

\begin{abstract}
We study dictionary learning for probability measure-valued data by embedding measures into Hilbert spaces using features derived from optimal transport geometry.  We learn data-adapted dictionaries through sparse autoencoder parameterizations, producing fast, scalable representations of probability measures as sparse finite-dimensional codes. The resulting framework combines the rich, geometric structure of optimal transport with the interpretability and efficiency of sparse representation learning. We demonstrate its utility across diverse data modalities, including hyperspectral images, geometric point clouds, and document embeddings in large language model latent spaces. We also prove generalization bounds for sparse dictionary learning objectives on broad classes of $L^2$-valued measure embeddings, expressed in terms of the metric complexity of the atom class. For optimal transport and entropic transport map dictionaries, we derive new covering number bounds that yield explicit rates.  Together, our computational and theoretical results support a new, scalable model for representation learning in the space of probability measures via embedding into Hilbert spaces.
\end{abstract}

\section{Introduction}

There has been much recent interest in learning and inference on probability measure-valued data via methods that utilize the rich, nonlinear geometry of the Wasserstein-2 space of probability measures \citep{arjovsky2017wasserstein, schiebinger2019optimal, lambert2022variational, lipmanflow2023, jiang2025algorithms}.  An emerging approach to data analysis in Wasserstein-2 space is \emph{Wasserstein dictionary learning (WDL):} learning data-driven \emph{dictionaries of probability measures} that are representative of the underlying dataset. These dictionaries allow for the \textit{synthesis} of new samples from latent codes, as well as the efficient \textit{analysis} of data through their representations with respect to the dictionary, enabling natural notions of dimensionality reduction, sparse representation, and compression \citep{agueh2011barycenters, bonneel2016wasserstein, schmitz2018wasserstein, cheng2020optimal, cheng2021dynamical, werenski2022measure, mueller2023geometrically,  cheng2023nonparametric, gunsilius2024tangential, werenski2025linearized, mallery2025synthesis}.
Unfortunately, WDL suffers from the well-known computational and statistical challenges
associated with scalable learning in the Wasserstein-2 space \citep{hutter2021minimax, altschuler2022wasserstein} as well as foundational challenges, such as understanding what data and modeling tasks benefit from a WDL-based approach.

\begin{figure}[t]
    \centering
    \resizebox{\textwidth}{!}{\input{figures/lwdl_schematic}}
    \caption{Schematic of linearized Wasserstein dictionary learning. We define an embedding $\mathcal{E}_\rho$ from the nonlinear
    Wasserstein space into $L^2(\rho)$, viewed as the tangent space at some base measure $\rho$. In the same linearized space, a set of dictionary elements
    $\{S_j\}_{j=1}^m$ are learned; their sparse
    nonnegative combinations approximate the embeddings, which may then be synthesized into reconstructed measures.}
    \label{fig:lwdl-schematic}
\end{figure}

In this paper, we introduce novel formulations of \emph{linearized Wasserstein Dictionary Learning (LWDL)} that promote sparsity in the latent representation.  Our methods proceed via nonparametric embedding of probability measures into a suitable Hilbert space via the \emph{linearized optimal transport (LOT)} framework \citep{wang2013linear, moosmuller2023linear, delalande2023quantitative, werenski2025linearized}. We exploit the embedded Hilbert space structure to define a multi-layer sparse autoencoder \citep{daubechies2010iteratively, gregor2010learning} parameterization of the ensuing dictionary learning problem, which enables fast learning of the underlying atoms when compared to WDL methods based on the nonlinear Wasserstein geometry. Theoretically, the sample complexity of LWDL can be analyzed through the flat geometry of the underlying Hilbert space structure, enabling explicit generalization bounds that depend on covering numbers of certain subsets of the underlying Hilbert space.  Our main contributions are:
\begin{enumerate}[
    leftmargin=*,
    itemsep=0pt,
    topsep=2pt,
    parsep=5pt,
    label=\arabic*).
]
\item Introducing novel LWDL models that leverage sparsity-promoting penalties on the latent codes.
\item Efficient parametrizations of our linearized models via sparse autoencoders (SAEs) that enable faster representation learning compared to baselines.
\item Applications to point clouds, text documents, and hyperspectral image processing that reveal and leverage intrinsic structure in these problems.\footnote{Code to recreate the experiments is available on \href{https://github.com/brendanmallery9/Linearized-Wasserstein-Dictionary-Learning}{\textcolor{blue}{GitHub}}.}
\item Novel generalization bounds for LWDL that illustrate the dependence on the linearization model.
\end{enumerate}

 \section{Models, Methods, and Related Work}\label{sec:ModelsMethodsRelatedWork}

\subsection{Notation and Background}
We consider data $\{\mu_{i}\}_{i=1}^{n}$ lying in $\mathcal{P}(\mathbb{R}^{d})$, the space of probability measures over $\mathbb{R}^{d}$.  We let $M_{p}(\mu):=\int_{\mathbb{R}^{d}}\|x\|^{p}d\mu(x)$ be the $p^{th}$ moment of a probability measure $\mu$.  We let $\mathcal{P}_{2}(\mathbb{R}^{d})\subset\mathcal{P}(\mathbb{R}^{d})$ be the probability measures with finite second moment ($M_{2}(\mu)<\infty$).  For $\rho\in\mathcal{P}(\mathbb{R}^{d})$, we let $L^{2}(\rho;\mathbb{R}^{d}\rightarrow\mathbb{R}^{\ell})$ consist of functions $f:\mathbb{R}^{d}\rightarrow\mathbb{R}^{\ell}$ such that $\int_{\mathbb{R}^{d}}\|f(x)\|_{2}^{2}d\rho(x)<\infty$; we will typically write $L^2(\rho)$ when $d,\ell$ are clear from context.  Let $\Delta^{m-1}:=\{\lambda \in \mathbb{R}^m_{\geq 0}\;| \sum_{j=1}^{m}\lambda_{j}=1\}$ denote the standard simplex in $\mathbb{R}^{m}$.  For two measures $\mu,\nu$, we let $\Pi(\mu,\nu)\subset\mathcal{P}_{2}(\mathbb{R}^d\times\mathbb{R}^d)$ be the set of couplings with marginals $\mu$ and $\nu$.  We will use the notation $X_{1},\dots,X_{n}\sim \mu$ to mean that $X_{1},\dots,X_{n}$ is an i.i.d. sample from probability measure $\mu$.  We let $\mathcal{G}(\mathbb{R}^d)$ be the set of subgaussian probability measures on $\mathbb{R}^{d}$; see Appendix \ref{appendix:AdditionalTechnicalBackground} for details.  We let $|\Omega|$ denote the diameter of a set $\Omega$.

Given two probability measures $\mu,\nu\in\mathcal{P}_{2}(\mathbb{R}^{d})$, their \emph{Wasserstein-2 distance} (with respect to the ground cost $\tfrac{1}{2}\|x-y\|^2$) \citep{villani2009optimal} is:
$W_2(\mu,\nu):=\min_{\zeta\in \Pi(\mu,\nu)} \sqrt{\int \tfrac{1}{2}\|x-y\|^2d\zeta(x,y)}.$  By Brenier's theorem \citep{brenier1991polar}, if $\mu$ is absolutely continuous (a.c.) then there exists a
\emph{(Wasserstein-2) optimal transport map} $T_{\mu\rightarrow\nu}:\mathbb{R}^{d}\rightarrow\mathbb{R}^{d}$ between $\mu$ and $\nu$ satisfying $W_{2}^{2}(\mu,\nu)=\inf_{T_\#\mu=\nu}\int\tfrac{1}{2}\|T(x)-x\|^{2}d\mu(x)=\int\tfrac{1}{2}\|T_{\mu\rightarrow\nu}(x)-x\|^{2}d\mu(x)$.  See Appendix \ref{appendix:AdditionalTechnicalBackground} for details.

For any $\epsilon>0$, the \emph{entropy-regularized Wasserstein-2 cost} (EOT cost) is \citep{nutz2021introduction}
\[
W_{2,\epsilon}(\mu,\nu)
:=\min_{\zeta\in\Pi(\mu,\nu)}
\Bigl(\int \tfrac{1}{2}\|x-y\|^2\,d\zeta(x,y)+\epsilon KL(\zeta\|\mu\otimes\nu)\Bigr)^{1/2}.
\]
where $KL(\zeta\|\mu\otimes\nu)$ is the KL-divergence/relative entropy \citep{cover1999elements}. For $\pi_{\epsilon}$ the solution to the EOT cost, define the \textit{entropic OT (EOT) map} from $\mu$ to $\nu$ to be $T_{\mu\rightarrow\nu}^{\epsilon}(x)=\mathbb{E}_{(X,Y)\sim\pi_{\epsilon}}[Y | X=x]$; see Appendix \ref{appendix:AdditionalTechnicalBackground} for further details and alternative constructions.  EOT maps are smooth surrogates of OT maps, with $T_{\mu\rightarrow\nu}^{\epsilon}$ converging to $T_{\mu\rightarrow\nu}$ as $\epsilon\rightarrow 0^{+}$ \citep{pooladian2021entropic} under mild assumptions.  Fix an a.c. probability measure $\rho$.  Then the \textit{$\rho$-linearized optimal transport metric} ($\rho$-LOT) is $d_{LOT}(\mu,\nu):=\|T_{\rho\rightarrow\mu}-T_{\rho\rightarrow\nu}\|_{L^2(\rho)}$ \citep{wang2013linear}.  See Appendix \ref{appendix:Background on LOT} for details.

 \subsection{Linearized Models for WDL}\label{sec:linearized models for wdl}

 Fix an a.c. probability measure $\rho\in\mathcal{P}_2(\mathbb{R}^{d})$ and consider an embedding function $\mathcal{E}_\rho: \mathcal{P}_2(\mathbb{R}^d)\rightarrow L^2(\rho)$ that embeds a measure into $L^2(\rho)$.  Let $ \mathbb{T}\subseteq L^2(\rho)$.  Given a probability measure over $\mathcal{P}_{2}(\mathbb{R}^{d})$, $\Theta\in\mathcal{P}(\mathcal{P}_{2}(\mathbb{R}^{d}))$, the corresponding \emph{$(m,\mathbb{T}, \Theta, \rho)$-Linearized Measure-Valued Dictionary Learning  (LMDL)} problem is:
\begin{equation}\label{eqn:LDL objective}
\min_{(S_j)_{j=1}^m\in\mathbb{T}^m}
\underset{\mu\sim\Theta}{\mathbb{E}}\!
\Bigl[
\min_{\lambda\geq0}
\tfrac12
\|\mathcal{E}_\rho(\mu)-\sum_{j=1}^m\lambda_jS_j\|_{L^2(\rho)}^2+g(\lambda)
\Bigr].
\end{equation}
where $g:\mathbb{R}_{\ge 0}^{m} \rightarrow \mathbb{R}_{\ge 0}$ is a sparsity-inducing penalty function, e.g. $g(\lambda):=c\|\lambda\|_{1}, c > 0$.
Thus, LMDL learns $m$ atoms whose sparse, non-negative linear combinations approximate well the embedded measures.  This is an analogue of sparsely used overcomplete dictionary learning for probability-measure-valued data.  We focus on several choices of $\mathcal{E}_\rho$ and $\mathbb{T}$, motivated by the Wasserstein-2 geometry, which we give the name \emph{linearized Wasserstein dictionary learning (LWDL)}, laid out in Table \ref{tab:lwdl-embeddings}.

\begin{table}[t]
\centering
\small
\begin{tabular}{lll}
\toprule
Variant & $\mathcal E_\rho(\mu)$ & Setting \\
\midrule
LWDL-OT  & $T_{\rho\to\mu}$ & exact, 1D \\
LWDL-EOT & $T^\epsilon_{\rho\to\mu}$& general $d$ \\
LWDL-KP  & $f_{\rho\to\mu}$ & high $d$ \\
\bottomrule
\end{tabular}
\caption{LWDL embeddings used in this work.}
\label{tab:lwdl-embeddings}
\end{table}
One may interpret LWDL-OT, LWDL-EOT, and LWDL-KP as different linearizations of the WDL problem; see Appendix \ref{appendix:LWDL as WDL Linearization} for details. The different formulations are best suited for different modeling contexts: LWDL-OT keeps exact optimal transport-map structure but is implemented only in one dimension; LWDL-EOT gives a finite-dimensional map parameterization that works in dimensions higher than one; LWDL-KP replaces vector-valued maps by scalar potentials, which is useful for very high-dimensional coding.

\section{Algorithms for LWDL}
\label{sec:Algorithms}

Given sampled measures $\{\mu_i\}_{i=1}^N$, we implement LWDL using a shared LISTA-type encoder for the inner sparse-coding problem and method-specific atom parameterizations for LWDL-OT, LWDL-EOT, and LWDL-KP. We train the encoder and atoms jointly using the empirical version of (\ref{eqn:LDL objective}).

\begin{algorithm}[t]
\caption{Generic LWDL training algorithm}
\label{alg:generic-lwdl}

\KwIn{
Samples $\{\mu_i\}_{i=1}^N$;
reference measure $\rho$;
embedding map $\mathcal E_\rho$;
dictionary atoms
$\mathcal S^{\theta_a}
=\bigl(S_1^{\theta_a},\dots,S_m^{\theta_a}\bigr)$,
$S_j^{\theta_a}\in L^2(\rho)$,
parameterized by $\theta_a$;
number of encoder layers $L$;
encoder parameters $\theta_c$;
activation maps
$\{\chi_{r,\theta_c}\}_{r=0}^{L-1}$;
sparsity parameter $c>0$;
batch size $B$;
parameters $\theta=(\theta_a,\theta_c)$.
}

\vspace{0.5em}

Compute the method-specific embeddings $\mathcal E_\rho(\mu_i)\in L^2(\rho), i=1,\dots,N,$
using the construction for LWDL-OT, LWDL-EOT, or LWDL-KP
given in Algorithms~\ref{alg:OTSAE},
\ref{alg:entropic-lwdl}, and
\ref{alg:KP_SAE_embedding}, respectively.

\While{not converged}{

    Sample a minibatch
    $\mathcal B\subset\{1,\dots,N\}$ with $|\mathcal B|=B$.

    Construct the method-specific dictionary atoms $\mathcal S^{\theta_a}=\bigl(S_1^{\theta_a},\dots,S_m^{\theta_a}\bigr)$ using the parameterization specified in
    Algorithm~\ref{alg:OTSAE},
    \ref{alg:entropic-lwdl}, or
    \ref{alg:KP_SAE_embedding}.

    \For{$i\in\mathcal B$}{
        Compute the sparse nonnegative code using
        Algorithm~\ref{alg:generic-lista-encoder}:
        \[
            \lambda(\mu_i;\theta_a,\theta_c)
            \gets
            \operatorname{LISTAEncoder}
            \bigl(
                \mathcal E_\rho(\mu_i),
                \mathcal S^{\theta_a};
                \theta_c
            \bigr).
        \]
    }

    Compute
    \[
    \mathcal L_{\mathcal B}(\theta)
    \gets
    \frac{1}{|\mathcal B|}
    \sum_{i\in\mathcal B}
    \left[
        \frac12
        \left\|
        \mathcal E_\rho(\mu_i)
        -
        \sum_{j=1}^m
        \lambda_j(\mu_i;\theta_a,\theta_c)
        S_j^{\theta_a}
        \right\|_{L^2(\rho)}^2
        +
        c
        \left\|
        \lambda(\mu_i;\theta_a,\theta_c)
        \right\|_1
    \right].
    \]

    Update $\theta=(\theta_a,\theta_c)$ by batched SGD
    using AdamW and a cosine learning-rate schedule.
}

\vspace{0.5em}

\KwOut{
Learned parameters $\theta=(\theta_a,\theta_c)$;
dictionary atoms $\mathcal S^{\theta_a}$;
sparse nonnegative codes
$\{\lambda(\mu_i;\theta_a,\theta_c)\}_{i=1}^N$.
}

\end{algorithm}

Our meta-algorithm is Algorithm \ref{alg:generic-lwdl}. Here, $\theta_a$ parameterizes the atoms $\mathcal S^{\theta_a}=(S_1^{\theta_a},\dots,S_m^{\theta_a})$, while $\theta_c$ parameterizes a shared encoder. Given $\mathcal E_\rho(\mu)$, the encoder forms the correlation vector
\[
h(\mu,\theta_a)=\bigl(\langle S_j^{\theta_a},\mathcal E_\rho(\mu)\rangle_{L^2(\rho)}\bigr)_{j=1}^m.
\]
Starting from $\lambda^{(0)}=0$, it then applies the $L$ learned updates
\begin{equation}\label{eqn:body LISTA recursion}
\begin{aligned}
\lambda^{(r+1)}=\chi_{r,\theta_c}\!\bigl(&A_{r,\theta_c}h(\mu,\theta_a)
+B_{r,\theta_c}\lambda^{(r)}\\
&+b_{r,\theta_c}\bigr),
\end{aligned}
\end{equation}
and returns $\lambda(\mu;\theta_a,\theta_c)=\lambda^{(L)}\in\mathbb R_{\geq0}^m$.  This encoder is LISTA-type \citep{gregor2010learning}: it retains an iterative affine-thresholding form, but its weights and activations are not tied to a fixed ISTA update.  The decoder is the positive combination $\widehat{\mathcal E}_\rho(\mu;\theta_a,\theta_c)=\sum_{j=1}^m\lambda_j(\mu;\theta_a,\theta_c)S_j^{\theta_a}$; see Appendix \ref{appendix:LISTAEncoder} and Algorithm \ref{alg:generic-lista-encoder} for details.

Our meta-algorithm specializes into our three linearizations via the atom parameterization methods, which match the set of atoms to the choice of embedding $\mathcal E_\rho$:
\begin{itemize}[leftmargin=*,itemsep=1pt,topsep=2pt,parsep=0pt]
\item \textbf{LWDL-OT.} We restrict LWDL-OT to one-dimensional measures, taking $\mathcal E_\rho(\mu)=T_{\rho\to\mu}.$ The dictionary atoms are monotone maps parameterized by normalized positive increments, thereby preserving the structure of exact optimal transport maps (Appendix \ref{appendix:LWDL-OTDetails}; Algorithm \ref{alg:OTSAE}).

\item \textbf{LWDL-EOT.} For LWDL-EOT, each atom is parameterized by a finite-dimensional entropic transport map $T_j^{\epsilon,\theta_a}$ over a fixed target support, yielding a smooth map parameterization in arbitrary dimension. Empirically, we find that displacement fields perform better than the maps themselves, so we use $\mathcal E_\rho(\mu)=T^\epsilon_{\rho\to\mu}-\mathrm{Id},
\qquad
S_j^{\theta_a}=T_j^{\epsilon,\theta_a}-\mathrm{Id}.$ (Appendix \ref{appendix:LWDL-EOTDetails}; Algorithm \ref{alg:entropic-lwdl}.)

\item \textbf{LWDL-KP.} Finally, LWDL-KP uses the centered scalar Kantorovich potential $\mathcal E_\rho(\mu)=f_{\rho\to\mu},$ with each dictionary atom given by a scalar function on $\operatorname{supp}(\rho)$ centered according to the same convention. This replaces vector-valued transport maps with scalar functions, making the representation particularly suitable for high-dimensional settings (Appendix \ref{appendix:LWDL-KPDetails}; Algorithm \ref{alg:KP_SAE_embedding}).
\end{itemize}

We train the encoder and atoms end-to-end by minimizing
\begin{equation}\label{eqn:discrete LWDL objective}
\begin{aligned}
\min_{\theta=(\theta_a,\theta_c)}\frac1N\sum_{i=1}^N\Bigl[&
\tfrac12\left\|\mathcal E_\rho(\mu_i)-\widehat{\mathcal E}_\rho(\mu_i;\theta_a,\theta_c)\right\|_{L^2(\rho)}^2\\
&{}+c\|\lambda(\mu_i;\theta_a,\theta_c)\|_1\Bigr].
\end{aligned}
\end{equation}
We optimize (\ref{eqn:discrete LWDL objective}) by minibatch SGD with AdamW \citep{loshchilov2017decoupled} and a cosine learning-rate schedule, as summarized in Algorithm \ref{alg:generic-lwdl}. In our numerical implementations, we take the base measure to be finitely supported, $\rho=\sum_{k=1}^M w_k\delta_{x_k}$, and represent every embedded datum and dictionary atom $F\in L^2(\rho;\mathbb R^d\to\mathbb R^\ell)$ by its evaluations $(F(x_k))_{k=1}^M\in\mathbb R^{M\times\ell}$. Embeddings and atoms are represented this way throughout Algorithm \ref{alg:generic-lwdl}; consequently, the encoder inner products, decoder combinations, and reconstruction losses are all finite-dimensional, with $L^2(\rho)$ quantities reducing to weighted sums such as
$\|F-G\|_{L^2(\rho)}^2=\sum_{k=1}^M w_k\|F(x_k)-G(x_k)\|_{\mathbb R^\ell}^2$.
Here $\ell=d$ for the map embeddings used by LWDL-OT and LWDL-EOT, while $\ell=1$ for the potential embeddings used by LWDL-KP. Full algorithms are provided in Appendix \ref{appendix:AlgorithmDetails}.

\subsection{Related Work}


A table of related methods to our proposed framework appears in Table \ref{tab:RelatedWork} in Appendix \ref{appendix:RelatedWorkTable}.  The most important comparison is WDL \citep{schmitz2018wasserstein}, which considers, for finite, deterministic training data set $\{\mu_{i}\}_{i=1}^{N}$, $\displaystyle\argmin_{\mathcal{V}\in \mathcal{P}_2(\mathbb{R}^d)^m}\min_{\{\lambda_{i}\}_{i=1}^{N}\subset \Delta^{m-1}}\sum_{i=1}^{N}W_2^{2}\left(\mu_{i},\mu_{\lambda_{i},\mathcal{V}}\right).$
This was generalized in \citet{mueller2023geometrically} to consider geometric regularization on the learning problem to enforce that learned reference measures lie near the data points that are represented using them. These approaches work in the (nonlinear) Wasserstein-2 geometry and are thus very slow, as the synthesis problem can only be solved iteratively by fixed point methods, requiring many expensive computations of the Wasserstein-2 distance; see Table \ref{tab:wdl-comparison-results} for a runtime comparison. Furthermore, these methods are fixed-support, meaning the data, atoms and synthesized reconstructions are distributions on a fixed, finite set of points, as opposed to being allowed to vary freely in the domain. Finally, the complex geometry of the Wasserstein-2 metric also renders theoretical analysis of the problem difficult. Several works have considered fixing reference measures and learning coefficients that optimally represent observed data as Wasserstein barycenters (or entropy-regularized variants thereof) \citep{bonneel2016wasserstein, werenski2022measure, mallery2025synthesis}.  These methods work in the fully nonlinear geometry of the Wasserstein space and do not learn reference measures.  \citet{gunsilius2024tangential, werenski2025linearized} consider learning optimal coefficients with respect to the LOT metric for a fixed set of reference measures.  This can be understood as solving (\ref{eqn:LDL objective}) with no optimization over $\mathcal{V}$ and with $g \equiv 0$.  A related framework is developed in \citet{do2025sparse} with explicit sparsity penalties on $\lambda$, achieved via greedy algorithms in learning the reconstructions from the (fixed) reference measures.

\section{Numerical Results}\label{sec:numerical results}

\subsection{Reconstruction and Identifiability on a Synthetic LBCM}\label{sec:recovering data from synthetic}
We evaluate the proposed algorithms on a synthetic recovery problem in which the generating dictionary is known; details appear in Appendix \ref{appendix:SyntheticLBCMDetails}. We select five MNIST transport-map atoms corresponding to digits $\mathcal D=\{0,2,3,7,8\}.$  We take \(\rho\) to be an empirical measure supported on \(M=400\) points sampled uniformly from \([0,1]^2\). For each \(j\in\mathcal D\), we use a precomputed MNIST transport map \(T_{\rho\rightarrow \nu_{j}}\) as a planted atom. Coefficients are generated by first sampling a two-element subset \(I_i\subset\{1,\dots,5\}\) uniformly at random, then sampling $(\lambda_j^i)_{j\in I_i}\sim\operatorname{Dirichlet}(1,1)$, and setting \(\lambda_j^i=0\) for \(j\notin I_i\). Thus \(\lambda^i\in\Delta^4\) and \(\|\lambda^i\|_0=2\). We define $T_i=\sum_{j=1}^5 \lambda_j^i T_{\rho\rightarrow \nu_{j}}, i=1,\dots,60000.$  The resulting data set is split randomly into \(54000\) training maps and \(6000\) held-out test maps.

\begin{figure}[t!]
\centering

  \begin{subfigure}[t]{0.46\linewidth}
    \centering
    \includegraphics[
      width=\linewidth,
      height=0.24\textheight,
      keepaspectratio
    ]{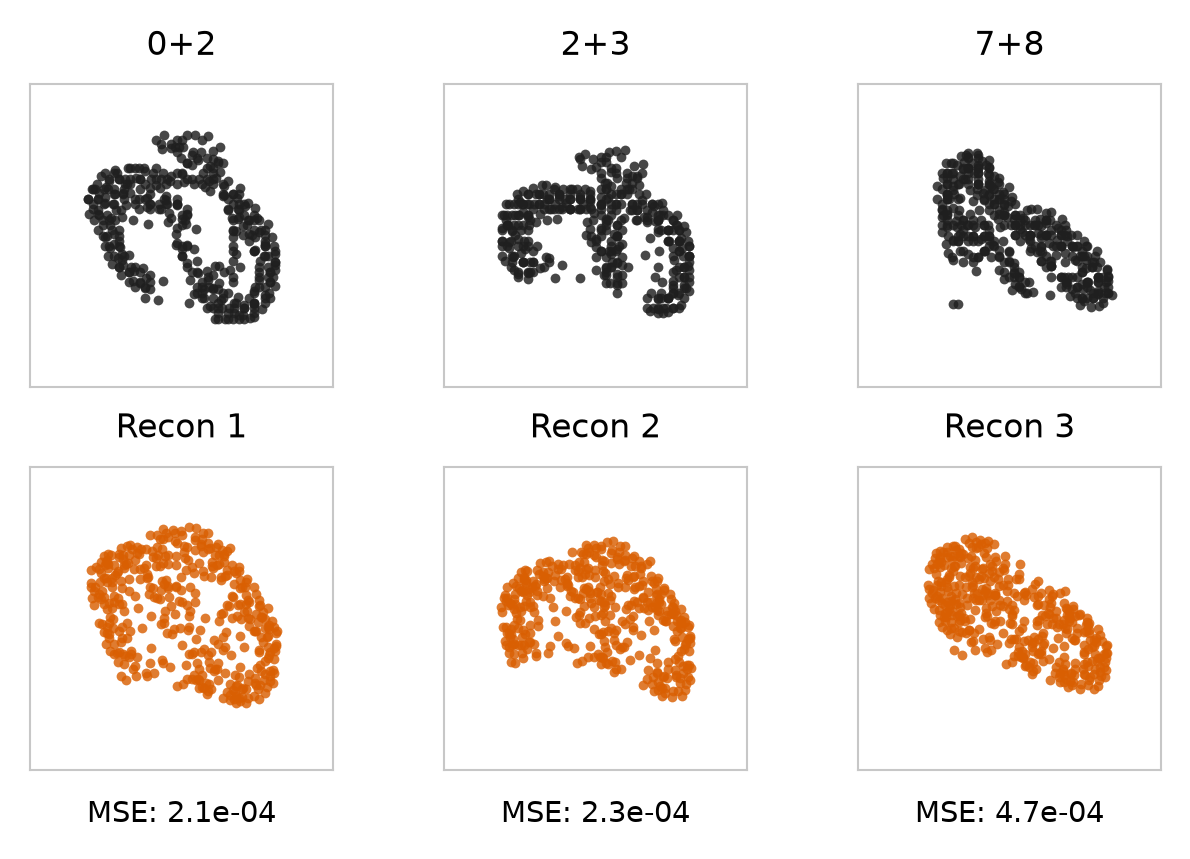}
    \phantomsubcaption
    \label{fig:mnist-ebcm-reconstructions}
  \end{subfigure}
  \hfill
  \begin{subfigure}[t]{0.43\linewidth}
    \centering
    \includegraphics[
      width=\linewidth,
      height=0.24\textheight,
      keepaspectratio
    ]{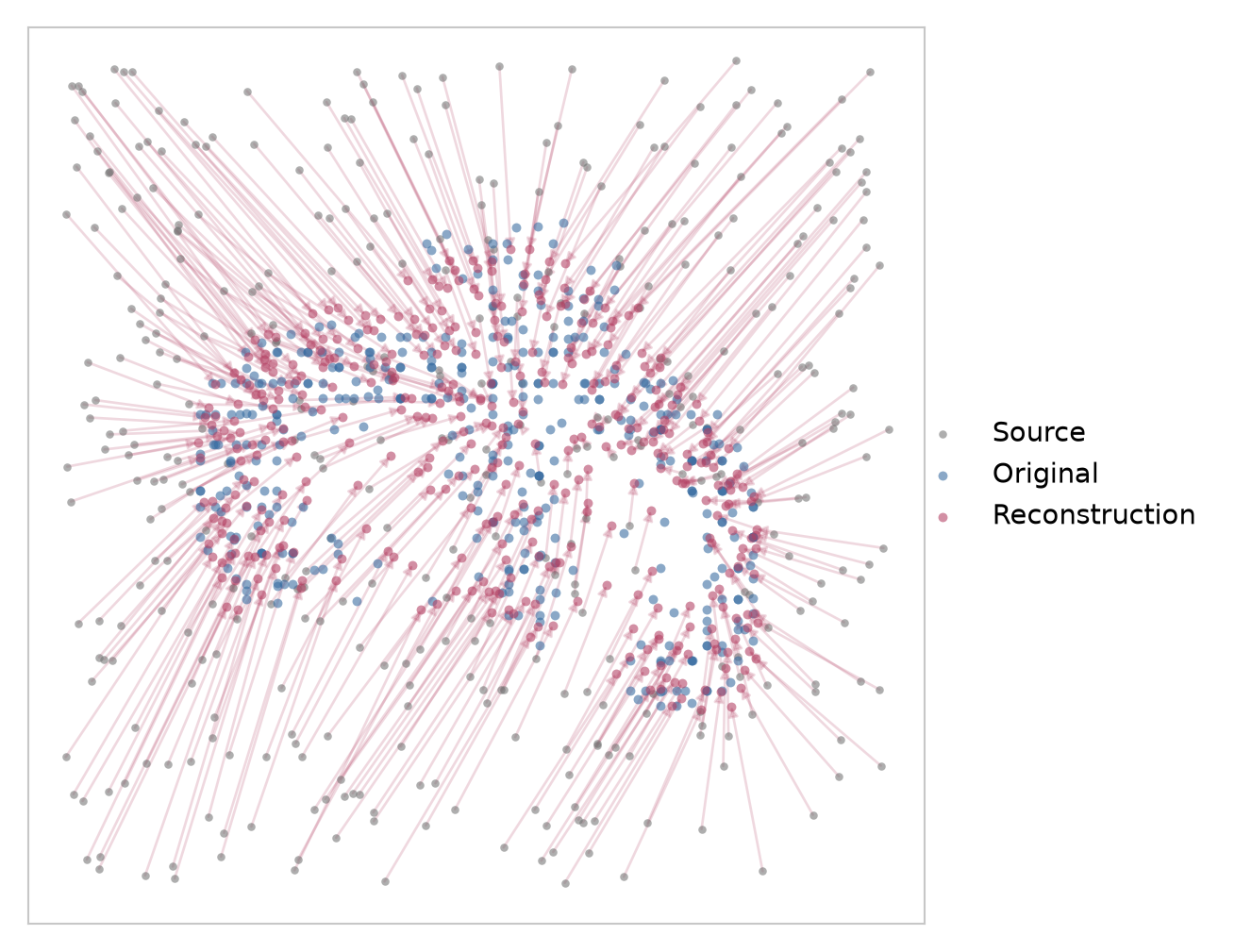}
    \phantomsubcaption
    \label{fig:mnist-ebcm-map-overlay}
  \end{subfigure}

  \vspace{0.35em}

  \begin{subfigure}[t]{\linewidth}
    \centering
    \includegraphics[
      width=0.88\linewidth
    ]{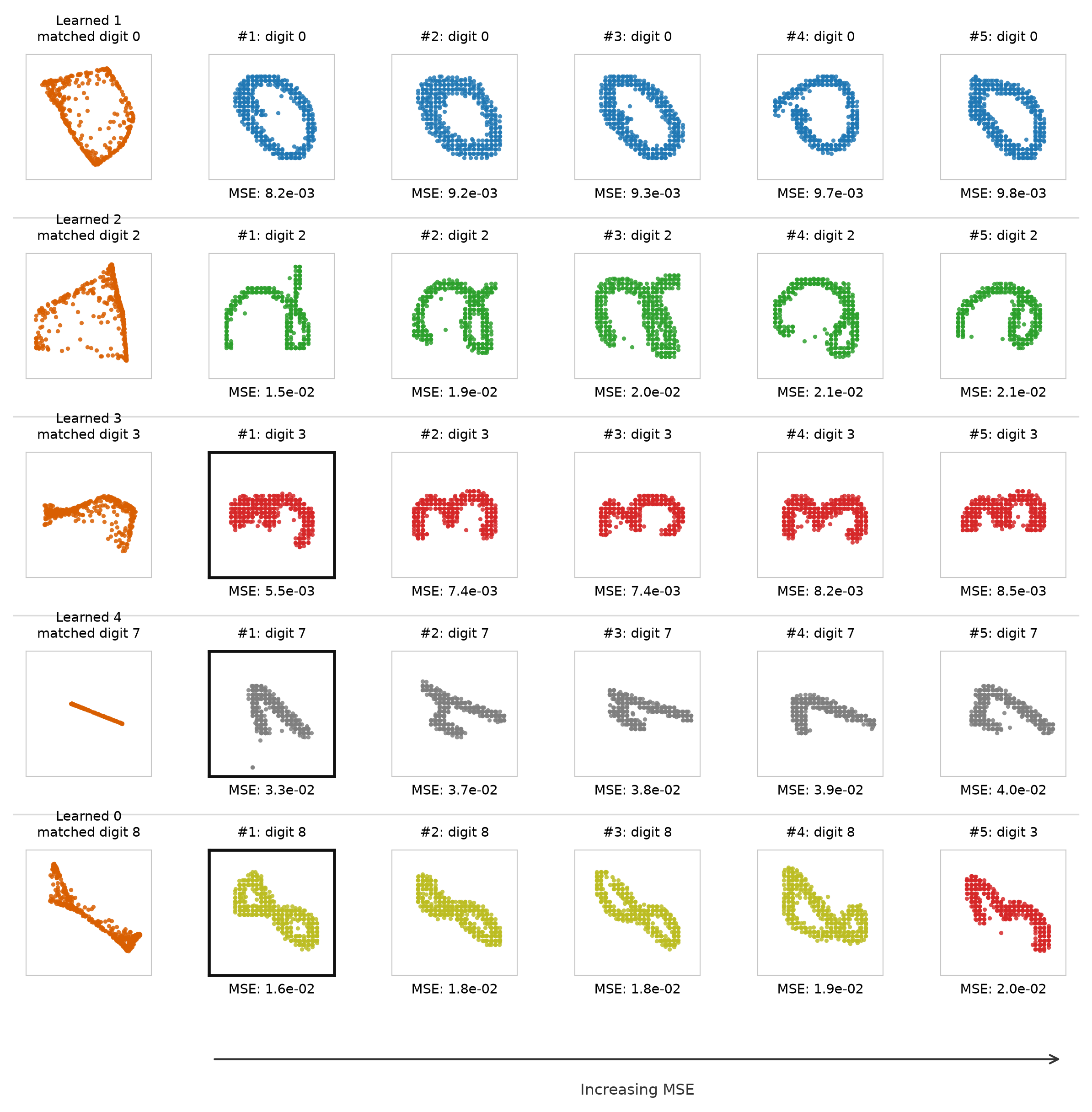}
    \phantomsubcaption
    \label{fig:mnist-ebcm-nearest-neighbors}
  \end{subfigure}

  \caption{
    LWDL-EOT recovery on sparse convex combinations of five MNIST transport-map atoms,
    corresponding to digits \(0,2,3,7,8\).
    (Top left): Representative held-out convex-combination maps and their LWDL-EOT reconstructions.
    (Top right): Source points, original target map, and learned reconstruction map in a held-out example.
    (Bottom): Nearest-neighbor search for learned LWDL-EOT atoms against a held-out pool restricted to digits in $\mathcal D$.
    Original generating atoms are included as reference candidates and outlined in bold.
  }
  \label{fig:mnist-ebcm-synthetic-recovery}
\end{figure}

We run Algorithm \ref{alg:generic-lista-encoder} and evaluate on the test set the mean \(L^2(\rho)\) reconstruction error
\[
\frac{1}{6000}
\sum_{i\in\mathrm{test}}
\frac{1}{M}
\sum_{k=1}^M
\left\|
T_i(x_k)-\widehat T_i(x_k)
\right\|_2^2.
\]
The selected checkpoint achieves held-out error \(\|T_i-\widehat T_i\|_{L^2(\rho)}^2=9.22\times 10^{-4}\), showing that the data maps can be reconstructed to a high degree by our model; see Figures \ref{fig:mnist-ebcm-reconstructions} and \ref{fig:mnist-ebcm-map-overlay}.

Next, we investigate whether the method can correctly learn atoms which resemble the ground truth atoms $\{T_{\rho\rightarrow\nu_{j}}\}_{j=1}^5$. We first compute a matching between learned atoms and true atoms by solving
\begin{equation}\label{eqn:matching}
\min_{\sigma\in S_5}
\frac{1}{5}\sum_{j=1}^5
\frac{1}{M}
\sum_{k=1}^M
\left\|
A_j^\theta(x_k)
-
T_{\rho\rightarrow \nu_{d_{\sigma(j)}}}(x_k)
\right\|_2^2,
\end{equation}
with a final value of $1.74\times 10^{-2}$, which is notably higher than the reconstruction error for the average data point, and when we visually inspect the atoms (Figure \ref{fig:mnist-ebcm-nearest-neighbors}), we see they qualitatively have weak resemblance to digits in the data set. We perform an additional qualitative check by computing the $L^2(\rho)$-nearest neighbors for each learned atom in a held-out pool restricted to digits in $\mathcal{D}$, with the planted atoms included as reference candidates, and plot the top five neighbors (Figure \ref{fig:mnist-ebcm-nearest-neighbors}). Despite the visual dissimilarity to the underlying digits, we see that the nearest neighbors to each learned atom are dominated by data from a unique digit in $\mathcal{D}$. Of the 25 displayed neighbors, 24 have the digit matched to the corresponding learned atom, and the top-ranked neighbor is the matched digit for all five atoms. This gives evidence that the method learns atoms that are able to at least identify the classes of the true atoms, and in some cases can even identify the true atom out of a lineup of highly similar data points. 

Finally, we assess whether the learned coefficients $\{\hat{\lambda}^i\}_{i=1}^{6000}$ recover the coefficients $\{\lambda^i\}_{i=1}^{6000}$ used to generate the data. To align coordinates, we use the matching $\sigma^*$ obtained in~(\ref{eqn:matching}). The average aligned coefficient MSE is $8.50 \times 10^{-3}$. Thus, despite the residual error in the recovered atoms, the learned coefficients closely approximate the ground truth coefficients.

\subsection{Comparison with WDL}
\label{subsec:ComparisonWDL}
We next compare WDL \citep{schmitz2018wasserstein} against LWDL-OT and LWDL-EOT, on 1D and 2D data sets respectively. Our goal is to evaluate both methods in terms of their reconstruction accuracy for probability-measure-valued data, measured by the squared $W_2$ distance, and their scalability with respect to data set size; details for these experiments are in Appendix \ref{appendix:ComparisonWDLDetails}.

{
\setlength{\dbltextfloatsep}{6pt}
\setlength{\abovecaptionskip}{2pt}
\setlength{\belowcaptionskip}{2pt}

\begin{table}[!t]
\centering
\scriptsize
\setlength{\tabcolsep}{5pt}
\renewcommand{\arraystretch}{0.9}

\caption{
Comparison with WDL under a 1000s time budget.
Mean $W_2^2$ is the final squared Wasserstein reconstruction error.
$^{*}$ denotes early termination due to numerical instability.  LWDL uses $B=100$ for $n=100$ and $B=256$ for $n=1000$;
LWDL-EOT uses $\epsilon=0.025$ and $c=5\times10^{-5}$.
}
\label{tab:wdl-comparison-results}

\begin{tabular}{llccc}
\toprule
Data Set & Method & $n$ & Mean $W_2^2$ & Epochs \\
\midrule

Pavia
& WDL ($\gamma=.5$)
& 100
& $4.6\times10^{-3}$
& 117 \\

&
WDL ($\gamma=2$)
& 100
& $4.9\times10^{-3}$
& 84 \\

&
LWDL-OT
& 100
& $\mathbf{5.0\times10^{-5}}$
& 240438 \\

\cmidrule(lr){2-5}

&
WDL ($\gamma=.5$)
& 1000
& $5.7\times10^{-3}$
& $7^{*}$ \\

&
WDL ($\gamma=2$)
& 1000
& $6.0\times10^{-3}$
& $7^{*}$ \\

&
LWDL-OT
& 1000
& $\mathbf{9.3\times10^{-5}}$
& 65183 \\

\midrule

MNIST
& WDL ($\gamma=.5$)
& 100
& $1.2\times10^{-2}$
& 101 \\

&
WDL ($\gamma=2$)
& 100
& $1.0\times10^{-2}$
& 101 \\

&
LWDL-EOT
& 100
& $\mathbf{2.6\times10^{-3}}$
& 50978 \\

\cmidrule(lr){2-5}

&
WDL ($\gamma=.5$)
& 1000
& $1.3\times10^{-2}$
& 10 \\

&
WDL ($\gamma=2$)
& 1000
& $1.3\times10^{-2}$
& 11 \\

&
LWDL-EOT
& 1000
& $\mathbf{2.7\times10^{-3}}$
& 12965 \\

\bottomrule
\end{tabular}

\end{table}
}

For the comparison of WDL with LWDL-OT, we sample a set of one-dimensional probability measures, all supported on 102 points in the unit interval. Specifically, we generate the data by normalizing pixels from the Pavia hyperspectral image data set; see Section \ref{sec:lwdl one dim} and Appendix \ref{sec:additional details hsi} for details. The final reconstruction errors, reported as the squared $W_2$ distance between each data point and its reconstruction after the 1000s time budget or numerical failure, are shown in Table \ref{tab:wdl-comparison-results}.

Across all data sets, the LWDL methods outperform our implementation of WDL within the prescribed time budget, achieving mean $W_2^2$ reconstruction losses that range from roughly $4$ to $100$ times lower .  This can be partially explained by the fact that, within the fixed time budget, the LWDL methods complete substantially more epochs than WDL. While epochs are not directly comparable across the methods because their optimization subroutines differ, the discrepancy illustrates the substantially lower per-epoch computational cost of LWDL. Inspecting the run metadata, we see that the majority of the time spent by the WDL method is on applying dense kernel matrices, a characteristic cost of Sinkhorn algorithm-based Wasserstein barycenter methods such as \cite{schmitz2018wasserstein}. In contrast, the linearized methods incur the expensive OT or EOT map computation only once; thereafter, training reduces to standard autoencoder optimization, enabling efficient parallelization and GPU acceleration.

\section{Applications}\label{sec:applications}

We evaluate our methods on hyperspectral imaging, 3D point clouds, and transformer latent spaces. Each dataset highlights a different linearization strategy, illustrating the flexibility of our framework across diverse data modalities.

\subsection{Hyperspectral Image Analysis with One-Dimensional LWDL-OT}\label{sec:lwdl one dim}

A hyperspectral image has spatial pixels that record an intensity vector across many wavelength bands, so a single pixel can be viewed as a nonnegative spectrum; see Appendix \ref{sec:additional details hsi}.  Each hyperspectral pixel is normalized into a probability measure $\mu$
on the wavelength grid, rescaled to the interval $[0,1]$, and embedded by the one-dimensional transport map $T_{\rho\rightarrow \mu}$ from a fixed reference measure $\rho$. We may pushforward by $[\sum_{j=1}^m\lambda_j T_j]_{\#}(\rho)$ to recover spectra. We consider three HSIs, Pavia Centre, Botswana and Salinas A \citep{ehu_hyperspectral_scenes}. For each data set, the dictionary size is set to one more than the number of non-background classes, which gives the dictionary enough capacity to learn the labeled material classes while keeping learned code dimension comparable across baselines.

\FloatBarrier

\begin{figure}[!b]
\centering

\includegraphics[
  width=\linewidth,
  trim=3pt 8pt 3pt 1cm,
  clip
]{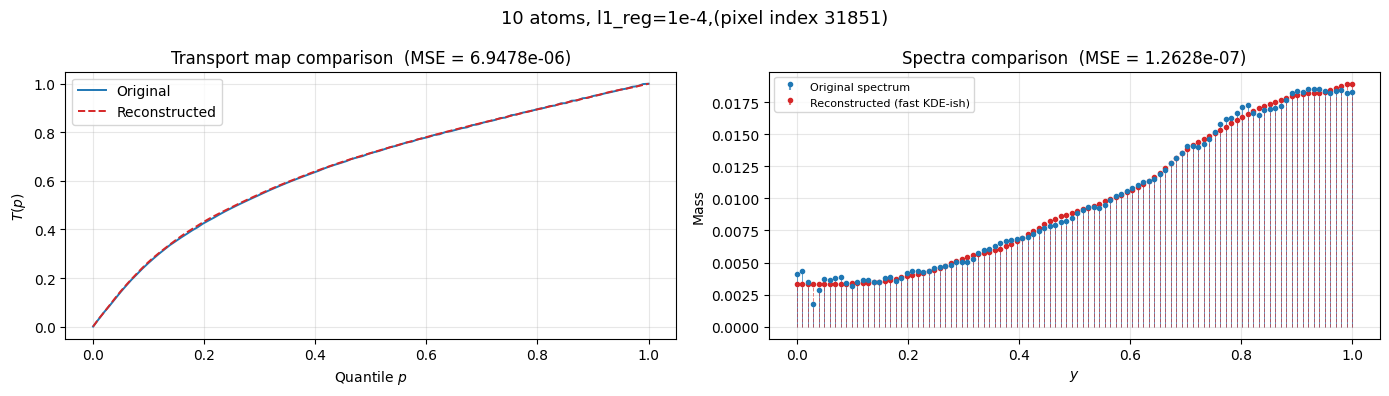}

\vspace{0.4em}

\includegraphics[
  width=\linewidth,
  trim=3pt 8pt 3pt 1cm,
  clip
]{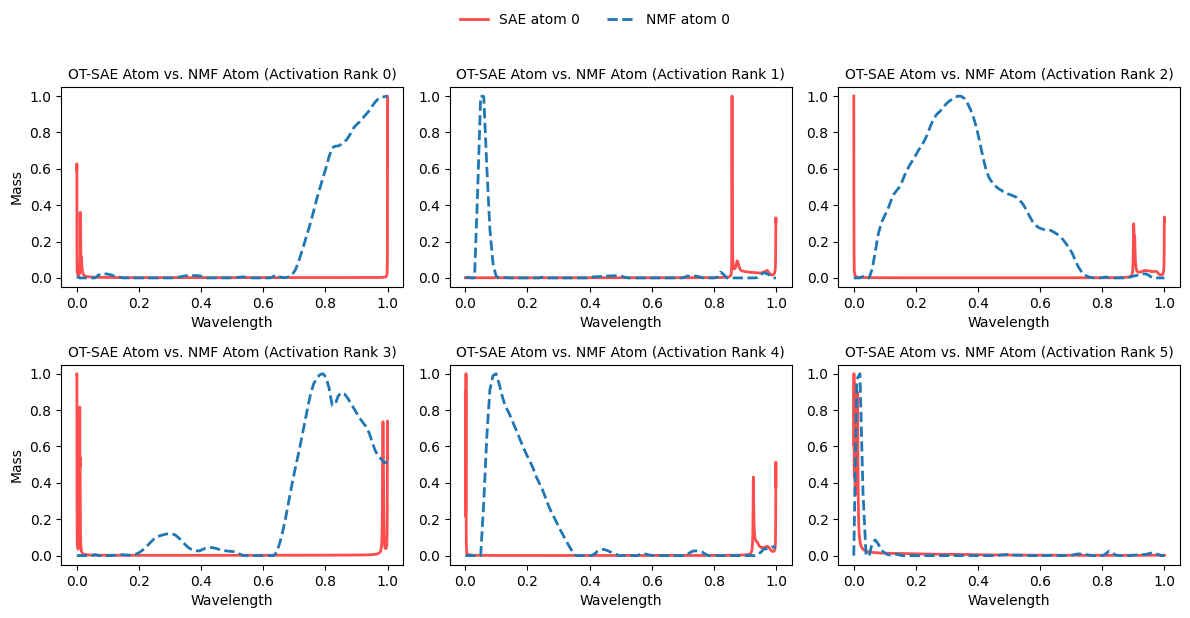}

\caption{\emph{Top}: example spectrum reconstruction. \emph{Bottom}: atoms learned from LWDL-OT and NMF.}
\label{fig:hsi-reconstruction-comparison}
\end{figure}

\begin{table}[t]
\centering
\caption{Reconstruction errors on Pavia spectra; lower is better.}
\label{tab:hsi-reconstruction-errors}

\small
\setlength{\tabcolsep}{6pt}
\renewcommand{\arraystretch}{1.1}

\begin{tabular}{@{}lcc@{}}
\toprule
\textbf{Method} & \textbf{Mean $L^2$} & \textbf{Mean $W_2$} \\
\midrule
LWDL-OT & $2.72 \times 10^{-6}$ & \textbf{$4.71 \times 10^{-5}$} \\
NMF     & \textbf{$2.98 \times 10^{-7}$} & $6.69 \times 10^{-5}$ \\
\bottomrule
\end{tabular}
\end{table}

We compare reconstruction error for LWDL-OT and Nonnegative Matrix Factorization (NMF) dictionaries \citep{berman1994nonnegative, gillis2020nonnegative}, a standard benchmark method for HSI processing, on the Pavia image. We see that both methods are able to reconstruct the clean spectra with low error.  Note that LWDL-OT outperforms NMF for reconstruction with the $W_2$-metric, while being outperformed by NMF with respect to $L^2([0,1])$ (Figure~\ref{fig:hsi-reconstruction-comparison}, right). Interestingly, the resulting transport atoms are visually different from NMF atoms: rather than matching average spectra directly, they often concentrate mass near endpoints of the wavelength interval (Figure \ref{fig:hsi-reconstruction-comparison}). This behavior is consistent with the convex geometry of monotone transport maps on $[0,1]$, where step-like maps (which pushforward to distributions supported on $\{0,1\}$) form extreme points of the one-dimensional transport-map set  \citep{werenski2025linearized}. 

We evaluate whether the learned codes are useful for unsupervised clustering under corruptions. LWDL-OT is compared against NMF and a nonnegative sparse autoencoder (N-SAE), which is trained directly on normalized spectra. Dictionaries are trained on clean spectra and then fixed while evaluating corrupted spectra; downstream labels are obtained by clustering the learned codes and matching clusters to ground truth.  We test on three types of corruptions: DropUniform, which uniformly drops out a percentage of spectral bands, DropContiguous, which drops out a randomly selected contiguous interval of spectral bands, and LogWarp which applies a parameterized diffeomorphism. Clean-data performance is competitive, with each method performing best on one data set at $k=0$. For the fixed configurations reported in Table~\ref{tab:main_dropuniform_summary}, LWDL-OT achieves the highest accuracy at 13 of the 15 nonzero DropUniform corruption levels and the least-negative fitted stability slope on all three data sets. Additional corruption types and configuration tables are given in Appendix~\ref{sec:additional details hsi}.

\begin{table}[t]
\centering
\caption{Accuracy under DropUniform corruption. Each row uses one fixed
configuration across all corruption levels. $k$ is the fraction of spectral
bands dropped. Accuracies are averaged over five corruption seeds and, for
LWDL-OT and N-SAE, five training seeds. Stability is the least-squares slope
of these mean accuracies versus $k$; values closer to zero indicate greater
robustness.}
\label{tab:main_dropuniform_summary}

\footnotesize
\setlength{\tabcolsep}{4pt}
\renewcommand{\arraystretch}{0.95}

\begin{tabular}{llccccccc}
\hline
\textbf{Data}
& \textbf{Method}
& \multicolumn{6}{c}{\textbf{Corruption level $k$}}
& \textbf{Stab.} \\
\cline{3-8}
&& $0$ & $.1$ & $.2$ & $.3$ & $.4$ & $.5$ & \\
\hline

Pavia
& NMF
& .753 & \textbf{.790} & \textbf{.785} & .616 & .644 & .536 & $-.48$ \\
& N-SAE
& .730 & .546 & .503 & .450 & .461 & .464 & $-.47$ \\
& LWDL-OT
& \textbf{.763} & .778 & .754 & \textbf{.735} & \textbf{.715}
& \textbf{.695} & \textbf{$-.16$} \\
\hline

Botswana
& NMF
& .620 & .221 & .162 & .159 & .146 & .139 & $-.75$ \\
& N-SAE
& \textbf{.633} & .312 & .246 & .204 & .182 & .168 & $-.79$ \\
& LWDL-OT
& .556 & \textbf{.338} & \textbf{.281} & \textbf{.244}
& \textbf{.232} & \textbf{.212} & \textbf{$-.59$} \\
\hline

Salinas A
& NMF
& \textbf{.853} & .514 & .417 & .362 & .320 & .293 & $-.98$ \\
& N-SAE
& .653 & .378 & .330 & .305 & .293 & .295 & $-.59$ \\
& LWDL-OT
& .763 & \textbf{.588} & \textbf{.536} & \textbf{.485}
& \textbf{.466} & \textbf{.449} & \textbf{$-.57$} \\
\hline

\end{tabular}
\end{table}

\FloatBarrier
\subsection{Point-Cloud Data with LWDL-EOT}\label{subsec:ELOTDL}

We investigate the performance of LWDL-EOT for modeling three-dimensional point clouds from the ModelNet-10 data set \citep{wu20153d}, which contains CAD meshes of common 3D objects. We model these as probability measures, uniformly supported on 1024 points in $[0,1]^3$ by sampling (see Algorithm \ref{alg:modelnet_mesh_to_measure} for details). We build a data set from all object classes with at least 300 examples in the data set. There are six such classes, corresponding to beds, chairs, monitors, sofas, tables and toilets, giving a data set with 1800 probability measures $\{\mu_i\}_{i=1}^{1800}$. We encode this as a set of maps by computing $T_{\rho\rightarrow \mu_i}^\epsilon-Id$ and run Algorithm \ref{alg:entropic-lwdl}; see Appendix \ref{sec:Additional Details Point Clouds} for details and experiments on the role of the base measure.

In Figure~\ref{fig:application-diagnostics}(\subref{fig:pointcloud-reconstructions}), we display two examples of point clouds and their reconstructions learned by Algorithm \ref{alg:entropic-lwdl}, as well as the top three atoms by activation for each data point. We also report the relative activation value for each of the atoms.  We see that the learned atoms form interpretable components of the data, for instance, the seat and backrest of a chair, the legs of a chair or the base of a toilet (atoms $T_0, T_{29}$ and $T_{27}$ respectively). Figure~\ref{fig:application-diagnostics}(\subref{fig:pointcloud-umap}) further illustrates the class structure of the learned codes. To quantitatively support our qualitative observations, we train a linear classifier on the learned coefficient representations and obtain 96.1\% test accuracy on a held-out subset, indicating that the learned coefficients are highly linearly separable by object class.

\begin{figure}[H]
\centering
    \captionsetup[subfigure]{justification=raggedright,singlelinecheck=false}
    \begin{subfigure}[t]{0.62\linewidth}
        \centering
        \includegraphics[width=\linewidth]{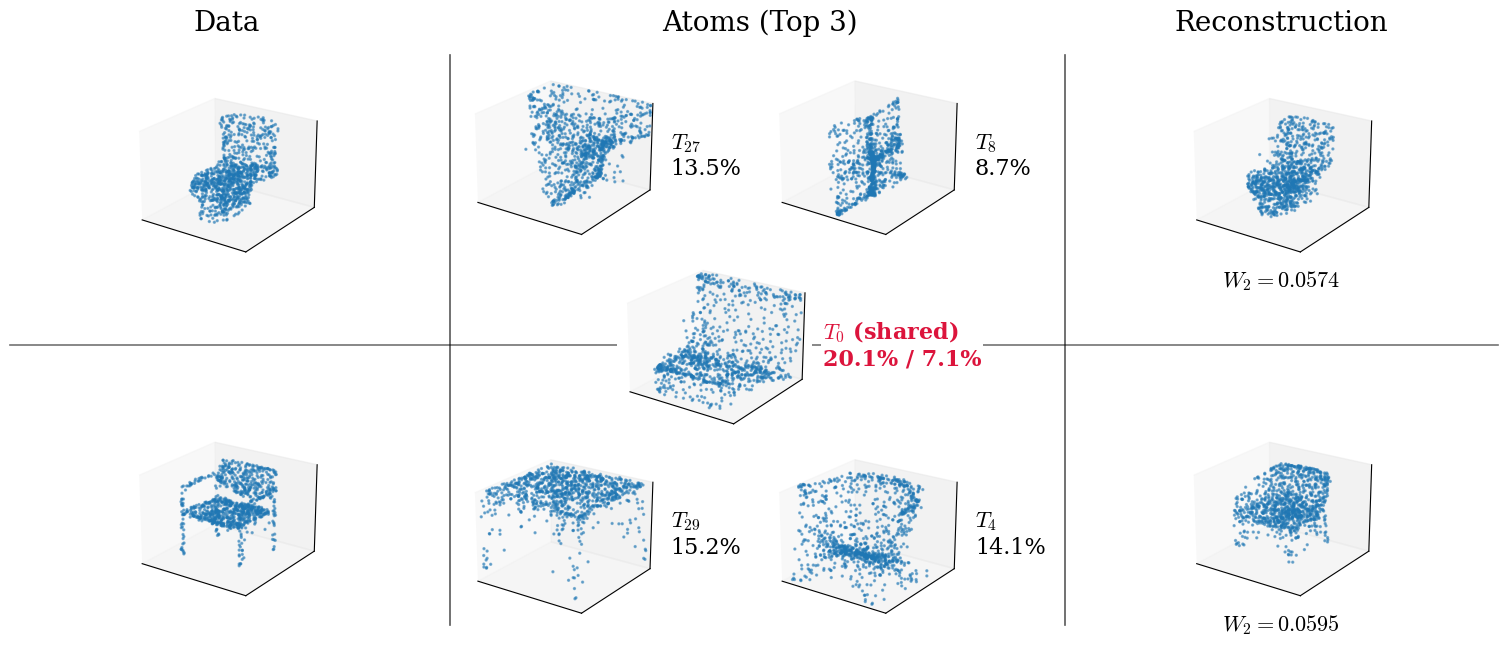}
        \caption{Representative point clouds, top atoms, and reconstructions.}
        \label{fig:pointcloud-reconstructions}
    \end{subfigure}
    \hfill
    \begin{subfigure}[t]{0.33\linewidth}
        \centering
        \includegraphics[width=\linewidth]{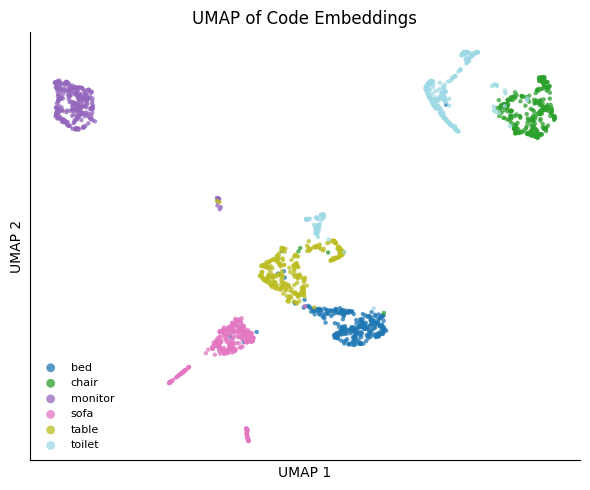}
        \caption{UMAP visualization of the point-cloud codes.}
        \label{fig:pointcloud-umap}
    \end{subfigure}

    \vspace{0.7em}

    \begin{subfigure}[t]{0.58\linewidth}
        \vspace{0pt}
        \centering
        \includegraphics[width=0.492\linewidth,trim={0 2pt 0 2pt},clip]{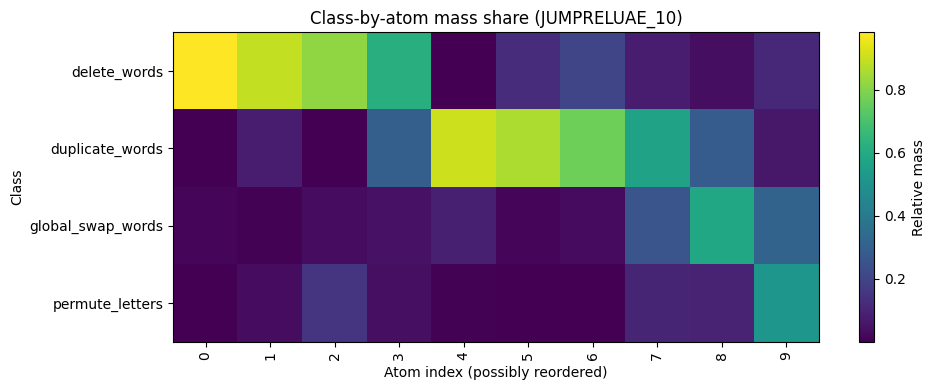}\hfill
        \includegraphics[width=0.492\linewidth,trim={0 2pt 0 2pt},clip]{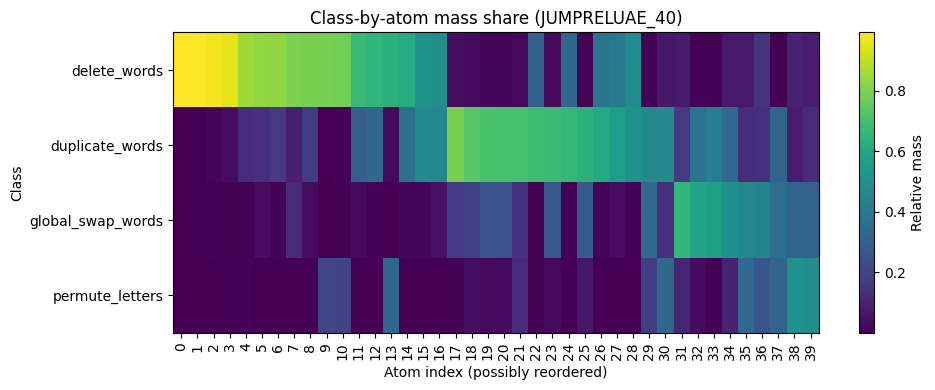}\\[-0.45em]
        \includegraphics[width=0.492\linewidth,trim={0 2pt 0 2pt},clip]{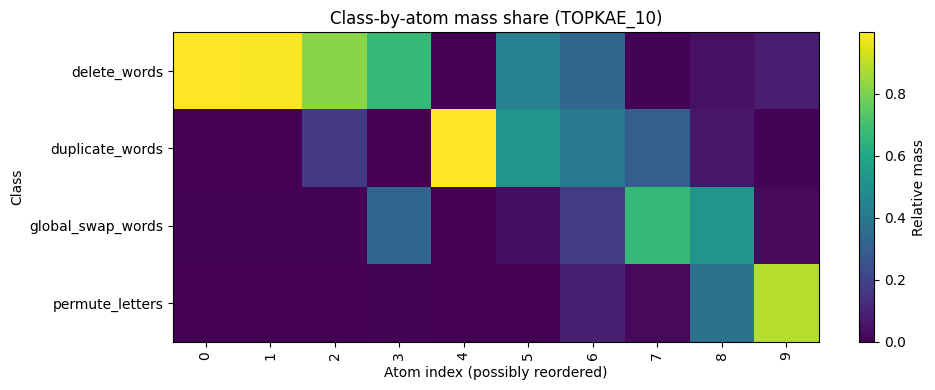}\hfill
        \includegraphics[width=0.492\linewidth,trim={0 2pt 0 2pt},clip]{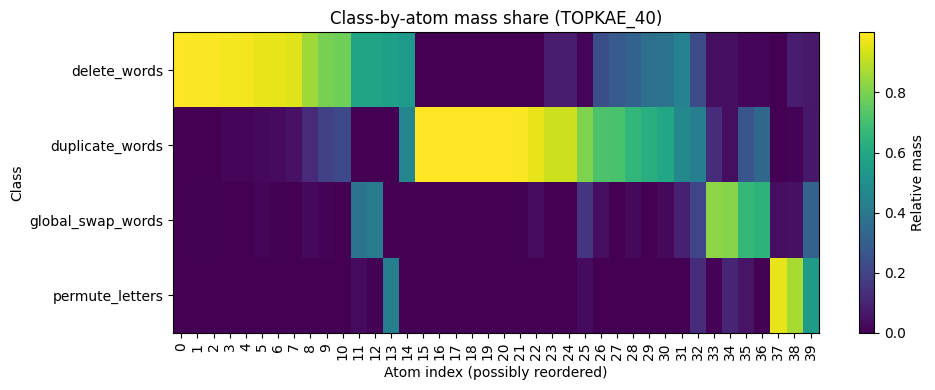}
        \caption{Mean atom usage by corruption label for JumpReLU and TopK with 10 and 40 atoms.}
        \label{fig:corruption-atom-usage}
        \label{fig:atom_usage_grid}
    \end{subfigure}
    \hfill
    \begin{subfigure}[t]{0.36\linewidth}
        \vspace{0pt}
        \centering
        \includegraphics[width=\linewidth,trim={0 0 0 34pt},clip]{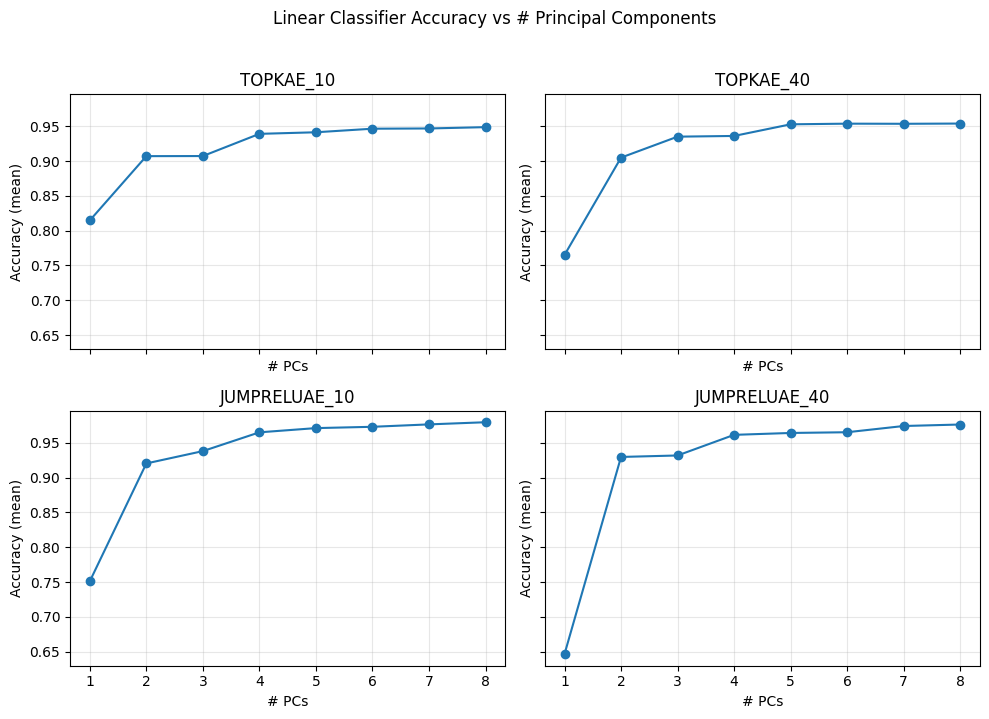}
        \caption{Linear-classifier accuracy versus the number of code principal components.}
        \label{fig:corruption-classifier}
        \label{fig:accuracy_vs_corruption_strength}
    \end{subfigure}

    \caption{Application diagnostics for LWDL-EOT and LWDL-KP. (a)--(b) Point-cloud reconstruction and code structure. (c)--(d) Transformer corruption-code structure and linear separability.}
    \label{fig:application-diagnostics}
    \label{fig:pointcloud-examples-and-codes}
    \label{fig:corruption-code-diagnostics}
\end{figure}

\subsection{LWDL-KP on Transformer Activation Distributions}\label{sec:LWDL-KP}

We investigate the performance of LWDL-KP for modeling distributions of transformer activations. We represent documents as empirical probability measures supported on their token-level hidden states. Given $\rho$, we embed each document by the centered Kantorovich potential $f_{\rho\rightarrow \mu_i}$, as described in Section \ref{sec:linearized models for wdl}, and apply Algorithm \ref{alg:KP_SAE_embedding} to the resulting embeddings. We extract our token-level hidden states from an intermediate layer in the model Pythia-410M \citep{biderman2023pythia}, which was developed specifically for interpretability studies.

\subsubsection{Synthetic Textual Corruptions.}
Our first experiment investigates whether the LWDL-KP codes can reliably separate textual corruptions. We generate 19,200 corrupted variants of a fixed base document with 1130 tokens, by applying word swaps, word deletions, within-word letter permutations and word duplications at twelve intensity levels. This generates a data set $\{\mu_i\}_{i=1}^{19200}\subset \mathcal{P}_2(\mathbb{R}^{1024})$, and a base measure $\rho$ given by the embedding of the base document into $\mathcal{P}_2(\mathbb{R}^{1024}).$ Panels~(\subref{fig:corruption-atom-usage}) and~(\subref{fig:corruption-classifier}) of Figure~\ref{fig:application-diagnostics} summarize the learned codes. Atom usage is strongly class-dependent, and the pattern of code usage appears to persist across different activation functions and numbers of atoms. Furthermore, we see that the codes linearly separate corruption type: across 10 random splits, logistic regression on the first two principal components of the codes exceeds $90\%$ accuracy, and using the first six components gives about $95\%$ accuracy. Full corruption definitions and training details are in Appendix~\ref{sec:additional details llm}.

\FloatBarrier
\subsubsection{Document-Level Atoms on The Pile.}
We next apply LWDL-KP to documents sampled from The Pile \citep{gao2020pile}. We sample $\approx 80,000$ documents between 100 and 2000 tokens and embed these as probability measures $\{\mu_i\}_{i=1}^{80000}\subset \mathcal{P}_2(\mathbb{R}^{1024})$ using Pythia-410M. After PCA reduction, we compute centered potentials (i.e., $\sum_{k=1}^{1000}f_{\rho\rightarrow \mu_i}(x_k)=0$), where $\rho$ is supported on 1000 points sampled from a 1024-dimensional Gaussian with covariance equal to the empirical covariance matrix of the data. We investigate whether the learned atoms $\{f_j\}_{j=1}^{512}$ activate on semantically coherent sets of documents \citep{cunningham2023sparse}. We examine the composition of the top 100 documents, ordered by activation magnitude, for each atom (Table \ref{tab:feature_purity}), and see that many atoms are strongly associated with documents from a single source. 
To test whether source identity alone explains this concentration, we examine contrastive pairs of atoms that are strongly associated with the same source. Their top-activating documents exhibit noticeably different semantic themes, providing evidence that the learned atoms specialize beyond source identity; see Appendix~\ref{sec:pile details} for examples.

\section{Generalization Error Bounds for LWDL}\label{sec:generalization error lwdl}

Our main statistical result gives quantitative bounds on the generalization error of LWDL. A sketch of the proof is given in Appendix~\ref{appendix:generalization proof sketch}, with complete details in Appendix~\ref{appendix:ProofsOfTheorems}.  

Given $\rho\in \mathcal{P}_2(\mathbb{R}^d)$ and $\mathcal{T}:=(T_1,T_2,...,T_m)\in \mathbb{T}^m$, where $\mathbb{T}\subseteq L^2(\rho)$ is a set of candidate atoms, our pointwise LWDL objective is defined
\begin{equation}\label{eqn:ptwise DL objective}
h_\rho^g(\mu,\epsilon,\mathcal{T})=\min_{\lambda\in \mathbb{R}^m_{\geq 0}}\frac{1}{2}\|T^\epsilon_{\rho\rightarrow \mu}-\sum_{j=1}^m\lambda_jT_j\|^2_{L^2(\rho)}+g(\lambda)
\end{equation}
where $g: \mathbb{R}_{\ge 0}^m\rightarrow \mathbb{R}_{\geq 0}$ is a nonnegative function, and $T^\epsilon_{\rho\rightarrow \mu}$ denotes the entropic map between $\rho$ and $\mu\in \mathcal{P}_2(\mathbb{R}^d)$ for $\epsilon>0$ and the optimal transport map if $\epsilon=0$, if it exists. We require moment bounds on our data $\mu$, formalized by the following definition: for any $\vartheta\geq 0$ and $p\geq 1$, let $\mathcal{M}_p(\vartheta)\subset \mathcal{P}_p(\mathbb{R}^d)$ denote the set of $\mu\in \mathcal{P}_p(\mathbb{R}^d)$ with $M_p(\mu)\leq \vartheta.$ We first proceed by establishing a generic dictionary learning generalization complexity result, namely Theorem \ref{thm:samp_complex_dl}.  While this exact statement does not appear in the literature, the main ideas in the proof are well-known (see Appendix Section \ref{appendix:generalization proof sketch} for and comparisons to the literature).

\begin{theorem}\label{thm:samp_complex_dl}
Let $\Theta\in \mathcal{P}(\mathcal{M}_2(\vartheta))$ for some $\vartheta\geq0$. Let $g:\mathbb{R}^m_{\geq 0}\rightarrow \mathbb{R}_{\geq 0}$ be lower semicontinuous, such that $g(0)=0$ and $g(x)\geq c\|x\|_1$, for some $c> 0$. Let $\mathbb{T}\subseteq L^2(\rho)$ be compact, such that $\|T\|_{L^2(\rho)}=1$ for all $T\in \mathbb{T}$. Let $\mu_1,...,\mu_n\sim\Theta$. Define, for any $\epsilon \ge 0$:
\begin{equation*}
    \mathcal{T}^{n}\in\argmin_{\mathcal{T}\in \mathbb{T}^m}\frac{1}{n}\sum_{i=1}^n h^g_\rho(\mu_i,\epsilon,\mathcal{T})
\end{equation*}
\begin{equation*}
\mathcal{T}^*\in\argmin_{\mathcal{T}\in \mathbb{T}^m}\mathbb{E}_{\mu\sim \Theta}[h^g_\rho(\mu,\epsilon,\mathcal{T})],
\end{equation*}
where $h_\rho^g$ is defined as in (\ref{eqn:ptwise DL objective}). Then for any $0<\delta<2\sqrt{\vartheta}$, we have:
\begin{align*}
&\mathbb{E}_{\mu,\mu_1,\ldots,\mu_n\sim\Theta}
\Bigl[h^g_\rho(\mu,\epsilon,\mathcal{T}^{n})
-h^g_\rho(\mu,\epsilon,\mathcal{T}^*)\Bigr]\\
&\leq 8\delta
+\frac{C C_{\vartheta,c}}{\sqrt{n}}
\int_{\delta/(4C_{\vartheta,c})}^{\vartheta/(2C_{\vartheta,c})}
\sqrt{m\log\!\left(\mathcal{N}\left(u;\mathbb{T},L^2(\rho)\right)\right)}\,du\\
&\quad+\frac{4C\vartheta}{\sqrt{n}}.
\end{align*}
where $C_{\vartheta,c}= \frac{\sqrt{\vartheta^4+\vartheta^3}}{\sqrt{2}\min\{c,c^2\}}$ and $C$ is a universal constant. If $\rho$ is a.c., then the same bound holds with $T^\epsilon_{\rho\rightarrow \mu}$ replaced by $T_{\rho\rightarrow \mu}$.
\end{theorem}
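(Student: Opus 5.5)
The plan is the standard empirical-process argument for dictionary learning (in the spirit of Maurer--Pontil and Vainsencher et al.), carried out in the Hilbert space $L^2(\rho)$. Write $x_\mu:=T^\epsilon_{\rho\to\mu}$ (or $T_{\rho\to\mu}$ when $\rho$ is a.c. and $\epsilon=0$).

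\textbf{Step 1: integrability of the targets.} First I would bound $\|x_\mu\|_{L^2(\rho)}$ in terms of the second moment of $\mu$. For the OT map, $\|T_{\rho\to\mu}\|^2_{L^2(\rho)}=M_2(\mu)\le\vartheta$. For the entropic map, Jensen applied to the conditional expectation gives
\[
\|T^\epsilon_{\rho\to\mu}\|^2_{L^2(\rho)}\le \int\|y\|^2\,d\pi_\epsilon=M_2(\mu).
\]
So every target lies in a ball of radius $\sqrt{\vartheta}$. The bound is independent of $\epsilon$, which is why the result holds uniformly in $\epsilon$ and in the OT case.

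\textbf{Step 2: reduce to a uniform deviation.} With the usual decomposition (add and subtract the empirical risks at $\mathcal{T}^n$ and $\mathcal{T}^*$, and use optimality of $\mathcal{T}^n$), the excess risk is at most
\[
\mathbb{E}\,\sup_{\mathcal{T}}\bigl|(\mathbb{E}-\mathbb{E}_n)h\bigr|.
\]
By symmetrization this is at most $2\,\mathbb{E}$ of the Rademacher complexity of the class $\mathcal{H}=\{\mu\mapsto h^g_\rho(\mu,\epsilon,\mathcal{T}):\mathcal{T}\in\mathbb{T}^m\}$. Two properties of this class are needed.
\begin{itemize}
\item \emph{Boundedness.} Taking $\lambda=0$ and using $g(0)=0$ gives $0\le h\le\tfrac12\|x_\mu\|^2\le\vartheta/2$. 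This supplies the envelope and the additive $4C\vartheta/\sqrt n$ term.
\item \emph{Lipschitz dependence on $\mathcal{T}$.} Any minimizer $\lambda^*$ satisfies $c\|\lambda^*\|_1\le g(\lambda^*)\le h\le\vartheta/2$, so $\|\lambda^*\|_1\le\vartheta/(2c)$. Minimizers exist because $g$ is lsc and coercive. For two dictionaries $\mathcal{T},\mathcal{T}'$, I would plug the minimizer for one into the objective for the other and use the difference-of-squares bound with norms at most $\sqrt\vartheta+\|\lambda\|_1$. This gives
\[
|h(\mu,\mathcal{T})-h(\mu,\mathcal{T}')|\le L\max_j\|T_j-T'_j\|,\qquad L\lesssim \tfrac{\vartheta}{c}\Bigl(\sqrt\vartheta+\tfrac{\vartheta}{2c}\Bigr).
\]
After bookkeeping, $L$ should be dominated by $C_{\vartheta,c}$; the $\min\{c,c^2\}$ absorbs the two powers of $1/c$.
\end{itemize}

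\textbf{Step 3: chaining.} Because of the uniform sup-norm Lipschitz bound, an $u/L$-cover of $\mathbb{T}$ in each of the $m$ coordinates yields a $u$-cover of $\mathcal{H}$ in sup norm, and hence in empirical $L^2$. Therefore
\[
\log\mathcal{N}(u;\mathcal{H})\le m\log\mathcal{N}(u/L;\mathbb{T},L^2(\rho)),
\]
where compactness of $\mathbb{T}$ guarantees these numbers are finite. Dudley's entropy integral with truncation at $\delta$ then gives the $8\delta$ term plus $\frac{C}{\sqrt n}\int\sqrt{m\log\mathcal{N}(u/L;\mathbb{T})}\,du$ over $u$ up to the diameter $\sim\vartheta$. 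The substitution $u\mapsto u/L$ produces the stated limits $\delta/(4C_{\vartheta,c})$ and $\vartheta/(2C_{\vartheta,c})$ and the prefactor $C C_{\vartheta,c}$. The condition $\delta<2\sqrt\vartheta$ ensures the interval is nonempty, up to the constant conventions used.

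\textbf{Main obstacle.} I expect the main difficulty to be Step 2's Lipschitz estimate with the exact constant $C_{\vartheta,c}$. The robust route is to bound the $\ell_1$-norm of minimizers uniformly and to handle nonuniqueness of $\lambda^*$ by the plug-in argument rather than an envelope theorem. Matching the constant's precise form $\sqrt{\vartheta^4+\vartheta^3}/\sqrt2$ may require some slack in the triangle inequalities. Measurability of the supremum is handled by the compactness of $\mathbb{T}$ together with continuity of $h$ in $\mathcal{T}$.
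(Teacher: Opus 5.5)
Your proposal is correct and follows essentially the same route as the paper. Optimality of $\mathcal{T}^n$ reduces the excess risk to twice a uniform deviation, which is then symmetrized and chained with Dudley's integral truncated at $\delta$; the covering numbers come from a product cover of $\mathbb{T}^m$, transferred to the loss class by the plug-in Lipschitz estimate with $\|\lambda^*\|_1\le\vartheta/(2c)$, and that estimate does yield $L\le\frac{\vartheta^{3/2}}{2c}+\frac{\vartheta^2}{4c^2}\le C_{\vartheta,c}$. The differences are cosmetic: you cover in sup norm over $\mathcal{M}_2(\vartheta)$ rather than in $L^2(\hat{\Theta}^n)$ and use the sharper envelope $\vartheta/2$. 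One small correction: with the stated limits the integration interval is nonempty exactly when $\delta<2\vartheta$, not $\delta<2\sqrt{\vartheta}$, a mismatch already present in the paper's statement.
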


By instantiating these bounds for different choices of $\mathbb{T}$---highlighting the relationship between covering number bounds and the quantitative regularity theory of optimal transport---we obtain the following sample complexity result for LWDL:

\begin{theorem}\label{thm:summary theorem}
Under the set-up of Theorem \ref{thm:samp_complex_dl}:
\begin{enumerate}
\item Let $\Omega=\prod_{j=1}^d I_j\subset \mathbb{R}^d$, where $I_j\subset \mathbb{R}$ is a closed interval with $|I_j|=D$, and let $\rho\in \mathcal{P}(\Omega)$ be a.c. such that $\text{supp}(\rho)=\Omega$ and $0<\alpha<\frac{d\rho}{dx}(x)<\beta<\infty$ for Lebesgue a.e. $x\in \Omega$. Let $\mathbb{T}=\{T_{\rho\rightarrow \mu}\;|\; \mu\in\mathcal{P}(\Omega),\|T\|_{L^2(\rho)}=1\}$. Then:
\begin{align*}
&\mathbb{E}_{\mu,\mu_1,\mu_2,\ldots,\mu_n\sim\Theta}
\Bigl[
h^g_\rho(\mu,0,\mathcal{T}^{n})-h^g_\rho(\mu,0,\mathcal{T}^*)
\Bigr]\\
&\qquad\leq
\begin{cases}
\text{$\mathcal{O}(\sqrt{\frac{m}{n}})$}, & \text{if $d=1$},\\[0.8em]
\text{$\mathcal{O}(m^{1/d}n^{-2/(3d)})$}, & \text{if $d>1$}.
\end{cases}
\end{align*}
where the constant factors depend on $D,d$, $\beta,\vartheta, c$.
\item Let $\rho\in \mathcal{G}_\sigma(\mathbb{R}^d)$. Furthermore, let $\mathring{\mathbb{T}}=\{T^\epsilon_{\rho\rightarrow \mu}\;|\; \mu\in \mathcal{G}_\sigma(\mathbb{R}^d),\;\|T\|_{L^2(\rho)}=1\}$ for some $\epsilon>0$, and let $\mathbb{T}$ be the closure of $\mathring{\mathbb{T}}$ in $L^2(\rho)$. Then:
\[
\mathbb{E}_{\mu,\mu_{1:n}\stackrel{\rm iid}{\sim}\Theta}
\!\left[
h^g_\rho(\mu,\epsilon,\mathcal{T}^{n})
-h^g_\rho(\mu,\epsilon,\mathcal{T}^*)
\right]
\leq \mathcal{O}\!\left(\sqrt{\frac{m}{n}}\right).
\]
where the constant factors depend on $\sigma,d, \epsilon,\vartheta, c.$\end{enumerate}
\end{theorem}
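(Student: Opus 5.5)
The plan is to instantiate Theorem~\ref{thm:samp_complex_dl} in each case. The only problem-specific work is a metric-entropy estimate $\log\mathcal{N}(u;\mathbb{T},L^2(\rho))$ for the given atom class, together with a compactness check. Given such an estimate, we plug it into the Dudley-type integral, with lower limit $\delta/(4C_{\vartheta,c})$ and upper limit $\vartheta/(2C_{\vartheta,c})$, and then optimize over $\delta$.

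If the entropy is polynomial, $\log\mathcal{N}(u)\lesssim u^{-a}$, the integral behaves differently in three regimes:
\begin{itemize}
\item if $a<2$, it converges at $0$, so we let $\delta\to0$ and obtain $O(\sqrt{m/n})$;
\item if $a>2$, it scales like $\sqrt m\,\delta^{1-a/2}$, and balancing $8\delta$ against $\sqrt{m/n}\,\delta^{1-a/2}$ gives $\delta\asymp (m/n)^{1/a}$;
\item if $\log\mathcal{N}(u)\lesssim \log(1/u)^k$, the integral converges and we again obtain $O(\sqrt{m/n})$.
\end{itemize}

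\emph{Part 1, $d=1$.} On an interval, $T_{\rho\to\mu}$ is the monotone rearrangement taking values in $\Omega$. Since $d\rho/dx<\beta$, the $L^2(\rho)$ norm is dominated by $\sqrt\beta$ times the Lebesgue $L^2$ norm. Classical bracketing entropy for bounded monotone functions gives $\log\mathcal{N}(u)\lesssim D\sqrt\beta/u$, so $a=1$ and the bound is $O(\sqrt{m/n})$.

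\emph{Part 1, $d>1$.} Brenier maps are gradients of convex potentials $\varphi$ on $\Omega$ with $\nabla\varphi\in\Omega$, so each potential is convex and $\sqrt d D$-Lipschitz. Bronshtein's theorem bounds the sup-norm entropy of such potentials by $u^{-d/2}$. To transfer this to gradients, I would use the standard interpolation-type estimate for convex functions on a box, of the form $\|\nabla\varphi-\nabla\psi\|_{L^2}\lesssim \|\varphi-\psi\|_\infty^{1/3}$ (up to constants in $D$, $d$). This follows from one-dimensional convexity (monotone difference quotients) combined with the boundedness of the gradients. Converting with the bounds $\alpha,\beta$ gives $\log\mathcal{N}(u)\lesssim u^{-3d/2}$. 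Then $a=3d/2>2$ for $d\ge2$, and optimizing $\delta$ yields $\delta\asymp (m/n)^{2/(3d)}$, which is at most $m^{1/d}n^{-2/(3d)}$ as stated. The main obstacle is this gradient-stability step: the $L^2$ control of gradients by the uniform distance of convex potentials, and in particular the behaviour near $\partial\Omega$. I would handle it via a boundary-layer argument on a shrunken box, using the fact that the density bounds make the mass of the layer proportional to its width.

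\emph{Compactness.} Normalized Brenier maps form a closed set, since an $L^2$ limit of gradients of convex functions is again such a gradient, with values in $\Omega$ and pushforward in $\mathcal{P}(\Omega)$. Combined with the finite entropy, this gives total boundedness and hence compactness, so Theorem~\ref{thm:samp_complex_dl} applies.

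\emph{Part 2.} For subgaussian $\rho$ and $\mu$, the entropic map $T^\epsilon_{\rho\to\mu}(x)=\nabla\varphi_\epsilon(x)$ is given explicitly as a Gibbs-weighted mean of $y$ under $\mu$. Using the Mena--Niles-Weed type bounds, I would show that all derivatives of the entropic potentials satisfy $|D^k\varphi_\epsilon(x)|\le C_k(1+\|x\|)^{k}$, with constants depending on $\sigma,d,\epsilon$. Thus $\mathring{\mathbb{T}}$ lies in a weighted $C^\infty$ ball. To cover it in $L^2(\rho)$, split into two regions:
\begin{itemize}
\item \emph{Tails.} Truncate to a ball of radius $R\asymp\sqrt{\log(1/u)}$, where subgaussianity of $\rho$ and the polynomial growth of $T$ make the contribution outside the ball at most $u$.
\item \emph{Inside the ball.} Apply Kolmogorov--Tikhomirov entropy for $C^k$ balls with $k$ arbitrarily large, giving $\log\mathcal{N}\lesssim (R/u^{1/k})^d$.
\end{itemize}
Choosing $k>d/2$ makes the exponent $a=d/k<2$, so the entropy integral converges and gives $O(\sqrt{m/n})$ with constants depending on $\sigma,d,\epsilon,\vartheta,c$. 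Taking the closure $\mathbb{T}$ does not change covering numbers (up to a factor of $2$ in $u$), and the closure of a totally bounded set is compact, so Theorem~\ref{thm:samp_complex_dl} again applies.
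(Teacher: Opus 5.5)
Your proposal is correct. It follows the paper's skeleton: instantiate Theorem~\ref{thm:samp_complex_dl}, optimize $\delta$ as in Lemma~\ref{lemma:explicit rates}, and pass to the closure via Lemma~\ref{lemma:closure cover}. However, each problem-specific input is obtained differently.

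\textbf{Part 1, $d>1$.} The paper covers the Brenier potentials in $L^2(U)$ with Guntuboyina--Sen. It then transfers to gradients with the Delalande--M\'erigot inequality $\|\nabla\phi-\nabla\psi\|_{L^2(U)}^2\le C L^{4/3}\|\phi-\psi\|_{L^2(U)}^{2/3}$. You instead use Bronshtein's sup-norm entropy. This is legitimate: after normalizing at a point and subtracting $\langle x,c_0\rangle$ for the box center $c_0$, the potentials are uniformly bounded and $L$-Lipschitz with $L=\sqrt d D/2$. You then prove the sup-norm transfer yourself by coordinate slicing. Squaring the difference-quotient bound and using a boundary layer of width $h\sim\eta^{2/3}$ does give exactly the exponent $1/3$, so you land on the same $u^{-3d/2}$ entropy.

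\textbf{Remaining steps.}
\begin{itemize}
\item For $d=1$, your monotone-class entropy $u^{-1}$ replaces the paper's $u^{-3/2}$; both are subcritical.
\item For compactness, your closedness under $L^2$ limits plus total boundedness replaces Lemma~\ref{lemma:maps compact}, which uses weak compactness, OT-map stability and uniform integrability.
\item In Part 2, your $u$-dependent truncation plus Kolmogorov--Tikhomirov is more elementary than the paper's dyadic-shell use of Wellner's Corollary 2.7.4. The price is polylogarithmic factors, which do not affect integrability. There is one more such factor than you wrote, since the $C^k$ norm of the maps on $B_R$ grows like $R^{k+1}$.
\end{itemize}

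\textbf{A sharper rate from your route.} Your sup-norm route gives more than you use. Set $u=\varphi-\psi$ and $\eta=\|u\|_\infty$. On each coordinate line, integrate by parts:
\[
\int(\partial_i u)^2=[u\,\partial_i u]-\int u\,d(\partial_i u),
\]
where $\partial_i\varphi,\partial_i\psi$ are monotone with variation at most $2L$. This gives $\|\nabla\varphi-\nabla\psi\|_{L^2(dx)}^2\le 8dLD^{d-1}\eta$, i.e.\ exponent $1/2$ rather than $1/3$. That yields $\log\mathcal N(u)\le Cu^{-d}$. The resulting rates are $\sqrt m\log n/\sqrt n$ for $d=2$ and $(m/n)^{1/d}$ for $d\ge 3$, both better than the stated $m^{1/d}n^{-2/(3d)}$. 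The paper's route controls the potentials only in $L^2$, so it is tied to the exponent $1/3$.
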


\section{Conclusion}
\label{sec:Conclusion}

In this paper, we develop an approach to sparse dictionary learning in the Wasserstein space via linearized optimal transport.  Our empirical results demonstrate improved computational efficiency compared to existing Wasserstein dictionary learning methods, as well as the efficacy and interpretability of the learned atoms and reconstructions of our approaches.  Theoretically, we established sample complexity results that depend on the complexity of map representations of the data considered.  In future work, it is of interest to guarantee the \emph{identifiability} of atoms under an LWDL generative model and also understand the impact of the base measure, potentially introducing it into the learning process itself.

\section*{Acknowledgments}

BM, SA, and JMM acknowledge partial support from NSF DMS-2309519. BM and JMM acknowledge partial support from NSF DMS-2318894. SA acknowledges partial support from the NSF under Cooperative Agreement PHY-2019786 (The NSF AI Institute for Artificial Intelligence and Fundamental Interactions, \url{http://iaifi.org/}). Generative AI tools, including Claude and ChatGPT, were used to assist with code implementation and testing, synthetic data generation and processing, feedback on experiments and mathematical arguments, and manuscript and figure editing. All AI-assisted outputs were reviewed, verified, and revised by the authors as appropriate. The authors take full responsibility for the final content of the manuscript.

\bibliographystyle{plainnat}
\bibliography{mybib}

\clearpage
\appendix

\section{Additional Technical Background}
\label{appendix:AdditionalTechnicalBackground}

\subsection{Additional Results for the Wasserstein Space}

The Wasserstein-2 distance admits a dual form:
\begin{align}\label{eqn:kantorovich duality}
W_2(\mu,\nu)
=
\max_{\substack{
f(x)+g(y)\leq \frac{1}{2}\|x-y\|^2 \\
(f,g)\in L^1(\mu)\times L^1(\nu)
}}
\sqrt{\int f\,d\mu+\int g\,d\nu },
\end{align}
and any solution $(f_{\mu\rightarrow \nu},g_{\mu\rightarrow \nu})$ is referred to as a \textit{Kantorovich potential}.

Similarly, the entropy-regularized transport problem admits a dual form:
\begin{align}
r_{f,g}(x,y)
&:=\tfrac12\|x-y\|^2-f(x)-g(y),\notag\\
W_{2,\epsilon}^{2}(\mu,\nu)
&=\max_{\substack{f\in L^1(\mu)\\g\in L^1(\nu)}}\Biggl\{
\notag\\
&\qquad\int f\,d\mu+\int g\,d\nu\notag\\
&-\epsilon\int
\left[\exp\left(-r_{f,g}(x,y)/\epsilon\right)-1\right]
\,d\mu(x)d\nu(y)\Biggr\}.
\label{eqn:EOT dualform}
\end{align}
We refer to the maximizing functions in (\ref{eqn:EOT dualform}), $(f_{\mu\rightarrow \nu}^\epsilon,g_{\mu\rightarrow \nu}^\epsilon)$, as \emph{entropic (Kantorovich) potentials}.

We recall Brenier's theorem \citep{brenier1991polar}, which characterizes optimal couplings for the Wasserstein-2 distance:

\begin{theorem}\label{thm:brenier}(Brenier)
Let $\mu,\nu \in \mathcal{P}_2(\mathbb{R}^d)$ and suppose that $\mu$ is absolutely continuous w.r.t. the Lebesgue measure.  Then there exists a map $T_{\mu\rightarrow \nu}:\mathbb{R}^d\rightarrow \mathbb{R}^d$, unique up to $\mu$-a.e. equivalence, such that $(Id\times T_{\mu\rightarrow \nu})_{\#}\mu$ solves the optimal coupling problem defining $W_2(\mu,\nu)$. Furthermore, there exists a lower semicontinuous convex function $\phi_{\mu\rightarrow \nu}$ which is differentiable $\mu$-a.e., such that $T_{\mu\rightarrow \nu}(x)=\nabla \phi_{\mu\rightarrow \nu}(x)$ for $\mu$-a.e. $x\in \mathbb{R}^d$.
\end{theorem}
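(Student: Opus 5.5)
The statement "immediately above" is Brenier's theorem (Theorem~\ref{thm:brenier}), a classical result. I plan to prove it by the standard route: existence of an optimal coupling, Kantorovich duality, cyclical monotonicity, and Rockafellar's theorem.

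\textbf{Step 1: an optimal coupling exists.} Since $\mu,\nu\in\mathcal{P}_2(\mathbb{R}^d)$, the set $\Pi(\mu,\nu)$ is tight, and it is weakly compact by Prokhorov's theorem. The cost $\tfrac12\|x-y\|^2$ is continuous and bounded below, so $\zeta\mapsto\int\tfrac12\|x-y\|^2d\zeta$ is weakly lower semicontinuous. Hence a minimizer $\zeta^*$ exists, and its cost is finite because $\int \|x-y\|^2 d\zeta\le 2M_2(\mu)+2M_2(\nu)$.

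\textbf{Step 2: the support of $\zeta^*$ lies in the graph of the subdifferential of a convex function.} Expanding $\tfrac12\|x-y\|^2=\tfrac12\|x\|^2+\tfrac12\|y\|^2-\langle x,y\rangle$ shows that minimizing the cost is the same as maximizing $\int\langle x,y\rangle\,d\zeta$, because the second-moment terms are fixed by the marginals. A standard perturbation argument shows that the support of any optimal plan is cyclically monotone for $\langle x,y\rangle$. The argument is: if the inequality failed on finitely many points of the support, one could reshuffle small mass near those points and strictly improve the objective. By Rockafellar's theorem, a cyclically monotone set is contained in the subdifferential $\partial\phi$ of some proper lower semicontinuous convex function $\phi$. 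Equivalently, one can take the dual optimizer $(f,g)$ of (\ref{eqn:kantorovich duality}) and set $\phi(x)=\tfrac12\|x\|^2-f(x)$, which is convex as a supremum of affine functions after a $c$-concavification. Either way, $\operatorname{supp}\zeta^*\subseteq\{(x,y): y\in\partial\phi(x)\}$.

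\textbf{Step 3: $\mu$-a.e.\ differentiability.} $\phi$ is finite on the interior of the convex hull of the projection of $\operatorname{supp}\zeta^*$ onto the first factor, and that projection carries all of $\mu$'s mass. The boundary of a convex set is Lebesgue-null, so $\mu$, being absolutely continuous, gives it zero mass. By Alexandrov/Rademacher, $\phi$ is differentiable Lebesgue-a.e.\ on the interior of its domain, hence $\mu$-a.e. At every point of differentiability, $\partial\phi(x)=\{\nabla\phi(x)\}$. Therefore $\zeta^*$ is concentrated on the graph of $T=\nabla\phi$, which means $\zeta^*=(Id\times T)_\#\mu$ and $T_\#\mu=\nu$.

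\textbf{Step 4: uniqueness.} Suppose $\nabla\phi_1$ and $\nabla\phi_2$ both induce optimal couplings. The set of optimal couplings is convex, so $\tfrac12(\zeta_1+\zeta_2)$ is also optimal. By Steps 2 and 3, this midpoint coupling is itself concentrated on the graph of a single map. That forces $\nabla\phi_1=\nabla\phi_2$ $\mu$-a.e., giving uniqueness.

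\textbf{Main obstacle.} The most delicate step is the cyclical monotonicity of the support, or equivalently the attainment of the dual problem (\ref{eqn:kantorovich duality}) when the supports are unbounded. I would handle this by first proving duality for compactly supported measures and then passing to the limit through truncation, using the second moments to control the tails. Here I would simply invoke the duality stated in (\ref{eqn:kantorovich duality}) as given.
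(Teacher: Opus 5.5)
The paper does not prove this statement. It is quoted as background from \citet{brenier1991polar}, so there is no in-paper argument to match. Your outline is the standard primal proof, and it is correct: an optimal plan exists, its support is cyclically monotone, Rockafellar's theorem gives a convex potential, and convex functions are differentiable a.e.; this is the approach associated with McCann and Gangbo--McCann.

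The classical dual-side route is different. It minimizes $\int\phi\,d\mu+\int\phi^*\,d\nu$ over convex $\phi$ and derives $(\nabla\phi)_\#\mu=\nu$ from a first-variation computation; as far as I recall, this is how Brenier originally argued. Your route never needs attainment of the dual problem, which is the genuinely delicate point for unbounded supports. The dual route, in exchange, directly identifies $\phi=\tfrac12\|\cdot\|^2-f$ with a Kantorovich potential. The paper relies on that identification in (\ref{eqn:BP KP relation 1}) and in the proof of Proposition~\ref{prop:OT map covering global}.

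Two small corrections.

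First, your ``main obstacle'' paragraph misplaces the difficulty. The perturbation argument in Step 2 is local: it only exchanges mass between small bounded neighborhoods $U_i\times V_i$ of finitely many support points, where the cost is bounded. So it needs only continuity of the cost and finiteness of the optimal cost, which Step 1 provides. Rockafellar's theorem is global. No truncation and no appeal to (\ref{eqn:kantorovich duality}) is required. Invoking that duality ``as given'' would import exactly the harder fact your route avoids.

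Second, the statement asserts uniqueness among \emph{all} maps $T$ with $(Id\times T)_\#\mu$ optimal, not just among gradients. Your midpoint argument covers this without change. If $\zeta_i=(Id\times T_i)_\#\mu$ are both optimal, then $\zeta_i\le 2\cdot\tfrac12(\zeta_1+\zeta_2)$. Steps 2--3 applied to the optimal midpoint show it is concentrated on the graph of a single map $\nabla\phi$, so each $\zeta_i$ is concentrated there too. This gives $T_1=\nabla\phi=T_2$ $\mu$-a.e.
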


We refer to $\phi_{\mu\rightarrow \nu}$ as the Brenier potential between $\mu$ and $\nu$, and will typically denote the Brenier potential between $\nu$ and $\mu$ as $\psi_{\mu\rightarrow \nu}$.  With consistent normalizations, the Brenier and Kantorovich potentials satisfy
\begin{align}
\phi_{\mu\rightarrow \nu}&=\frac{\|\cdot\|^2}{2}-f_{\mu\rightarrow \nu},\label{eqn:BP KP relation 1}\\
\psi_{\mu\rightarrow \nu}&=\frac{\|\cdot\|^2}{2}-g_{\mu\rightarrow \nu}.\notag
\end{align}

\subsection{Covering Numbers}

\begin{definition}
    Let $(\mathcal{F}, \|\cdot\|)$ be a normed space.  For $\epsilon>0$, the \emph{$\epsilon$-ball centered at $f\in\mathcal{F}$} is the set $B_{\epsilon}(f):=\{g\in\mathcal{F} \ | \ \|f-g\|<\epsilon\}$. A collection of $\epsilon$-balls $\{B_{\epsilon}(f)\}_{f}$ is an \emph{$\epsilon$-net for a set $\mathcal{F}'\subseteq\mathcal{F}$} if $\mathcal{F}'\subseteq\bigcup_{f}B_{\epsilon}(f)$.  The \emph{covering number} of $\mathcal{F}'\subseteq\mathcal{F}$, denoted $N(\epsilon,\mathcal{F}',\|\cdot\|)$, is the minimal number of balls in any $\epsilon$-net for $\mathcal{F}'$.
\end{definition}

We will frequently use the following elementary result that establishes covering numbers scale under Lipschitz mappings.
\begin{lemma}\label{lemma:covering under lipschitz}
Let $(\mathcal{X},d_\mathcal{X}), (\mathcal{Y},d_\mathcal{Y})$ be metric spaces, and suppose there is a $K$-Lipschitz map $\phi:\mathcal{X}\rightarrow \mathcal{Y}$. Then for all $\tau\geq 0$, we have that $\mathcal{N}(K\tau;\phi(\mathcal{X}),d_\mathcal{Y})\leq \mathcal{N}(\tau;\mathcal{X},d_\mathcal{X})$.
\end{lemma}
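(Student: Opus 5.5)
The plan is the standard push-forward-of-a-net argument. First I dispose of the degenerate cases. If $\mathcal{N}(\tau;\mathcal{X},d_\mathcal{X})=\infty$ there is nothing to prove. This covers $\tau=0$ with $\mathcal{X}\neq\emptyset$, since open balls of radius $0$ are empty. So assume $\tau>0$ and that $N:=\mathcal{N}(\tau;\mathcal{X},d_\mathcal{X})<\infty$. I will also take $K>0$. When $K=0$, $\phi$ is constant and is then $K'$-Lipschitz for every $K'>0$. With the strict open-ball convention, the case $K=0$ is therefore only meaningful after this reinterpretation, or after passing to closed balls, and I would remark on this rather than treat it separately.

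Since $\mathcal{X}$ is the whole metric space here, the centers of any $\tau$-net for $\mathcal{X}$ lie in $\mathcal{X}$. This is the point that lets $\phi$ be applied to them. Choose a minimal $\tau$-net with centers $x_1,\dots,x_N\in\mathcal{X}$, so that $\mathcal{X}=\bigcup_{i=1}^N B_\tau(x_i)$.

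Now set $y_i:=\phi(x_i)\in\phi(\mathcal{X})$; I claim $\{B_{K\tau}(y_i)\}_{i=1}^N$ is a $K\tau$-net for $\phi(\mathcal{X})$. Take any $y\in\phi(\mathcal{X})$ and write $y=\phi(x)$ with $x\in\mathcal{X}$. Pick $i$ with $d_\mathcal{X}(x,x_i)<\tau$. The Lipschitz property then gives
\[
d_\mathcal{Y}(y,y_i)=d_\mathcal{Y}(\phi(x),\phi(x_i))\le K\,d_\mathcal{X}(x,x_i)<K\tau,
\]
where the strict inequality uses $K>0$. So $y\in B_{K\tau}(y_i)$.

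Hence $\phi(\mathcal{X})$ admits a $K\tau$-net of at most $N$ balls. The centers lie in $\phi(\mathcal{X})$, so the conclusion holds whether covering numbers of $\phi(\mathcal{X})$ are taken with centers in $\phi(\mathcal{X})$ or in $\mathcal{Y}$. Minimality then gives $\mathcal{N}(K\tau;\phi(\mathcal{X}),d_\mathcal{Y})\le N$.

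The only real obstacle is bookkeeping about conventions rather than mathematics: net centers must lie in the domain of $\phi$, and the open-ball convention requires $K,\tau>0$. Both are handled above.
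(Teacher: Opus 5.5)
Your proposal is correct and follows the same argument as the paper: you take the centers $x_i$ of a $\tau$-net for $\mathcal{X}$, and by the Lipschitz bound the points $\phi(x_i)$ are centers of a $K\tau$-net for $\phi(\mathcal{X})$. Your attention to conventions is warranted. The paper's proof writes $d_\mathcal{X}(x_i,x)\le\tau$, i.e.\ it uses closed balls, although its definition uses open balls. Under the open-ball convention the case $K=0$ is actually false: a constant map on a one-point space gives $\mathcal{N}(0;\phi(\mathcal{X}),d_\mathcal{Y})=\infty>1=\mathcal{N}(1;\mathcal{X},d_\mathcal{X})$. So your caveat there is best stated as a restriction to $K>0$ rather than as a reinterpretation.
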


\begin{proof}
Let $\{x_i\}_{i=1}^{N_\tau}$ be a $\tau$-net in $\mathcal{X}$, so for any $x\in \mathcal{X}$ there is an $x_i$ with $d_\mathcal{X}(x_i,x)\leq \tau$. Then $d_\mathcal{Y}(\phi(x_i),\phi(x))\leq K\tau$. Hence $\{\phi(x_i)\}_{i=1}^{N_\tau}$ is a $K\tau$ net in $\phi(\mathcal{X})\subseteq \mathcal{Y}$.
\end{proof}

\subsection{Subgaussian Probability Measures}

\begin{definition} \label{def:subgauss}
A $\mathbb{R}^d$-valued random variable $X\sim\mu$ is \emph{subgaussian} if any of the following equivalent properties hold \citep{vershynin2018high}:
\begin{enumerate}
\item There exists $\sigma\geq 0$ such that $\sup_{v\in S^{d-1}} \mathbb{E}_{X\sim \mu}\left(e^{\frac{|\langle X,v\rangle|^2}{\sigma^2}}\right)\leq 2$, where $S^{d-1}$ is the unit sphere in $\mathbb{R}^d.$
\item There exists $\sigma\geq 0$ and an absolute constant $c_G$ such that for all $t\geq 0$ we have: \begin{equation*}
\sup_{v\in S^{d-1}}\mathbb{P}_{X\sim \mu}(|\langle X,v\rangle|\geq t)\leq 2\exp\left(-\frac{c_Gt^2}{\sigma^2}\right).
\end{equation*}
\item  There exists $\sigma\geq 0$ and an absolute constant $C_G$ such that for all $p\geq 1$, we have: \begin{equation*}
\sup_{v\in S^{d-1}}\mathbb{E}_{X\sim \mu}(|\langle X,v\rangle|^p)^{1/p}\leq C_G\sigma \sqrt{p}.
\end{equation*}

\end{enumerate}
The smallest $\sigma$ such that condition 1. holds is referred to as the \emph{subgaussian norm} of $X$, denoted $\|X\|_{\mathcal{G}}.$ Up to the absolute constants $c_G$ and $C_G$, $\|X\|_{\mathcal{G}}$ is the smallest constant that makes any of the above inequalities valid.

We say that $\mu$ is \emph{subgaussian} if $X\sim \mu$ is subgaussian, and define the \emph{subgaussian norm} of $\mu$ to be $\|\mu\|_\mathcal{G}:=\|X\|_\mathcal{G}$. The set of probability measures on $\mathbb{R}^d$ with subgaussian norm bounded above by $\sigma$ is referred to as the set of \emph{$\sigma$-subgaussian measures}, and denoted $\mathcal{G}_\sigma(\mathbb{R}^d).$
\end{definition}

\begin{lemma}\label{lemma:normsubgauss}
(Lemma 1 from \citet{jin2019short}) Let $\mu\in \mathcal{G}_\sigma(\mathbb{R}^d)$ and let $X\sim \mu$. Then there exists an absolute constant $q_G$ such that $\|X\|$ is $q_G\sqrt{d}\sigma$-subgaussian, i.e., $\mathbb{E}\left(\exp\left( \frac{\|X\|^2}{2 q_G^2d \sigma^2}\right)\right) \leq 2$.

\end{lemma}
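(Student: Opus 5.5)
The plan is to reduce the claim about $\|X\|$ to the one-dimensional subgaussian condition in coordinate directions, using property 1 of Definition \ref{def:subgauss} together with Hölder's inequality. Write $\|X\|^2=\sum_{i=1}^d \langle X,e_i\rangle^2$, where $e_1,\dots,e_d$ is the standard basis, so that each $e_i\in S^{d-1}$. Then
\[
\exp\Bigl(\tfrac{\|X\|^2}{2q^2 d\sigma^2}\Bigr)=\prod_{i=1}^d \exp\Bigl(\tfrac{\langle X,e_i\rangle^2}{2q^2 d\sigma^2}\Bigr),
\]
with $q$ a constant to be chosen.

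The first step is to apply the generalized Hölder inequality to this product of $d$ factors, each with exponent $d$:
\[
\mathbb{E}\prod_{i=1}^d Z_i\le \prod_{i=1}^d \bigl(\mathbb{E} Z_i^d\bigr)^{1/d}.
\]
Raising each factor to the $d$-th power cancels the $1/d$ in the exponent. This gives
\[
\mathbb{E}\exp\Bigl(\tfrac{\|X\|^2}{2q^2d\sigma^2}\Bigr)\le \prod_{i=1}^d\Bigl(\mathbb{E}\exp\bigl(\tfrac{\langle X,e_i\rangle^2}{2q^2\sigma^2}\bigr)\Bigr)^{1/d}.
\]

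The second step bounds each one-dimensional factor. Since $\mu\in\mathcal{G}_\sigma(\mathbb{R}^d)$, property 1 gives $\mathbb{E}\exp(\langle X,e_i\rangle^2/\sigma^2)\le 2$. Choose $q$ with $2q^2\ge 1$; for concreteness take $q_G=1/\sqrt{2}$, or any larger absolute constant. The map $t\mapsto t^{1/(2q^2)}$ is concave, so Jensen's inequality yields
\[
\mathbb{E}\exp\bigl(\tfrac{\langle X,e_i\rangle^2}{2q^2\sigma^2}\bigr)\le \bigl(\mathbb{E}\exp(\langle X,e_i\rangle^2/\sigma^2)\bigr)^{1/(2q^2)}\le 2^{1/(2q^2)}\le 2.
\]
Taking the product of the $d$ factors, each raised to the power $1/d$, gives a total bound of at most $2$, which is the claim.

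I do not expect a real obstacle; the only delicate point is the bookkeeping of constants. If $\sigma$ is only assumed to satisfy one of the other equivalent conditions 2 or 3, I would first convert to condition 1 at the cost of an absolute constant, using the equivalence stated in Definition \ref{def:subgauss}. That constant is then absorbed into $q_G$, which is why $q_G$ is absolute and does not depend on $d$ or $\sigma$.
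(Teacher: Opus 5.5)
Your proof is correct, and it differs from the paper, which gives no argument and simply imports Lemma 1 of \citet{jin2019short}. Any translation between that paper's notion of norm-subGaussianity and condition 1 of Definition \ref{def:subgauss} is therefore hidden in the unspecified absolute constant $q_G$.

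You work directly with condition 1, which is how $\mathcal{G}_\sigma(\mathbb{R}^d)$ is defined. Writing $\|X\|^2=\sum_{i=1}^d\langle X,e_i\rangle^2$ turns the exponential into a product. H\"older with $d$ equal exponents (equivalently AM--GM, $\prod_i Z_i\le\frac1d\sum_i Z_i^d$) then reduces everything to the coordinate directions, where condition 1 applies verbatim. With $q=1/\sqrt2$ the Jensen step is not needed, and you get $\mathbb{E}\exp(\|X\|^2/(d\sigma^2))\le 2$. That is, $\|X\|$ is $\sqrt d\,\sigma$-subgaussian in the literal sense of condition 1, which covers both the wording of the statement and the displayed inequality with its extra factor $2$. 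The tail form that the paper later relies on also follows immediately by Markov's inequality: $\mathbb{P}(\|X\|\ge t)\le 2e^{-t^2/(d\sigma^2)}$.

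Your route buys a self-contained proof with explicit constants and no appeal to the equivalence of conditions 1--3. Your closing remark about converting from conditions 2 or 3 is therefore unnecessary. The citation buys only brevity here.

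Two small points:
\begin{itemize}
\item Say explicitly that $\mu\in\mathcal{G}_\sigma(\mathbb{R}^d)$ means $\|\mu\|_{\mathcal{G}}\le\sigma$, and that the expectation in condition 1 is nonincreasing in $\sigma$. Then condition 1 holds with $\sigma$ itself.
\item If you fix a concrete constant, prefer $q_G=1$ over $1/\sqrt2$. The paper later uses the same $q_G$ in the inclusion $\mathcal{G}_\sigma(\mathbb{R}^d)\subset\mathcal{M}_2(q_G^2d\sigma^2)$. Your argument gives $\mathbb{E}\|X\|^2\le d\sigma^2\log 2$, and $\mathbb{E}\|X\|^2$ can genuinely exceed $d\sigma^2/2$ (take $d=1$, $X=\pm\sigma\sqrt{\log 2}$). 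With $q_G=1$, your computation yields both the lemma and that inclusion.
\end{itemize}
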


\subsection{Additional Background on LOT}\label{appendix:Background on LOT}

One may interpret $\rho$-LOT as linearizing $W_{2}$ at $\rho$, in the following sense.  The measure $\mu$ is embedded in $L^2(\rho)$---formally the tangent space to the Wasserstein-2 space at $\rho$ \citep{ambrosio2008gradient}---as the optimal displacement map $(T_{\rho\rightarrow \mu}-Id)\in L^2(\rho)$.  This may be formally viewed as the logarithmic map at $\rho$ applied to the measure $\mu$.  Then $\rho$-LOT is simply the $L^2(\rho)$
distance between these logarithmic map embeddings of $\mu$ and $\nu$.  The choice of tangent space is determined by $\rho$, and practical applications of LOT are impacted by $\rho$.

\subsection{Wasserstein Barycenters}

Given reference probability measures $\{\mu_{i}\}_{i=1}^{m}$ and coefficients $\lambda\in \mathbb{R}_{\ge 0}^{m}$, one can define the \emph{Wasserstein-2} barycenter as $\argmin_{\mu}\sum_{i=1}^{m}\lambda_{i}W_{2}^{2}(\mu_{i},\mu).$  We note that the barycenter associated with $\lambda$ and $\lambda/\|\lambda\|_{1}$ are the same, so without loss of generality one often considers $\lambda\in\Delta^{m-1}$.  Such a minimizer always exists and uniqueness follows under mild conditions (e.g., at least one $\mu_{i}$ is absolutely continuous) \citep{agueh2011barycenters}.  Wasserstein-2 barycenters provide a natural geometry-preserving method of synthesizing measures in the Wasserstein-2 space.

\subsection{Synthesizing in the LWDL Framework}

Synthesis of new measures is straightforward in LWDL-OT and LWDL-EOT.  Denoting $\mathcal{T}^*:=(T_1^*,T_2^*,...,T_m^*)\in \mathbb{T}^m$ to be an optimal set of atoms, we may trivially recover probability measure atoms via the pushforwards $\mathcal{V}:=((T_j^*)_{\#}(\rho))_{j=1}^m\in \mathcal{P}_2(\mathbb{R}^d)^m$,
and may generate new, out-of-sample data via $\{[\sum_{j=1}^m\lambda_j T_j]_{\#}(\rho)|\lambda \in \mathbb{R}^m_{\geq 0}\}$.  In contrast, as the learned atoms in LWDL-KP are scalar functions as opposed to transport maps, one cannot easily convert atoms and dictionary codes into probability measures in the ambient space; first one needs to differentiate the potentials, which introduces possible numerical issues.  However, parametrizing high-dimensional measures as scalar functions (as opposed to high-dimensional vector-valued functions) allows LWDL-KP to provide an efficient encoding scheme for high-dimensional distributional data when synthesis is not required.

\section{LWDL as a Linearization of WDL}
\label{appendix:LWDL as WDL Linearization}
Consider the sparse WDL problem
\begin{equation}\label{eqn:WDL original}
\argmin_{\mathcal{V}\in \mathcal{P}_2(\mathbb{R}^d)^m}
\mathbb{E}_{\mu\sim \Theta}\Bigl[
\min_{\lambda \in \Delta^{m-1}}W_2^{2}(\mu,\mu_{\lambda,\mathcal{V}})+g(\lambda)\}\Bigr],
\end{equation}
which was first introduced in \citet{schmitz2018wasserstein} albeit without coefficient regularization (i.e., $g\equiv 0$).  Noting that optimizing over $\Delta^{m-1}$ and $\mathbb{R}_{\ge 0}^{m}$ are equivalent and replacing $W_{2}$ with the $\rho$-linearized OT distance gives:
\begin{equation}\label{eqn:WDL original, modified 1}
\begin{aligned}
&\argmin_{\mathcal{V}\in \mathcal{P}_2(\mathbb{R}^d)^m}
\mathbb{E}_{\mu\sim \Theta}\Biggl[
\min_{\lambda \in \mathbb{R}_{\ge 0}^{m}}\|T_{\rho\rightarrow\mu}
-T_{\rho\rightarrow \mu_{\lambda,\mathcal{V}}}\|_{L^2(\rho)}^{2}
+g(\lambda)\Biggr\}\Biggr].
\end{aligned}
\end{equation}
Next, we replace the Wasserstein-2 barycenter, $\mu_{\mathcal{V},\lambda}$, with , $\left(\sum_{i=1}^{m}\lambda_{i}T_{\rho\rightarrow\mu_{i}}\right)_{\#}\rho$, the barycenter with respect to the $\rho$-linearized OT distance \citep{merigot2020quantitative, werenski2025linearized}.  We may then deduce by Brenier's theorem that since $\sum_{i=1}^{m}\lambda_{i}T_{\rho\rightarrow\mu_{i}}$ is the gradient of a convex function that pushes $\rho$ onto $\left(\sum_{i=1}^{m}\lambda_{i}T_{\rho\rightarrow\mu_{i}}\right)_{\#}\rho$, it follows that $T_{\rho\rightarrow\left(\sum_{i=1}^{m}\lambda_{i}T_{\rho\rightarrow\mu_{i}}\right)_{\#}\rho}=\sum_{i=1}^{m}\lambda_{i}T_{\rho\rightarrow\mu_{i}}$. From this, (\ref{eqn:WDL original, modified 1}) reduces further to

\begin{equation*}
\begin{aligned}
&\argmin_{\mathcal{V}\in \mathcal{P}_2(\mathbb{R}^d)^m}
\mathbb{E}_{\mu\sim \Theta}\Biggl[
\min_{\lambda \in \mathbb{R}_{\ge 0}^{m}}\Biggl\{\|T_{\rho\rightarrow\mu}
-\sum_{i=1}^{m}\lambda_{i}T_{\rho\rightarrow \nu_{i}}\|_{L^2(\rho)}^{2}
+g(\lambda)\Biggr\}\Biggr].
\end{aligned}
\end{equation*}namely (\ref{eqn:LDL objective}) for LWDL-OT.  An analogous argument in which the Wasserstein-2 barycenter is replaced with the entropic Wasserstein-2 barycenter \citep{janati2020debiased, mallery2025synthesis} yields LWDL-EOT.

Similarly, under various assumptions \citep{santambrogio_2015, mallery2025synthesis}, we may realize LWDL-KP by linearizing $W_{2}^{2}$ at $\rho$ with respect to the flat $L^2(\rho)$ geometry via linear perturbations of the form $\tilde{\rho}-\rho\in L^\infty_c(\Omega)$, where $\Omega$ is the support of $\rho$. The resulting notion of derivative $\frac{\delta}{\delta\rho}$ is the \textit{$L^2$-first variation} \citep{santambrogio2017euclidean}. After fixing a normalization constant, we may identify $\frac{\delta}{\delta \rho}W_{2}^{2}(\rho,\mu)$ as the \textit{Kantorovich potential} $f_{\rho\rightarrow \mu}\in L^2(\rho)$. We may then replace $\mu_{\lambda,\mathcal{V}}$ with $\sum_{j=1}^m\lambda_j f_{\rho\rightarrow \nu_j}$, and take $\mathbb{T}:=\{f_{\rho\rightarrow \nu}\;|\; \nu\in \mathcal{P}_2(\mathbb{R}^d)\}$ to obtain the instance of (\ref{eqn:LDL objective}) for LWDL-KP.

\section{Algorithm Details}

\label{appendix:AlgorithmDetails}

We now give the generic encoder and method-specific atom parameterizations underlying Algorithm \ref{alg:generic-lwdl}.

\subsection{LWDL-OT}\label{appendix:LWDL-OTDetails} Our implementation of LWDL-OT is specific to one-dimensional probability measures supported on a compact interval $K$. That is, we assume that $\rho\in \mathcal P_2(K)$, $\operatorname{supp}(\Theta)\subset \mathcal P_2(K)$.  Without loss of generality, we take $K=[0,1]$.  This assumption applies, for instance, to normalized hyperspectral pixels, which are considered in Section~\ref{sec:lwdl one dim}.

By Theorem~\ref{thm:brenier}, optimal transport maps between one-dimensional probability measures are monotone nondecreasing. We therefore parameterize each atom $S_j^{\theta_a}$ as a monotone map on the support of \(\rho\). Denote the decoder matrix $W_{\mathrm{dec}}^{\theta_a}\in \mathbb R^{m\times (M-1)}$, and for each \(1\leq j\leq m\) and \(1\leq k\leq M-1\), define positive increments
\[
\Delta_{j,k}^{\theta_a}
:=
\log\!\left(1+\exp\left([W_{\mathrm{dec}}^{\theta_a}]_{j,k}\right)\right).
\]
We then define the value of the \(j\)-th atom at the support points $x_1< x_{2}<\dots<x_M$ by $S_j^{\theta_a}(x_1):=0$, and, for \(k=2,\dots,M\),
\[
S_j^{\theta_a}(x_k)
:=
\frac{
\sum_{s=1}^{k-1}\Delta_{j,s}^{\theta_a}
}{
\sum_{s=1}^{M-1}\Delta_{j,s}^{\theta_a}
}.
\]
This construction ensures that $S_j^{\theta_a}$ is monotone increasing on the grid, with $S_j^{\theta_a}(x_1)=0$ and $S_j^{\theta_a}(x_M)=1$. The full algorithm is described in Algorithm~\ref{alg:OTSAE}.

\begin{algorithm}[htbp!]
\caption{LWDL-OT in $\mathbb{R}$}
\label{alg:OTSAE}

\KwIn{
Samples $\{\mu_i\}_{i=1}^N \subset \mathcal P_2([0,1])$;
finitely supported reference measure $\rho\in\mathcal P_2([0,1])$ with
$\operatorname{supp}(\rho)=\{x_1<\cdots<x_M\}$;
sparsity parameter $c>0$;
number of atoms $m$;
number of encoder layers $L$;
batch size $B$;
activation maps $\{\chi_{s,\theta_c}\}_{s=0}^{L-1}$;
parameters $\theta=(\theta_a,\theta_c)$;
decoder matrix
$W_{\mathrm{dec}}(\theta_a)\in\mathbb R^{m\times(M-1)}$.
}

\vspace{0.5em}

\For{$j=1,\dots,m$}{
    \For{$k=1,\dots,M-1$}{
        Compute the positive increment $\Delta_{j,k}^{\theta_a}\gets
        \log\!\left(1+\exp\!\left([W_{\mathrm{dec}}^{\theta_a}]_{j,k}\right)\right).$
    }

    Set $Z_j(\theta_a)
        \gets
        \sum_{s=1}^{M-1}\Delta_{j,s}^{\theta_a},
        \qquad
        S_j^{\theta_a}(x_1)\gets 0.$

    \For{$k=2,\dots,M$}{
        Define the monotone atom $S_j^{\theta_a}(x_k)
            \gets
            \frac{1}{Z_j^{\theta_a}}
            \sum_{s=1}^{k-1}\Delta_{j,s}^{\theta_a}.$

    }
}

\For{$i=1,\dots,N$}{
    Compute the optimal transport map $T_{\rho\to\mu_i}$ on $\operatorname{supp}(\rho)$.

    Set $\mathcal E_\rho(\mu_i)
        \gets
        T_{\rho\to\mu_i}.$
}

\For{$i=1,\dots,N$}{
    Compute the sparse nonnegative code using Algorithm~\ref{alg:generic-lista-encoder}:
    \[
        \lambda(\mu_i;\theta_a,\theta_c)
        \gets
        \operatorname{LISTAEncoder}
        \bigl(
        \mathcal E_\rho(\mu_i),
        \mathcal S^{\theta_a};
        \theta_c
        \bigr).
    \]

}

Optimize the following objective over $\theta=(\theta_a,\theta_c)$ by batched SGD, with batch size $B$:
\[
\sum_{i=1}^N
\left[
\frac12
\left\|
T_{\rho\to\mu_i}
-
\sum_{j=1}^m
\lambda_j(\mu_i;\theta_a,\theta_c)\,
S_j^{\theta_a}\right\|_{L^2(\rho)}^2
+
c\|\lambda(\mu_i;\theta_a,\theta_c)\|_1
\right].
\]

\vspace{0.5em}

\KwOut{
Learned parameters $\theta$;
monotone transport atoms $\{S_j^{\theta_a}\}_{j=1}^m$;
sparse nonnegative codes
$\{\lambda(\mu_i;\theta_a,\theta_c)\}_{i=1}^N$.
}

\end{algorithm}

We restrict this exact OT-map parameterization to one dimension because Brenier maps are precisely monotone nondecreasing maps on an interval, so the constraint can be enforced by positive increments. In higher dimensions, enforcing that a learned vector field is the gradient of a convex potential is substantially more involved; this motivates the EOT and KP parameterizations below.

\subsection{LWDL-EOT}
\label{appendix:LWDL-EOTDetails}

We recall that LWDL-EOT first parameterizes entropic maps and then uses the corresponding displacement fields as dictionary atoms $\mathcal{S}^{\theta_a}$. Unlike the LWDL-OT implementation, which is restricted to one-dimensional measures, the LWDL-EOT implementation applies to measures in \(\mathcal P_2(\mathbb R^d)\) for any \(d\geq 1\). We assume that the base measure $\rho$ is supported on a finite set $\operatorname{supp}(\rho)\subset \mathbb R^d$, and we fix a set of points $\mathcal{Y}:=\{y_k\}_{k=1}^r\subset \mathbb R^d$ such that the convex hull of $\mathcal{Y}$ contains $\text{supp}(\mu)$ for almost every $\mu\sim\Theta$. We fix an entropic regularization parameter $\epsilon>0$ and write the decoder matrix $W_{\mathrm{dec}}^{\theta_a}
\in \mathbb R^{m\times r}.$  For each $1\leq j\leq m$, we define
\[
T_j^{\epsilon,\theta_a}(x)
:=
\sum_{k=1}^r
\zeta_{j,k}^{\epsilon,\theta_a}(x)y_k,
\]
where
\[
\begin{aligned}
a_{j,k}(x)
&:=-\frac{\|x-y_k\|^2+[W_{\mathrm{dec}}^{\theta_a}]_{j,k}}{2\epsilon},\\
\zeta_{j,k}^{\epsilon,\theta_a}(x)
&=\frac{\exp(a_{j,k}(x))}
{\sum_{\ell=1}^{r}\exp(a_{j,\ell}(x))}.
\end{aligned}
\]
In our experiments, we observed improved performance by parameterizing dictionary atoms with \textit{displacement fields} as opposed to the raw maps $T_j^{\epsilon,\theta_a}$. For this fixed $\epsilon$, define the displacement field
\[
V_j^{\epsilon,\theta_a}
:=
T_j^{\epsilon,\theta_a}-\mathrm{Id} \in L^2(\rho),
\]
and set the learned dictionary atom to be this displacement field,
\[
S_j^{\theta_a}:=V_j^{\epsilon,\theta_a},
\]
suppressing the fixed dependence on $\epsilon$ in the atom notation. Accordingly, the input embedding for a sample $\mu_i$ is the displacement field
\[
V_{\rho\to\mu_i}
:=
T_{\rho\to\mu_i}-\mathrm{Id}.
\]
The model then reconstructs $V_{\rho\to\mu_i}$ by a sparse nonnegative combination of entropic displacement atoms,
\[
V_{\rho\to\mu_i}
\approx
\sum_{j=1}^m
\lambda_j(\mu_i;\theta)S_j^{\theta_a}.
\]
This parameterization is analogous in spirit to residual parameterizations in neural networks: the identity map provides a natural baseline transport map, while the learned atom $S_j^{\theta_a}$ represents the residual displacement away from this baseline. The full algorithm is described in Algorithm \ref{alg:entropic-lwdl}.

\begin{algorithm}[htbp!]
\caption{LWDL-EOT}
\label{alg:entropic-lwdl}

\KwIn{
Base measure $\rho$;
samples $\{\mu_i\}_{i=1}^N\sim\Theta$;
target support $\mathcal Y=\{y_k\}_{k=1}^r\subset\mathbb R^d$;
entropic regularization parameter $\epsilon>0$;
sparsity parameter $c>0$;
number of atoms $m$;
number of encoder layers $L$;
batch size $B$;
activation maps $\{\chi_{s,\theta_c}\}_{s=0}^{L-1}$;
parameters $\theta=(\theta_a,\theta_c)$;
decoder matrix
$W_{\mathrm{dec}}(\theta_a)\in\mathbb R^{m\times r}$.
}

\vspace{0.5em}

\For{$i=1,\dots,N$}{
    Compute the transport map, or entropic barycentric projection,
    $T_{\rho\to\mu_i}$ on $\operatorname{supp}(\rho)$.

    Define the displacement embedding $\mathcal E_\rho(\mu_i)\gets
        V_{\rho\to\mu_i}:=T_{\rho\to\mu_i}-\mathrm{Id}.$
}

\For{$j=1,\dots,m$}{
    \For{$x\in\operatorname{supp}(\rho)$}{
        Compute normalized entropic weights
        \[
            \zeta_{j,k}^{\epsilon,\theta_a}(x)
            \gets
            \frac{
            \exp\!\left(
            -\frac{\|x-y_k\|^2+[W_{\mathrm{dec}}^{\theta_a}]_{j,k}}{2\epsilon}
            \right)
            }{
            \sum_{\ell=1}^r
            \exp\!\left(
            -\frac{\|x-y_\ell\|^2+[W_{\mathrm{dec}}^{\theta_a}]_{j,\ell}}{2\epsilon}
            \right)
            },
            \qquad k=1,\dots,r .
        \]

        Define the entropic map $T_j^{\epsilon,\theta_a}(x)\gets
            \sum_{k=1}^r
            \zeta_{j,k}^{\epsilon,\theta_a}(x)y_k.$

        Define the entropic displacement atom $S_j^{\theta_a}(x)\gets T_j^{\epsilon,\theta_a}(x)-x .$
    }
}

\For{$i=1,\dots,N$}{
    Compute the sparse nonnegative code using Algorithm~\ref{alg:generic-lista-encoder}:
    \[
        \lambda(\mu_i;\theta_a,\theta_c)
        \gets
        \operatorname{LISTAEncoder}
        \bigl(
        \mathcal E_\rho(\mu_i),
        \mathcal S^{\theta_a};
        \theta_c
        \bigr)
    \]

}

Optimize the following objective over $\theta=(\theta_a,\theta_c)$ by batched SGD, with batch size $B$:
\[
\frac1N
\sum_{i=1}^N
\left[
\frac12
\left\|
V_{\rho\to\mu_i}
-
\sum_{j=1}^m
\lambda_j(\mu_i;\theta_a,\theta_c)S_j^{\theta_a}
\right\|_{L^2(\rho)}^2
+
c\|\lambda(\mu_i;\theta_a,\theta_c)\|_1
\right].
\]

\vspace{0.5em}

\KwOut{
Learned parameters $\theta$;
entropic displacement atoms $\{S_j^{\theta_a}\}_{j=1}^m$;
sparse nonnegative codes
$\{\lambda(\mu_i;\theta_a,\theta_c)\}_{i=1}^N$.
}

\end{algorithm}

\subsection{LWDL-KP}
\label{appendix:LWDL-KPDetails}

Our implementation of LWDL-KP embeds each input distribution $\mu_i\sim\Theta$ as a Kantorovich potential $f_{\rho\rightarrow \mu_i}\in L^2(\rho)$. Since Kantorovich potentials are defined only up to an additive constant, we center each potential by subtracting its $\rho$-mean:
\[
f_{\rho\to\mu_i}
\leftarrow
f_{\rho\to\mu_i}
-
\mathbb E_{X\sim\rho}[f_{\rho\to\mu_i}(X)].\]
We again assume that the base measure $\rho$ is supported on a finite set $\{x_q\}_{q=1}^M\subset\mathbb R^d$, and we write the decoder matrix as $
W_{\mathrm{dec}}^{\theta_a}\in\mathbb R^{m\times M}.$ For each $1\leq j\leq m$, the $j$-th atom is defined on the support of $\rho$ by
\[
S_j^{\theta_a}(x_q)
:=
[W_{\mathrm{dec}}^{\theta_a}]_{j,q},
\qquad q=1,\dots,M.
\]
As with the input potentials, we center each atom by subtracting its $\rho$-mean:
\[
S_j^{\theta_a}
\leftarrow
S_j^{\theta_a}
-
\mathbb E_{X\sim\rho}[S_j^{\theta_a}(X)].
\]
The encoder then produces sparse nonnegative coefficients $\lambda(\mu_i;\theta_a,\theta_c)\in\mathbb R_{\geq 0}^m$ as described in Appendix \ref{appendix:LISTAEncoder}. The model reconstructs the centered Kantorovich potential by a sparse nonnegative combination of learned potential atoms:
\[
f_{\rho\to\mu_i}
\approx
\sum_{j=1}^m
\lambda_j(\mu_i;\theta)S_j^{\theta_a}.
\]
The full algorithm is described in Algorithm \ref{alg:KP_SAE_embedding}.

\begin{algorithm}[htbp!]
\caption{LWDL-KP}
\label{alg:KP_SAE_embedding}

\KwIn{
Samples $\{\mu_i\}_{i=1}^N\subset\mathcal P_2(\mathbb R^d)$;
discrete base measure
$\rho=\sum_{q=1}^M w_q\delta_{x_q}$ with $x_q\in\mathbb R^d$;
sparsity parameter $c\geq0$;
number of atoms $m$;
number of encoder layers $L$;
batch size $B$;
activation maps $\{\chi_{s,\theta_c}\}_{s=0}^{L-1}$;
parameters $\theta=(\theta_a,\theta_c)$;
decoder matrix
$W_{\mathrm{dec}}(\theta_a)\in\mathbb R^{m\times M}$.
}

\vspace{0.5em}

\For{$i=1,\dots,N$}{
    Compute a Kantorovich potential
    $f_{\rho\to\mu_i}\in L^2(\rho)$.

    Center the potential: $f_{\rho\to\mu_i} \gets f_{\rho\to\mu_i}-\mathbb E_{X\sim\rho}[f_{\rho\to\mu_i}(X)].$

    Set $\mathcal E_\rho(\mu_i)\gets f_{\rho\to\mu_i}.$
}

\For{$j=1,\dots,m$}{
    \For{$q=1,\dots,M$}{
        Define $S_j^{\theta_a}(x_q)\gets [W_{\mathrm{dec}}^{\theta_a}]_{j,q}.$
        }

    Center the atom: $S_j^{\theta_a}\gets S_j^{\theta_a}-\mathbb E_{X\sim\rho}[S_j^{\theta_a}(X)] .$

}

\For{$i=1,\dots,N$}{
    Compute the sparse nonnegative code using Algorithm~\ref{alg:generic-lista-encoder}:
    \[
        \lambda(\mu_i;\theta_a,\theta_c)
        \gets
        \operatorname{LISTAEncoder}
        \bigl(
        \mathcal E_\rho(\mu_i),
        \mathcal S^{\theta_a};
        \theta_c
        \bigr).
    \]

}

Optimize the following objective over $\theta=(\theta_a,\theta_c)$ by batched SGD, with batch size $B$:
\[
\frac1N
\sum_{i=1}^N
\left[
\frac12
\left\|
f_{\rho\to\mu_i}
-
\sum_{j=1}^m
\lambda_j(\mu_i;\theta_a,\theta_c)S_j^{\theta_a}
\right\|_{L^2(\rho)}^2
+
c\|\lambda(\mu_i;\theta_a,\theta_c)\|_1
\right].
\]

\vspace{0.5em}

\KwOut{
Learned parameters $\theta$;
centered Kantorovich potential atoms $\{S_j^{\theta_a}\}_{j=1}^m$;
sparse nonnegative codes
$\{\lambda(\mu_i;\theta_a,\theta_c)\}_{i=1}^N$.
}

\end{algorithm}

\subsection{LISTA-Type Encoder}
\label{appendix:LISTAEncoder}
We describe the shared encoder while leaving the atom parameterization abstract.  Write the full parameter as $\theta=(\theta_a,\theta_c)$, where $\theta_a$ parameterizes the learned atoms $\mathcal S^{\theta_a}=(S_1^{\theta_a},\dots,S_m^{\theta_a})$ and $\theta_c$ denotes the encoder parameters.  Given $\mu$, define the correlation vector
\[
h(\mu,\theta_a):=\bigl(\langle S_1^{\theta_a},\mathcal E_\rho(\mu)\rangle_{L^2(\rho)},\dots,\langle S_m^{\theta_a},\mathcal E_\rho(\mu)\rangle_{L^2(\rho)}\bigr)\in\mathbb R^m.
\]
Starting from $\lambda^{(0)}=0$, the encoder produces coefficients through the $L$-layer recursion
\[
\lambda^{(r+1)}(\mu;\theta_a,\theta_c)=\chi_{r,\theta_c}\!\left(A_{r,\theta_c}h(\mu,\theta_a)+B_{r,\theta_c}\lambda^{(r)}(\mu;\theta_a,\theta_c)+b_{r,\theta_c}\right),\quad r=0,\dots,L-1,
\]
with final output $\lambda(\mu;\theta_a,\theta_c)=\lambda^{(L)}(\mu;\theta_a,\theta_c)$.  Here $A_{r,\theta_c},B_{r,\theta_c}\in\mathbb R^{m\times m}$ and $b_{r,\theta_c}\in\mathbb R^m$ are learned affine parameters, while $\chi_{r,\theta_c}$ is an activation map into $\mathbb R_{\geq0}^m$ (e.g., ReLU, Jump ReLU, or a $k$-sparse activation).  This architecture is LISTA-type \citep{gregor2010learning}: it retains the iterative affine-thresholding form, but its learned weights and permitted activations need not correspond to a fixed ISTA update.

The decoder reconstructs the embedding as $\widehat{\mathcal E}_\rho(\mu;\theta_a,\theta_c)=\sum_{j=1}^m\lambda_j(\mu;\theta_a,\theta_c)S_j^{\theta_a}$.  We train $(\theta_a,\theta_c)$ end-to-end using the objective (\ref{eqn:discrete LWDL objective}), with AdamW \citep{loshchilov2017decoupled} and a cosine learning-rate schedule.  Algorithm \ref{alg:generic-lista-encoder} records the encoder explicitly.

\begin{algorithm}[htbp!]
\caption{LISTA-type encoder for LWDL}
\label{alg:generic-lista-encoder}

\KwIn{
Embedded samples $\mathcal E_\rho(\mu_i)\in L^2(\rho)$, $i=1,\dots,N$;
dictionary atoms
$\mathcal S^{\theta_a}=\bigl(S_1^{\theta_a},\dots,S_m^{\theta_a}\bigr)$,
$S_j^{\theta_a}\in L^2(\rho)$;
number of layers $L$;
encoder parameters
$\theta_c=\{A_{r,\theta_c},B_{r,\theta_c},b_{r,\theta_c}\}_{r=0}^{L-1}$,
where $A_{r,\theta_c},B_{r,\theta_c}\in\mathbb R^{m\times m}$ and
$b_{r,\theta_c}\in\mathbb R^m$;
activation maps
$\chi_{r,\theta_c}:\mathbb R^m\to\mathbb R_{\geq 0}^m$.
}

\vspace{0.5em}

\For{$i=1,\dots,N$}{
    Compute the correlation vector
    \[
        h(\mu_i,\theta_a)
        \gets
        \Bigl(
        \bigl\langle S_1^{\theta_a},\mathcal E_\rho(\mu_i)\bigr\rangle_{L^2(\rho)},
        \dots,
        \bigl\langle S_m^{\theta_a},\mathcal E_\rho(\mu_i)\bigr\rangle_{L^2(\rho)}
        \Bigr)
        \in \mathbb R^m .
    \]

    Initialize
    \[
        \lambda^{(0)}(\mu_i;\theta_a,\theta_c)
        \gets 0 \in \mathbb R^m .
    \]

    \For{$r=0,\dots,L-1$}{
        Update
        \[
            \lambda^{(r+1)}(\mu_i;\theta_a,\theta_c)
            \gets
            \chi_{r,\theta_c}
            \left(
                A_{r,\theta_c}h(\mu_i,\theta_a)
                +
                B_{r,\theta_c}\lambda^{(r)}(\mu_i;\theta_a,\theta_c)
                +
                b_{r,\theta_c}
            \right).
        \]
    }

    Set
    \[
        \lambda(\mu_i;\theta_a,\theta_c)
        \gets
        \lambda^{(L)}(\mu_i;\theta_a,\theta_c).
    \]
}

\vspace{0.5em}

\KwOut{
Sparse nonnegative codes
$\{\lambda(\mu_i;\theta_a,\theta_c)\}_{i=1}^N
\subset \mathbb R_{\geq 0}^m$.
}

\end{algorithm}

\FloatBarrier

\section{Related Work Table}
\label{appendix:RelatedWorkTable}

\begin{table}[H]
\centering
\small
\setlength{\tabcolsep}{5pt}
\renewcommand{\arraystretch}{1.15}
\begin{tabularx}{\textwidth}{|X||c|c|c|c|c|}
\hline
Method
& \makecell{Learns\\measures}
& \makecell{Learns\\codes}
& \makecell{Free\\support}
& Linearized
& \makecell{Sparsity\\promoting} \\
\hline\hline

\citep{schmitz2018wasserstein, mueller2023geometrically}
& \cmark & \cmark & \xmark & \xmark & \xmark \\
\hline

\citep{bonneel2016wasserstein, werenski2022measure, mallery2025synthesis}
& \xmark & \cmark & \xmark & \xmark & \xmark \\
\hline

\citep{gunsilius2024tangential, werenski2025linearized}
& \xmark & \cmark & \xmark & \cmark & \xmark \\
\hline

\citep{do2025sparse}
& \xmark & \cmark & \xmark & \xmark & \cmark \\
\hline

\textbf{This paper}
& \cmark & \cmark & \cmark & \cmark & \cmark \\
\hline
\end{tabularx}
\caption{Summary of related methods and comparison with the proposed LWDL framework.}
\label{tab:RelatedWork}
\end{table}

\section{Additional Details: HSI}\label{sec:additional details hsi}

The HSI experiments were run on a MacBook Pro with an Apple M2 Pro processor and 16 GB of unified memory, using the PyTorch MPS backend.

\begin{figure}[htbp!]
\centering
\begin{tikzpicture}[
    scale=0.7,
    transform shape,
    >=Latex,
    font=\small,
    panel/.style={inner sep=0pt, anchor=south west},
    note/.style={draw, rounded corners=2pt, align=center, inner sep=3.5pt, fill=white}
]

\node[panel] (front) at (0,0)
    {\includegraphics[width=4.9cm]{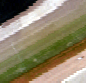}};

\foreach \dx/\dy/\img in {
    0.20/0.13/salinas_band11.png,
    0.40/0.26/salinas_band71.png,
    0.60/0.39/salinas_band151.png,
    1.35/0.88/salinas_band151.png
}{
    \node[panel, opacity=0.35]
        at ($(front.south west)+(\dx,\dy)$)
     {\includegraphics[width=4.9cm]{figures/hsi/IntroImage/\img}};
}

\node[panel] (front2) at (front.south west)
     {\includegraphics[width=4.9cm]{figures/hsi/IntroImage/salinas_rgb.png}};

\coordinate (bandOneCorner) at (front2.south east);
\coordinate (bandBCorner) at ($(front2.south east)+(1.35,0.88)$);
\node[font=\large, rotate=33, text=black!70] at ($(front2.north east)+(0.94,0.61)$) {$\cdots$};

\node[note] (callout) at ($(front2.north west)+(0.85,1.25)$)
    {Single-band\\image slices};
\draw[-{Latex[length=2.2mm]}, line width=0.7pt]
    (callout.east)
    .. controls ($(callout.east)+(0.45,0.05)$)
       and ($(front2.north west)+(2.45,0.65)$) ..
    ($(front2.north west)+(2.72,0.65)$);

\coordinate (pix) at ([xshift=3.70cm,yshift=3.30cm]front2.south west);
\draw[red, line width=1.2pt] (pix) circle (0.15cm);
\fill[red] (pix) circle (0.035cm);
\node[fill=white, inner sep=1pt, text=red!80!black, font=\scriptsize, below=2pt] at (pix) {$(65,25)$};

\node[anchor=north west, inner sep=0pt] (spec) at ($(front2.north east)+(1.45,0.60)$)
    {\includegraphics[width=4.65cm]{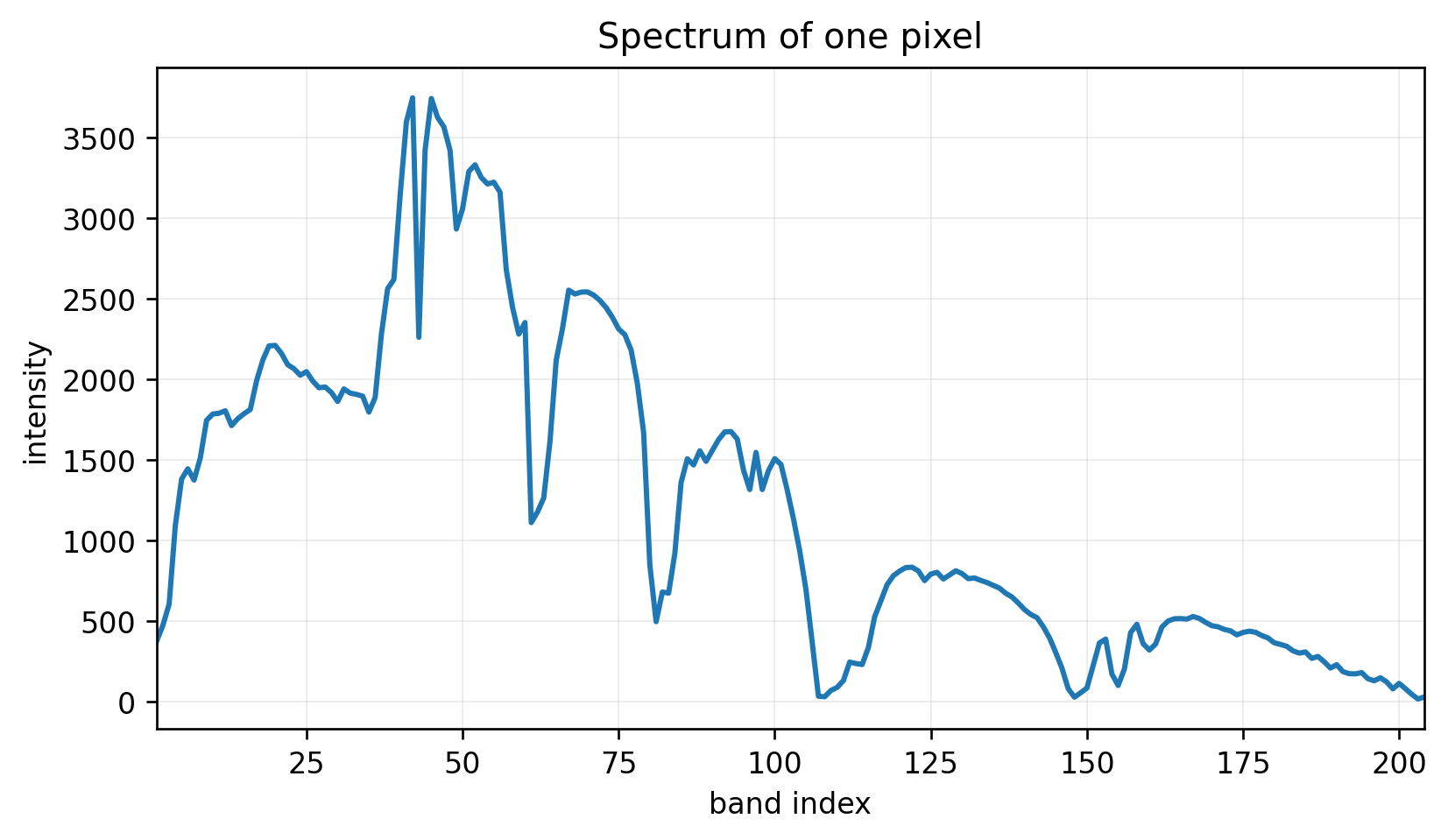}};
\node[font=\scriptsize, above=2pt of spec] {Spectrum of Pixel (65,25)};
\draw[red, line width=0.9pt, -{Latex[length=2.5mm]}]
    (pix) -- ($(spec.west)+(0.00,-0.05)$);

\node[anchor=north west, inner sep=0pt] (b11) at ($(spec.south west)+(0.00,-0.75)$)
    {\includegraphics[width=1.55cm]{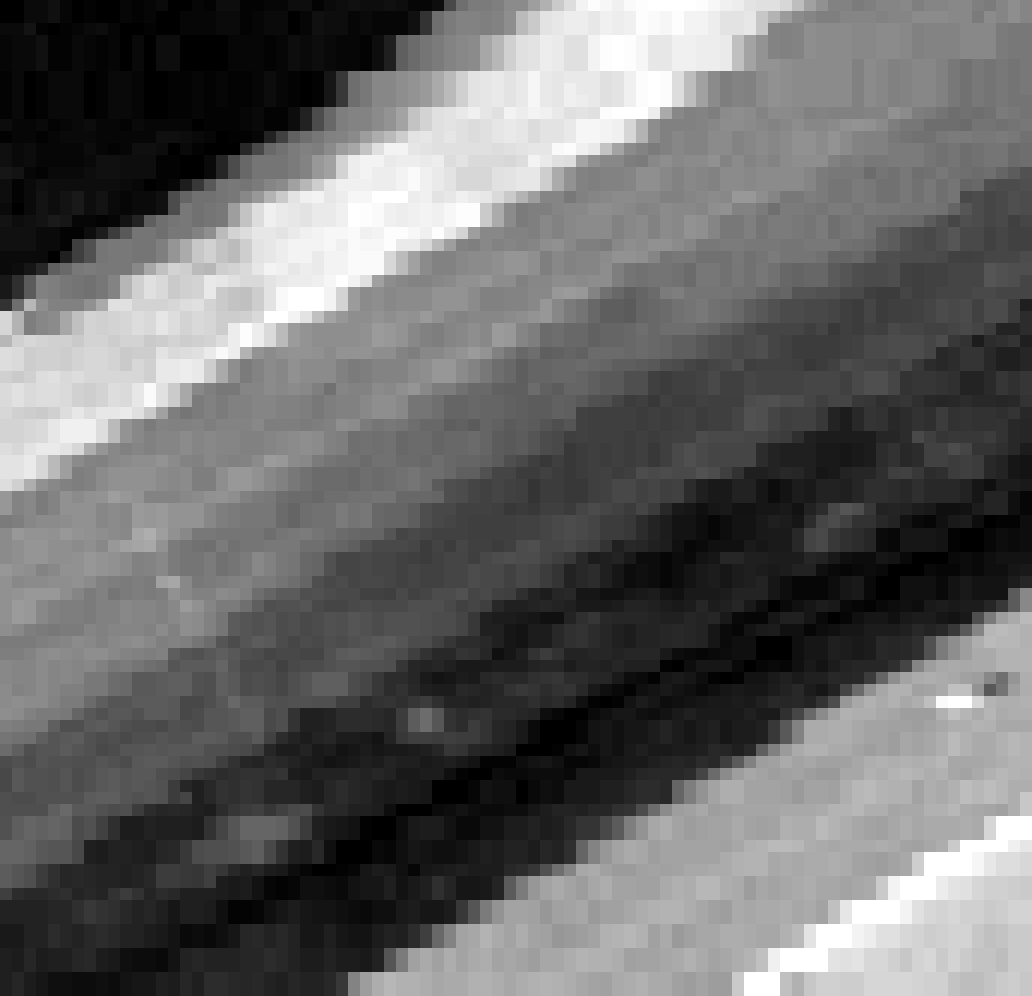}};
\node[anchor=west, inner sep=0pt] (b71) at ($(b11.east)+(0.22,0)$)
    {\includegraphics[width=1.55cm]{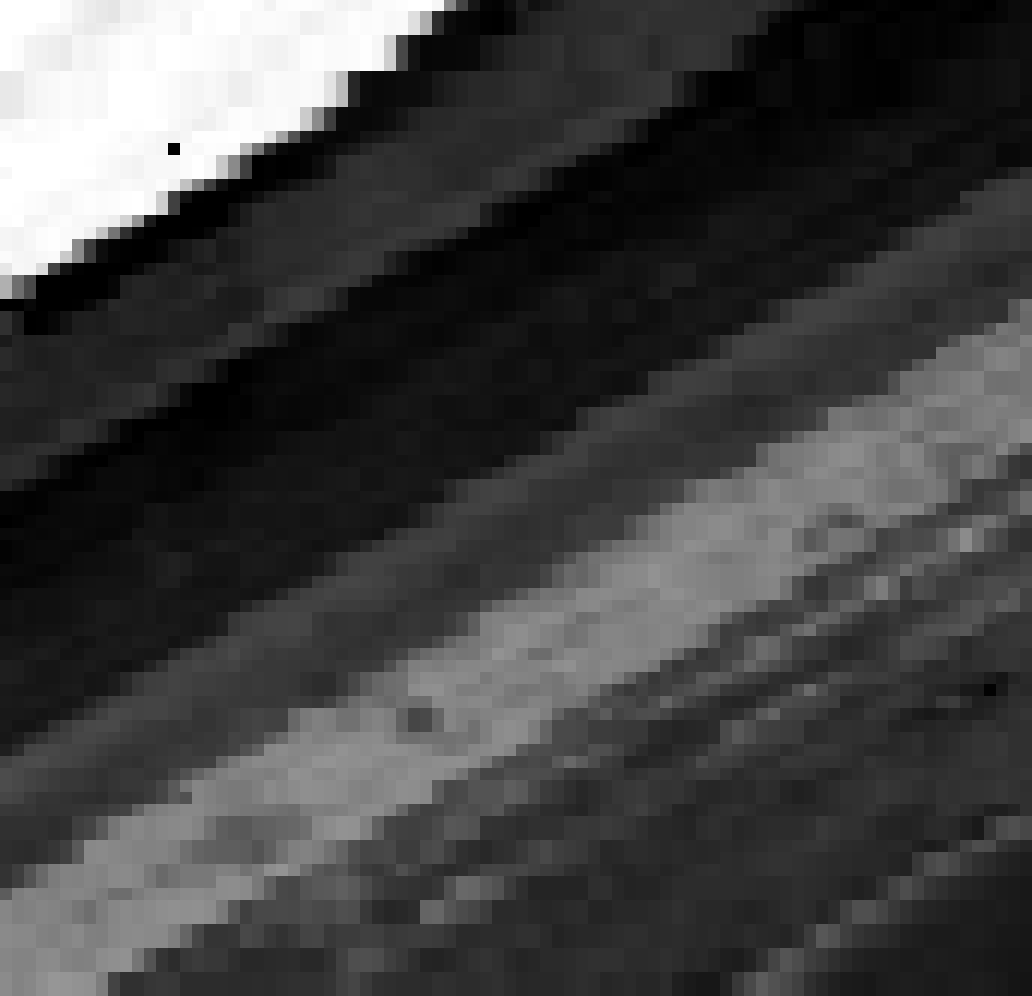}};
\node[anchor=west, inner sep=0pt] (b151) at ($(b71.east)+(0.22,0)$)
   {\includegraphics[width=1.55cm]{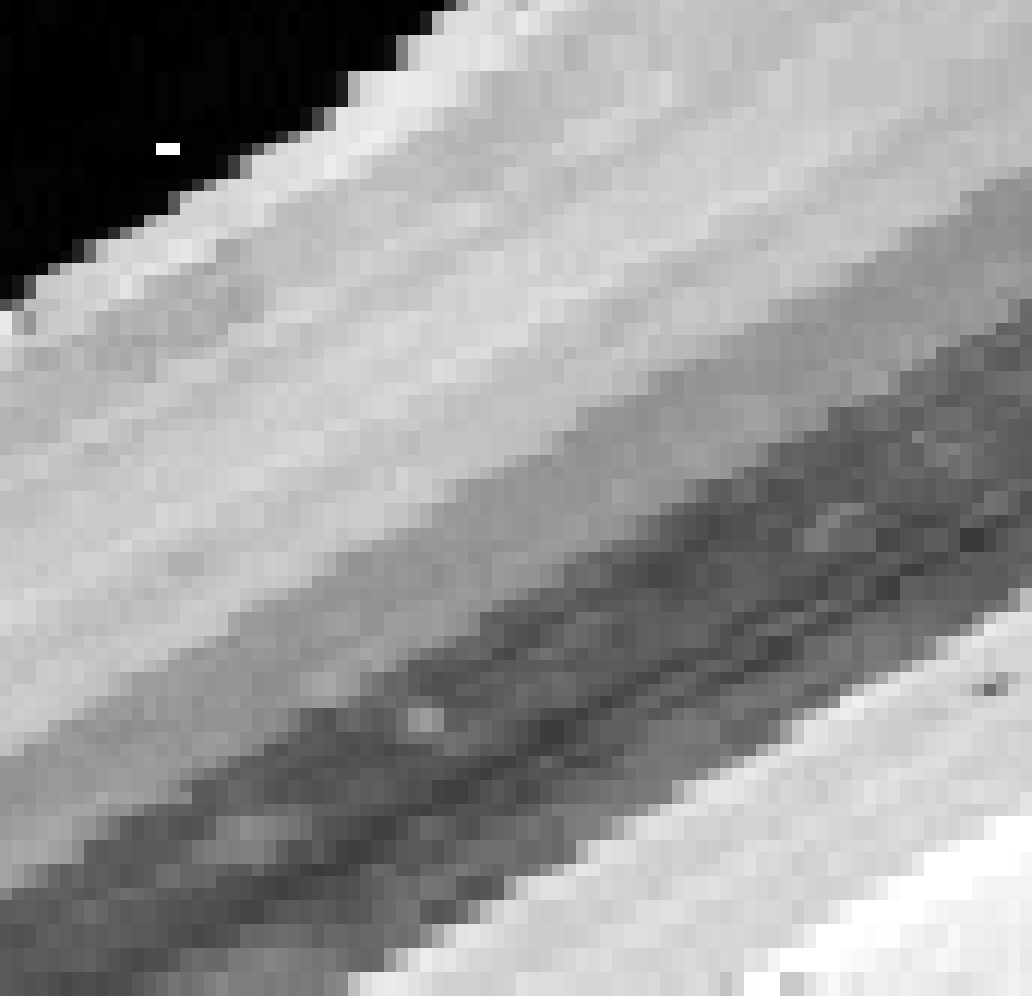}};

\node[font=\scriptsize] at ($(b11.north)!0.5!(b151.north)+(0,0.56)$)
    {Example Spectral Bands};
\node[font=\scriptsize, above=0pt of b11] {Band 11};
\node[font=\scriptsize, above=0pt of b71] {Band 71};
\node[font=\scriptsize, above=0pt of b151] {Band 151};

\foreach \nodeName in {b11,b71,b151}{
    \coordinate (tmp) at ([xshift=1.17cm,yshift=1.05cm]\nodeName.south west);
    \draw[red, line width=0.75pt] (tmp) circle (0.055cm);
}

\end{tikzpicture}
\caption{Hyperspectral image structure. A hyperspectral data cube records a spectrum at each spatial pixel: the spatial coordinates $(x,y)$ index image locations, the spectral axis indexes wavelength bands, and a selected pixel yields both a one-dimensional spectrum and corresponding intensities across individual bands.}
\label{fig:HSI_Intro}
\end{figure}

\begin{algorithm}[htbp]
\caption{Encode hyperspectral pixel with an OT map}\label{alg:pixel_to_prob_ot}
\textbf{Input:} Pixel spectrum $\mathcal{S}_{u,v,*}\in\mathbb{R}_{\ge 0}^b$, base measure $\rho\in\mathcal{P}_2([0,1])$.\\

$\{x_i\}_{i=1}^b \gets \left\{\frac{j-1}{b-1}\right\}_{j=1}^b$ \tcp*{Uniform support on $[0,1]$}

$Z \gets \sum_{i=1}^b \mathcal{S}_{u,v,i}$ \tcp*{Total intensity}

$w_i \gets \mathcal{S}_{u,v,i}/Z \quad \text{for all } i=1,\dots,b$ \tcp*{Normalize intensities}

$\mu_{u,v} \gets \sum_{i=1}^b w_i\,\delta_{x_i}$ \tcp*{Discrete probability measure}

\Return $T_{\rho\rightarrow \mu_{u,v}}$ \tcp*{Optimal transport map from $\rho$ to $\mu_{u,v}$}
\end{algorithm}

\begin{algorithm}[htbp!]
\caption{Kernel Density Estimate from Transport Map}\label{alg:transport_sample_kde}
\textbf{Input:} Samples $\{Q(p_i)\}_{i=1}^m$ of a map $Q:[0,1]\to[0,1]$ on the uniform grid
$p_i=\frac{i-1}{m-1}$, number of Monte Carlo samples $M$, bandwidth $h>0$, and output grid $\{x_\ell\}_{\ell=1}^n$.\\

Draw $u_j \sim \mathrm{Unif}[0,1]$ for $j=1,\dots,M$. \tcp*{Source samples}

Set $y_j \gets Q(u_j)$ for $j=1,\dots,M$. \tcp*{Push forward through $Q$}

Form the Gaussian kernel density estimate
\[
\widetilde p_h(x)
=
\frac{1}{Mh}\sum_{j=1}^M
\frac{1}{\sqrt{2\pi}}
\exp\!\left(-\frac{1}{2}\Bigl(\frac{x-y_j}{h}\Bigr)^2\right).
\]

Evaluate $\widetilde p_h$ on the grid $\{x_\ell\}_{\ell=1}^n$.

Normalize:
\[
\widehat p(x_\ell)
\gets
\frac{\widetilde p_h(x_\ell)}{\sum_{r=1}^n \widetilde p_h(x_r)},
\qquad \ell=1,\dots,n.
\]

\Return $\{\widehat p(x_\ell)\}_{\ell=1}^n$. \tcp*{Estimated density on the output grid}
\end{algorithm}

\begin{figure}[t!]
  \centering
  \includegraphics[width=0.95\linewidth,trim={6pt 5pt 4pt 5pt},clip]{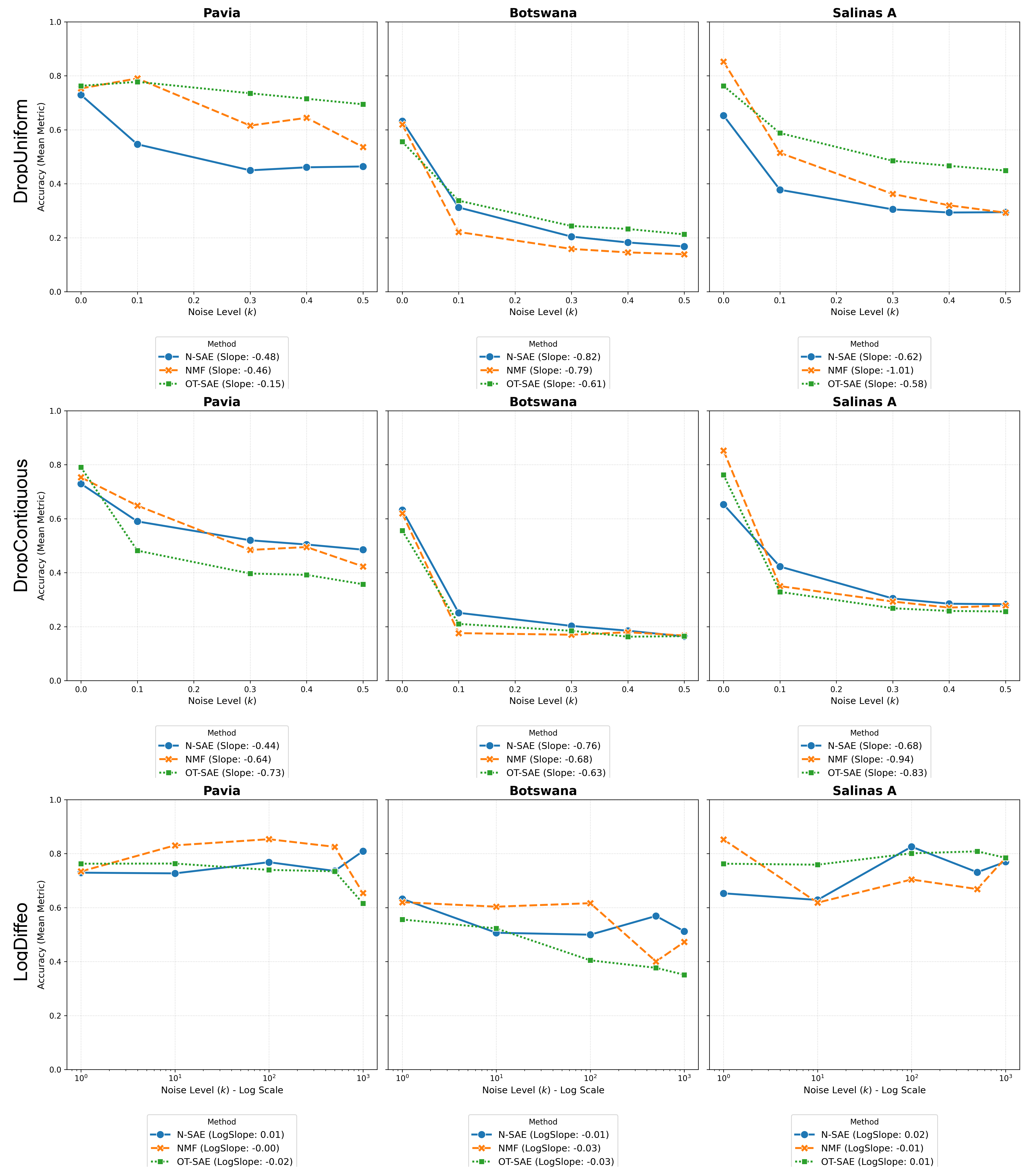}
 \caption{Clustering accuracy of LWDL-OT, NMF, and the nonnegative sparse autoencoder (N-SAE) on the Pavia Centre, Botswana, and Salinas A hyperspectral data sets under increasing corruption strength. Results are shown for uniform band dropout, contiguous band dropout, and logarithmic spectral warping. LWDL-OT is particularly robust to uniform band dropout, while performance under the other corruptions varies across data sets and methods.}
  \label{fig:robustness-combined}
\end{figure}

\begin{table}[htbp!]\label{table:fullHSI}
    \centering
    \caption{Performance comparison for DropUniform across data sets and $k$ values.}\label{table:DropUniform}
    \scriptsize
    \begin{subtable}{\linewidth}
    \begin{tabularx}{\linewidth}{l l l X X X X X X}
        \toprule
        \textbf{Data Set} & \textbf{Type} & \textbf{Key} & \textbf{k=0.0} & \textbf{k=0.1} & \textbf{k=0.2} & \textbf{k=0.3} & \textbf{k=0.4} & \textbf{k=0.5} \\
        \midrule
        Pavia & OT-SAE & $m=10$, $c=10^{-2}$ & 0.7627 & 0.7778 & 0.7537 & \textbf{0.7353} & \textbf{0.7151} & \textbf{0.6945} \\
        Pavia & OT-SAE & $m=10$, $c=5*10^{-1}$ & \textbf{0.7906} & 0.7161 & 0.6513 & 0.6034 & 0.5806 & 0.5337 \\
        Pavia & N-SAE & $m=10$, $c=10^{-3}$ & 0.6585 & 0.5341 & 0.5052 & 0.4792 & 0.4539 & 0.4453 \\
        Pavia & N-SAE & $m=10$, $c=10^{-4}$ & 0.7295 & 0.5462 & 0.5028 & 0.4498 & 0.4609 & 0.4641 \\
        Pavia & NMF & NMF (rank10) & 0.7533 & \textbf{0.7904} & \textbf{0.7850} & 0.6156 & 0.6443 & 0.5363 \\
        \midrule
        Botswana & OT-SAE & $m=15$, $c=10^{-1}$ & 0.5559 & \textbf{0.3375} & 0.2810 & 0.2435 & 0.2324 & 0.2124 \\
        Botswana & OT-SAE & $m=15$, $c=5*10^{-1}$ & 0.4887 & 0.3216 & \textbf{0.2977} & \textbf{0.2674} & \textbf{0.2499} & \textbf{0.2292} \\
        Botswana & N-SAE & $m=15$, $c=10^{-3}$ & 0.3149 & 0.1798 & 0.1546 & 0.1453 & 0.1391 & 0.1329 \\
        Botswana & N-SAE & $m=15$, $c=10^{-5}$ & \textbf{0.6325} & 0.3121 & 0.2457 & 0.2041 & 0.1824 & 0.1676 \\
        Botswana & NMF & NMF (rank15) & 0.6198 & 0.2213 & 0.1615 & 0.1586 & 0.1456 & 0.1388 \\
        \midrule
        Salinas A & OT-SAE & $m=7$, $c=10^{-3}$ & 0.7628 & \textbf{0.5879} & \textbf{0.5360} & \textbf{0.4850} & \textbf{0.4664} & \textbf{0.4487} \\
        Salinas A & OT-SAE & $m=7$, $c=10^{-5}$ & 0.7794 & 0.5580 & 0.5034 & 0.4752 & 0.4636 & 0.4432 \\
        Salinas A & N-SAE & $m=15$, $c=10^{-3}$ & 0.2853 & 0.2852 & 0.2852 & 0.2852 & 0.2852 & 0.2852 \\
        Salinas A & N-SAE & $m=15$, $c=10^{-5}$ & 0.6530 & 0.3776 & 0.3298 & 0.3051 & 0.2934 & 0.2946 \\
        Salinas A & NMF & NMF (rank7) & \textbf{0.8525} & 0.5144 & 0.4172 & 0.3622 & 0.3201 & 0.2928 \\
        \bottomrule
    \end{tabularx}
\end{subtable}

\vspace{1mm}

\begin{subtable}{\linewidth}
    \centering
    \caption{Performance comparison for DropContiguous across data sets and $k$ values.}
    \label{table:DropContiguous}
    \scriptsize
    \begin{tabularx}{\linewidth}{l l l X X X X X X}
        \toprule
        \textbf{Data Set} & \textbf{Type} & \textbf{Key} & \textbf{k=None} & \textbf{k=0.1} & \textbf{k=0.2} & \textbf{k=0.3} & \textbf{k=0.4} & \textbf{k=0.5} \\
        \midrule
        Pavia & OT-SAE & $m=10$, $c=10^{-2}$ & 0.7627 & 0.4841 & 0.3815 & 0.3731 & 0.3494 & 0.3603 \\
        Pavia & OT-SAE & $m=10$, $c=5\cdot 10^{-1}$ & \textbf{0.7906} & 0.4818 & 0.4114 & 0.3968 & 0.3918 & 0.3572 \\
        Pavia & N-SAE & $m=10$, $c=10^{-3}$ & 0.6585 & 0.4574 & 0.4495 & 0.4453 & 0.4453 & 0.4453 \\
        Pavia & N-SAE & $m=10$, $c=10^{-4}$ & 0.7295 & 0.5900 & 0.4980 & \textbf{0.5200} & \textbf{0.5044} & \textbf{0.4852} \\
        Pavia & NMF & NMF (rank10) & 0.7341 & \textbf{0.6447} & \textbf{0.5466} & 0.5006 & 0.4949 & 0.4230 \\
        \midrule
        Botswana & OT-SAE & $m=15$, $c=10^{-1}$ & 0.5559 & 0.2101 & 0.1970 & 0.1846 & 0.1627 & 0.1647 \\
        Botswana & OT-SAE & $m=15$, $c=5\cdot 10^{-1}$ & 0.4887 & 0.2079 & 0.1818 & 0.1848 & 0.1655 & 0.1502 \\
        Botswana & N-SAE & $m=15$, $c=10^{-3}$ & 0.3149 & 0.1850 & 0.1663 & 0.1708 & 0.1646 & \textbf{0.1690} \\
        Botswana & N-SAE & $m=15$, $c=10^{-5}$ & \textbf{0.6325} & \textbf{0.2511} & \textbf{0.2134} & \textbf{0.2027} & \textbf{0.1849} & 0.1638 \\
        Botswana & NMF & NMF (rank15) & 0.6201 & 0.1882 & 0.1970 & 0.1976 & 0.1791 & 0.1671 \\
        \midrule
        Salinas A & OT-SAE & $m=7$, $c=10^{-3}$ & 0.7628 & 0.3288 & 0.2906 & 0.2681 & 0.2581 & 0.2561 \\
        Salinas A & OT-SAE & $m=7$, $c=10^{-5}$ & 0.7794 & 0.3300 & 0.2772 & 0.2573 & 0.2418 & 0.2461 \\
        Salinas A & N-SAE & $m=7$, $c=10^{-3}$ & 0.2853 & 0.3064 & 0.3103 & \textbf{0.3170} & \textbf{0.3127} & \textbf{0.3018} \\
        Salinas A & N-SAE & $m=7$, $c=10^{-5}$ & 0.6530 & 0.4226 & \textbf{0.3401} & 0.3045 & 0.2849 & 0.2829 \\
        Salinas A & NMF & NMF (rank7) & \textbf{0.8523} & \textbf{0.4544} & 0.3189 & 0.2923 & 0.2705 & 0.2784 \\
        \bottomrule
    \end{tabularx}
\end{subtable}
\vspace{1mm}
\begin{subtable}{\linewidth}
    \centering
    \caption{Performance comparison for LogDiffeo across data sets and $k$ values.}\label{table:LogDiffeo}
    \scriptsize
   \begin{tabularx}{\linewidth}{l l l X X X X X X}
    \toprule
    \textbf{Data Set} & \textbf{Type} & \textbf{Key} & \textbf{k=None} & \textbf{k=1} & \textbf{k=10} & \textbf{k=100.0} & \textbf{k=500} & \textbf{k=1000} \\
    \midrule
    Pavia & OT-SAE & $m=10$, $c=10^{-2}$ & 0.7627 & 0.7633 & 0.7397 & 0.7346 & 0.6157 & 0.6855 \\
    Pavia & OT-SAE & $m=10$, $c=5*10^{-1}$ & \textbf{0.7906} & 0.7958 & 0.7557 & 0.6517 & 0.6118 & 0.6240 \\
    Pavia & N-SAE & $m=10$, $c=10^{-3}$ & 0.6585 & 0.5823 & 0.5665 & 0.5785 & 0.5416 & 0.5395 \\
    Pavia & N-SAE & $m=10$, $c=10^{-4}$ & 0.7295 & 0.7269 & 0.7683 & 0.7359 & \textbf{0.8093} & \textbf{0.8199} \\
    Pavia & NMF & NMF (rank10) & 0.7341 & \textbf{0.8307} & \textbf{0.8535} & \textbf{0.8253} & 0.6539 & 0.7295 \\
    \midrule
    Botswana & OT-SAE & $m=15$, $c=10^{-1}$ & 0.5559 & 0.5229 & 0.4047 & 0.3768 & 0.3510 & 0.3600 \\
    Botswana & OT-SAE & $m=15$, $c=5*10^{-1}$ & 0.4887 & 0.4453 & 0.3558 & 0.3174 & 0.3161 & 0.3178 \\
    Botswana & N-SAE & $m=15$, $c=10^{-3}$ & 0.3149 & 0.2956 & 0.4666 & 0.5683 & 0.5110 & 0.5193 \\
    Botswana & N-SAE & $m=15$, $c=10^{-5}$ & \textbf{0.6325} & 0.5068 & 0.4996 & \textbf{0.5690} & 0.5122 & \textbf{0.5607} \\
    Botswana & NMF & NMF (rank15) & 0.6201 & \textbf{0.6034} & \textbf{0.6164} & 0.4002 & 0.4726 & 0.4289 \\
    \midrule
    Salinas A & OT-SAE & $m=7$, $c=10^{-3}$ & 0.7628 & 0.7590 & 0.8012 & \textbf{0.8085} & \textbf{0.7846} & 0.7765 \\
    Salinas A & OT-SAE & $m=7$, $c=10^{-5}$ & 0.7794 & \textbf{0.7780} & 0.7757 & 0.7900 & 0.7553 & 0.7411 \\
    Salinas A & N-SAE & $m=7$, $c=10^{-3}$ & 0.2853 & 0.5139 & 0.5830 & 0.6240 & 0.6518 & 0.6406 \\
    Salinas A & N-SAE & $m=7$, $c=10^{-5}$ & 0.6530 & 0.6283 & \textbf{0.8258} & 0.7310 & 0.7697 & \textbf{0.8191} \\
    Salinas A & NMF & NMF (rank7) & \textbf{0.8523} & 0.6184 & 0.7042 & 0.6687 & 0.7844 & 0.7392 \\
    \bottomrule
\end{tabularx}
\end{subtable}
\end{table}

\section{Additional Details: 3D Point-Clouds}\label{sec:Additional Details Point Clouds}

The point-cloud experiments were run on a MacBook Pro with an Apple M2 Pro processor and 16 GB of unified memory, using the PyTorch MPS backend.

We use $\epsilon=0.025$, and $\rho$ given by the uniform measure from 1000 points uniformly sampled from $[0,1]^3$. We run Algorithm \ref{alg:entropic-lwdl} on this data, with $\mathcal{Y}$ given by a $17\times 17\times 17$ uniformly spaced grid in $[0,1]^3$, $\epsilon=0.025$, $c=0.0001$, $m=30$, $L=20$ and ReLU activations. We train for 2000 epochs, achieving a final mean squared $L^2(\rho)$ reconstruction error of 0.00173.

\begin{algorithm}[htbp]
\caption{Sample a ModelNet mesh as a point-cloud measure}\label{alg:modelnet_mesh_to_measure}
\textbf{Input:} ModelNet mesh file $M$ in \texttt{.off} format, support size $n$, random seed $s$.\\

$V,F \gets \textsc{ParseOFF}(M)$ \tcp*{Read vertices and polygonal faces}

$\Delta \gets \textsc{Triangulate}(F)$ \tcp*{Fan-triangulate polygonal faces}

$A_\ell \gets \frac{1}{2}\left\|(v_{\ell,2}-v_{\ell,1}) \times (v_{\ell,3}-v_{\ell,1})\right\|_2
\quad \text{for each triangle } \Delta_\ell$ \tcp*{Triangle areas}

$p_\ell \gets A_\ell / \sum_k A_k$ \tcp*{Sample triangles proportional to surface area}

\For{$i=1,\dots,n$}{
    $\ell_i \sim \mathrm{Categorical}(p)$ \tcp*{Choose a surface triangle}

    $r_1,r_2 \sim \mathrm{Unif}(0,1)$

    $a \gets 1-\sqrt{r_1}, \quad
     b \gets \sqrt{r_1}(1-r_2), \quad
     c \gets \sqrt{r_1}r_2$ \tcp*{Uniform barycentric coordinates}

    $y_i \gets a v_{\ell_i,1} + b v_{\ell_i,2} + c v_{\ell_i,3}$
}

$\{\tilde y_i\}_{i=1}^n \gets \textsc{NormalizeToUnitCube}(\{y_i\}_{i=1}^n)$ \tcp*{Center and rescale into $[0,1]^3$}

$w_i \gets 1/n \quad \text{for all } i=1,\dots,n$ \tcp*{Uniform masses}

$\mu_M \gets \sum_{i=1}^n w_i\,\delta_{\tilde y_i}$ \tcp*{Point-cloud probability measure}

\Return $\mu_M$
\end{algorithm}

\section{Details: Synthetic LBCM}

We provide implementation details for the experiment in Section~\ref{sec:recovering data from synthetic}.

This experiment was run on a MacBook Pro with an Apple M2 Pro processor and 16 GB of unified memory, using the PyTorch MPS backend.

We run Algorithm~\ref{alg:entropic-lwdl} with \(m=5\) atoms, entropic parameter \(\epsilon=0.025\), and a 20-layer LISTA encoder. As described in Algorithm \ref{alg:entropic-lwdl}, we embed the data with the displacement parameterization $V_{\rho\rightarrow \mu_i}$ and center the displacements using the mean computed from the training split. Sparsity is encouraged using a ReLU encoder together with an \(\ell_1\) coefficient penalty \(c=10^{-4}\). We train for 1000 epochs using AdamW, batch size 1024, learning rate \(1.5\times 10^{-4}\), and a cosine schedule with minimum learning rate \(5\times 10^{-6}\). We report the checkpoint at epoch 862 from the selected run with model seed 0.

In the LWDL-EOT model, each learned atom is represented as a free-support empirical measure by pushing forward \(\rho\) through a learned entropic map. Concretely, we choose target grid points
\(\mathcal Y=\{y_q\}_{q=1}^{64^2}\) on a uniform grid in \([0,1]^2\). For atom \(j\), the learned parameters are centered grid potentials \(h_j\in\mathbb R^{64^2}\), and the map is
\[
\begin{aligned}
A_j^\theta(x_\ell)
&=\sum_{q=1}^{64^2}\pi_{j\ell q}y_q,\\
\pi_{j\ell q}
&\propto\exp\left(
\frac{-\frac12\|x_\ell-y_q\|^2+h_{jq}}{\epsilon}
\right).
\end{aligned}
\]
The corresponding learned atom is \((A_j^\theta)_\#\rho\), supported by a set of points \(\{A_j^\theta(x_\ell)\}_{\ell=1}^M\) in the convex hull of $\{y_q\}_{q=1}^{64^2}.$

Qualitatively, the reconstructions in Figure \ref{fig:mnist-ebcm-reconstructions} appear to capture ``large-scale'' features of data points, but fail to resolve fine details, which are obscured by local jitter. We posit that the jitter is due to the reconstructions inheriting properties of the base measure $\rho$, which is significantly more diffuse than the data, as is seen in the top right of Figure \ref{fig:mnist-ebcm-synthetic-recovery}. We give more evidence towards this hypothesis in Section \ref{subsec:ELOTDL}.

\label{appendix:SyntheticLBCMDetails}
\section{Details: WDL Comparison}
\label{appendix:ComparisonWDLDetails}

We provide implementation details for the comparison in Section~\ref{subsec:ComparisonWDL}.

All methods in this comparison were run on the same server equipped with four NVIDIA Tesla K80 GPUs.

In the experiments, we evaluate WDL and LWDL on data sets of size $n\in \{100,1000\}$, with $m=10$ atoms, and note that we scale the batch size from $B=100$ to $B=256$ for the LWDL methods on the $n=1000$ data sets. As the WDL implementation in \citet{schmitz2018wasserstein} relies on an additional regularization hyperparameter $\gamma$, we run two instances of the algorithm with $\gamma\in \{0.5,2\}$ in each trial. We use the WDL implementation released with \cite{schmitz2018wasserstein} without additional implementation-level optimization. Each method is run for 1000s, with additional time given to complete partial iterations, or failure due to numerical instability.

We note several practical differences between the WDL and LWDL implementations. As discussed in Section \ref{sec:ModelsMethodsRelatedWork}, WDL assumes a fixed support, whereas LWDL allows the support to vary. In addition, LWDL operates on precomputed transport-map embeddings of the data, while WDL works directly with the input measures. Accordingly, LWDL reconstruction proceeds in map space: for each data point, we form the coefficient-weighted combination of the learned map atoms, push the base measure $\rho$ forward through the resulting map, and evaluate the reconstructed measure using the squared $W_2$ distance. As LWDL-KP does not allow for synthesis of a probability measure in this way, we do not compare this directly to \citet{schmitz2018wasserstein}.

On the $n=100$ Pavia experiment, WDL reaches mean errors of $4.6\times 10^{-3}$ and $4.9\times 10^{-3}$ for $\gamma=0.5$ and $\gamma=2$, respectively, while LWDL-OT reaches $5\times 10^{-5}$. On the larger $n=1000$ experiment, both WDL runs terminate due to numerical instability at epoch 7, with final errors around $6\times 10^{-3}$, whereas LWDL-OT reaches $9.3\times 10^{-5}$ within the same time budget. For the comparison of WDL with LWDL-EOT, we use the two-dimensional MNIST data set generated in Section \ref{sec:recovering data from synthetic}. The same evaluation protocol is used: each method is run until the 1000s time budget is reached or until numerical failure, and performance is measured by the final mean squared $W_2$ reconstruction error. On MNIST with $n=100$, WDL obtains errors of $1.2\times 10^{-2}$ and $1.0\times 10^{-2}$ for $\gamma=0.5$ and $\gamma=2$, respectively, while LWDL-EOT obtains $2.6\times 10^{-3}$. For $n=1000$, both WDL runs obtain errors of $1.3\times 10^{-2}$, compared with $2.7\times 10^{-3}$ for LWDL-EOT. We note that the reported times for the LWDL methods include the time spent computing $\mu\rightarrow T^\epsilon_{\rho\rightarrow \mu}$, the initial map embeddings of the data measures.

\section{Additional Details: LLM Data}\label{sec:additional details llm}

 Pythia-410M has $\approx 410$ million parameters, with $d_{model}=1024$ and 24 layers, and is trained on a deduplicated version of The Pile \citep{gao2020pile} data set. 

\subsection{Corrupted Text}

The synthetic text-corruption experiments were run on a MacBook Pro with an Apple M2 Pro processor and 16 GB of unified memory, using the PyTorch MPS backend.

\subsubsection{Corruptions}
We define the following corruptions. Each corruption comes with parameters we call \textit{intensities}, denoted $r_i$:
\begin{itemize}
\item \textbf{$r$-Global Swap Words:} Uniformly select $2r$ distinct word tokens $w_1,\dots,w_{2r}$ in $D^{(0)}$. Pair them as
\[
(w_1,w_2), (w_3,w_4), \dots, (w_{2r-1}, w_{2r}),
\]
and swap the two words within each pair.
\item \textbf{$r$-Permute Letters Within Words:}
Uniformly select $r$ distinct word tokens in $D^{(0)}$ among those of length at least $3$. For each selected word, apply a uniformly random permutation to its letters. Words of length $1$-$2$ are not modified.
\item \textbf{$r$-Delete Words:}
View $D^{(0)}$ in word-position space $\{1,\dots,n\}$. Uniformly (subject to feasibility) select and delete $r$ words, leaving punctuation and whitespace tokens unchanged.
\item \textbf{$r$-Duplicate Words:}
Uniformly select $r$ distinct word tokens $w_1,\dots,w_{r}$ in $D^{(0)}$. For each selected word $w$, insert a duplicate copy of $w$ immediately after it, separated by a single space, and ensure a trailing space after the duplicate unless the following token already begins with whitespace.
\end{itemize}

Our base document $D^{(0)}$ is an 850 word review of a television show, which is in the original training set of our model. When tokenized and embedded as a measure $\rho$, $|\text{supp}(\rho)|=1130.$ We generate $19200=4\times 12\times 400$ data documents:
\begin{itemize}
\item \textbf{Global Swap Words} with $r\in \{1,25,50,75,100,125,150,175,200,225,250,275\}$;
\item \textbf{Delete Word Spans} with $r\in \{1,25,50,75,100,125,150,175,200,225,250,275\}$;
\item \textbf{Permute Letters} with $r\in \{1,25,50,75,100,125,150,175,200,225,250,275\}$;
\item \textbf{Duplicate Word} with $r\in \{1,25,50,75,100,125,150,175,200,225,250,275\}$.
\end{itemize}

We run Algorithm \ref{alg:KP_SAE_embedding} on the corrupted data with four architectures:
\begin{enumerate}
\item $10$ atoms and $\chi=\text{TopK}_2$.
\item $40$ atoms and $\chi=\text{TopK}_3$.
\item $10$ atoms and $\chi=\text{JumpReLU}_\tau$, with learned $\tau\geq0$ and $\ell^1$ regularization $c=10^{-3}.$
\item $40$ atoms and $\chi=\text{JumpReLU}_\tau$, with learned $\tau\geq 0$ and $\ell^1$ regularization $c=10^{-3}$.
\end{enumerate}
We set $\ell^1$ regularization to zero for the trials with $\text{TopK}$ activation, due to the hard sparsity constraint in $\text{TopK}$. We train for 2000 epochs with the AdamW optimizer, with an initial learning rate of $5*10^{-6}$ and batch size $1024$.
\begin{figure}[t!]
    \centering

    \begin{subfigure}{0.49\linewidth}
        \centering
        \includegraphics[width=\linewidth]{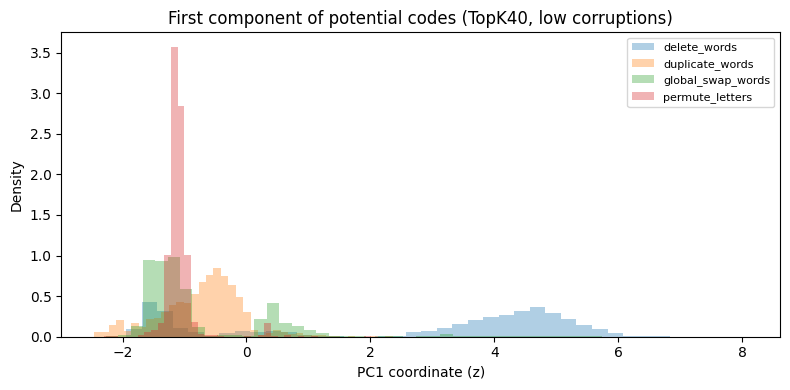}
        \label{fig:topk40_low}
    \end{subfigure}
    \hfill
    \begin{subfigure}{0.49\linewidth}
        \centering
        \includegraphics[width=\linewidth]{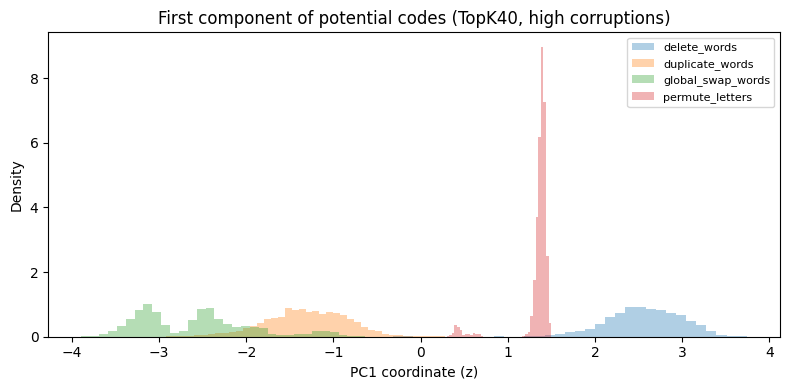}
        \label{fig:topk40}
    \end{subfigure}

    \caption{(Left) Projection of subset of data with $r\leq 75$ to principal component of $\text{TopK}_{40}$ SAE. (Right) Projection of subset of data with $r> 75$ to principal component of $\text{TopK}_{40}$ SAE.}
    \label{fig:topk40_combined}
\end{figure}

To further investigate the effect of $r$ on the embeddings, in Figure \ref{fig:topk40_combined} we plot the codes projected onto the first principal component of the $\text{TopK}_{40}$ atoms, where the plot on the left visualizes the projection of data points with a low degree of corruption ($r\leq 75$), and the plot on the right visualizes projection of highly corrupted data points ($r> 75$). We see that there is significantly less overlap between classes in the highly corrupted data, which aligns with the jump in accuracy at higher corruption strength.

\subsection{The Pile}\label{sec:pile details}

The Pile experiment was run on a server equipped with four NVIDIA Tesla K80 GPUs. In Table \ref{tab:feature_purity}, we display the composition of sources for top ranked features by total coefficient magnitude.

\begin{table}[t!]
\centering
\caption{Top learned features ranked by source purity. Purity is computed as the fraction of labeled top-activating documents belonging to the most common Pile source.}
\label{tab:feature_purity}
\small
\setlength{\tabcolsep}{4pt}
\begin{tabular}{rrrrlp{0.35\linewidth}}
\toprule
Rank & Feature & Purity & Docs & Top source & Source distribution \\
\midrule
1 & 83 & 1.00 & 100 & Pile-CC & Pile-CC (100\%) \\
2 & 263 & 1.00 & 100 & StackExchange & StackExchange (100\%) \\
3 & 82 & 1.00 & 100 & Pile-CC & Pile-CC (100\%) \\
4 & 324 & 1.00 & 100 & USPTO Backgrounds & USPTO Backgrounds (100\%) \\
5 & 314 & 1.00 & 100 & PubMed Abstracts & PubMed Abstracts (100\%) \\
6 & 74 & 1.00 & 100 & PubMed Abstracts & PubMed Abstracts (100\%) \\
7 & 134 & 1.00 & 100 & StackExchange & StackExchange (100\%) \\
8 & 506 & 1.00 & 100 & Github & Github (100\%) \\
9 & 210 & 1.00 & 100 & Github & Github (100\%) \\
10 & 407 & 1.00 & 100 & StackExchange & StackExchange (100\%) \\
11 & 469 & 1.00 & 100 & Wikipedia (en) & Wikipedia (en) (100\%) \\
12 & 334 & 1.00 & 100 & Pile-CC & Pile-CC (100\%) \\
13 & 479 & 1.00 & 100 & Github & Github (100\%) \\
14 & 327 & 1.00 & 100 & Wikipedia (en) & Wikipedia (en) (100\%) \\
15 & 178 & 1.00 & 100 & FreeLaw & FreeLaw (100\%) \\
16 & 435 & 1.00 & 100 & Pile-CC & Pile-CC (100\%) \\
17 & 359 & 1.00 & 100 & HackerNews & HackerNews (100\%) \\
18 & 98 & 1.00 & 100 & Wikipedia (en) & Wikipedia (en) (100\%) \\
19 & 260 & 1.00 & 100 & Wikipedia (en) & Wikipedia (en) (100\%) \\
20 & 245 & 0.99 & 100 & Github & Github (99\%), StackExchange (1\%) \\
21 & 132 & 0.99 & 100 & Wikipedia (en) & Wikipedia (en) (99\%), PubMed Abstracts (1\%) \\
22 & 44 & 0.99 & 100 & Wikipedia (en) & Wikipedia (en) (99\%), Pile-CC (1\%) \\
23 & 338 & 0.99 & 100 & Github & Github (99\%), StackExchange (1\%) \\
24 & 341 & 0.98 & 100 & Github & Github (98\%), StackExchange (2\%) \\
25 & 89 & 0.98 & 100 & PubMed Abstracts & PubMed Abstracts (98\%), NIH ExPorter (2\%) \\
26 & 501 & 0.97 & 100 & Github & Github (97\%), StackExchange (3\%) \\
27 & 419 & 0.97 & 100 & Wikipedia (en) & Wikipedia (en) (97\%), Pile-CC (1\%), ArXiv (1\%), Github (1\%) \\
28 & 32 & 0.97 & 100 & StackExchange & StackExchange (97\%), Github (3\%) \\
29 & 373 & 0.97 & 100 & PubMed Abstracts & PubMed Abstracts (97\%), NIH ExPorter (2\%), Pile-CC (1\%) \\
30 & 90 & 0.97 & 100 & Github & Github (97\%), Wikipedia (en) (1\%), Pile-CC (1\%), PubMed Abstracts (1\%)
\end{tabular}
\end{table}

As stated in the main body, atom specialization goes beyond simply identifying the source: An interesting case study is with $f_{469},f_{327},f_{132}$ and $f_{44}$, which are all strongly $(>99\%)$ associated with Wikipedia articles, but have noticeably different content among their respective top 20 documents: $f_{469}$ predominantly activates on athletes, $f_{327}$ on general notable figures, $f_{132}$ on population centers (villages, counties, islands), and $f_{44}$ on disambiguation pages \citep{wikipedia_help_disambiguation} for proper nouns.  We note that not all features can be distinguished in this way, for instance, $f_{98}$ and $f_{469}$ have nearly identical top documents, indicating substantial redundancy among the learned atoms, particularly among highly source-concentrated atoms. This could possibly be mitigated by increasing sparsity or filtering correlated features.

Towards more quantitative measures of atom specificity, we rank atoms by the enrichment of a given $n$-gram in their top documents. If $\mathcal{D}_j$ denotes the top documents associated with $f_j$, $\mathcal{D}$ denotes the full corpus, and $w$ denotes an $n$-gram, define $E(w,j):=
\frac{\mathbb{P}_{D\sim \mathcal{D}_j}(w\in D)}
{\mathbb{P}_{D\sim \mathcal{D}}(w\in D)}.$  We find, for instance, that while $f_{134},f_{263},$ and $f_{407}$ are entirely associated with documents from StackExchange, $E(\verb|x^2|,407)=133.82$, whereas $E(\verb|x^2|,j)=0$ for $j\in \{134,263\}$. Inspecting $\mathcal{D}_{407}$, we indeed see that it is dominated by Math StackExchange posts, again indicating that atoms can be associated with highly specific documents.

\section{Impact of Base Measure}
\label{appendix:ImpactBaseMeasure}

This section supplements the discussion in Section~\ref{subsec:ELOTDL}.

Finally, we illustrate the dependence of the learned atoms on the base measure $\rho$. In the bottom panel of Figure \ref{fig:atoms-comparison}, we plot a selection of (pushforwards of) five atoms learned by Algorithm \ref{alg:entropic-lwdl} when the base measure is supported on a uniform sample from $[0,1]^3$, versus the atoms learned with a base measure uniformly supported on a point-cloud in the data set. We see that both sets of atoms inherit qualitative properties of their respective base measures. To see why this is natural, we recall the Riemannian perspective on the Wasserstein space, under which $L^2(\rho)$ represents the tangent space at the point $\rho$. Under this interpretation, the atoms are elements of a Hilbert space that locally approximates the neighborhood of $\rho$ in the Wasserstein ``manifold'', and hence it is expected that these atoms strongly resemble $\rho$.

\begin{figure}[htbp]

    \includegraphics[width=0.95\linewidth]{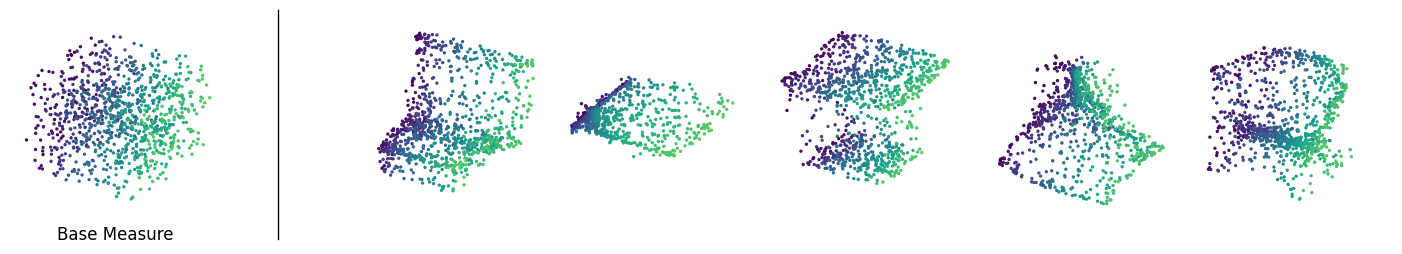}

    \medskip

    \includegraphics[width=0.95\linewidth]{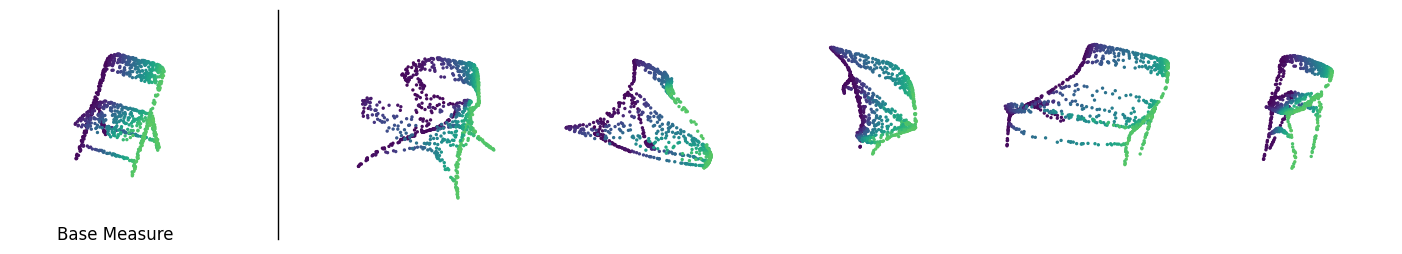}

    \caption{Dictionary atoms learned by our method when trained on point-cloud data, illustrating the dependence on the base measure. Colors indicate the value of the functional embedding on the point cloud.}

    \label{fig:atoms-comparison}
\end{figure}

\section{Proofs for the Generalization Error Bounds}
\label{appendix:ProofsOfTheorems}

\subsection{Preliminary Discussion of Theorems \ref{thm:samp_complex_dl} and \ref{thm:summary theorem}}\label{appendix:generalization proof sketch}

Theorem \ref{thm:samp_complex_dl} establishes that the generalization error for LWDL-OT and LWDL-EOT can be controlled by quantities depending on the moments of the data, the strength of regularization and the covering number of the set of candidate dictionary atoms. This is related to prior work in \citet{vainsencher2011sample} and \citet{gribonval2015sample}, which establish similar bounds for dictionary learning in both finite and infinite dimensions. Theorem 14 in \citet{vainsencher2011sample} is particularly relevant, which establishes generalization error bounds for dictionary learning of data living in a metric space $\mathcal{R}$, embedded into a Hilbert space $\mathcal{H}$ via a feature map. Here, the error bounds scale according to the covering numbers of $\mathcal{R}$, which are assumed to be of the order $(\frac{C}{\epsilon})^n$ for some $n$, and the H\"older continuity of the feature map, among other factors. This is a similar setting to our result, as our data consists of probability measure valued data in the Wasserstein space, which is embedded in a Hilbert space $L^2(\rho)$ via the transport map $T_{\rho\rightarrow \mu}$. However, we note that Theorem 14 in \citet{vainsencher2011sample} does not apply to our applications of Theorem \ref{thm:samp_complex_dl}--for instance, $\mathcal{P}_2(\mathcal{M}_2(\vartheta))$ strongly violates the covering number assumption.

The basic strategy for Theorem \ref{thm:summary theorem} is as follows.  To derive covering numbers for the class of OT maps, we leverage known results for covering numbers of convex functions \citep{guntuboyina2013covering}, which immediately yield covering numbers for the class of Kantorovich potentials under consideration.  Leveraging Brenier's theorem to identify OT maps with gradients of convex functions, we then deduce covering numbers on OT maps themselves by applying recent technical results of \citet{delalande2023quantitative} that allow for a comparison of the metrics $\|\phi-\psi\|_{L^{2}(\rho)}$ and $\|\nabla\phi-\nabla\psi\|_{L^{2}(\rho)}$.  The consequent rates in 1. of Theorem \ref{thm:summary theorem} are cursed by dimensionality, unfortunately.  On the other hand, the rates for EOT maps are parametric, owing to the strong degree of regularity exhibited by EOT potentials.  Our proof for part 2. of Theorem \ref{thm:summary theorem} follows essentially from known results of \cite{mena2019statistical} that establish (i) strong regularity for EOT potentials under subgaussian assumptions on the base and target measures and (ii) covering numbers for very regular functions.

\subsection{Technical Lemmas}
An important technical result we will use repeatedly is the following.

\begin{lemma}
\label{lem:CompressionUnderEOT}
Let $\mu,\nu\in \mathcal P_2(\mathbb R^d)$, and let $T_{\mu\rightarrow\nu}^{\epsilon}$ be the associated EOT map for $\epsilon\ge 0$.  Then $M_{2}((T_{\mu\rightarrow\nu}^{\epsilon})_{\#}\mu)\le M_{2}(\nu)$.

\end{lemma}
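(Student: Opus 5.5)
The proof will be a conditional Jensen argument. For $\epsilon>0$, let $\pi_\epsilon\in\Pi(\mu,\nu)$ be the optimal entropic coupling. By definition, $T^\epsilon_{\mu\to\nu}(x)=\mathbb{E}_{(X,Y)\sim\pi_\epsilon}[Y\mid X=x]$. The second moment of the pushforward is
\[
M_2\bigl((T^\epsilon_{\mu\to\nu})_\#\mu\bigr)=\int\|T^\epsilon_{\mu\to\nu}(x)\|^2\,d\mu(x)=\mathbb{E}_{\pi_\epsilon}\bigl[\|\mathbb{E}[Y\mid X]\|^2\bigr].
\]
The map $y\mapsto\|y\|^2$ is convex, so conditional Jensen gives $\|\mathbb{E}[Y\mid X]\|^2\le\mathbb{E}[\|Y\|^2\mid X]$ almost surely. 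Taking expectations and using the tower property, the right-hand side becomes $\mathbb{E}_{\pi_\epsilon}\|Y\|^2=\int\|y\|^2d\nu=M_2(\nu)$, because the second marginal of $\pi_\epsilon$ is $\nu$. This is exactly the claim.

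Before running the argument, I will check that everything is well defined. Since $\nu\in\mathcal P_2$, $Y$ is square-integrable under $\pi_\epsilon$, so the conditional expectation exists. It is unique $\mu$-a.e., which is all the pushforward needs. For an explicit version, I will use disintegration $\pi_\epsilon=\int\pi_x\,d\mu(x)$ and set $T^\epsilon(x)=\int y\,d\pi_x(y)$. Then the Jensen step is just the pointwise inequality $\|\int y\,d\pi_x\|^2\le\int\|y\|^2d\pi_x$, integrated in $\mu$.

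For $\epsilon=0$, the statement concerns the OT map, which exists when $\mu$ is a.c. In that case $(T_{\mu\to\nu})_\#\mu=\nu$, so equality holds trivially. I will also note that the same conditional-expectation argument covers the barycentric projection of any optimal plan, so the $\epsilon=0$ case holds even without absolute continuity, if one interprets the map that way.

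The only real subtlety is measure-theoretic: making sure the disintegration exists (it does, since $\mathbb{R}^d$ is Polish) and that $T^\epsilon$ is measurable. Neither should present a real obstacle.
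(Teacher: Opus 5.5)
Your proposal is correct and follows the paper's argument: disintegrate the entropic coupling $\pi_\epsilon$, apply Jensen's inequality to $\|\cdot\|^2$ under each conditional $\pi_\epsilon(\cdot\mid x)$, and integrate in $\mu$, using that the second marginal of $\pi_\epsilon$ is $\nu$. Your remark that equality holds at $\epsilon=0$ also matches the note after the paper's proof.
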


\begin{proof}Let $\pi_\epsilon$ be the associated EOT coupling.  The result follows from Jensen's inequality:
\begin{align*}
M_{2}((T_{\mu\rightarrow\nu}^{\epsilon})_{\#}\mu)=&\int_{\mathbb{R}^{d}}
\|x\|^{2}d((T_{\mu\rightarrow\nu}^{\epsilon})_{\#}\mu)(x)\\
=&\int_{\mathbb{R}^d} \|T_{\mu\rightarrow\nu}^{\epsilon}(x)\|^2 d\mu(x)\\
=&\int_{\mathbb{R}^d} \left\|\int_{\mathbb{R}^{d}}yd\pi_{\epsilon}(y \ | \ x)\right\|^2 d\mu(x)\\
\le&\int_{\mathbb{R}^{d}\times\mathbb{R}^{d}}\|y\|^{2}d\pi_{\epsilon}(y \ | \ x)d\mu(x)\\
= & \int_{\mathbb{R}^d\times\mathbb{R}^d}\|y\|^2 d\pi_\epsilon(x,y)\\
= & \int_{\mathbb{R}^{d}}\|y\|^{2} d\nu(y)\\
=&M_{2}(\nu).
\end{align*}
\end{proof}

We note that equality holds when $\epsilon=0$ and that this result follows from a special case of Lemma 9.2 in \cite{yang2023estimating}.

We begin by proving the existence of solutions to (\ref{eqn:ptwise DL objective}):

\begin{proposition}\label{proposition:existence of DL}
Let $\Theta\in \mathcal{P}(\mathcal{M}_2(\vartheta))$ for some $\vartheta\geq 0$. Fix $\epsilon\ge 0$ and assume $T_{\rho\rightarrow\mu}^{\epsilon}$ exists for $\Theta$-a.e. $\mu$. Let $g:\mathbb{R}^m_{\geq 0}\rightarrow \mathbb{R}_{\geq 0}$ be lower semicontinuous, such that $g(0)=0$ and $g(x)\geq c\|x\|_1$ for some $c> 0$. Let $\mathbb{T}\subseteq L^2(\rho)$ be compact such that for all $T\in \mathbb{T}$, we have $\|T\|_{L^2(\rho)}=1$. Define $h_\rho^g$ as in (\ref{eqn:ptwise DL objective}). Then $\inf_{\mathcal{T}\in \mathbb{T}^m}\mathbb{E}_{\mu\sim \Theta}[h^g_\rho(\mu,\epsilon,\mathcal{T})]$ admits a minimizer.
\end{proposition}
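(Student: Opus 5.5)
The plan is the direct method: show that $F(\mathcal{T}):=\mathbb{E}_{\mu\sim\Theta}[h^g_\rho(\mu,\epsilon,\mathcal{T})]$ is finite and lower semicontinuous (in fact continuous) on the compact set $\mathbb{T}^m$ with the product $L^2(\rho)$ topology. Weierstrass then gives a minimizer. Finiteness is immediate from taking $\lambda=0$ and using $g(0)=0$. Lemma \ref{lem:CompressionUnderEOT} then gives
\[
0\le h^g_\rho(\mu,\epsilon,\mathcal{T})\le \tfrac12\|T^\epsilon_{\rho\rightarrow\mu}\|_{L^2(\rho)}^2=\tfrac12 M_2((T^\epsilon_{\rho\rightarrow\mu})_\#\rho)\le \tfrac12 M_2(\mu)\le \tfrac{\vartheta}{2}
\]
for $\Theta$-a.e. $\mu$, uniformly in $\mathcal{T}$. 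This bound will also serve as the dominating function.

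Next I reduce the inner minimization to a compact set of codes. If $\lambda$ is feasible with value at most $\vartheta/2$, then $c\|\lambda\|_1\le g(\lambda)\le \vartheta/2$, so $\|\lambda\|_1\le \vartheta/(2c)=:R$. Hence
\[
h^g_\rho(\mu,\epsilon,\mathcal{T})=\min_{\lambda\in K_R}\Phi(\lambda,\mathcal{T}),\qquad K_R:=\{\lambda\ge 0:\|\lambda\|_1\le R\},
\]
where $\Phi(\lambda,\mathcal{T}):=\tfrac12\|T^\epsilon_{\rho\rightarrow\mu}-\sum_j\lambda_jT_j\|^2+g(\lambda)$. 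The infimum is attained because $\Phi(\cdot,\mathcal{T})$ is lower semicontinuous: the quadratic term is continuous and $g$ is l.s.c. by assumption.

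The key step is pointwise continuity of $\mathcal{T}\mapsto h^g_\rho(\mu,\epsilon,\mathcal{T})$ for fixed $\mu$. For $\lambda\in K_R$ and unit-norm atoms, $\|\sum_j\lambda_j(T_j-T'_j)\|\le R\max_j\|T_j-T'_j\|$. Also $\|\sum_j\lambda_jT_j\|\le R$ and $\|T^\epsilon_{\rho\rightarrow\mu}\|\le\sqrt{M_2(\mu)}$. Expanding the difference of squares, the quadratic terms satisfy
\[
|\Phi(\lambda,\mathcal{T})-\Phi(\lambda,\mathcal{T}')|\le (\sqrt{M_2(\mu)}+R)\,R\max_j\|T_j-T'_j\|_{L^2(\rho)},
\]
uniformly over $\lambda\in K_R$; the $g$ terms cancel since $\lambda$ is the same on both sides. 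Taking minima over $K_R$ on each side, the same bound holds for $|h^g_\rho(\mu,\epsilon,\mathcal{T})-h^g_\rho(\mu,\epsilon,\mathcal{T}')|$. So $h$ is Lipschitz in $\mathcal{T}$ with a constant independent of $\mathcal{T}$.

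Finally, take $\mathcal{T}^{(k)}\to\mathcal{T}$ in $\mathbb{T}^m$. The integrands then converge pointwise for $\Theta$-a.e. $\mu$ and are dominated by $\vartheta/2$, so dominated convergence gives $F(\mathcal{T}^{(k)})\to F(\mathcal{T})$. Alternatively, integrating the Lipschitz bound with $\mathbb{E}\sqrt{M_2(\mu)}\le\sqrt{\vartheta}$ gives this directly. Measurability of $\mu\mapsto h^g_\rho(\mu,\epsilon,\mathcal{T})$ is the one point I would need to address and expect to be the main subtlety. I would treat it as implicit in the assumption that $T^\epsilon_{\rho\rightarrow\mu}$ exists, i.e. that $\mu\mapsto T^\epsilon_{\rho\rightarrow\mu}\in L^2(\rho)$ is measurable. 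Given that, $h$ is a continuous function of $T^\epsilon_{\rho\rightarrow\mu}$ by the same estimate applied in the first argument. The continuous function $F$ then attains its minimum on the compact set $\mathbb{T}^m$.
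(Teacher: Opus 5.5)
Your proposal is correct and follows essentially the same route as the paper: bound $h$ by taking $\lambda=0$ and applying Lemma \ref{lem:CompressionUnderEOT}, use the coercivity $g\ge c\|\cdot\|_1$ to confine the codes to an $\ell^1$-ball, derive a Lipschitz bound for $h$ (hence for $F$) in $\mathcal{T}$, and conclude by compactness of $\mathbb{T}^m$. The paper packages the Lipschitz step as Lemmas \ref{lemma:b/c} and \ref{lemma:lipschitz objective} and Corollary \ref{cor:bounded_diam}, arguing through suboptimality of a minimizer where you use a uniform estimate over the fixed ball $K_R$; this gives the same constant, and your remark on measurability of $\mu\mapsto h$ addresses a point the paper leaves implicit.
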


To prove Proposition \ref{proposition:existence of DL}, we will require the following technical results, that establish quantitative controls on (\ref{eqn:ptwise DL objective}):

\begin{lemma}\label{lemma:b/c}
Let $g:\mathbb{R}^m_{\geq 0}\rightarrow \mathbb{R}_{\geq 0}$ be lower semicontinuous, such that $g(0)=0$ and $g(x)\geq c\|x\|_p$ for some $c>0$ and $p\geq 1$. Then, for any $\epsilon\geq 0$:
\begin{enumerate}
\item $\argmin_{\lambda\in \mathbb{R}^m_{\geq 0}}\frac{1}{2}\|T_{\rho\rightarrow \mu}^\epsilon-\sum_{j=1}^m\lambda_jT_j\|^2_{L^2(\rho)}+g(\lambda)$ is nonempty.
\item For any $\lambda^*\in\argmin_{\lambda\in \mathbb{R}^m_{\geq 0}}\frac{1}{2}\|T_{\rho\rightarrow \mu}^\epsilon-\sum_{j=1}^m\lambda_jT_j\|^2_{L^2(\rho)}+g(\lambda)$, we have\begin{equation*}\|\lambda^*\|_p\leq \frac{M_2(\mu)}{2c}.\end{equation*}
\end{enumerate}
\end{lemma}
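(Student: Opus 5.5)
Both parts rest on comparing an arbitrary code $\lambda$ with the trivial code $\lambda=0$. Write $F(\lambda):=\frac{1}{2}\|T^\epsilon_{\rho\rightarrow\mu}-\sum_{j}\lambda_jT_j\|^2_{L^2(\rho)}+g(\lambda)$ on $\mathbb{R}^m_{\geq 0}$.

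The first step is to compute $F(0)$. Since $g(0)=0$, we have $F(0)=\frac{1}{2}\|T^\epsilon_{\rho\rightarrow\mu}\|^2_{L^2(\rho)}$. By Lemma \ref{lem:CompressionUnderEOT},
\[
\|T^\epsilon_{\rho\rightarrow\mu}\|^2_{L^2(\rho)}=M_2((T^\epsilon_{\rho\rightarrow\mu})_\#\rho)\leq M_2(\mu),
\]
with equality when $\epsilon=0$. Hence $F(0)\leq M_2(\mu)/2<\infty$.

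For part 1, I will use the direct method. Consider the sublevel set $K:=\{\lambda\in\mathbb{R}^m_{\geq0}: F(\lambda)\leq F(0)\}$. It is nonempty, since it contains $0$. For $\lambda\in K$, the quadratic term is nonnegative, so
\[
c\|\lambda\|_p\leq g(\lambda)\leq F(\lambda)\leq F(0)\leq \frac{M_2(\mu)}{2}.
\]
Thus $K$ is bounded in $\mathbb{R}^m$. Next I check that $F$ is lower semicontinuous. The map $\lambda\mapsto\sum_j\lambda_jT_j$ is linear from $\mathbb{R}^m$ into $L^2(\rho)$, so the quadratic term is continuous. The term $g$ is lower semicontinuous by hypothesis, so the sum is lower semicontinuous. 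Because $\mathbb{R}^m_{\geq0}$ is closed, $K$ is a closed and bounded sublevel set of a lower semicontinuous function, hence compact. A lower semicontinuous function attains its minimum on a nonempty compact set, and that minimizer is a global minimizer of $F$, since every point outside $K$ has $F>F(0)$.

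For part 2, take any minimizer $\lambda^*$. Minimality gives $F(\lambda^*)\leq F(0)$. Dropping the nonnegative quadratic term gives
\[
c\|\lambda^*\|_p\leq g(\lambda^*)\leq F(\lambda^*)\leq \frac{M_2(\mu)}{2},
\]
so $\|\lambda^*\|_p\leq M_2(\mu)/(2c)$.

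The only step needing care is identifying $\|T^\epsilon_{\rho\rightarrow\mu}\|^2_{L^2(\rho)}$ with a second moment. This is exactly the content of Lemma \ref{lem:CompressionUnderEOT}. When $\epsilon=0$ the identity is exact because $(T_{\rho\rightarrow\mu})_\#\rho=\mu$. I will also note that the unit-norm assumption on the atoms is not needed here.
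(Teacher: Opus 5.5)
Your proof is correct and follows the paper's approach: you bound $F(0)\le \tfrac12 M_2(\mu)$ via Lemma \ref{lem:CompressionUnderEOT}, use the coercivity $g(\lambda)\ge c\|\lambda\|_p$ and lower semicontinuity to get existence, and compare any minimizer with $\lambda=0$ to get the norm bound. The only difference is cosmetic: you argue via compactness of the sublevel set $\{F\le F(0)\}$, while the paper extracts a convergent subsequence from a minimizing sequence, and both are the same direct-method argument.
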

\begin{proof}
Observe that we have $\inf_{\lambda\in \mathbb{R}_{\ge 0}^{m}}\frac{1}{2}\|T_{\rho\rightarrow \mu}^\epsilon-\sum_{j=1}^m\lambda_jT_j\|^2_{L^2(\rho)}+g(\lambda)\leq \frac{1}{2}\|T_{\rho\rightarrow \mu}^\epsilon\|_{L^2(\rho)}^2\leq \frac{1}{2}M_2(\mu)$. The first inequality is obtained by setting $\lambda=0$, and the second is by applying Lemma \ref{lem:CompressionUnderEOT}. Let $\{\lambda^{(i)}\}_{i\in \mathbb{N}}$ be an infimizing sequence for $F(\lambda):=\frac{1}{2}\|T_{\rho\rightarrow \mu}^\epsilon-\sum_{j=1}^m\lambda_jT_j\|^2_{L^2(\rho)}+g(\lambda).$  We compute:
\begin{align*}
    \lim_{i\rightarrow\infty}F(\lambda^{(i)})= \inf_{\lambda\in \mathbb{R}_{\ge 0}^{m}}F(\lambda)\le F(0)\le \frac{1}{2}M_{2}(\mu).
\end{align*}
Hence, for all $i$ sufficiently large,
\begin{align*}
    c\|\lambda^{(i)}\|_{p}\le F(\lambda^{(i)})\le \frac{1}{2}M_{2}(\mu)+1
\end{align*}
We can thus extract a convergent subsequence $\{\lambda^{(i')}\}\subset\{\lambda^{(i)}\}$ with limit $\lambda^{*}$.  By lower semicontinuity of $F$, we have that $F(\lambda^{*})\le\lim_{i'\rightarrow\infty}F(\lambda^{(i')})=\inf_{\lambda\in\mathbb{R}_{\ge 0}^{m}}F(\lambda)$, which demonstrates that $\lambda^{*}$ is a minimizer for $F$.  This establishes 1.

Furthermore, for any minimizer $\lambda^{*}$ we have that
\begin{align*}&c\|\lambda_{*}\|_{p}\le F(\lambda^{*})\le F(0)\le \frac{1}{2}M_{2}(\mu),
\end{align*}which establishes 2.

\end{proof}

\begin{lemma} \label{lemma:lipschitz objective} Let $g:\mathbb{R}^m_{\geq 0}\rightarrow \mathbb{R}_{\geq 0}$ be lower semicontinuous, such that $g(0)=0$ and $g(x)\geq c\|x\|_p$ for some $c>0$ and $p\geq 1$. Let $\mathcal{T},\tilde{\mathcal{T}}\in \mathbb{T}^m$, where $\mathbb{T}\subset\{ T\in  L^2(\rho), \|T\|_{L^2(\rho)}=1\}$. Then, for any $\mu\in \mathcal{P}_2(\mathbb{R}^d)$:
\begin{align*}
&|h^g_\rho(\mu,\epsilon,\mathcal{T})-h^g_\rho(\mu,\epsilon,\tilde{\mathcal{T}})|\\&\leq \max_{\alpha\in \{\lambda^*,\tilde{\lambda}^*\}}\left\{\|\alpha\|_1\sum_{k=1}^m\alpha_k\|T_k-\tilde{T}_k\|_{L^2(\rho)}+M_2(\mu)^{1/2}\sum_{j=1}^m\alpha_j\|T_j-\tilde{T}_j\|_{L^2(\rho)}\right\}
\end{align*}
where $\lambda^*\in\argmin_{\lambda\in \mathbb{R}^m_{\geq 0}}\frac{1}{2}\|T_{\rho\rightarrow \mu}^\epsilon-\sum_{j=1}^m\lambda_jT_j\|^2_{L^2(\rho)}+g(\lambda)$ and $\tilde{\lambda}^*\in\argmin_{\lambda\in \mathbb{R}^m_{\geq 0}}\frac{1}{2}\|T_{\rho\rightarrow \mu}^\epsilon-\sum_{j=1}^m\lambda_j\tilde{T}_j\|^2_{L^2(\rho)}+g(\lambda)$.
\end{lemma}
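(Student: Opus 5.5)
We use the standard one-sided comparison: each minimal value can be bounded by plugging in the other problem's minimizer. Write
\[
F_{\mathcal{T}}(\lambda):=\tfrac12\|T^\epsilon_{\rho\rightarrow\mu}-\textstyle\sum_j\lambda_jT_j\|^2_{L^2(\rho)}+g(\lambda),
\]
and similarly $F_{\tilde{\mathcal{T}}}$. By Lemma \ref{lemma:b/c}, the minimizers $\lambda^*$ and $\tilde\lambda^*$ exist. Then
\[
h^g_\rho(\mu,\epsilon,\tilde{\mathcal{T}})-h^g_\rho(\mu,\epsilon,\mathcal{T})\le F_{\tilde{\mathcal{T}}}(\lambda^*)-F_{\mathcal{T}}(\lambda^*),
\]
and symmetrically
\[
h^g_\rho(\mu,\epsilon,\mathcal{T})-h^g_\rho(\mu,\epsilon,\tilde{\mathcal{T}})\le F_{\mathcal{T}}(\tilde\lambda^*)-F_{\tilde{\mathcal{T}}}(\tilde\lambda^*).
\]
The penalty $g(\alpha)$ cancels in each difference, so it suffices to bound $|F_{\mathcal{T}}(\alpha)-F_{\tilde{\mathcal{T}}}(\alpha)|$ for a fixed $\alpha\in\{\lambda^*,\tilde\lambda^*\}$ by the bracketed expression. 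Taking the max over the two choices of $\alpha$ then yields the claim.

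For fixed $\alpha$, set $A=\sum_j\alpha_jT_j$, $\tilde A=\sum_j\alpha_j\tilde T_j$ and $E=T^\epsilon_{\rho\rightarrow\mu}$. Expand the difference of squares:
\[
\tfrac12\|E-A\|^2-\tfrac12\|E-\tilde A\|^2=\langle E,\tilde A-A\rangle+\tfrac12\bigl(\|A\|^2-\|\tilde A\|^2\bigr).
\]
For the first term, use Cauchy--Schwarz and the triangle inequality:
\[
|\langle E,\tilde A-A\rangle|\le\|E\|_{L^2(\rho)}\sum_j\alpha_j\|T_j-\tilde T_j\|.
\]
Lemma \ref{lem:CompressionUnderEOT} gives $\|E\|_{L^2(\rho)}^2=M_2((T^\epsilon_{\rho\rightarrow\mu})_\#\rho)\le M_2(\mu)$. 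This produces the $M_2(\mu)^{1/2}$ term.

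For the second term, write
\[
\tfrac12(\|A\|^2-\|\tilde A\|^2)=\tfrac12\langle A+\tilde A,A-\tilde A\rangle.
\]
Since every atom has unit norm, $\|A\|,\|\tilde A\|\le\|\alpha\|_1$ by the triangle inequality, so $\tfrac12\|A+\tilde A\|\le\|\alpha\|_1$. Combined with $\|A-\tilde A\|\le\sum_k\alpha_k\|T_k-\tilde T_k\|$, this gives the $\|\alpha\|_1\sum_k\alpha_k\|T_k-\tilde T_k\|$ term.

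No step is a real obstacle. The only points needing care are:
\begin{itemize}
\item The one-sided comparisons must be arranged so that each direction uses the minimizer of the \emph{smaller} objective. The two directions involve different $\alpha$, which is exactly why the statement takes a maximum over $\{\lambda^*,\tilde\lambda^*\}$.
\item The bound $\|E\|^2\le M_2(\mu)$ must cover both $\epsilon>0$ and the OT case $\epsilon=0$; Lemma \ref{lem:CompressionUnderEOT} handles both.
\end{itemize}
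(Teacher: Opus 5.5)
Your proposal is correct and follows essentially the same route as the paper. Both arguments expand the difference of the two quadratic terms at a fixed code, control the cross term by Cauchy--Schwarz together with Lemma \ref{lem:CompressionUnderEOT}, control the norm-difference term using the unit-norm atoms, and then plug each problem's minimizer into the other objective. Your absolute-value bound on $|F_{\mathcal{T}}(\alpha)-F_{\tilde{\mathcal{T}}}(\alpha)|$ is slightly cleaner: the paper proves only the one-sided bound $F_{\mathcal{T}}(\lambda)-F_{\tilde{\mathcal{T}}}(\lambda)\le Q(\lambda)$ and then implicitly uses its reverse by symmetry.
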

\begin{proof}
 For any $\lambda\in \mathbb{R}^m_{\geq 0}$, we have:
\begin{align*}
&\frac{1}{2}\|T_{\rho\rightarrow \mu}^\epsilon-\sum_{j=1}^m\lambda_j T_j\|_{L^2(\rho)}^2-\frac{1}{2}\|T_{\rho\rightarrow \mu}^\epsilon-\sum_{j=1}^m\lambda_j \tilde{T}_j\|^2_{L^2(\rho)}\\& =\langle T_{\rho\rightarrow \mu}^\epsilon,\sum_{j=1}^m\lambda_j (\tilde{T}_j- T_j)\rangle_{L^2(\rho)}+\frac{1}{2}\|\sum_{j=1}^m\lambda_j T_j\|_{L^2(\rho)}^2-\frac{1}{2}\|\sum_{j=1}^m\lambda_j \tilde{T}_j\|_{L^2(\rho)}^2\\&=\langle T_{\rho\rightarrow \mu}^\epsilon,\sum_{j=1}^m\lambda_j (\tilde{T}_j-T_j)\rangle_{L^2(\rho)} +\frac{1}{2}\langle \sum_{j=1}^m \lambda_j T_j,\sum_{k=1}^m\lambda_k (T_k-\tilde{T}_k)\rangle_{L^2(\rho)}\\&+\frac{1}{2}\langle \sum_{k=1}^m\lambda_k(T_k- \tilde{T}_k),\sum_{j=1}^m\lambda_j \tilde{T}_j\rangle_{L^2(\rho)}
\\&\leq \|T_{\rho\rightarrow \mu}^\epsilon\|_{L^2(\rho)}\|\sum_{j=1}^m\lambda_j (\tilde{T}_j-T_j)\|_{L^2(\rho)} + \frac{1}{2}\|\sum_{j=1}^m\lambda_j T_j\|_{L^2(\rho)}\|\sum_{k=1}^m\lambda_k(T_k-\tilde{T}_k)\|_{L^2(\rho)}\\&+\frac{1}{2}\|\sum_{j=1}^m\lambda_j \tilde{T}_j\|_{L^2(\rho)}\|\sum_{k=1}^m\lambda_k(T_k-\tilde{T}_k)\|_{L^2(\rho)}.\tag{Cauchy-Schwarz}
\end{align*}

We bound the first term as:

\begin{align*}
&\|T_{\rho\rightarrow \mu}^\epsilon\|_{L^2(\rho)}\|\sum_{j=1}^m\lambda_j (T_j-\tilde{T}_j)\|_{L^2(\rho)}\\& \leq M_2(\mu)^{1/2}\sum_{j=1}^m\lambda_j\|T_j-\tilde{T}_j\|_{L^2(\rho)}\tag{Lemma \ref{lem:CompressionUnderEOT}}
\end{align*}
We bound the second term as:

\begin{align*} &\frac{1}{2}\|\sum_{j=1}^m\lambda_j T_j\|_{L^2(\rho)}\|\sum_{k=1}^m\lambda_k(T_k-\tilde{T}_k)\|_{L^2(\rho)}\\ &\leq\frac{1}{2}\sum_{j=1}^m \lambda_j \|T_j\|_{L^2(\rho)}\sum_{k=1}^m \lambda_k \|T_k-\tilde{T}_k\|_{L^2(\rho)}\tag{Triangle ineq. and nonnegativity of $\lambda_j$}\\& =\frac{1}{2}\|\lambda\|_{1}\sum_{k=1}^m \lambda_k\|T_k-\tilde{T}_k\|_{L^2(\rho)}\tag{$\|T_j\|_{L^2(\rho)}=1$} \end{align*}
The same bound holds for the third term.

Combining our bounds, we conclude:
\begin{align*}
&\frac{1}{2}\|T_{\rho\rightarrow \mu}^\epsilon-\sum_{j=1}^m\lambda_{j}T_j\|_{L^2(\rho)}^2-\frac{1}{2}\|T_{\rho\rightarrow \mu}^\epsilon-\sum_{j=1}^m\lambda_{j}\tilde{T}_j\|_{L^2(\rho)}^2\\&\leq\|\lambda\|_{1}\sum_{k=1}^m\lambda_k\|T_k-\tilde{T}_k\|_{L^2(\rho)}+M_2(\mu)^{1/2}\sum_{j=1}^m\lambda_j\|T_j-\tilde{T}_j\|_{L^2(\rho)}\\&=:Q(\lambda)
\end{align*}

Let $\lambda^*\in \argmin_{\lambda\in \mathbb{R}^m}\frac{1}{2}\|T_{\rho\rightarrow \mu}^\epsilon-\sum_{j=1}^m\lambda_j T_j\|_{L^2(\rho)}^2+g(\lambda)$. Then:
\begin{align*}
h^g_\rho(\mu,\epsilon,\tilde{\mathcal{T}})&=\inf_{\lambda\in \mathbb{R}^m}\frac{1}{2}\|T_{\rho\rightarrow \mu}^\epsilon-\sum_{j=1}^m\lambda_j \tilde{T}_j\|_{L^2(\rho)}^2+g(\lambda)\\& \leq \frac{1}{2}\|T_{\rho\rightarrow \mu}^\epsilon-\sum_{j=1}^m\lambda^*_j\tilde{T}_j\|_{L^2(\rho)}^2+g(\lambda^*) \tag{Suboptimality of $\lambda^*$}\\&= \frac{1}{2}\|T_{\rho\rightarrow \mu}^\epsilon-\sum_{j=1}^m\lambda^*_j\tilde{T}_j\|_{L^2(\rho)}^2+g(\lambda^*)\\&  -\frac{1}{2}\|T_{\rho\rightarrow \mu}^\epsilon-\sum_{j=1}^m\lambda^*_jT_j\|_{L^2(\rho)}^2\\& +\frac{1}{2}\|T_{\rho\rightarrow \mu}^\epsilon-\sum_{j=1}^m\lambda^*_jT_j\|_{L^2(\rho)}^2 \\& \leq Q(\lambda^*)+h^g_\rho(\mu,\epsilon,\mathcal{T}).
\end{align*}
We conclude:
\begin{align*}
h^g_\rho(\mu,\epsilon,\tilde{\mathcal{T}})-h^g_\rho(\mu,\epsilon,\mathcal{T})\leq Q(\lambda^*).
\end{align*}
Via a symmetric argument we conclude:
\begin{equation*}
|h^g_\rho(\mu,\epsilon,\tilde{\mathcal{T}})-h^g_\rho(\mu,\epsilon,\mathcal{T})|\leq \max\{Q(\lambda^*),Q(\tilde{\lambda}^*)\}.
\end{equation*}
\end{proof}

Combining Lemma \ref{lemma:b/c} and Lemma \ref{lemma:lipschitz objective}, we establish that $h_\rho^g(\mu,\epsilon,\mathcal{T})$ is Lipschitz in $\mathcal{T}$, with a constant depending on the second moments of $\mu$ and the coercivity of $g:$
\begin{corollary}\label{cor:bounded_diam}
Let $g:\mathbb{R}^m_{\geq 0}\rightarrow \mathbb{R}_{\geq 0}$ be lower semicontinuous, such that $g(0)=0$ and $g(x)\geq c\|x\|_1$. Then, defining $h_\rho^g(\mu,\epsilon,\mathcal{T})$ as in (\ref{eqn:ptwise DL objective}), for any $\mu \in \mathcal{P}_2(\mathbb{R}^d)$ we have:
\begin{equation}
|h^g_\rho(\mu,\epsilon,\mathcal{T})-h^g_\rho(\mu,\epsilon,\tilde{\mathcal{T}})|\leq \frac{M_2(\mu)^2+M_2(\mu)^{3/2}}{2\min\{c,c^2\}}\max_{1\leq j\leq m}\|T_j-\tilde{T}_j\|_{L^2(\rho)}.\label{eqn:lipschitz h bound}
\end{equation}
Moreover, let $\Theta\in \mathcal{P}(\mathcal{M}_2(\vartheta))$ for some $\vartheta\geq 0$. Then:
\begin{align}
\mathbb{E}_{\mu\sim\Theta}[|h_\rho^g(\mu,\epsilon,\mathcal{T})-h_\rho^g(\mu,\epsilon,\tilde{\mathcal{T}})|]\leq \frac{\vartheta^2+\vartheta^{3/2}}{2\min\{c,c^2\}}\max_{1\leq \ell\leq  m}\|T_\ell-\tilde{T}_\ell\|_{L^2(\rho)}.\label{eqn:lipschitz exp h bound}
\end{align}
\end{corollary}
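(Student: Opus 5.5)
The plan is to combine Lemma \ref{lemma:lipschitz objective} with the a priori bound on optimal codes from Lemma \ref{lemma:b/c}, applied with $p=1$. Fix $\mu\in\mathcal{P}_2(\mathbb{R}^d)$. Both $\mathcal{T}$ and $\tilde{\mathcal{T}}$ have unit-norm atoms, so Lemma \ref{lemma:b/c} (part 1) provides minimizers $\lambda^*$ and $\tilde\lambda^*$, and Lemma \ref{lemma:lipschitz objective} gives
\[
|h^g_\rho(\mu,\epsilon,\mathcal{T})-h^g_\rho(\mu,\epsilon,\tilde{\mathcal{T}})|\le \max_{\alpha\in\{\lambda^*,\tilde\lambda^*\}} Q(\alpha),
\]
where $Q(\alpha)=\|\alpha\|_1\sum_k\alpha_k\|T_k-\tilde T_k\|+M_2(\mu)^{1/2}\sum_j\alpha_j\|T_j-\tilde T_j\|$.

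Next, set $\Delta:=\max_j\|T_j-\tilde T_j\|_{L^2(\rho)}$. Since $\alpha\ge0$, we have $\sum_k\alpha_k\|T_k-\tilde T_k\|\le\|\alpha\|_1\Delta$. Therefore
\[
Q(\alpha)\le\bigl(\|\alpha\|_1^2+M_2(\mu)^{1/2}\|\alpha\|_1\bigr)\Delta.
\]
By Lemma \ref{lemma:b/c} part 2 with $p=1$, each $\alpha\in\{\lambda^*,\tilde\lambda^*\}$ satisfies $\|\alpha\|_1\le M_2(\mu)/(2c)$. This bound is uniform in the dictionary, which matters because $\tilde\lambda^*$ is the minimizer for $\tilde{\mathcal{T}}$. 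Substituting it gives
\[
Q(\alpha)\le\Bigl(\frac{M_2(\mu)^2}{4c^2}+\frac{M_2(\mu)^{3/2}}{2c}\Bigr)\Delta .
\]
Using $\frac{1}{4c^2}\le\frac{1}{2\min\{c,c^2\}}$ and $\frac{1}{2c}\le\frac{1}{2\min\{c,c^2\}}$, this is at most $\frac{M_2(\mu)^2+M_2(\mu)^{3/2}}{2\min\{c,c^2\}}\Delta$, which is (\ref{eqn:lipschitz h bound}).

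For (\ref{eqn:lipschitz exp h bound}), integrate (\ref{eqn:lipschitz h bound}) against $\Theta$. Since $\Theta$ is supported on $\mathcal{M}_2(\vartheta)$, we have $M_2(\mu)\le\vartheta$ for $\Theta$-a.e. $\mu$. The function $t\mapsto t^2+t^{3/2}$ is increasing on $[0,\infty)$, so the integrand is bounded pointwise by $\frac{\vartheta^2+\vartheta^{3/2}}{2\min\{c,c^2\}}\Delta$, and the expectation inherits this bound. Measurability of $\mu\mapsto h^g_\rho(\mu,\epsilon,\mathcal{T})$ is implicitly assumed wherever these expectations are written, and I would take it as given here.

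The main subtlety is constant bookkeeping rather than anything conceptual. One point to check is that Lemma \ref{lemma:b/c} requires $g(x)\ge c\|x\|_p$, which the hypothesis $g\ge c\|\cdot\|_1$ supplies with $p=1$. The other is that the stated constant $\frac{1}{2\min\{c,c^2\}}$ dominates both $\frac{1}{4c^2}$ and $\frac{1}{2c}$ in the two regimes $c\le1$ and $c\ge1$, which I would verify explicitly.
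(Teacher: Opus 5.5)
Your proposal is correct and follows the paper's proof essentially step for step. Both arguments apply Lemma \ref{lemma:lipschitz objective}, pull out $\max_\ell\|T_\ell-\tilde T_\ell\|_{L^2(\rho)}$ to reach $\|\alpha\|_1^2+M_2(\mu)^{1/2}\|\alpha\|_1$, invoke Lemma \ref{lemma:b/c} with $p=1$, and then integrate the pointwise bound using $M_2(\mu)\le\vartheta$. Your explicit check that $\frac{1}{2\min\{c,c^2\}}$ dominates both $\frac{1}{4c^2}$ and $\frac{1}{2c}$ fills in a step the paper leaves implicit.
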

\begin{proof}
To establish (\ref{eqn:lipschitz h bound}), by Lemma \ref{lemma:lipschitz objective} we have:
\begin{align*}
&|h^g_\rho(\mu,\epsilon,\mathcal{T})-h^g_\rho(\mu,\epsilon,\tilde{\mathcal{T}})|\\& \leq \max_{\alpha\in \{\lambda^*,\tilde{\lambda}^*\}}\left\{\|\alpha\|_1\sum_{k=1}^m\alpha_k\|T_k-\tilde{T}_k\|_{L^2(\rho)}+M_2(\mu)^{1/2}\sum_{j=1}^m\alpha_j\|T_j-\tilde{T}_j\|_{L^2(\rho)}\right\}\\&\leq   \max_{\alpha\in \{\lambda^*,\tilde{\lambda}^*\}}\left\{\max_{1\leq \ell\leq m}\|T_\ell-\tilde{T}_\ell\|_{L^2(\rho)}\|\alpha\|_1\sum_{k=1}^m \alpha_k+M_2(\mu)^{1/2}\max_{1\leq \ell \leq m}\|T_\ell-\tilde{T}_\ell\|_{L^2(\rho)}\sum_{j=1}^m\alpha_j\right\}\\&=\max_{\alpha\in \{\lambda^*,\tilde{\lambda}^*\}}\left\{\max_{1\leq \ell\leq m}\|T_\ell-\tilde{T}_\ell\|_{L^2(\rho)}\left(\|\alpha\|_{1}^2+M_2(\mu)^{1/2}\|\alpha\|_1\right)\right\}\\&\leq \max_{1\leq \ell\leq m}\|T_\ell-\tilde{T}_\ell\|_{L^2(\rho)}\left(\frac{M_2(\mu)^2}{4c^2}+\frac{M_2(\mu)^{3/2}}{2c}\right).\tag{Lemma \ref{lemma:b/c}}
\end{align*}
For the second assertion, we have
\begin{align*}
& \mathbb{E}_{\mu\sim\Theta}\left[|h_\rho^g(\mu,\epsilon,\mathcal{T})-h_\rho^g(\mu,\epsilon,\tilde{\mathcal{T}})|\right]\\
\leq &\mathbb{E}_{\mu\sim\Theta}\left[\frac{M_2(\mu)^2+M_2(\mu)^{3/2}} {2\min\{c,c^2\}}\right]\max_{1\leq \ell\leq  m}\|T_\ell-\tilde{T}_\ell\|_{L^2(\rho)}.\tag{ \ref{eqn:lipschitz h bound}}\end{align*}
As $\text{supp}(\Theta)\subset \mathcal{M}_2(\vartheta)$, we may conclude (\ref{eqn:lipschitz exp h bound}).
\end{proof}
We may now prove existence.

\textbf{Proof of Proposition \ref{proposition:existence of DL}:}
For any $\mu \in \mathcal{P}_2(\mathbb{R}^d)$, $h_\rho^g(\mu,\epsilon,\mathcal{T})\geq 0$. Furthermore, for any $\mathcal{T}\in \mathbb{T}^m$ we have $h_\rho^g(\mu,\epsilon,\mathcal{T})\leq \|T^\epsilon_{\rho\rightarrow \mu}\|^2_{L^2(\rho)}$ (by taking $\lambda=0$), and hence:
\begin{align*}
\mathbb{E}_{\mu\sim \Theta}[h^g_\rho(\mu,\epsilon,\mathcal{T})]&\leq \mathbb{E}_{\mu\sim\Theta}\left[\|T_{\rho\rightarrow \mu}^\epsilon\|^2_{L^2(\rho)}\right]\\&\leq \mathbb{E}_{\mu\sim\Theta}[M_2(\mu)]\tag{Lemma \ref{lem:CompressionUnderEOT}}
\\&\le \vartheta<\infty.\end{align*}
Let $F(\Theta,\rho,\epsilon,\mathcal{T}):= \mathbb{E}_{\mu\sim\Theta}[h_\rho^g(\mu,\epsilon ,\mathcal{T})]$, and let $\{\mathcal{T}^{(i)}\}_{i\in \mathbb{N}}$ be an infimizing sequence for $F(\Theta,\rho,\epsilon,\cdot)$. As $\mathbb{T}$ is compact in $L^2(\rho)$, $\mathbb{T}^m$ is compact with the metric $d_{2,m}(\mathcal{T},\tilde{\mathcal{T}}):=\max_{1\leq j \leq m}\|T_j-\tilde{T}_j\|_{L^2(\rho)}$. Hence there exists a subsequence $\{\mathcal{T}^{(i')}\}_{i'\in \mathbb{N}}$ that converges in $d_{2,m}$ to a limit $\mathcal{T}^*\in \mathbb{T}^m$. By (\ref{eqn:lipschitz exp h bound}), $F(\Theta,\rho,\epsilon,\cdot)$ is (Lipschitz) continuous in $\mathcal{T}$ with respect to $d_{2,m}$, and hence:
\begin{align*}
\lim_{i'\rightarrow \infty}F(\Theta,\rho,\epsilon,\mathcal{T}^{(i')})=F(\Theta,\rho,\epsilon,\mathcal{T}^*)
\end{align*}
and hence $\mathcal{T}^*\in \argmin_{T}F(\Theta,\rho,\epsilon, \mathcal{T}).$ \qed

\subsection{Proof of Theorem \ref{thm:samp_complex_dl}}
We begin by sketching the basic proof idea.  First, we employ a standard argument to replace the generalization error with a worst-case deviation bound over the class $H^{g,\epsilon}_\rho(\mathbb{T}^m):=\{h_\rho^g(\cdot,\epsilon,\mathcal{T})|\mathcal{T}\in \mathbb{T}^m\}$, where $h_\rho^g$ is defined in (\ref{eqn:ptwise DL objective}). We will then need an instance of a well-known symmetrization argument, relating worst-case deviations of empirical averages from the expectation to the Rademacher complexity \citep{wainwright2019high} of $H_\rho^{g,\epsilon}(\mathbb{T}^m).$
\begin{lemma}\label{lemma:symmetrization}
Let $\Theta\in \mathcal{P}(\mathcal{M}_2(\vartheta))$ for some $\vartheta\geq0$. Let $g:\mathbb{R}^m_{\geq 0}\rightarrow \mathbb{R}_{\geq 0}$ be lower semicontinuous, such that $g(0)=0$ and $g(x)\geq c\|x\|_1$, for some $c> 0$. Let $\mathbb{T}\subseteq L^2(\rho)$ be compact, such that $\|T\|_{L^2(\rho)}=1$ for all $T\in \mathbb{T}$. Let $H^{g,\epsilon}_\rho(\mathbb{T}^m):=\{h_\mathcal{T} \ | \ \mathcal{T}\in \mathbb{T}^m\}$, where $h_\mathcal{T}(\mu):=h_\rho^g(\mu,\epsilon,\mathcal{T})$. Then:
\begin{align*}
&\mathbb{E}_{\mu_i\sim\Theta}\left[\sup_{h_\mathcal{T}\in H_\rho^{g,\epsilon}(\mathbb{T}^m)}\left|\frac{1}{n}\sum_{i=1}^n h_\mathcal{T}(\mu_i)-\mathbb{E}_{\mu\sim\Theta}[h_\mathcal{T}(\mu)]\right|\right]\\
\leq &2\mathbb{E}_{\sigma_i,\mu_i}\left[\sup_{h_\mathcal{T}\in H^{g,\epsilon}_\rho(\mathbb{T}^m)}\left|\frac{1}{n}\sum_{i=1}^n \sigma_i h_\mathcal{T}(\mu_i)\right|\right],
\end{align*}
where $\sigma_i$ are Rademacher random variables, i.e., \begin{align*}
\sigma_i=
\begin{cases}
1, & \text{with probability } \frac{1}{2},\\[4pt]
-1, & \text{with probability } \frac{1}{2},
\end{cases}
\qquad i=1,\dots,n,
\end{align*}
such that $\sigma_i$ is independent of $\sigma_j$ for all $i\neq j$, and are furthermore independent of $\mu_1,...,\mu_n$.
\end{lemma}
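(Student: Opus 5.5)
This is the standard ghost-sample symmetrization argument; I would follow it verbatim, only checking the measurability and integrability specific to this setting. Write $\Phi(\mu_{1:n}):=\sup_{\mathcal{T}\in\mathbb{T}^m}\bigl|\frac1n\sum_{i=1}^n h_\mathcal{T}(\mu_i)-\mathbb{E}_{\mu\sim\Theta}[h_\mathcal{T}(\mu)]\bigr|$.

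\textbf{Well-posedness.} By the proof of Proposition \ref{proposition:existence of DL}, $0\le h_\mathcal{T}(\mu)\le \|T^\epsilon_{\rho\to\mu}\|^2_{L^2(\rho)}\le M_2(\mu)\le\vartheta$ for $\Theta$-a.e.\ $\mu$. Hence every quantity involved is bounded and integrable. By Corollary \ref{cor:bounded_diam}, $\mathcal{T}\mapsto h_\mathcal{T}(\mu)$ is Lipschitz in the metric $d_{2,m}$, uniformly over $\mu$ in the support of $\Theta$. Since $\mathbb{T}^m$ is compact, it is separable, so each supremum may be taken over a countable dense subset of $\mathbb{T}^m$; this makes all suprema measurable.

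\textbf{Ghost sample.} Introduce an independent copy $\mu_1',\dots,\mu_n'\sim\Theta$. Write $\mathbb{E}_{\mu\sim\Theta}[h_\mathcal{T}(\mu)]=\mathbb{E}'[\frac1n\sum_i h_\mathcal{T}(\mu_i')]$. Pull the absolute value and the supremum inside the expectation over the ghost sample: this is Jensen's inequality, since the supremum of a convex function is convex. This gives
\[
\mathbb{E}[\Phi]\le \mathbb{E}_{\mu,\mu'}\Bigl[\sup_{\mathcal{T}}\Bigl|\tfrac1n\textstyle\sum_i \bigl(h_\mathcal{T}(\mu_i)-h_\mathcal{T}(\mu_i')\bigr)\Bigr|\Bigr].
\]

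\textbf{Symmetrization.} The pairs $(\mu_i,\mu_i')$ are i.i.d.\ and exchangeable within each pair. So for any fixed signs $\sigma\in\{\pm1\}^n$, the joint law of $\bigl(h_\mathcal{T}(\mu_i)-h_\mathcal{T}(\mu_i')\bigr)_{i,\mathcal{T}}$ is unchanged when coordinate $i$ is multiplied by $\sigma_i$. Averaging over independent Rademacher $\sigma$, we may insert $\sigma_i$ in front of each difference. The triangle inequality then splits the supremum into two terms, and the two resulting expectations are equal in distribution. This yields $2\,\mathbb{E}_{\sigma,\mu}\bigl[\sup_\mathcal{T}\bigl|\frac1n\sum_i\sigma_i h_\mathcal{T}(\mu_i)\bigr|\bigr]$, as claimed.

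\textbf{Main obstacle.} The only nonroutine point is measurability of the supremum over the uncountable class $\mathbb{T}^m$. The reduction to a countable dense set above handles it, using the uniform Lipschitz bound from Corollary \ref{cor:bounded_diam} together with the moment bound $M_2(\mu)\le\vartheta$.
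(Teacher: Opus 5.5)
Your proof is correct and uses the same ghost-sample, Jensen, and sign-swap-by-exchangeability argument as the paper. The only difference is how you handle the uncountable class: you reduce every supremum to a countable dense subset of $\mathbb{T}^m$ via the uniform Lipschitz bound of Corollary~\ref{cor:bounded_diam}, while the paper first proves the finite-$\mathbb{T}$ case and then extends it through finite $\delta$-covers of $\mathbb{T}^m$ at an additive cost of $2C_{\vartheta,c}\delta$, letting $\delta\to0^{+}$; your reduction has the mild advantage of also settling measurability of the suprema, which the paper's approximation step takes for granted.
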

\begin{proof}
See Section \ref{subsec:ProofLemmaSymmetrization}.
\end{proof}

We then apply Dudley's integral bound to bound the Rademacher complexity of $H_\rho^{g,\epsilon}(\mathbb{T}^m)$:

\begin{lemma}\label{lemma:dudley}
(Theorem 5.22 in \citet{wainwright2019high}) Let $(\mathcal{F},d)$ be a metric space, and let $\{X_f\}_{f\in \mathcal{F}}$ be a set of zero mean random variables indexed by $\mathcal{F}$. Furthermore, assume that
\begin{equation}\label{eqn:subgaussian process}
\mathbb{E}\left[\exp(\lambda(X_f-X_{f'})\right]\leq \exp\left(\frac{\lambda^2}{2}d(f,f')^2\right)
\end{equation}
for all $f,f'\in \mathcal{F}$ and $\lambda \in \mathbb{R}$. Let $\alpha_0=\sup_{f,f'\in \mathcal{F}}d(f,f')$. Then:
\begin{align*}
\mathbb{E}\!\left[\sup_{f,f' \in \mathcal{F}} \bigl(X_f - X_{f'}\bigr)\right]
\leq
2\,\mathbb{E}\!\left[
\sup_{\substack{f,f' \in \mathcal{F} \\ d(f,f') \leq \delta}}
\bigl(X_f - X_{f'}\bigr)
\right]
+
C \int_{\delta}^{\alpha_0}
\sqrt{\log \mathcal{N}(\tau/4;\mathcal{F},d)}\,d\tau,
\end{align*}
where $C>0$ is a universal constant.
\end{lemma}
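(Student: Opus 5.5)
This lemma is the standard Dudley entropy-integral bound with a discretization term, and the plan is to prove it by a chaining argument on a finite net. The bound is only informative when the covering numbers are finite, so we may assume $\mathcal{N}(\tau;\mathcal{F},d)<\infty$ for all $\tau>0$. We may also assume $\delta\le \alpha_0$.

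\textbf{Discretization.} Fix a minimal $\delta/2$-net $U=\{f^1,\dots,f^N\}$ of $\mathcal{F}$. For each $f$ let $\pi(f)\in U$ satisfy $d(f,\pi(f))\le\delta/2$. For any $f,f'$ we write
\[
X_f-X_{f'}=(X_f-X_{\pi(f)})+(X_{\pi(f)}-X_{\pi(f')})+(X_{\pi(f')}-X_{f'}).
\]
The first and third terms involve pairs at distance at most $\delta$. Their suprema are therefore each bounded by $\sup_{d(g,g')\le\delta}(X_g-X_{g'})$, which produces the term $2\,\mathbb{E}[\sup_{d(f,f')\le\delta}(X_f-X_{f'})]$. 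This reduces the claim to bounding $\mathbb{E}\max_{u,u'\in U}(X_u-X_{u'})$, a maximum over a finite set, by $C\int_\delta^{\alpha_0}\sqrt{\log\mathcal{N}(\tau/4;\mathcal{F},d)}\,d\tau$.

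\textbf{Chaining on $U$.} Set $\epsilon_k=\alpha_0 2^{-k}$ and let $K$ be the largest integer with $\epsilon_K\ge\delta/2$ (roughly). For $k=0,\dots,K$ choose minimal $\epsilon_k$-nets $U_k$ of $U$, taking $U_0$ to be a single point and $U_{K+1}=U$ at the finest level. Let $\pi_k:U\to U_k$ be nearest-point maps. For $u\in U$ telescope:
\[
X_u-X_{\pi_0(u)}=\sum_{k=1}^{K+1}\bigl(X_{\pi_k(u)}-X_{\pi_{k-1}(u)}\bigr).
\]
Each link joins points at distance at most $\epsilon_k+\epsilon_{k-1}\le 3\epsilon_k$, and there are at most $|U_k||U_{k-1}|\le|U_k|^2$ distinct links at level $k$. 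By the sub-Gaussian increment hypothesis (\ref{eqn:subgaussian process}), the standard finite-maximum inequality gives
\[
\mathbb{E}\max(\text{links at level }k)\le 3\epsilon_k\sqrt{2\log|U_k|^2}.
\]
Summing over $k$ and using $X_u-X_{u'}=(X_u-X_{\pi_0(u)})-(X_{u'}-X_{\pi_0(u')})$, since $\pi_0(u)=\pi_0(u')$, we obtain
\[
\mathbb{E}\max_{u,u'\in U}(X_u-X_{u'})\le C\sum_k\epsilon_k\sqrt{\log|U_k|}.
\]

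\textbf{Comparison with the integral.} The covering number of the subset $U$ at scale $\epsilon$ is at most $\mathcal{N}(\epsilon/2;\mathcal{F},d)$, since centers can be moved into $U$ at the cost of doubling the radius. Monotonicity of $\tau\mapsto\mathcal{N}(\tau;\mathcal{F},d)$ then lets us dominate each summand: $\epsilon_k\sqrt{\log|U_k|}$ is at most a constant times $\int_{\epsilon_{k+1}}^{\epsilon_k}\sqrt{\log\mathcal{N}(\tau/4;\mathcal{F},d)}\,d\tau$, because the interval has length $\epsilon_k/2$ and $\tau/4\le\epsilon_k/4\le\epsilon_k/2$ there.

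\textbf{Main obstacle.} The delicate part is the bookkeeping at the ends of the chain, and here I am not certain the proposed scales work as stated. The finest level must correspond to scale $\gtrsim\delta$, so that the integral's lower limit is exactly $\delta$ and the factor $\tau/4$ absorbs the radius doubling and the two-sided link lengths. This may force choosing $U$ at scale $\delta/4$ or $\delta/2$ instead, with the residual discrepancy charged to the $\delta$-neighborhood supremum term. The last level, from $U_K$ to $U$, must be handled the same way. If the constants do not close, I would adjust the net scales by factors of $2$, which only changes the universal constant $C$.
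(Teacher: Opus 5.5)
Your plan is the standard chaining proof. It is the same argument behind Wainwright's Theorem 5.22, which the paper cites without proof. The difficulty you flag at the end, however, is a genuine obstruction and not a constant-factor issue: the inequality exactly as displayed, with $\int_\delta^{\alpha_0}\sqrt{\log\mathcal{N}(\tau/4;\mathcal{F},d)}\,d\tau$, fails when $\delta$ is close to $\alpha_0$.

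Take $\mathcal{F}=\{a,b\}$ with $d(a,b)=\alpha_0$, $X_a=0$ and $X_b=\alpha_0 Z$ with $Z\sim N(0,1)$. The increment hypothesis holds with equality, and the left side equals $\alpha_0\mathbb{E}|Z|=\alpha_0\sqrt{2/\pi}$. For $\delta<\alpha_0$ the localized supremum only sees the pairs $(a,a)$ and $(b,b)$, so it vanishes. Meanwhile $\mathcal{N}(\tau/4;\mathcal{F},d)=2$ for all $\tau\le\alpha_0$, so the right side is $C(\alpha_0-\delta)\sqrt{\log 2}$, which tends to $0$ as $\delta\uparrow\alpha_0$ for any fixed $C$. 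No rescaling of your nets can repair this: the whole range of integration has length $\alpha_0-\delta$, while $\mathbb{E}\max_{u,u'\in U}(X_u-X_{u'})$ can be of order $\alpha_0$.

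The discrepancy is a change-of-variables slip in the transcription. Up to its constant, the cited bound is $\int_{\delta/4}^{\alpha_0}\sqrt{\log\mathcal{N}(u;\mathcal{F},d)}\,du$. In the variable $\tau=4u$ this runs up to $4\alpha_0$, not $\alpha_0$, and that extra range is exactly what pays for the top of the chain.

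To finish, you must either prove that form or add the hypothesis $\delta\le\alpha_0/2$. Under that hypothesis your bookkeeping closes once you shift the charging intervals up by one level. Take $U$ a $\delta/2$-net and $K$ maximal with $\epsilon_K\ge\delta/2$, so that $\epsilon_K<\delta\le\epsilon_{K-1}$.

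For $1\le k\le K-1$, charge the level-$k$ term to $[\epsilon_k,\epsilon_{k-1}]$ instead of $[\epsilon_{k+1},\epsilon_k]$. This is legitimate because $|U_k|\le\mathcal{N}(\epsilon_k/2;\mathcal{F},d)\le\mathcal{N}(\tau/4;\mathcal{F},d)$ for every $\tau\le 2\epsilon_k$. These intervals tile $[\epsilon_{K-1},\alpha_0]\subseteq[\delta,\alpha_0]$.

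For the two finest levels, use the crude bound $|U_K|\le|U|=\mathcal{N}(\delta/2;\mathcal{F},d)$. Both terms are then at most a constant times $\delta\sqrt{\log\mathcal{N}(\delta/2;\mathcal{F},d)}\le\int_\delta^{2\delta}\sqrt{\log\mathcal{N}(\tau/4;\mathcal{F},d)}\,d\tau$. The requirement $[\delta,2\delta]\subseteq[\delta,\alpha_0]$ is precisely the condition $\delta\le\alpha_0/2$. Since the charged intervals overlap at most twice, only the universal constant changes.
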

Continuing with our sketch, let $\hat{\Theta}^n$ denote the empirical measure supported on $\{\mu_i\}_{i=1}^n$.  We verify (using a similar argument to Example 5.24 in \citet{wainwright2019high}) that the family of zero mean random variables $\hat{H}^{g,\epsilon}_{\rho,n}:=\left\{\frac{1}{\sqrt{n}}\sum_{i=1}^n\sigma_i h(\mu_i) \; | \; h\in H^{g,\epsilon}_\rho(\mathbb{T}^m)\cup(-H^{g,\epsilon}_\rho(\mathbb{T}^m))\right\}$, satisfies (\ref{eqn:subgaussian process}), with $d(h,\tilde{h})=\frac{1}{\sqrt{n}}\|h-\tilde{h}\|_{L^2(\hat{\Theta}^n)}$ for $h,\tilde{h}\in H_\rho^{g,\epsilon}(\mathbb{T}^m)\cup(-H^{g,\epsilon}_\rho(\mathbb{T}^m))$.  We consider this enlarged set of functions, as opposed to just $H_\rho^{g,\epsilon}(\mathbb{T}^m)$ for technical reasons, and note that it only increases covering numbers by at most a factor of two. We then apply Lemma \ref{lemma:dudley} to $\hat{H}^{g,\epsilon}_{\rho,n}$ to obtain an upper bound in terms of the covering numbers of $H^{g,\epsilon}_\rho(\mathbb{T}^m)$. We conclude the argument using Corollary \ref{cor:bounded_diam} and a technical result controlling covering numbers under Lipschitz transformations (Lemma \ref{lemma:covering under lipschitz}) to replace $\mathcal{N}(\tau;H_\rho^{g,\epsilon}(\mathbb{T}^m),L^2(\hat{\Theta}^n))$ in the final bound with $\mathcal{N}(\tau;\mathbb{T}^m,L^2(\rho))$.

We now proceed with the full proof in detail:

\noindent\textbf{Proof of Theorem \ref{thm:samp_complex_dl}:}  Write $h_\mathcal{T}(\cdot):=h_\rho^g(\cdot,\epsilon,\mathcal{T})\in H_\rho^{g,\epsilon}(\mathbb{T}^m)$. Observe that since $\mathcal{T}^{n}\in\argmin_{\mathcal{T}\in \mathbb{T}^m}\frac{1}{n}\sum_{i=1}^n h_\mathcal{T}(\mu_i)$, we have $\frac{1}{n}\sum_{i=1}^n h_{\mathcal{T}^n}(\mu_i)-\frac{1}{n}\sum_{i=1}^nh_{\mathcal{T}^*}(\mu_i)\leq 0$ for any $\mu_1,...,\mu_n$. Hence:
\begin{align*}
& \mathbb{E}_{\mu_i,\mu\sim \Theta}[h_{\mathcal{T}^n}(\mu)-h_{\mathcal{T}^*}(\mu)] \\
&=\mathbb{E}_{\mu_i\sim \Theta}[\mathbb{E}_{\mu\sim\Theta}[h_{\mathcal{T}^n}(\mu)]-\mathbb{E}_{\mu\sim\Theta}[h_{\mathcal{T}^*}(\mu)]]
\\&=
 \mathbb{E}_{\mu_i\sim \Theta}\left[
\mathbb{E}_{\mu\sim\Theta}[h_{\mathcal{T}^n}(\mu)]- \frac{1}{n}\sum_{i=1}^n h_{\mathcal{T}^n}(\mu_i)
\right] +  \mathbb{E}_{\mu_i,\mu\sim \Theta}\left[\frac{1}{n}\sum_{i=1}^n h_{\mathcal{T}^n}(\mu_i)-\frac{1}{n}\sum_{i=1}^n h_{\mathcal{T}^*}(\mu_i)\right] \\
&\qquad+ \mathbb{E}_{\mu_i,\mu\sim \Theta}\left[\frac{1}{n}\sum_{i=1}^n h_{\mathcal{T}^*}(\mu_i)-\mathbb{E}_{\mu\sim\Theta}[h_{\mathcal{T}^*}(\mu)]\right] \\
&\leq \mathbb{E}_{\mu_i\sim \Theta}\left[\mathbb{E}_{\mu\sim\Theta}[h_{\mathcal{T}^n}(\mu)]-\frac{1}{n}\sum_{i=1}^n h_{\mathcal{T}^n}(\mu_i)\right]+ \mathbb{E}_{\mu_i\sim \Theta}\left[\frac{1}{n}\sum_{i=1}^n h_{\mathcal{T}^*}(\mu_i)-\mathbb{E}_{\mu\sim\Theta}[h_{\mathcal{T}^*}(\mu)]\right] \\
&\leq2 \mathbb{E}_{\mu_i\sim \Theta}\left[\sup_{h_{\mathcal{T}}\in H^{g,\epsilon}_\rho(\mathbb{T}^m)}\left|\frac{1}{n}\sum_{i=1}^n h_{\mathcal{T}}(\mu_i)-\mathbb{E}_{\mu\sim\Theta}[h_{\mathcal{T}}(\mu)]\right|\right].
\end{align*}
Applying Lemma \ref{lemma:symmetrization}, we bound the third term as: \begin{align}
&\mathbb{E}_{\mu_i\sim\Theta}\left[\sup_{h_\mathcal{T}\in H^{g,\epsilon}_\rho(\mathbb{T}^m)}\left|\frac{1}{n}\sum_{i=1}^n h_\mathcal{T}(\mu_i)-\mathbb{E}_{\mu\sim\Theta}[h_\mathcal{T}(\mu)]\right|\right]\notag\\
\leq &2\mathbb{E}_{\mu_i\sim\Theta}\left[\mathbb{E}_{\sigma}\left[\sup_{h_\mathcal{T}\in H^{g,\epsilon}_\rho(\mathbb{T}^m)}\left|\frac{1}{n}\sum_{i=1}^n\sigma_ih_\mathcal{T}(\mu_i)\right|\right]\right], \label{eqn:ExpectationToBound}
\end{align}
where $\sigma=(\sigma_1,\sigma_2,...,\sigma_n)$ are i.i.d. Rademacher random variables. Let $-H_\rho^{g,\epsilon}(\mathbb{T}^m):=\{-h_\mathcal{T}\;|\;h_\mathcal{T}\in H_\rho^{g,\epsilon}(\mathbb{T}^m)\}$. Define the set of random variables $\hat{H}_{\rho,n}^{g,\epsilon}:=\{\frac{1}{\sqrt{n}}\sum_{i=1}^n\sigma_i h(\mu_i) \; | \; h\in H^{g,\epsilon}_\rho(\mathbb{T}^m)\cup (-H_\rho^{g,\epsilon}(\mathbb{T}^m))\}$, for any $n\in \mathbb{N}$. Notice that conditioned on $\{\mu_i\}_{i=1}^n$, for any $h\in H_\rho^{g,\epsilon}(\mathbb{T}^m)\cup (-H^{g,\epsilon}_\rho(\mathbb{T}^m))$, the random variable $h_{\sigma,n}:=\frac{1}{\sqrt{n}}\sum_{i=1}^n\sigma_i h(\mu_i)$ is a deterministic function of the random variable $\sigma:=(\sigma_1,\sigma_2,...,\sigma_n).$ We may write the integrand in the above expectation (\ref{eqn:ExpectationToBound}) as:
\begin{align}
\frac{1}{\sqrt{n}}\mathbb{E}_{\sigma}\left[\sup_{h_\mathcal{T}\in H^{g,\epsilon}_\rho(\mathbb{T}^m)}\left|\frac{1}{\sqrt{n}}\sum_{i=1}^n\sigma_ih(\mu_i)\right|\right]&=\frac{1}{\sqrt{n}}\mathbb{E}_{\sigma}\left[\sup_{h_{\sigma,n}\in \hat{H}^{g,\epsilon}_{\rho,n}} |h_{\sigma,n}|\right]\label{eqn:replace fnction class}\\&=\frac{1}{\sqrt{n}}\mathbb{E}_{\sigma}\left[\sup_{h_{\sigma,n}\in \hat{H}^{g,\epsilon}_{\rho,n}}h_{\sigma,n}\right].\label{eqn:remove abs}
\end{align}
The inequality in (\ref{eqn:replace fnction class}) follows from the fact that the set $\left\{\frac{1}{\sqrt{n}}\sum_{i=1}^n\sigma_ih_\mathcal{T}(\mu_i)\;|\;h_\mathcal{T}\in H_\rho^{g,\epsilon}(\mathbb{T}^m)\right\}\subseteq \hat{H}^{g,\epsilon}_{\rho,n}$. The equality in (\ref{eqn:remove abs}) follows from the fact that $\hat{H}^{g,\epsilon}_{\rho,n}=-\hat{H}^{g,\epsilon}_{\rho,n}$.

We claim that the family of random variables $\hat{H}^{g,\epsilon}_{\rho,n}$, with index set $H_\rho^{g,\epsilon}(\mathbb{T}^m)\cup (-H^{g,\epsilon}_\rho(\mathbb{T}^m))$, satisfies (\ref{eqn:subgaussian process}) with the metric induced by the norm $\|\cdot \|_{L^2(\hat{\Theta}^n)}$.  Indeed, we have, for any $h,\tilde{h}\in H^{g,\epsilon}_\rho(\mathbb{T}^m)\cup (-H_\rho^{g,\epsilon}(\mathbb{T}^m))$:
\begin{align*}
&\mathbb{E}_{\sigma}\left[\exp\left(\lambda \left(\frac{1}{\sqrt{n}}\sum_{i=1}^n\sigma_ih(\mu_i)-\frac{1}{\sqrt{n}}\sum_{i=1}^n\sigma_i\tilde{h}(\mu_i)\right)\right)\right]\\&=\prod_{i=1}^n\mathbb{E}_{\sigma_i}\left[\exp\left(\frac{\lambda\sigma_i}{\sqrt{n}}(h(\mu_i)-\tilde{h}(\mu_i))\right)\right]\\&\leq \prod_{i=1}^n\left[\exp\left(\frac{\lambda^2}{2n}(h(\mu_i)-\tilde{h}(\mu_i))^2\right)\right]\\&=\exp\left(\frac{\lambda^2}{2n}\sum_{i=1}^n (h(\mu_i)-\tilde{h}(\mu_i))^2\right)\\&=\exp\left(\frac{\lambda^2}{2}\|h-\tilde{h}\|^2_{L^2(\hat{\Theta}^n)}\right)
\end{align*}
where we applied the bound $\mathbb{E}_{\sigma_i}\left[\exp\left(\sigma_i a\right)\right]\leq \exp(a^2/2)$ for $a\geq 0$, establishing (\ref{eqn:subgaussian process}) with $\mathcal{F}=H^{g,\epsilon}_\rho(\mathbb{T}^m)\cup(-H^{g,\epsilon}_\rho(\mathbb{T}^m)).$ Hence, letting
\[\alpha_0':=\sup_{h,\tilde{h}\in H^{g,\epsilon}_\rho(\mathbb{T}^m)\cup(-H^{g,\epsilon}_\rho(\mathbb{T}^m))}\|h-\tilde{h}\|_{L^2(\hat{\Theta}^n)},\]  we may apply Lemma \ref{lemma:dudley} to bound, for any $\delta>0$:\begin{align*}
&\frac{2}{\sqrt{n}}\mathbb{E}_{\sigma}\left[\sup_{h_{\sigma,n}\in \hat{H}^{g,\epsilon}_{\rho,n}}h_{\sigma,n}\right]\\&\leq \frac{4}{\sqrt{n}}\mathbb{E}_\sigma\left[\sup_{\substack{h,\tilde{h}\in H_{\rho}^{g,\epsilon}(\mathbb{T}^m)\cup(-H_{\rho}^{g,\epsilon}(\mathbb{T}^m))\\ \|h-\tilde{h}\|_{L^2(\hat{\Theta}^n)} \leq \delta}}h_{\sigma,n}-\tilde{h}_{\sigma,n}\right]\\&+\frac{C}{\sqrt{n}}\int_\delta^{\alpha_0'}\sqrt{\log \mathcal{N}(\tau/4;H_\rho^{g,\epsilon}(\mathbb{T}^m)\cup(-H^{g,\epsilon}_\rho(\mathbb{T}^m)),L^2(\hat{\Theta}^n)) }d\tau \tag{Lemma \ref{lemma:dudley}}\\
&\leq 4\mathbb{E}_\sigma\left[\sup_{\substack{h,\tilde{h}\in H_{\rho}^{g,\epsilon}(\mathbb{T}^m)\cup(-H_{\rho}^{g,\epsilon}(\mathbb{T}^m))\\ \|h-\tilde{h}\|_{L^2(\hat{\Theta}^n)} \leq \delta}}\frac{h_{\sigma,n}-\tilde{h}_{\sigma,n}}{\sqrt{n}}\right]\\&+ \frac{C}{\sqrt{n}}\int_\delta^{\alpha_0'}\sqrt{\log \mathcal{N}(\tau/4;H_\rho^{g,\epsilon}(\mathbb{T}^m),L^2(\hat{\Theta}^n)) +\log(2)}\;d\tau,
\end{align*}
where we used

\[\mathcal{N}(\tau;H^{g,\epsilon}_\rho(\mathbb{T}^m)\cup(-H^{g,\epsilon}_\rho(\mathbb{T}^m)),L^2(\hat{\Theta}^n))\leq 2\mathcal{N}(\tau;H^{g,\epsilon}_\rho(\mathbb{T}^m),L^2(\hat{\Theta}^n)).\]

To control the first term, we bound:
\begin{align*}
&\frac{1}{\sqrt{n}}\left|\frac{1}{\sqrt{n}}\sum_{i=1}^n\sigma_i h(\mu_i)-\frac{1}{\sqrt{n}}\sum_{i=1}^n\sigma_i \tilde{h}(\mu_i)\right|\\
=&\frac{1}{\sqrt{n}}\left|\frac{1}{\sqrt{n}}\sum_{i=1}^n\sigma_i(h(\mu_i)-\tilde{h}(\mu_i))\right|\\
\leq &\frac{1}{\sqrt{n}}\left|\frac{1}{n}\sum_{i=1}^n\sigma_i^2\right|^{1/2}\left|\sum_{i=1}^n (h(\mu_i)-\tilde{h}(\mu_i))^2\right|^{1/2}\tag{Cauchy-Schwarz}\\
=&\left|\frac{1}{n}\sum_{i=1}^n(h(\mu_i)-\tilde{h}(\mu_i))^2\right|^{1/2}\tag{$\sigma_i^{2}=1$ a.s. }\\
=&\|h-\tilde{h}\|_{L^2(\hat{\Theta}^n)}.
\end{align*}
Hence we have:\begin{align*}
& 4\mathbb{E}_\sigma\left[\sup_{\substack{h,\tilde{h}\in H_{\rho}^{g,\epsilon}(\mathbb{T}^m)\cup(-H_{\rho}^{g,\epsilon}(\mathbb{T}^m))\\ \|h-\tilde{h}\|_{L^2(\hat{\Theta}^n)} \leq \delta}}\frac{h_{\sigma,n}-\tilde{h}_{\sigma,n}}{\sqrt{n}}\right]\\&\leq 4\mathbb{E}_\sigma\left[\sup_{\substack{h,\tilde{h}\in H_{\rho}^{g,\epsilon}(\mathbb{T}^m)\cup(-H_{\rho}^{g,\epsilon}(\mathbb{T}^m))\\ \|h-\tilde{h}\|_{L^2(\hat{\Theta}^n)} \leq \delta}}\|h-\tilde{h}\|_{L^2(\hat{\Theta}^n)}\right]\\&=4\delta.
\end{align*}
Our upper bound now reads:
\begin{align}
&4\delta+\frac{C}{\sqrt{n}}\int_\delta^{\alpha_0'}\sqrt{\log \mathcal{N}(\tau/4;H_\rho^{g,\epsilon}(\mathbb{T}^m),L^2(\hat{\Theta}^n))+\log(2)}\;d\tau\notag\\
\leq &4\delta +\frac{C}{\sqrt{n}}\int_\delta^{\alpha_0'}\sqrt{\log \mathcal{N}(\tau/4;H_\rho^{g,\epsilon}(\mathbb{T}^m),L^2(\hat{\Theta}^n))}\;d\tau +\frac{C}{\sqrt{n}}\int_\delta^{\alpha_0'}1d\tau,\label{eqn:post dudley}
\end{align}
where we factored out $\log(2)$ and collapsed it into the constant.

Towards bounding $\mathcal{N}(\tau;H^{g,\epsilon}_\rho(\mathbb{T}^m),L^2(\hat{\Theta}^n))$, define $d_{2,m}(\mathcal{T},\mathcal{T}'):=\max_{1\leq j \leq m}\|T_j-T_j'\|_{L^2(\rho)}.$ We establish that the mapping $\mathcal{T}\rightarrow h_\mathcal{T}$ from $(\mathbb{T}^m,d_{2,m})$  to the (random) metric space $(H^{g,\epsilon}_\rho(\mathbb{T}^m),L^2(\hat{\Theta}^n))$ is almost-surely Lipschitz continuous. By Corollary \ref{cor:bounded_diam}, we have:
\begin{align}
&\|h_\mathcal{T}-h_{\tilde{\mathcal{T}}}\|_{L^2(\hat{\Theta}^n)}\notag\\
=&\sqrt{\frac{1}{n}\sum_{i=1}^n (h_\mathcal{T}(\mu_i)-h_{\tilde{\mathcal{T}}}(\mu_i))^2}\notag\\
 \leq &\sqrt{\frac{1}{n}\sum_{i=1}^n\frac{M_2(\mu_i)^4+M_2(\mu_i)^{3}}{2\min\{c^2,c^4\}} \left(\max_{1\leq j\leq m}\|T_j-\tilde{T}_j\|_{L^2(\rho)}\right)^2}\tag{Corollary \ref{cor:bounded_diam}}\\
=&\frac{\max_{1\leq j\leq m}\|T_j-\tilde{T}_j\|_{L^2(\rho)}}{\sqrt{2}\min\{c,c^2\}}\sqrt{\frac{1}{n}\sum_{i=1}^n(M_2(\mu_i)^4+M_2(\mu_i)^{3})}\notag\\
\leq &\frac{\max_{1\leq j\leq m}\|T_j-\tilde{T}_j\|_{L^2(\rho)}}{\sqrt{2}\min\{c,c^2\}}\sqrt{\frac{1}{n}\sum_{i=1}^n(\vartheta^4+\vartheta^3)}\tag{$\mu_i\in \mathcal{M}_2(\vartheta)$, almost-surely}\\
=&\frac{\max_{1\leq j\leq m}\|T_j-\tilde{T}_j\|_{L^2(\rho)}}{\sqrt{2}\min\{c,c^2\}}\sqrt{\vartheta^4+\vartheta^3},\label{eqn:lipschitz on hg}
\end{align}
almost-surely, which establishes the desired Lipschitz property with constant $\frac{\sqrt{\vartheta^{4}+\vartheta^{3}}}{\sqrt{2}\min\{c,c^{2}\}}$ . We apply (\ref{eqn:lipschitz on hg}) and Lemma \ref{lemma:covering under lipschitz} to conclude:
\begin{equation*}
\mathcal{N}(\tau/4;H^{g,\epsilon}_\rho(\mathbb{T}^m),L^2(\hat{\Theta}^n))\leq  \mathcal{N}\left(\frac{\tau}{4} \left(\frac{\sqrt{\vartheta^4+\vartheta^3}}{\sqrt{2}\min\{c,c^2\}}\right)^{-1};\mathbb{T}^m,d_{2,m}\right)
\end{equation*}
almost-surely.

To conclude the overall argument, let $D_\tau=\{x^i\}_{i=1}^{N_\tau}$ denote a $\tau$-net for an arbitrary metric space $(\mathcal{X},d)$. Then for any $(x_1,x_2,...,x_m)\in \mathcal{X}^m$, there exists $(x_1^{i_1},x_2^{i_2},...,x^{i_m}_m)\in D_\tau^m$ such that:
\begin{align*}
&\max_{1\leq j\leq m}d(x_j,x_j^{i_j})\leq \max_{1\leq j\leq m}\tau=\tau
\end{align*}
and hence $D_\tau^m$ is a $\tau$-net for $\mathcal{X}^m$ equipped with this metric, which has size $N_\tau^m$. Applying this to our situation, we have:
\begin{equation*}
\mathcal{N}\left(\frac{\tau}{4} \left(\frac{\sqrt{\vartheta^4+\vartheta^3}}{\sqrt{2}\min\{c,c^2\}}\right)^{-1};\mathbb{T}^m,d_{2,m}\right)\leq \mathcal{N}\left(\frac{\tau}{4} \left(\frac{\sqrt{\vartheta^4+\vartheta^3}}{\sqrt{2}\min\{c,c^2\}}\right)^{-1};\mathbb{T},L^2(\rho)\right)^m.
\end{equation*}
Finally, we bound $\alpha_0'$ in (\ref{eqn:post dudley}). For any $h,\tilde{h}\in H_{\rho}^{g,\epsilon}(\mathbb{T}^m)\cup(-H_{\rho}^{g,\epsilon}(\mathbb{T}^m))$, there exist tuples $\mathcal{T},\tilde{\mathcal{T}}\in \mathbb{T}^m$ such that $h\in \{h_\mathcal{T},-h_\mathcal{T}\}$ and $\tilde{h}\in \{h_\mathcal{T},-h_{\tilde{\mathcal{T}}}\}$.
\begin{align*}
\|h-\tilde{h}\|_{L^2(\hat{\Theta}^n)}&\leq \|h_\mathcal{T}\|_{L^2(\hat{\Theta}^n)}+\|h_{\tilde{\mathcal{T}}}\|_{L^2(\hat{\Theta}^n)}\\&\leq 2\vartheta,
\end{align*}
where the bound follows from
\begin{align*}
\|h_\mathcal{T}\|_{L^2(\hat{\Theta}^n)}&=\sqrt{\frac{1}{n}\sum_{i=1}^nh_{\mathcal{T}}(\mu_i)^2}\\&=\sqrt{\frac{1}{n}\sum_{i=1}^n\left(\inf_{\lambda\in \mathbb{R}^m_{\geq 0}}\|T^\epsilon_{\rho\rightarrow \mu_i}-\sum_{j=1}^m\lambda_j T_j\|^2_{L^2(\rho)}+g(\lambda)\right)^2}\\&\leq \sqrt{\frac{1}{n}\sum_{i=1}^n\|T^\epsilon_{\rho\rightarrow \mu_i}\|^4_{L^2(\rho)}}\tag{$\lambda=0$}\\&\leq \sqrt{\frac{1}{n}\sum_{i=1}^nM_2(\mu_i)^2}\tag{Lemma \ref{lem:CompressionUnderEOT}}\\&\leq \vartheta.
\end{align*}
Taking the supremum over $g,\tilde{h}$, we have $\alpha_{0}'\le 2\vartheta$.
Let $C_{\vartheta,c}:=\frac{\sqrt{(\vartheta^4+\vartheta^3)}}{\sqrt{2}\min\{c,c^2\}}$. Combining (\ref{eqn:post dudley}) with the reduction to the uniform deviation bound above, we have established that: \begin{align*}
&\mathbb{E}_{\mu_i,\mu\sim \Theta}[h_{\mathcal{T}^n}(\mu)-h_{\mathcal{T}^*}(\mu)]\\
&\leq 8\delta+ \frac{C}{\sqrt{n}}\int_\delta^{2\vartheta}\sqrt{m\log\left(\mathcal{N}\left(\tau (4C_{\vartheta,c})^{-1};\mathbb{T},L^2(\rho)\right)\right)}\;d\tau
+ \frac{C}{\sqrt{n}}\int_\delta^{2\vartheta}1d\tau\\
&= 8\delta+\frac{4C\cdot C_{\vartheta,c}}{\sqrt{n}}
\int_{\delta/(4C_{\vartheta,c})}^{\vartheta/(2C_{\vartheta,c})}
\sqrt{m\log\left(\mathcal{N}\left(u;\mathbb{T},L^2(\rho)\right)\right)}du
+ \frac{C}{\sqrt{n}}\left(2\vartheta-\delta\right)
\tag{$u:=(4C_{\vartheta,c})^{-1}\tau$}\\
&\leq 8\delta+\frac{4C \cdot C_{\vartheta,c}}{\sqrt{n}}
\int_{\delta/(4C_{\vartheta,c})}^{\vartheta/(2C_{\vartheta,c})}
\sqrt{m\log\left(\mathcal{N}\left(u;\mathbb{T},L^2(\rho)\right)\right)}du
+ \frac{2C\vartheta}{\sqrt{n}}.
\end{align*}We conclude by collapsing constant factors into $C$.
\qed

\begin{remark}
We note that the bounds in Theorem \ref{thm:samp_complex_dl} degrade as $c\rightarrow 0$. This is due to the $\frac{1}{\min\{c,c^2\}}$-dependence in Corollary \ref{cor:bounded_diam}, which is in turn due to the estimate in Lemma \ref{lemma:b/c}. To expand on why our argument fails for $c=0$, for any $y\in \mathbb{R}^d$ with $\|y\|=1$, define the map $T_y(x)=y$ for all $x\in \mathbb{R}^d$. Then for any absolutely continuous $\rho$ and $t>0$ we have:
\begin{align*}
\left\|T_{\rho\rightarrow \delta_0}-\left(\frac{t}{2}T_{y}+\frac{t}{2}T_{-y}\right)\right\|_{L^2(\rho)}&=0.
\end{align*}
We conclude that $\{(t/2,t/2)|t>0\}$ is a one-dimensional set of minimizers realizing $h_\rho^g(\delta_0,0,(T_y,T_{-y}))$, with $g=0$. Hence we cannot obtain a data dependent bound on $\|\lambda^*\|_1$, even in the case when $\mathbb{T}=\{T_y,T_{-y}\}$ and $M_2(\delta_0)=0$, which renders Lemma \ref{lemma:lipschitz objective} vacuous. To patch our argument, an alternative mechanism would be required to establish stability of $h_\rho^g(\mu,\epsilon ,\mathcal{T})$ for non-coercive costs, such as restricting $\lambda$ to live in a compact subset of $\mathbb{R}^m_{\geq 0}$.
\end{remark}

\begin{remark}
    Theorem \ref{thm:samp_complex_dl} establishes that the generalization error for LWDL-OT and LWDL-EOT can be controlled by quantities depending on the moments of the data, the strength of regularization and the covering number of the set of candidate dictionary atoms. This is related to prior work in \citet{vainsencher2011sample} and \citet{gribonval2015sample}, which establish similar bounds for dictionary learning in both finite and infinite dimensions. Theorem 14 in \citet{vainsencher2011sample} is particularly relevant, which establishes generalization error bounds for dictionary learning of data living in a metric space $\mathcal{R}$, embedded into a Hilbert space $\mathcal{H}$ via a feature map. Here, the error bounds scale according to the covering numbers of $\mathcal{R}$, which are assumed to be of the order $(\frac{C}{\epsilon})^n$ for some $n$, and the H\"older continuity of the feature map, among other factors. This is a similar setting to our result, as our data consists of probability measure valued data in the Wasserstein space, which is embedded in a Hilbert space $L^2(\rho)$ via the transport map $T_{\rho\rightarrow \mu}$. However, we note that Theorem 14 in \citet{vainsencher2011sample} does not apply to our applications of Theorem \ref{thm:samp_complex_dl}--for instance, $\mathcal{P}_2(\mathcal{M}_2(\vartheta))$ strongly violates the covering number assumption.
\end{remark}
\subsection{Covering Number Bounds}\label{sec:covering number bounds}
In this section, we will discuss applications of Theorem \ref{thm:samp_complex_dl} for different choices of dictionaries and data distributions $\Theta.$ As shown by (\ref{eqn:sample linear bound}), it is crucial to obtain explicit bounds for the covering number of the dictionaries, and hence in this section we discuss some standard results surrounding covering number bounds for common function classes.
The standard mechanism for obtaining covering number bounds on function classes is via uniform regularity bounds. The following result is an archetypal example of such a result. Recall our notation $\mathcal{C}^\alpha_M(\Omega)=\{f:\Omega\rightarrow \mathbb{R}| f\in \mathcal{C}^{\alpha}(\Omega) \; \text{and} \; \max_{|\beta|\leq \alpha}\sup_{x\in \mathring{\Omega}}|D^\beta f(x)|\leq M,\; \beta\in \mathbb{N}^d\}$, where $\Omega\subset \mathbb{R}^d$ is open, $\alpha\geq 0$, $\beta$ is a multi-index and $M\geq 0$. Then:

\begin{theorem}\label{theorem:wellner non compact}
(Corollary 2.7.4 in \citet{wellner2013weak}) Let $\rho\in \mathcal{P}(\mathbb{R}^d)$ and let $p\geq 1$. Let $\mathbb{R}^d=\cup_{j=1}^\infty I_j$ be a partition of $\mathbb{R}^d$ into bounded, convex sets with nonempty interior, and let $\mathcal{F}\subset L^p(\rho)$ be a family of functions such that the restrictions $\mathcal{F}|_{I_j}$ belong to $\mathcal{C}^\alpha_{M_j}(I_j)$ for every $j$, where $M_j\geq 0$ for all $j$. Let $Q\geq d/\alpha$. Then there exists a constant $K_{\alpha,Q,p,d}$, depending only on $\alpha,Q,p,d$ such that, for any $\rho\in \mathcal{P}(\mathbb{R}^d)$:
\begin{equation*}
\log \left(\mathcal{N}(\tau;\mathcal{F},L^p(\rho))\right)\leq  K_{\alpha,Q,p,d}\cdot\left(\sum_{j=1}^\infty \mathcal{L}(I_j^1)^{\frac{p}{Q+p}} M_j^{\frac{Qp}{Q+p}}\rho(I_j)^{\frac{Q}{Q+p}}\right)^{\frac{Q+p}{p}}\cdot\tau^{-Q}
\end{equation*}
where $\mathcal{L}$ denotes the Lebesgue measure, and for any $I\subset \mathbb{R}^d$, the set $I^1:=\{x\in \mathbb{R}^d|\exists y\in I\text{ s.t. } \|x-y\|\leq 1\}$.
\end{theorem}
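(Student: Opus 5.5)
The plan is to reduce the global $L^p(\rho)$ covering problem to a family of uniform (sup-norm) covering problems, one on each cell $I_j$ of the partition. The one genuinely nontrivial ingredient is the classical single-set entropy bound (Theorem 2.7.1 in \citet{wellner2013weak}). For a bounded convex $I\subset\mathbb{R}^d$ with nonempty interior it states
\[
\log\mathcal{N}\bigl(\eta;\mathcal{C}^\alpha_1(I),\|\cdot\|_{\infty,I}\bigr)\le K_{\alpha,d}\,\mathcal{L}(I^1)\,\eta^{-d/\alpha},\qquad 0<\eta\le 1.
\]
I would either invoke it directly or reprove it by the standard construction. That construction takes a $\eta^{1/\alpha}$-grid of points in $I$, discretizes the Taylor coefficients $D^\beta f(x)$, $|\beta|\le\lfloor\alpha\rfloor$, at each grid point to precision $\eta^{(\alpha-|\beta|)/\alpha}$, and counts admissible coefficient patterns by propagating constraints from neighbouring grid points. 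The count of grid points is controlled by the volume of the enlargement $I^1$. I expect this to be the main obstacle: the constant must be uniform over convex sets of arbitrary shape, with all shape dependence captured by $\mathcal{L}(I^1)$.

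Next I would rescale to cells with bound $M_j$ and convert to the exponent $Q$. Since $f|_{I_j}/M_j\in\mathcal{C}^\alpha_1(I_j)$, a sup-norm cover at radius $\tau_j$ has log-cardinality at most $K\,\mathcal{L}(I_j^1)(M_j/\tau_j)^{d/\alpha}$ when $\tau_j\le M_j$. Because $Q\ge d/\alpha$ and $M_j/\tau_j\ge1$, this is at most $K\,\mathcal{L}(I_j^1)M_j^Q\tau_j^{-Q}$. When $\tau_j>M_j$ the zero function alone is a $\tau_j$-cover, so the log-cardinality is $0$ and the same bound holds trivially.

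Then I would glue the local covers. Choose radii $\tau_j>0$ with $\sum_j\rho(I_j)\tau_j^p\le\tau^p$. Define candidate centers piecewise by choosing one local center on each $I_j$. For $f\in\mathcal{F}$ and the glued center $g$,
\[
\|f-g\|_{L^p(\rho)}^p=\sum_j\int_{I_j}|f-g|^p\,d\rho\le\sum_j\rho(I_j)\tau_j^p\le\tau^p,
\]
so the product of local covers is a $\tau$-cover. Its log-cardinality is $\sum_j\log N_j$. If that sum is finite, only finitely many cells need more than the single center $0$, so the product is finite. The centers need not lie in $\mathcal{F}$; passing to an internal cover costs a factor $2$ in the radius, which is absorbed into $K_{\alpha,Q,p,d}$.

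Finally I would optimize the radii. Write $a_j=K\,\mathcal{L}(I_j^1)M_j^Q$ and $\rho_j=\rho(I_j)$, and minimize $\sum_j a_j\tau_j^{-Q}$ subject to $\sum_j\rho_j\tau_j^p=\tau^p$. The Lagrange condition gives $\tau_j^{Q+p}\propto a_j/\rho_j$. Substituting back, the minimum is
\[
\tau^{-Q}\Bigl(\sum_j a_j^{\frac{p}{Q+p}}\rho_j^{\frac{Q}{Q+p}}\Bigr)^{\frac{Q+p}{p}},
\]
which can be verified directly as the equality case of Hölder's inequality. Since $a_j^{p/(Q+p)}=K^{p/(Q+p)}\mathcal{L}(I_j^1)^{p/(Q+p)}M_j^{Qp/(Q+p)}$, this is exactly the claimed expression. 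Cells with $\rho_j=0$ contribute nothing and may be given arbitrarily large radius. If the right-hand side is infinite there is nothing to prove.
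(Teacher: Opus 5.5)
Your proof is correct, but it takes a different route from the paper. The paper gives no argument for this statement beyond citing Corollary 2.7.4 of \citet{wellner2013weak}, which bounds bracketing numbers. It then transfers that bound to covering numbers through $\mathcal{N}(\tau)\le\mathcal{N}_{[]}(2\tau)$: the midpoint of an $L^p(\rho)$-bracket of size $2\tau$ lies within $\tau$ of every function in the bracket.

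You instead start one level lower, from the single-cell sup-norm estimate (Theorem 2.7.1 there), and do the gluing with balls rather than brackets. Your ingredients are:
\begin{itemize}
\item sup-norm $\tau_j$-nets on each cell, with glued centers $g=g_j$ on $I_j$;
\item the observation that cells with $\tau_j>M_j$ cost nothing, so on the remaining cells $(M_j/\tau_j)^{d/\alpha}\le (M_j/\tau_j)^{Q}$ by $Q\ge d/\alpha$;
\item the radius allocation $\tau_j\propto (a_j/\rho(I_j))^{1/(Q+p)}$, which yields exactly the stated sum with exponents $\frac{p}{Q+p},\frac{Qp}{Q+p},\frac{Q}{Q+p}$.
\end{itemize}
This is essentially the textbook's own proof of Corollary 2.7.4, with brackets $[g_j-\tau_j,g_j+\tau_j]$ replaced by balls. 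It avoids the bracketing detour and explains the shape of the bound, at the cost of length; the paper's citation is shorter but opaque.

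What you single out as the main obstacle is a constant uniform over all bounded convex cells, with shape entering only through $\mathcal{L}(I^1)$. That is precisely the content of Theorem 2.7.1, so citing it suffices and no reproof is needed.

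Two small points should be made explicit:
\begin{itemize}
\item Using $\sup_{I_j}|f|\le M_j$ on the whole cell matters because boundary points of $I_j$ may carry $\rho$-mass. This relies on the textbook convention that elements of $\mathcal{C}^\alpha_M(I)$ are continuous on $I$, not merely controlled on the interior as in the paper's definition.
\item Cells with $M_j=0$, where your allocation formula would give $\tau_j=0$, should be handled separately by the exact center $0$.
\end{itemize}
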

\begin{remark}
Corollary 2.7.4 in \citet{wellner2013weak} is stated for bracketing numbers $\mathcal{N}_{[]}$ as opposed to covering numbers $\mathcal{N}$. It is easy to show that for any normed vector space $\mathcal{X}$ we have $\mathcal{N}(\tau;\mathcal{X},\|\cdot\|)\leq \mathcal{N}_{[]}(2\tau;\mathcal{X},\|\cdot\|)$, and hence the bound applies to covering numbers after modifying $K_{\alpha,Q,p,d}.$
\end{remark}

In order for the sum in Theorem \ref{theorem:wellner non compact} to converge, it is necessary to have strong concentration bounds on $\rho$. The following corollary, extracted from the proof of Proposition 3 in \citet{mena2019statistical}, gives an example when $\rho$ is subgaussian, which will be relevant in our analysis of covering numbers for EOT maps.

\begin{corollary}\label{cor:regular function class subgaussian}
Let $\rho\in \mathcal{G}_\sigma(\mathbb{R}^d)$, let $s\in \mathbb{N}$ and let $q>0$. Let $\mathcal{F}\subseteq L^2(\rho)\cap C^s (\mathbb{R}^d)$, such that there exists a constant $C$ with
\begin{equation*}
\|D^\alpha f(x)\|\leq C (1+\|x\|)^{s+q}
\end{equation*}
for all $\alpha \in \mathbb{N}^d$ with $|\alpha|\leq s$ and $f\in \mathcal{F}$. Then:
\begin{align*}
\log \mathcal{N}(\tau;\mathcal{F},L^2(\rho))&\leq C_{*}(1+\sigma^{s+q})^{d/s}(1+\sigma^d)\tau^{-d/s}.
\end{align*}

As a special case, if $s = \lceil \frac{d}{2}\rceil +1$, then:
\begin{equation*}
\log \mathcal{N}(\tau; \mathcal{F},L^2(\rho))\leq  C_{d,q}(1+\sigma^{2q+d+2})(1+\sigma^d)\tau^{-2}
\end{equation*}
where $C_{d,q}\geq 0$ depends only on $d$ and $q$.
\end{corollary}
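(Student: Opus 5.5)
The plan is to apply Theorem~\ref{theorem:wellner non compact} with $p=2$, smoothness $\alpha=s$ and $Q=d/s$, using a partition of $\mathbb{R}^d$ into unit cubes $\{I_j\}$. Unit cubes are bounded, convex and have nonempty interior, and $\mathcal{L}(I_j^1)\le (3)^d$ uniformly in $j$.

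\textbf{Step 1: local smoothness constants.} If $r_j:=\operatorname{dist}(0,I_j)$, then every $f\in\mathcal{F}$ restricted to $I_j$ lies in $\mathcal{C}^s_{M_j}(I_j)$ with
\[
M_j:=C(1+r_j+\sqrt d)^{s+q}\lesssim (1+r_j)^{s+q}.
\]
This follows directly from the assumed derivative growth bound.

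\textbf{Step 2: tail mass of each cube.} For $X\sim\rho$, Lemma~\ref{lemma:normsubgauss} makes $\|X\|$ a $q_G\sqrt d\sigma$-subgaussian scalar. Markov's inequality then gives, for any $k\ge 1$,
\[
\rho(I_j)\le \mathbb{P}(\|X\|\ge r_j)\le \frac{\mathbb{E}(1+\|X\|)^k}{(1+r_j)^k}\lesssim_{k,d}\frac{(1+\sigma)^k}{(1+r_j)^k}.
\]
Here the moment bound uses item 3 of Definition~\ref{def:subgauss}. As a competing option I would keep the Gaussian bound $\rho(I_j)\le 2e^{-c r_j^2/(d\sigma^2)}$, valid when $r_j\gtrsim\sigma$, and use whichever gives the cleaner $\sigma$-dependence.

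\textbf{Step 3: summing the series.} There are $O(R^{d-1})$ cubes with $r_j\in[R,R+1)$. The sum in Theorem~\ref{theorem:wellner non compact} is therefore bounded by a constant times
\[
\sum_{R\ge 0} R^{d-1}(1+R)^{\frac{2(s+q)Q}{Q+2}}\Bigl(\tfrac{(1+\sigma)^k}{(1+R)^{k}}\Bigr)^{\frac{Q}{Q+2}}.
\]
I take $k$ large enough, depending on $d,s,q$, that the exponent of $(1+R)$ is below $-1$, so the sum is finite.

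\textbf{Step 4: raising to the outer power.} Raising the sum to the power $(Q+2)/2$ gives $\log\mathcal N(\tau)\lesssim \Sigma(\sigma)\,\tau^{-d/s}$ for some polynomial factor $\Sigma(\sigma)$.

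The main obstacle is getting the $\sigma$-dependence exactly in the form $(1+\sigma^{s+q})^{d/s}(1+\sigma^d)$, rather than some other polynomial in $\sigma$. For this I would split the partition into two parts, as in Mena--Niles-Weed:
\begin{itemize}
\item Inside the ball of radius $\sim\sigma$, I bound $\rho(I_j)\le 1$ and $M_j\lesssim (1+\sigma)^{s+q}$. The number of cubes there is $\lesssim(1+\sigma^d)$.
\item Outside that ball, I use the Gaussian tail, which makes the contribution $O(1)$ times the same factors.
\end{itemize}
Applying Hölder's inequality to the inner sum, with exponent $(Q+2)/2$, should then produce $M^{Q}\cdot(\#\text{cubes})$. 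This equals $(1+\sigma^{s+q})^{d/s}(1+\sigma^d)$, up to constants depending only on $d,s,q$.

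\textbf{Special case.} For $s=\lceil d/2\rceil+1$ we have $d/s<2$. For $\tau\le$ the $L^2(\rho)$-diameter of $\mathcal F$ (which is bounded, by the growth bound and the finite moments of $\rho$), we get $\tau^{-d/s}\lesssim\tau^{-2}$; beyond the diameter the covering number is $1$, so the bound holds trivially. Finally,
\[
\frac{(s+q)d}{s}\le d+\frac{qd}{s}\le d+2q,
\]
so $(1+\sigma^{s+q})^{d/s}\lesssim 1+\sigma^{2q+d+2}$, which gives the stated bound.
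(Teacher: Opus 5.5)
Your overall plan (Theorem~\ref{theorem:wellner non compact} with $p=2$, $Q=d/s$, a partition of $\mathbb{R}^d$, subgaussian tails for $\rho$) is the paper's plan. The gap is in how you obtain the explicit $\sigma$-dependence, specifically in the outside region. With unit cubes, a per-cube Gaussian tail bound does \emph{not} make that part ``$O(1)$ times the same factors.''

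To see this, take $\sigma\gg1$ and the unit cubes with $K\sigma\le r_j\le 2K\sigma$:
\begin{itemize}
\item there are about $(K\sigma)^d$ of them;
\item each has $\mathcal{L}(I_j^1)\asymp 1$ and $M_j\asymp\sigma^{s+q}$;
\item your bound $\rho(I_j)\le 2\exp(-c r_j^2/(q_G^2 d\sigma^2))$ is a constant depending only on $K,d$, which does not decay in $\sigma$.
\end{itemize}
Plugging these into the sum in Theorem~\ref{theorem:wellner non compact}, these cubes contribute of order $\sigma^{d}\,\sigma^{2(s+q)Q/(Q+2)}$. After the outer power $(Q+2)/2$ this becomes $\sigma^{d(Q+2)/2+(s+q)Q}$, which exceeds the target $(1+\sigma^d)(1+\sigma^{s+q})^{Q}$ by a factor $\sigma^{dQ/2}=\sigma^{d^2/(2s)}$. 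The Markov bound of Step~2 has the same defect.

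The cause is that a tail bound charges each of these $\sim\sigma^d$ cubes with the whole exterior mass, although their masses sum to at most one. In effect the volume $\sigma^d$ has been turned into a \emph{count} of $\sigma^d$ pieces, and the outer power amplifies that count. The same point matters inside the ball. Hölder yields $(\#\text{cubes})\,M^Q$ only because $\sum_j\rho(I_j)\le1$; the per-cube bound $\rho(I_j)\le1$ you mention would give $(\#\text{cubes})^{(Q+2)/2}M^Q$.

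The paper avoids this by scaling the partition with $\sigma$. It takes $I_0=[-\sigma,\sigma]^d$ and dyadic shells $[-2^j\sigma,2^j\sigma]^d\setminus(-2^{j-1}\sigma,2^{j-1}\sigma)^d$, each cut into $2d$ convex pieces. Then:
\begin{itemize}
\item the number of pieces per shell is independent of $\sigma$;
\item $\mathcal{L}(I_j^1)\le C_d(1+2^{jd}\sigma^d)$ and $M_j\le C2^{j(s+q)}(1+\sigma^{s+q})$;
\item the tail $\rho(I_j)\le 2\exp(-c4^{j}/(q_G^2d))$ is $\sigma$-free.
\end{itemize}
The volume now enters only through the power $2/(Q+2)$. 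So the sum is $(1+\sigma^d)^{2/(Q+2)}(1+\sigma^{s+q})^{2Q/(Q+2)}$ times a convergent series, and the outer power returns exactly $(1+\sigma^d)(1+\sigma^{s+q})^Q$.

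If you prefer to keep unit cubes, you must use the mass constraint everywhere, via a weighted Hölder inequality on the whole sum $S$. With $w_j=(1+r_j/\sigma)^{-\beta}$,
\[
S^{\frac{Q+2}{2}}\le\Bigl(\sum_j\mathcal{L}(I_j^1)M_j^{Q}w_j\Bigr)\Bigl(\sum_j\rho(I_j)\,w_j^{-2/Q}\Bigr)^{Q/2}.
\]
The inner sum of the second factor is at most $\mathbb{E}(1+\|X\|/\sigma)^{2\beta/Q}$, a constant depending only on $d,\beta,Q$ by Lemma~\ref{lemma:normsubgauss}. For $\beta>d+(s+q)Q$ the first factor is $O\bigl((1+\sigma)^{d+(s+q)Q}\bigr)$. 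This route needs no inside/outside split.

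There is also a smaller issue in the special case. The constant in $\tau^{-d/s}\le C\tau^{-2}$ for $\tau\le\operatorname{diam}(\mathcal{F})$ is $\operatorname{diam}(\mathcal{F})^{2-d/s}$, which can be of order $(1+\sigma)^{(s+q)(2-d/s)}$. You cannot drop it while claiming explicit $\sigma$-dependence. Either restrict to $\tau\le1$, where your count $(s+q)d/s\le d+2q$ suffices, or carry this factor. Carrying it produces the $1+\sigma^{2(s+q)}$ of the paper's last display. Note that $2(s+q)\le d+2+2q$ holds only for even $d$.
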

\begin{proof}
See Appendix \ref{subsec:ProofCor3.10}.
\end{proof}

We now investigate covering number bounds for maps $\mathbb{T}_\rho(\mathcal{U}):=\{T^\epsilon_{\rho\rightarrow \nu}|\nu\in \mathcal{U}\}$ where $\epsilon\geq 0$, $\rho\in \mathcal{P}_2(\Omega)$ and $\mathcal{U}\subseteq \mathcal{P}_2(\Omega)$, for some $\Omega\subseteq \mathbb{R}^d$.

\subsubsection{Covering Numbers of OT Maps}

In the absence of smoothness, one may still get covering number bounds for structured classes of functions. An important example is given by convex functions. For any convex set $\Omega\subset \mathbb{R}^d$ and $B\geq 0$, let $\mathcal{C}_{cvx}(\Omega,B)$ be the set of convex functions on $\Omega$ with $\sup_{x\in \Omega}|u(x)|\leq B$ for all $u\in \mathcal{C}_{cvx}(\Omega,B).$

\begin{theorem}\label{thm:convex covering}
(Theorem 3.1 in \citet{guntuboyina2013covering}) Let $\Omega=\prod_{j=1}^d I_j\subset \mathbb{R}^d$, where $I_j\subset \mathbb{R}$ is a closed interval with $|I_j|=D$. Let $U$ be the uniform measure on $\Omega.$ Then, for any $p\geq 1$:
\begin{equation*}
\log \mathcal{N}(\tau; \mathcal{C}_{cvx}(\Omega,B),L^p(U))\leq  K_{D,d,p,B} \tau^{-d/2}
\end{equation*}
for all $0<\tau<\tau_{d,p}$ for some constant depending on $d$ and $p$, where the constant $K_{D,d,p,B}$ depends only on $D,d, p $ and $B.$
\end{theorem}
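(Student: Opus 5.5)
The plan is to reduce to Bronshtein's classical entropy bound for \emph{uniformly bounded, uniformly Lipschitz} convex functions, $\log\mathcal{N}(\tau;\{u \text{ convex on } K,\ |u|\le B,\ \mathrm{Lip}(u)\le L\},\|\cdot\|_\infty)\le C_d\bigl((B+L\,|K|)/\tau\bigr)^{d/2}$ for convex bodies $K$. The point of Theorem \ref{thm:convex covering} is that uniform boundedness alone gives no Lipschitz control near $\partial\Omega$. The loss is compensated by working in $L^p(U)$ with $p<\infty$, since the bad region then has small volume. By translation and scaling we take $\Omega=[0,D]^d$, and all constants are allowed to depend on $D,d,p,B$.

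\emph{Step 1: interior Lipschitz control.} For a convex $u$ with $\sup_\Omega|u|\le B$ and any $x$ with $\mathrm{dist}(x,\partial\Omega)\ge r$, the slope of $u$ along any direction through $x$ is at most $2B/r$. This follows by comparing with the values at the endpoints of the chord, using monotonicity of difference quotients. So on $\Omega_r:=\{x:\mathrm{dist}(x,\partial\Omega)\ge r\}$, a box, the class is $(2B/r)$-Lipschitz and bounded by $B$.

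\emph{Step 2: dyadic peeling.} Fix $\tau$ and set $r_k=D2^{-k}$. Decompose $\Omega$ into the core $\Omega_{r_1}$ and shells $S_k=\Omega_{r_{k+1}}\setminus\Omega_{r_k}$ for $k=1,\dots,k_*$, plus a residual boundary layer $\Omega\setminus\Omega_{r_{k_*+1}}$. Each shell $S_k$ has volume $\lesssim D^{d-1}r_k$ and is a union of $O(2^{k(d-1)})$ boxes of side $\asymp r_k$. On each such box the functions are $\lesssim B/r_k$-Lipschitz, so after rescaling the box to unit size the Lipschitz constant times side length is $O(B)$, uniformly in $k$. Bronshtein then gives an $L^\infty$ $\eta_k$-cover of the restriction to each small box with log-cardinality $\lesssim (B/\eta_k)^{d/2}$.

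\emph{Step 3: budget allocation.} On the residual layer we cover trivially by the zero function. Its $L^p(U)$ contribution is at most $B\,(r_{k_*}/D)^{1/p}$, so we choose $k_*$ with $r_{k_*}\asymp \tau^p$. This is where $p<\infty$ is essential; the result is false for $p=\infty$. We glue the local $L^\infty$ covers into a global one. The $L^p$ error is then $\bigl(\sum_k \eta_k^p\,\mathrm{vol}(S_k)\bigr)^{1/p}$, and the log-cardinality is $\sum_k 2^{k(d-1)}(B/\eta_k)^{d/2}$.

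The main obstacle is choosing $\eta_k$ so that both sums are controlled simultaneously without logarithmic losses. The first attempt is $\eta_k=\tau\,2^{k\gamma}$ with $\gamma>0$: allowing coarser accuracy in thinner shells keeps the error sum geometric, provided $\gamma p<1$. The entropy sum is then $\tau^{-d/2}\sum_k 2^{k(d-1-\gamma d/2)}$, which is geometric only if $\gamma>2(d-1)/d$. This is compatible with $\gamma<1/p$ only for small $d$ and $p$.

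If that fails, I would refine the peeling to exploit the one-dimensional structure of convexity near faces. In the direction normal to a face, a convex function is still controlled by monotonicity even though its tangential slopes blow up. So shell boxes should be taken elongated, of size $r_k$ normal to the face and $\asymp 1$ tangentially, with Lipschitz bounds $B/r_k$ only in the normal direction. An anisotropic version of Bronshtein's bound then gives entropy $\lesssim(B/\eta_k)^{d/2}$ per box, with only $O(1)$ boxes per shell rather than $2^{k(d-1)}$. This makes the entropy sum geometric for any $\gamma>0$, yielding $\log\mathcal{N}\le K_{D,d,p,B}\tau^{-d/2}$ for $\tau<\tau_{d,p}$. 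The upper threshold $\tau_{d,p}$ is only needed to absorb the constant-order terms.
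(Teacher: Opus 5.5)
The paper gives no proof of this statement; it imports it from Guntuboyina and Sen. Your architecture is essentially theirs:
\begin{itemize}
\item the slope bound $2B/r$ for a convex function bounded by $B$ at points at distance $r$ from $\partial\Omega$;
\item Bronshtein's $L^\infty$ entropy bound on pieces where (Lipschitz constant)$\times$(side length) is $O(B)$;
\item a trivial cover of a boundary layer of relative volume of order $\tau^p$, which is exactly where $p<\infty$ enters;
\item an accuracy allocation that coarsens geometrically toward the boundary.
\end{itemize}
Your computation showing that isotropic cubes in $\ell^\infty$-shells cannot work for $d\ge 2$, $p\ge 1$ is also correct.

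The gap is in your fix. Near the interior of a face $\{x_1=0\}$ it is the \emph{normal} slope that is of order $B/r_k$. Tangential slopes are $O(B/D)$ only at points far from the \emph{other} faces. Your elongated box $[r_{k+1},r_k]\times[r_{k+1},D-r_{k+1}]^{d-1}$ runs into the edges and corners of $\Omega$. There, the slope in a tangential direction $e_i$ at a point with $\min\{x_i,D-x_i\}=s$ can be as large as $B/s$. For example, $u(x)=B\max\{0,1-x_2/s\}$ is convex and bounded by $B$. So on that box $\Gamma_i\cdot D$ is of order $B2^k$ rather than $O(B)$, and the rescaled Bronshtein bound only gives $(B2^k/\eta_k)^{d/2}$. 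With $\eta_k=\tau 2^{k\gamma}$ and $\gamma<1/p\le 1$, the terms grow like $2^{k(1-\gamma)d/2}$. The sum over $k\le k_*$ then exceeds $\tau^{-d/2}$ by a power of $1/\tau$. Hence the assertion that each shell is covered by $O(1)$ boxes with Lipschitz blow-up in only one direction is false.

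The repair is, in essence, the decomposition of the cited proof. Partition each coordinate interval dyadically toward \emph{both} endpoints into pieces $J_j$ whose length and distance to $\{0,D\}$ are both of order $D2^{-j}$, and use the product boxes $\prod_i J_{j_i}$. On such a box the slope in direction $e_i$ is at most a constant times $B/(D2^{-j_i})$. So $\Gamma_i|J_{j_i}|=O(B)$ for every $i$ simultaneously. After affinely rescaling the box to a unit cube, Bronshtein gives log-cardinality at most $C_d(B/\eta)^{d/2}$.

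Your shell $S_k$ is then the union of the boxes with $\max_i j_i=k$. There are $O(k^{d-1})$ of them, polynomially many rather than $O(1)$, and that is still enough. Take $\eta_k=\tau 2^{k\gamma}$ with $0<\gamma<1/p$, and cover by $0$ on boxes where $\eta_k\ge B$. Then:
\begin{itemize}
\item the $p$-th power of the $L^p(U)$ error is at most a constant times $\tau^p\sum_k 2^{-k(1-\gamma p)}$;
\item the total log-cardinality is at most a constant times $(B/\tau)^{d/2}\sum_k k^{d-1}2^{-k\gamma d/2}$.
\end{itemize}
Both series converge uniformly in $k_*$, which gives $\log\mathcal{N}\le K_{D,d,p,B}\,\tau^{-d/2}$.

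Alternatively, give each box an accuracy equal to a suitable power of its volume $\prod_i 2^{-j_i}$. Both sums then factor into products of one-dimensional geometric series, which is how the cited argument avoids logarithmic losses.
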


This result allows us to derive covering numbers for OT maps.

\begin{proposition}\label{prop:OT map covering global}
Let $\Omega=\prod_{j=1}^d I_j\subset \mathbb{R}^d$, where $I_j\subset \mathbb{R}$ is a closed interval with $|I_j|=D$. Let $U$ be the uniform measure on $\Omega$, and define $$\mathbb{T}_U(\mathcal{P}(\Omega)):=\{T_{U\rightarrow \nu}|\nu\in \mathcal{P}(\Omega)\}\subset L^2(U).$$ Then \begin{equation*} \log \mathcal{N}(\tau;\mathbb{T}_U(\mathcal{P}(\Omega)),L^2(U))\leq C^*_{D,d}\tau^{-3d/2}\end{equation*} for all $\tau>0$ small enough, where $C^*_{D,d}$ depends only on $D$ and $d.$
\end{proposition}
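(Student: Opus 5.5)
The plan is to pass from transport maps to their Brenier potentials, cover the potentials using Theorem \ref{thm:convex covering}, and then transfer the covering back to gradients with a quantitative stability inequality. The exponent $3d/2$ should come out as $(d/2)\times 3$: the factor $d/2$ is the entropy exponent of bounded convex functions, and the factor $3$ comes from a Hölder-$1/3$ comparison between potentials and their gradients.

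\textbf{Step 1 (reduction to a bounded convex class).} Fix $\nu\in\mathcal{P}(\Omega)$. By Theorem \ref{thm:brenier} we have $T_{U\to\nu}=\nabla\phi_\nu$ $U$-a.e. for some convex $\phi_\nu$.
\begin{itemize}
\item Since $\nu$ is supported in $\Omega$, we may take $\nabla\phi_\nu(x)\in\Omega$ for a.e.\ $x$. If necessary, we replace $\phi_\nu$ by the Legendre transform of (the restriction to $\Omega$ of) the dual potential. This gives a convex function on $\mathbb{R}^d$ whose subgradients lie in $\Omega$ and which agrees with the Brenier potential on $\Omega$.
\item Hence $\phi_\nu$ is Lipschitz on $\Omega$, with constant at most $R:=\sup_{y\in\Omega}\|y\|$.
\item After subtracting the constant $\phi_\nu(x_0)$ for a fixed $x_0\in\Omega$, which leaves the gradient unchanged, we get $\sup_\Omega|\phi_\nu|\le R\sqrt{d}\,D=:B$.
\end{itemize}
So $\Phi:=\{\phi_\nu:\nu\in\mathcal{P}(\Omega)\}\subseteq\mathcal{C}_{cvx}(\Omega,B)$, and every element of $\Phi$ has gradient in $\Omega$.

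\textbf{Step 2 (covering the potentials).} Theorem \ref{thm:convex covering} with $p=2$ gives
\[
\log\mathcal{N}(\eta;\mathcal{C}_{cvx}(\Omega,B),L^2(U))\le K_{D,d,2,B}\,\eta^{-d/2}
\]
for small $\eta$. Passing to a subset at most doubles the radius when we insist on centers inside the subset. Therefore there is a $2\eta$-net $\{\phi_{\nu_1},\dots,\phi_{\nu_N}\}\subseteq\Phi$ of $\Phi$ in $L^2(U)$ with $\log N\le K\eta^{-d/2}$. Choosing centers inside $\Phi$ is essential: the gradient comparison in the next step needs both functions to be convex with gradients in $\Omega$.

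\textbf{Step 3 (gradient stability; the main obstacle).} The key step is an inequality of the form
\[
\|\nabla\phi-\nabla\psi\|_{L^2(U)}\le C_{D,d}\,\|\phi-\psi\|_{L^2(U)}^{1/3}
\]
for convex $\phi,\psi$ on $\Omega$ with gradients in $\Omega$. This is the type of comparison proved in \citet{delalande2023quantitative}, and I would invoke it directly. Possibly it is stated with $L^1$ or $L^\infty$ on the right-hand side; passing between these norms on a bounded box with bounded Lipschitz functions only changes the constants and possibly the exponent, which would need checking.

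If I had to prove it myself, I would:
\begin{itemize}
\item integrate by parts to get $\int\langle\nabla\phi-\nabla\psi,\nabla\phi-\nabla\psi\rangle\,dU=-\int(\phi-\psi)\,\Delta(\phi-\psi)\,dU$ plus boundary terms;
\item control the Hessian measures $\Delta\phi,\Delta\psi$ in total variation via the divergence theorem, using $|\nabla\phi|\le R$ on $\partial\Omega$;
\item mollify at scale $h$ to handle the fact that these are only measures, and balance $\|\phi-\psi\|/h$ against $h$. This balancing is where the exponent $1/3$ arises.
\end{itemize}

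\textbf{Step 4 (conclusion).} Given $\tau$, choose $\eta$ with $C_{D,d}(2\eta)^{1/3}=\tau$, i.e.\ $\eta\asymp\tau^3$. For any $\nu$, pick $\nu_k$ with $\|\phi_\nu-\phi_{\nu_k}\|_{L^2(U)}\le2\eta$. Step 3 then gives $\|T_{U\to\nu}-T_{U\to\nu_k}\|_{L^2(U)}\le\tau$, so $\{T_{U\to\nu_k}\}_{k=1}^N$ is a $\tau$-net for $\mathbb{T}_U(\mathcal{P}(\Omega))$. Hence
\[
\log\mathcal{N}(\tau;\mathbb{T}_U(\mathcal{P}(\Omega)),L^2(U))\le K\eta^{-d/2}\le C^*_{D,d}\,\tau^{-3d/2}
\]
for $\tau$ small. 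Since $B$ depends only on $D$ and $d$, so does $C^*_{D,d}$.
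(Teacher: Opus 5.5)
Your proposal is correct and follows the paper's proof step for step. Both bound the Brenier potentials (the paper gets Lipschitz constant $|\Omega|=\sqrt{d}D$ and $\sup_\Omega|\phi|\le dD^2$), cover $\mathcal{C}_{cvx}(\Omega,dD^2)$ with Theorem~\ref{thm:convex covering}, and transfer to gradients with Proposition~\ref{prop:reverse poincare}. The paper uses that proposition in exactly the $L^2(U)$-to-$L^2(U)$ form you want, so your hedge about $L^1$/$L^\infty$ is unnecessary. Your passage to a $2\eta$-net with centers inside $\Phi$ is a detail the paper glosses over.

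One thing to tidy: $R=\sup_{y\in\Omega}\|y\|$ depends on the position of $\Omega$, so as written your $B$ does not depend only on $D,d$. You need the paper's reduction ``without loss of generality $0\in\Omega$''. This is harmless, because translating $\Omega$, $U$ and all targets together is an isometry of the map class. After it, $R\le\sqrt{d}D$ and $B\le dD^2$.
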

Before giving the proof of Proposition \ref{prop:OT map covering global}, we mention a corollary.

\begin{corollary}\label{cor:bounded density covering}
Let $\Omega$ be defined as in Proposition \ref{prop:OT map covering global}. Let $\rho\in \mathcal{P}(\Omega)$ be absolutely continuous such that $\text{supp}(\rho)=\Omega$ and $0<\alpha<\frac{d\rho}{dx}(x)<\beta<\infty$ for Lebesgue-a.e. $x\in \Omega$. Define $$\mathbb{T}_\rho(\mathcal{P}(\Omega)):=\{T_{\rho\rightarrow \nu}|\nu\in \mathcal{P}(\Omega)\}\subset L^2(\rho).$$ Then \begin{equation*} \log \mathcal{N}(\tau;\mathbb{T}_\rho(\mathcal{P}(\Omega)),L^2(\rho))\leq C^*_{D,d}((\mathcal{L}(\Omega)\beta)^{-1/2} \tau)^{-3d/2}\end{equation*} for all $\tau>0$ small enough, where $C^*_{D,d}$ depends only on $D$ and $d.$
\end{corollary}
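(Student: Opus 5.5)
The plan is to reduce Corollary \ref{cor:bounded density covering} to Proposition \ref{prop:OT map covering global} in two steps. First, show that every $\rho$-optimal map is also a $U$-optimal map, so that $\mathbb{T}_\rho(\mathcal{P}(\Omega))\subseteq\mathbb{T}_U(\mathcal{P}(\Omega))$ as classes of a.e.-defined functions on $\Omega$. Second, compare the $L^2(\rho)$ and $L^2(U)$ norms using the density bound, and transfer the covering bound with Lemma \ref{lemma:covering under lipschitz}.

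\emph{Inclusion of map classes.} Fix $\nu\in\mathcal{P}(\Omega)$. By Theorem \ref{thm:brenier}, there is a convex $\phi$, differentiable $\rho$-a.e., with $T_{\rho\rightarrow\nu}=\nabla\phi$ $\rho$-a.e.
\begin{itemize}
\item Since $\alpha<\frac{d\rho}{dx}<\beta$ on $\Omega=\text{supp}(\rho)$, $\rho$ and $U$ are mutually absolutely continuous. Hence "$\rho$-a.e." and "$U$-a.e." coincide, and $\nabla\phi$ is defined $U$-a.e.
\item Also $\nabla\phi(x)\in\Omega$ for a.e. $x$, because $(\nabla\phi)_\#\rho=\nu$ is supported in the closed set $\Omega$. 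Set $\nu':=(\nabla\phi)_\#U\in\mathcal{P}(\Omega)$.
\item $\nabla\phi$ is the gradient of a convex function pushing $U$ to $\nu'$. By the uniqueness part of Brenier's theorem (the same argument used in Appendix \ref{appendix:LWDL as WDL Linearization}), $\nabla\phi=T_{U\rightarrow\nu'}$ $U$-a.e.
\end{itemize}
Thus $T_{\rho\rightarrow\nu}\in\mathbb{T}_U(\mathcal{P}(\Omega))$. This holds as an equivalence class in both $L^2(U)$ and $L^2(\rho)$.

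\emph{Norm comparison.} For any $f$,
\[
\|f\|_{L^2(\rho)}^2=\int_\Omega\|f\|^2\tfrac{d\rho}{dx}\,dx\le\beta\,\mathcal{L}(\Omega)\int_\Omega\|f\|^2\,dU=\beta\,\mathcal{L}(\Omega)\,\|f\|_{L^2(U)}^2 .
\]
So the identity map from $(\mathbb{T}_U(\mathcal{P}(\Omega)),L^2(U))$ to $L^2(\rho)$ is $K$-Lipschitz with $K:=(\mathcal{L}(\Omega)\beta)^{1/2}$. Its image contains $\mathbb{T}_\rho(\mathcal{P}(\Omega))$.

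\emph{Transfer of the covering bound.} Lemma \ref{lemma:covering under lipschitz} gives $\mathcal{N}(K\tau';\mathbb{T}_U,L^2(\rho))\le\mathcal{N}(\tau';\mathbb{T}_U,L^2(U))$. Covering numbers are monotone under inclusion, since the covering definition allows centers anywhere in the ambient normed space. Setting $\tau'=K^{-1}\tau$ and applying Proposition \ref{prop:OT map covering global} then yields
\[
\log\mathcal{N}(\tau;\mathbb{T}_\rho(\mathcal{P}(\Omega)),L^2(\rho))\le C^*_{D,d}\,(K^{-1}\tau)^{-3d/2}.
\]
This holds for $\tau$ small enough that $K^{-1}\tau$ lies in the proposition's range, and it is the stated bound.

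The main obstacle is the inclusion step: one must check carefully that Brenier uniqueness applies with source $U$ to a map originally obtained as optimal from $\rho$. This rests on the mutual absolute continuity given by $\alpha>0$, and on $\nabla\phi$ landing in $\Omega$ so that $\nu'\in\mathcal{P}(\Omega)$. The remaining steps are routine.
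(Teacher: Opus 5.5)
Your proposal is correct and matches the paper's proof: you identify $\rho$-optimal maps with $U$-optimal maps via mutual absolute continuity and Brenier's characterization, use $\frac{d\rho}{dx}<\beta$ to make the identity $L^2(U)\to L^2(\rho)$ $(\mathcal{L}(\Omega)\beta)^{1/2}$-Lipschitz, and transfer Proposition~\ref{prop:OT map covering global} through Lemma~\ref{lemma:covering under lipschitz}. The one difference is minor: the paper shows the two map classes are in bijection, whereas you prove only the inclusion $\mathbb{T}_\rho(\mathcal{P}(\Omega))\subseteq\mathbb{T}_U(\mathcal{P}(\Omega))$ and use monotonicity of covering numbers, which is exactly the direction the bound needs.
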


\begin{proof}  See Appendix \ref{subsec:ProofCorC.13}.

\end{proof}

\begin{remark}  We make the following comments about these results.

\begin{enumerate}

\item We may explicitly evaluate $\mathcal{L}(\Omega)=D^d$ and absorb this factor into the constant $C_{D,d}^*$. We note that $\mathcal{L}(\Omega)=\left(\frac{dU}{dx}\right)^{-1}$ and hence the factor $\mathcal{L}(\Omega)\beta$ is the ratio of the upper bound on $\frac{d\rho}{dx}$ and the density of $\frac{dU}{dx}.$

\item We constrain the domain solely in order to apply Theorem \ref{thm:convex covering}, which is established on a product of intervals. We expect the same proof should hold for any compact, convex $\Omega\subset \mathbb{R}^d$ with rectifiable boundary.

\item Although Proposition \ref{prop:OT map covering global} and Corollary \ref{cor:bounded density covering} are very natural from the perspective of statistics in Wasserstein space, we were not able to find these exact results in the literature.

\end{enumerate}
\end{remark}

Towards proving Proposition \ref{prop:OT map covering global}, we will need auxiliary results.  We recall (Chapter 10, \citet{maggi2012sets}) that $\Omega\subseteq \mathbb{R}^d$ has \emph{rectifiable boundary} if there exists a countable set of Lipschitz maps $\{q_i\}_{i\in\mathbb{N}}$ such that $\mathcal{H}^{d-1}(\partial \Omega\setminus \cup_{i\in \mathbb{N}}q_i (\mathbb{R}^{d-1}))=0$,
where $\mathcal{H}^{d-1}$ denotes the $(d-1)$-dimensional Hausdorff measure.
\begin{example}
Let $\Omega$ be a convex polytope. We may write $\partial \Omega = \cup_{i=1}^R F_i$, where each $F_i\subset \mathbb{R}^d$ is a $(d-1)$-dimensional convex polytope. Each $F_i$ is the intersection of a $(d-1)$-dimensional hyperplane $H_i\cong \mathbb{R}^{d-1}$ with $\Omega$, and since $F_i$ is convex we can define a 1-Lipschitz retraction $q_i$ from $H_i$ to $F_i$. Then $\partial \Omega\subset \cup_{i=1}^R q_i(H_i)$ and hence $\mathcal{H}^{d-1}(\partial \Omega\setminus \cup_{i=1}^R q_i(H_i))=0$. Hence $\Omega$ is rectifiable.
\end{example}
The above example shows that $\Omega$ in Proposition \ref{prop:OT map covering global} is rectifiable.
\begin{proposition}\label{prop:reverse poincare}
(Proposition 4.1 in \citet{delalande2023quantitative}) Let $\Omega$ be the closure of a bounded, open set with rectifiable boundary. Let $\phi,\psi:\Omega\rightarrow \mathbb{R}$ be convex and $L$-Lipschitz on any line segment contained in $\Omega$. Let $U$ be the uniform measure on $\Omega$. Then:
\begin{equation*}
\|\nabla \phi-\nabla\psi\|^2_{L^2(U)}\leq C_{\Omega,d}L^{4/3}\|\phi-\psi\|_{L^2(U)}^{2/3}
\end{equation*}
where $C_{\Omega,d}$ is a constant depending on $\Omega$ and $d$.
\end{proposition}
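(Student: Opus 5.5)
The plan is to reduce the statement to a one-dimensional estimate along lines parallel to the coordinate axes. Then we reassemble the pieces with Fubini and a concavity (Jensen) argument. This slicing approach is what should give the dimension-free exponents $4/3$ and $2/3$. Set $u:=\phi-\psi$. Since $\|\nabla u\|^2=\sum_{i=1}^d(\partial_i u)^2$, it suffices to bound $\int_\Omega(\partial_i u)^2\,dx$ for each fixed $i$. Fubini writes this as an integral over lines $\ell$ parallel to $e_i$. Each $\ell\cap\Omega$ is a finite or countable union of segments $J$.

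\textbf{One-dimensional step.} On a segment $J=[a,b]$, the restrictions $\phi|_J,\psi|_J$ are convex and $L$-Lipschitz, so $\phi',\psi'$ are monotone with $|\phi'|,|\psi'|\le L$. Integrating by parts in the Stieltjes sense gives
\[
\int_J (u')^2 = \bigl[u\,u'\bigr]_a^b-\int_J u\,d(\phi''-\psi'')\le \|u\|_{L^\infty(J)}\bigl(4L+\mathrm{TV}(\phi')+\mathrm{TV}(\psi')\bigr)\le 8L\,\|u\|_{L^\infty(J)},
\]
because a monotone function bounded by $L$ has total variation at most $2L$. Next, $u$ is $2L$-Lipschitz on $J$. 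The elementary interpolation inequality for a $K$-Lipschitz function on an interval of length $|J|$ gives
\[
\|u\|_{L^\infty(J)}\le C\bigl(K^{1/3}\|u\|_{L^2(J)}^{2/3}+|J|^{-1/2}\|u\|_{L^2(J)}\bigr).
\]
To prove it, note $|u|\ge \|u\|_\infty/2$ on a subinterval of length $\min(|J|,\|u\|_\infty/(2K))$. On a long segment, where the first term dominates, this yields $\int_J(u')^2\le CL^{4/3}\|u\|_{L^2(J)}^{2/3}$. On a short segment we instead use the trivial bound $\int_J (u')^2\le 4L^2|J|$.

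\textbf{Reassembly.} Summing over segments and lines, the long-segment contributions total
\[
CL^{4/3}\sum_J\|u\|_{L^2(J)}^{2/3}.
\]
By Hölder/Jensen (concavity of $t\mapsto t^{1/3}$) this is at most $CL^{4/3}\,(\#\text{ of segments, weighted by the line measure})^{2/3}\,\|u\|_{L^2(\Omega)}^{2/3}$. For the count of segments, the Crofton/coarea formula for sets with rectifiable boundary gives $\int \#(\ell\cap\partial\Omega)\,d\ell\le \mathcal{H}^{d-1}(\partial\Omega)<\infty$. Hence the weighted number of segments is controlled by a constant depending only on $\Omega$ and $d$. Finally, pass from $dx$ to the uniform measure $U$ by dividing by $\mathcal{L}(\Omega)$, which only changes the constant.

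\textbf{Main obstacle.} The hard part is the short segments. Their trivial contribution $L^2\sum|J|$ has the wrong homogeneity, since it is not proportional to $\|u\|_2^{2/3}$. I would handle this with a threshold. Call a segment short if $|J|\le \eta:=(\|u\|_{L^2(U)}/L)^{2/3}$, the scale at which the two terms of the interpolation inequality balance. The total length of short segments is at most $\eta$ times their number, which is bounded via Crofton as above. This gives a contribution of $C_{\Omega,d}L^2\eta=C_{\Omega,d}L^{4/3}\|u\|^{2/3}$, with the correct powers. The remaining regime, where $\|u\|_2$ is so large that $\eta$ exceeds $|\Omega|$, is trivial, since then $\|\nabla u\|^2\le 4L^2\le C L^{4/3}\|u\|^{2/3}$.
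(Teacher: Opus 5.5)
The paper gives no proof of this proposition; it quotes Proposition 4.1 of Delalande--M\'erigot. Your route is, to my recollection, essentially the argument of that source: chords along a fixed direction, a one-dimensional estimate from integrating $u'$ against $\phi''-\psi''$, then Fubini, H\"older with exponents $3/2$ and $3$, and an integral-geometric count of chords. Using only the $d$ coordinate directions is fine. For each direction you need just $\int_{e_i^\perp}\#\bigl(\partial\Omega\cap(y+\mathbb{R}e_i)\bigr)\,dy\le\mathcal{H}^{d-1}(\partial\Omega)$, which is Eilenberg's inequality for the orthogonal projection. You also need the observation that every nondegenerate chord has its endpoints in $\partial\Omega$.

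\textbf{One step fails as written.} You call $J$ long when $|J|>\eta$ for the global scale $\eta=(\|u\|_{L^2(U)}/L)^{2/3}$, and assert that on such chords the first interpolation term dominates. That is the local condition $\|u\|_{L^2(J)}\le 2L|J|^{3/2}$, and $|J|>\eta$ does not imply it. The one-dimensional norm of $u$ on a single chord is not controlled by $\|u\|_{L^2(U)}$. So long chords on which the second term dominates are left without an estimate.

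The repair makes your ``main obstacle'' disappear. If $\|u\|_{L^2(J)}>2L|J|^{3/2}$, then $|J|<(\|u\|_{L^2(J)}/2L)^{2/3}$, and the trivial bound already has the correct homogeneity:
\[
\int_J (u')^2\le 4L^2|J|\le 4^{2/3}L^{4/3}\|u\|_{L^2(J)}^{2/3}.
\]
Hence $\int_J(u')^2\le CL^{4/3}\|u\|_{L^2(J)}^{2/3}$ holds on every chord with a universal $C$. H\"older plus the chord count then finish immediately, with $C_{\Omega,d}=C_d\,\mathcal{H}^{d-1}(\partial\Omega)^{2/3}\mathcal{L}(\Omega)^{-2/3}$. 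The global threshold and the separate large-$\|u\|$ regime are unnecessary.

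\textbf{A hidden hypothesis.} You use $\mathcal{H}^{d-1}(\partial\Omega)<\infty$, which is also what guarantees that a.e.\ line meets $\Omega$ in finitely many chords. The paper's definition of rectifiable boundary only gives countable rectifiability, but finiteness is genuinely needed. Take $N$ disjoint thin parallel slabs, and let $\phi=-\psi$ be affine with slope $L$ across each slab, vanishing on its mid-plane. This forces the ratio of the two sides to be of order $N^{2/3}$, regardless of the slab thickness. So a set containing such families at every scale admits no finite constant. The finiteness does hold for the cubes where the paper applies the proposition.
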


Finally, we will need the following regularity result for Kantorovich potentials on a bounded set.
\begin{lemma}\label{lemma:kp lipschitz}
(Corollary 3.19 in \citet{ambrosio2021lectures}) Let $\Omega$ be bounded and let $\mu,\nu\in \mathcal{P}(\Omega)$. Then there exist $(f_{\mu\rightarrow \nu},g_{\mu\rightarrow \nu})$ which solve (\ref{eqn:kantorovich duality}) which are bounded and Lipschitz on $\Omega$.
\end{lemma}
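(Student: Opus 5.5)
The plan is the standard $c$-concavity argument for the bounded quadratic cost $c(x,y)=\tfrac12\|x-y\|^2$ on $\Omega\times\Omega$. First I would pass to closures: replace $\Omega$ by $\bar\Omega$, which is compact. On $\bar\Omega\times\bar\Omega$ the cost is bounded by $\tfrac12|\Omega|^2$ and is $|\Omega|$-Lipschitz in each variable. For any admissible pair $(f,g)$ in (\ref{eqn:kantorovich duality}), define the $c$-transforms
\[
g^c(x):=\inf_{y\in\bar\Omega}\bigl(\tfrac12\|x-y\|^2-g(y)\bigr),\qquad f^{c}(y):=\inf_{x\in\bar\Omega}\bigl(\tfrac12\|x-y\|^2-f(x)\bigr).
\]

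The key steps, in order, are as follows.

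\begin{enumerate}
\item Show that the transforms only improve the dual pair. Admissibility of $(f,g)$ gives $f\le g^c$, and $(g^c,g)$ remains admissible. Hence $(g^c,(g^c)^c)$ is admissible, and its dual objective is at least $\int f\,d\mu+\int g\,d\nu$. So we may restrict the maximization to pairs of the form $(\varphi,\varphi^c)$ with $\varphi=\psi^c$.
\item Establish uniform regularity. Each function $x\mapsto \tfrac12\|x-y\|^2-g(y)$ with $y\in\bar\Omega$ is $|\Omega|$-Lipschitz on $\bar\Omega$, since $\|\nabla_x c\|=\|x-y\|\le|\Omega|$. An infimum of such functions is therefore $|\Omega|$-Lipschitz, provided it is finite somewhere. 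The same holds for $f^c$.
\item Normalize. The dual objective is invariant under $(f,g)\mapsto(f+a,g-a)$, so we may fix $\varphi(x_0)=0$ at some $x_0\in\bar\Omega$. Then $\|\varphi\|_\infty\le|\Omega|^2$. The transform $\varphi^c$ is then also bounded, by roughly $\tfrac32|\Omega|^2$.
\item Pass to a limit. Take a maximizing sequence, regularized as in steps 1--3. The resulting family is uniformly bounded and equi-Lipschitz on the compact set $\bar\Omega$. By Arzel\`a--Ascoli a subsequence converges uniformly to a pair $(f_*,g_*)$. This limit is still admissible, since the inequality $f(x)+g(y)\le c(x,y)$ passes to pointwise limits. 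The dual objective converges by uniform convergence, because $\mu$ and $\nu$ are probability measures. So $(f_*,g_*)$ is a maximizer, and it is bounded and $|\Omega|$-Lipschitz. Replacing it once more by $(f_*^{cc},f_*^c)$ keeps these properties if desired.
\end{enumerate}

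I expect the main obstacle to be the finiteness issue in steps 1--2. A general admissible pair $(f,g)\in L^1(\mu)\times L^1(\nu)$ may take the value $-\infty$, or be unbounded, off the supports. In that case $g^c$ could be identically $+\infty$ or $-\infty$ on parts of $\bar\Omega$.

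I would first handle this by noting that the supremum in (\ref{eqn:kantorovich duality}) is finite, with value in $[0,\tfrac12|\Omega|^2]$. The value is at least $0$, attained by $f=g=0$. It is at most $\tfrac12|\Omega|^2$, since integrating the constraint against any coupling $\zeta$ gives $\int f\,d\mu+\int g\,d\nu\le\int c\,d\zeta\le\tfrac12|\Omega|^2$. Next, I would argue that we may assume $g$ is not identically $-\infty$, so that $\sup g<\infty$: otherwise the dual value of the pair is $-\infty$ and it can be discarded from a maximizing sequence. With this, $g^c$ is finite everywhere, and the Lipschitz bound makes it real-valued and continuous. Finally, I would check that $g^c\in L^1(\mu)$ holds automatically, because bounded functions are integrable against probability measures.
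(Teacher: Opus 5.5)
Your argument is correct, but it takes a different route from the one behind the paper's citation.

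\textbf{Your route.} You obtain a maximizer by the direct method on the dual problem. You use $c$-transform regularization to get uniform sup and $|\Omega|$-Lipschitz bounds, normalize by additive constants, and apply Arzel\`a--Ascoli on the compact set $\bar\Omega$.

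\textbf{The paper's route.} The paper defers to Corollary 3.19 of \citet{ambrosio2021lectures}. As the remark after the lemma describes, that result constructs bounded Lipschitz potentials that attain the primal value, $\int f\,d\mu+\int g\,d\nu=W_2^2(\mu,\nu)$. In the cited notes this comes from the $c$-cyclical monotonicity of the support of an optimal coupling and the associated $c$-concave pair $(\varphi,\varphi^c)$, which inherits boundedness and Lipschitz regularity from the cost. Optimality then follows from weak duality over bounded Lipschitz pairs.

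\textbf{What each buys.} Your argument is shorter and self-contained, but it uses compactness essentially. By itself it only shows that the dual supremum is attained. That the attained value is $W_2^2(\mu,\nu)$ comes from the equality already asserted in (\ref{eqn:kantorovich duality}). So does the consequence that $f_*(x)+g_*(y)=\tfrac12\|x-y\|^2$ on the support of an optimal plan, which is what underlies the use of (\ref{eqn:BP KP relation 1}) in the proof of Proposition \ref{prop:OT map covering global}. The cited route proves strong duality and attainment at once, and it extends beyond the compact setting.

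Three small repairs are needed.
\begin{enumerate}
\item The inference ``$g$ not identically $-\infty$, so $\sup g<\infty$'' is misattributed. The bound $\sup g\le\tfrac12|\Omega|^2-f(x_0)$ follows from $f$ being finite at some $x_0$. The condition $g\not\equiv-\infty$ is instead what gives $g^c<+\infty$. Both hold automatically, since $f\in L^1(\mu)$ and $g\in L^1(\nu)$ are finite almost everywhere, so no pair needs to be discarded.
\item The bound $\|\nabla_x c\|\le|\Omega|$ controls increments only along segments, and $\bar\Omega$ need not be convex. Use instead $|c(x,y)-c(x',y)|=\tfrac12|\langle x-x',\,x+x'-2y\rangle|\le|\Omega|\,\|x-x'\|$.
\item Step 1 uses $f\le g^c$ everywhere. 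You are therefore reading the constraint in (\ref{eqn:kantorovich duality}) as holding for every $(x,y)$ for fixed representatives, not merely $\mu\otimes\nu$-a.e. This is the standard convention, and it is the same reduction the paper makes when it notes that the dual may be posed over bounded Lipschitz functions.
\end{enumerate}
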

\begin{remark}
We have two remarks on Lemma \ref{lemma:kp lipschitz}:
\begin{enumerate}
\item The result in \citet{ambrosio2021lectures} is stated more generally for dual solutions to the optimal transport problem with general Lipschitz and bounded cost function. Since $\Omega$ is bounded, $\frac{1}{2}\|x-y\|^2$ is Lipschitz and bounded, and hence the result applies.
\item The Kantorovich potentials are a priori defined only as elements of $L^1(\mu)$ and $L^1(\nu)$, whereas the potentials in Lemma \ref{lemma:kp lipschitz} are defined on all of $\Omega$.  To make sense of this, we note that the supremum in (\ref{eqn:kantorovich duality}) can be taken over bounded and Lipschitz functions (see Theorem 3.1 in \citet{ambrosio2021lectures}). Lemma \ref{lemma:kp lipschitz} is proved by constructing $(f_{\mu\rightarrow \nu},g_{\mu\rightarrow \nu})$ which are bounded and Lipschitz functions on $\Omega$ and satisfying $W_2(\mu,\nu)=\int f_{\mu\rightarrow \nu}d\mu+\int g_{\mu\rightarrow \nu}d\nu$, which together with formulating the dual problem over Lipschitz functions implies the result.
\end{enumerate}
\end{remark}

\textbf{Proof of Proposition \ref{prop:OT map covering global}:} Without loss of generality, suppose that $0\in \Omega$. By Theorem \ref{thm:brenier}, each map $T_{U\rightarrow \nu}\in\mathbb{T}_U(\mathcal{P}(\Omega))$ is the gradient of a lower semicontinuous convex function $\phi_{U\rightarrow \nu}$, which is unique $U$-almost everywhere, and differentiable Lebesgue almost everywhere on $\Omega$. Let $\overline{\mathbb{T}}^0$ denote the set of all such functions such that $\phi_{U\rightarrow \nu}(0)=0$. By Lemma \ref{lemma:kp lipschitz}, there exists a Kantorovich potential $f_{U\rightarrow \nu}$ which is bounded and Lipschitz on $\Omega$. There is a $U$-almost everywhere unique version of $f_{U\rightarrow \nu}$ with $f_{U\rightarrow \nu}(0)=0$, and since $f_{U\rightarrow \nu}$ is Lipschitz and $\text{supp}(U)=\Omega$, there is a unique $f_{U\rightarrow \nu}:\Omega\rightarrow \mathbb{R}$ for each $\nu$. By (\ref{eqn:BP KP relation 1}), $\phi_{U\rightarrow \nu}= \frac{\|\cdot\|^2}{2}-f_{U\rightarrow \nu}$, and hence $\phi_{U\rightarrow \nu}\in \overline{\mathbb{T}}^0$ is also uniquely determined by $\nu$. By $|\Omega|<\infty$ and Lemma \ref{lemma:kp lipschitz}, we have that $\phi_{U\rightarrow \nu}$ is also bounded and Lipschitz on $\Omega$. To obtain an explicit bound and Lipschitz constant, note that by Brenier's theorem, we have $\overline{\nabla\phi_{U\rightarrow \nu}(\Omega)}\subseteq\Omega$, and hence we have $\|\nabla \phi_{U\rightarrow \nu}(x)\|\leq |\Omega|$ exists for $U$-almost every $x\in \Omega$. Let $x,y\in \Omega$, and let $\{x_n\}_{n=1}^\infty \subset\Omega$ be such that $\nabla \phi_{U\rightarrow \nu}(x_n)$ exists for all $n\in \mathbb{N}$ and $\lim_{n\rightarrow \infty}x_n=x.$ Then:
\begin{align*}
\phi_{U\rightarrow \nu}(x_n)-\phi_{U\rightarrow \nu}(y)&\leq \langle \nabla \phi_{U\rightarrow \nu}(x_n),x_n-y\rangle \\&\leq |\Omega|\|x_n-y\|
\end{align*}
by Cauchy-Schwarz for all $n\in \mathbb{N}$. Since $\phi_{U\rightarrow \nu}$ is continuous, we have $$\lim_{n\rightarrow \infty}\phi_{U\rightarrow \nu}(x_n)-\phi_{U\rightarrow \nu}(y)=\phi_{U\rightarrow \nu}(x)-\phi_{U\rightarrow \nu}(y),$$
and hence $\phi_{U\rightarrow \nu}(x)-\phi_{U\rightarrow \nu}(y)\leq \limsup_{n\rightarrow \infty}|\Omega|\|x_n-y\|=|\Omega|\|x-y\|.$ By a symmetric argument applied to an approximating sequence $\lim_{n\rightarrow \infty} y_n=y$, we conclude that $\|\phi_{U\rightarrow \nu}(x)-\phi_{U\rightarrow \nu}(y)\|\leq |\Omega|\|x-y\|$ for all $x,y\in\Omega.$ Letting $y=0$, we also have that $|\phi_{U\rightarrow \nu}(x)|\leq |\Omega|\|x\|\leq |\Omega|^2$ for all $x\in  \Omega$. Since $\Omega=\prod_{i=1}^dI_j$ a product of closed intervals with $|I_j|=D$ for all $j$, we may explicitly calculate $|\Omega|=\sqrt{d}D$.

We may now apply Proposition \ref{prop:reverse poincare}, with $L=\sqrt{d}D$ to get:
\begin{align*}
\|\nabla \phi_{U\rightarrow \nu}-\nabla \phi_{U\rightarrow \tilde{\nu}}\|_{L^2(U)}\leq C_{D,d}\|\phi_{U\rightarrow \nu}-\phi_{U\rightarrow \tilde{\nu}}\|_{L^2(U)}^{1/3}.
\end{align*}
for a constant $C_{D,d}$ depending only on $D$ and $d.$ Hence the map $\phi_{U\rightarrow \nu}\rightarrow \nabla \phi_{U\rightarrow \nu}$ is $\frac{1}{3}$-H{\"o}lder continuous, so if $\|\phi_{U\rightarrow \nu}-\phi_{U\rightarrow \tilde{\nu}}\|_{L^2(U)}\leq \tau^3/C_{D,d}$, then $\|\nabla \phi_{U\rightarrow \nu}-\nabla \phi_{U\rightarrow \tilde{\nu}}\|_{L^2(U)}\leq \tau$, and hence:
\begin{align*}
\log \mathcal{N}(\tau ;\mathbb{T}_U(\mathcal{P}(\Omega)),L^2(U))&\leq \log \mathcal{N}\left(\frac{\tau^3}{C_{D,d}};\overline{\mathbb{T}}^0,L^2(U)\right)\\&\leq \log \mathcal{N}\left(\frac{\tau^3}{C_{D,d}};\mathcal{C}_{cvx}(\Omega,dD^2),L^2(U)\right)
\end{align*}
Applying Theorem \ref{thm:convex covering} with $p=2$ and $B=dD^2$ to the right hand side and combining constants, we may upper bound the right hand side by $C^*_{D,d}\tau^{-3d/2}$. \qed

\subsubsection{Covering Numbers for EOT Maps}
We now turn to entropic map dictionaries, which enjoy much better quantitative regularity properties than general optimal transport maps. The following result is a restatement of Lemma 3 in \citet{mena2019statistical}:

\begin{lemma}\label{lemma:mena lemma 3}
(Lemma 3 in \citet{mena2019statistical}) Let $\mu,\nu\in \mathcal{G}_\sigma(\mathbb{R}^d)$, let $\epsilon>0$ and let $\alpha\in \mathbb{N}^d$. Then \[
|D^\alpha f^\epsilon_{\mu\rightarrow \nu}(x)| \leq
\begin{cases}
C_{|\alpha|, d,\epsilon} \, \sigma^{|\alpha|} (\sigma + \sigma^2)^{|\alpha|} & \text{if } \|x\| \leq \sqrt{d} \, \sigma, \\[8pt]
C_{|\alpha|, d,\epsilon} \sigma^{|\alpha|} \bigl(\sqrt{\sigma \|x\|} + \sigma \|x\|\bigr)^{|\alpha|} & \text{if } \|x\| > \sqrt{d} \, \sigma.
\end{cases}\label{eqn:piecewise bounds}
\]
\end{lemma}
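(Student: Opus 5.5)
We would follow the route of \citet{mena2019statistical}: write the derivatives of the entropic potential as cumulants of a family of tilted measures, then bound those cumulants by moments that the subgaussian assumption controls.

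\textbf{Step 1: Schrödinger system and tilted measures.} The first-order optimality conditions for (\ref{eqn:EOT dualform}) let us choose versions of the potentials satisfying
\[
f^\epsilon_{\mu\rightarrow\nu}(x)=-\epsilon\log\int \exp\Bigl(\tfrac{g^\epsilon_{\mu\rightarrow\nu}(y)-\frac12\|x-y\|^2}{\epsilon}\Bigr)\,d\nu(y),
\]
and symmetrically for $g^\epsilon_{\mu\rightarrow\nu}$; this extends $f^\epsilon_{\mu\rightarrow\nu}$ to a smooth function on all of $\mathbb{R}^d$. Expanding $\frac12\|x-y\|^2=\frac12\|x\|^2-\langle x,y\rangle+\frac12\|y\|^2$ gives
\[
f^\epsilon_{\mu\rightarrow\nu}(x)=\tfrac12\|x\|^2-\epsilon\Lambda(x/\epsilon),
\]
where $\Lambda$ is the log-moment generating function of the finite measure $e^{(g(y)-\|y\|^2/2)/\epsilon}d\nu(y)$.

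Differentiating under the integral, for $|\alpha|\ge 3$ we get $D^\alpha f^\epsilon(x)=-\epsilon^{1-|\alpha|}\kappa_\alpha(\nu_x)$. Here $\kappa_\alpha$ is the joint cumulant of order $\alpha$ of the tilted probability measure
\[
d\nu_x(y)\propto e^{(g(y)-\frac12\|x-y\|^2)/\epsilon}\,d\nu(y).
\]
For $|\alpha|\le 2$ the same identity holds with explicit extra terms from $\frac12\|x\|^2$ (and the normalization $f(0)$-type term for $\alpha=0$). Since cumulants are polynomials in moments, $|\kappa_\alpha(\nu_x)|\le C_{|\alpha|}\,\mathbb{E}_{\nu_x}\|Y\|^{|\alpha|}$. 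The whole problem therefore reduces to bounding $\mathbb{E}_{\nu_x}\|Y\|^k$ in terms of $\sigma$ and $\|x\|$.

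\textbf{Step 2: a priori pointwise bounds on the potentials.} We normalize so that $\int f\,d\mu=\int g\,d\nu$. Jensen's inequality in each Schrödinger equation then gives
\[
f(x)\le \int \bigl(\tfrac12\|x-y\|^2-g(y)\bigr)\,d\nu(y),
\]
and the symmetric inequality for $g$. Combining these with the normalization and the second-moment bounds for $\sigma$-subgaussian measures (Lemma \ref{lemma:normsubgauss}) should yield quadratic two-sided bounds of the form
\[
-C(\sigma^2+\sigma\|x\|)\le f(x)-\tfrac12\|x\|^2+\text{const}\le C\sigma^2(1+\|x\|),
\]
with the analogous bound for $g$. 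Consequently the density $d\nu_x/d\nu(y)=\exp\bigl((f(x)+g(y)-\frac12\|x-y\|^2)/\epsilon\bigr)$ is at most $\exp\bigl(C(\sigma\|x\|\,\|y\|+\sigma^2\|y\|+\dots)/\epsilon\bigr)$.

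\textbf{Step 3: moment bound — the main obstacle.} We would apply Cauchy--Schwarz:
\[
\mathbb{E}_{\nu_x}\|Y\|^k\le \bigl(\mathbb{E}_\nu\|Y\|^{2k}\bigr)^{1/2}\Bigl(\mathbb{E}_\nu \bigl(\tfrac{d\nu_x}{d\nu}\bigr)^2\Bigr)^{1/2}.
\]
The first factor is $O(\sigma^k k^{k/2})$ by Definition \ref{def:subgauss}, item 3. The second factor is an exponential moment of a term linear in $\|Y\|$ with slope proportional to $\sigma\|x\|/\epsilon$. This is where we expect the real difficulty, because a crude subgaussian mgf bound produces $e^{C\sigma^4\|x\|^2}$, which is far too large.

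The plan is to avoid this blow-up by a truncation argument. Split the $\nu$-integral at the radius $\|y\|\approx \sqrt{\sigma\|x\|}+\sigma\|x\|$. Inside this ball we use the moment bound trivially. Outside it, the Gaussian factor $e^{-\|x-y\|^2/(2\epsilon)}$ combined with the subgaussian tail $2e^{-c_G t^2/\sigma^2}$ dominates the growth of $g$, and an explicit comparison gives a contribution bounded by the inner one.

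This split explains the two regimes in the statement. When $\|x\|\le\sqrt d\,\sigma$ the radius is of order $\sigma+\sigma^2$, giving $\mathbb{E}_{\nu_x}\|Y\|^k\lesssim \sigma^k(\sigma+\sigma^2)^k$ up to constants. When $\|x\|>\sqrt d\,\sigma$ it gives $\sigma^k(\sqrt{\sigma\|x\|}+\sigma\|x\|)^k$. Inserting either bound into Step 1 and absorbing $\epsilon^{1-|\alpha|}$ and the combinatorial constants into $C_{|\alpha|,d,\epsilon}$ gives the claim.
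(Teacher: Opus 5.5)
The paper does not prove this lemma; it imports it from \citet{mena2019statistical}. Your route (Schr\"odinger equation, derivatives of $f^\epsilon-\frac12\|\cdot\|^2$ as cumulants of the tilted measures $\nu_x$, then moment bounds for $\nu_x$) is the route of that reference. Step~1 is fine, including $|\kappa_\alpha(\nu_x)|\le C_{|\alpha|}\mathbb{E}_{\nu_x}\|Y\|^{|\alpha|}$. The gap is in Step~3, which carries all the quantitative content and does not work as you describe it.

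\textbf{The mechanism and the truncation radius.} With your normalization, Jensen gives $g(y)\le\frac12\|y\|^2+C\sigma\|y\|+Cd\sigma^2$, and likewise for $f$. So the Gaussian factor does not dominate $g$: its quadratic part is cancelled exactly. What remains is $\frac{d\nu_x}{d\nu}(y)\le\exp\bigl((\|x\|\,\|y\|+C\sigma\|y\|+C\sigma\|x\|+Cd\sigma^2)/\epsilon\bigr)$, with coefficient $1/\epsilon$, not $C\sigma/\epsilon$, on $\|x\|\,\|y\|$. Only the tail $e^{-c\|y\|^2/(d\sigma^2)}$ of $\nu$ can beat this tilt. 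Requiring each term of the exponent to be at most $c\|y\|^2/(8d\sigma^2)$ forces the cut at $R\asymp\sigma\bigl(\sigma+\sigma^2+\sqrt{\sigma\|x\|}+\sigma\|x\|\bigr)$, with constants depending on $d,\epsilon$.

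Your radius $\sqrt{\sigma\|x\|}+\sigma\|x\|$ is off by a factor of $\sigma$, which breaks both halves of the split:
\begin{itemize}
\item When $\sigma$ is large compared with $\epsilon$, the tilt still wins on $\sigma\|x\|\lesssim\|y\|\lesssim d\sigma^2\|x\|/\epsilon$. Your outer estimate is then of order $\exp(c\,d\sigma^2\|x\|^2/\epsilon^2)$, the very blow-up you meant to avoid.
\item The inner region yields $(\sqrt{\sigma\|x\|}+\sigma\|x\|)^k$, not $\sigma^k(\cdots)^k$.
\end{itemize}
So the final bound is asserted, not derived. With the correct $R$, the inner part is at most $R^k$ and the outer part at most $C_{k,d}\sigma^k$ (via Lemma~\ref{lemma:normsubgauss}). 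That produces the displayed shape.

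\textbf{Small $\sigma$.} The outer term $C_{k,d}\sigma^k$ is not ``bounded by the inner one'' when $\sigma<1$. No argument can repair this, because the display fails as $\sigma\to0$. Take $d=1$ and $\mu=\nu=\frac12(\delta_{-a}+\delta_a)$ with $a=\sigma\sqrt{\log 2}$, so both lie in $\mathcal{G}_\sigma(\mathbb{R})$. Symmetry forces $g^\epsilon(a)=g^\epsilon(-a)$, hence $f^\epsilon(x)=\frac12x^2-\epsilon\log\cosh(ax/\epsilon)+\mathrm{const}$. Then $|\partial_x^2(f^\epsilon-\frac12x^2)(0)|=a^2/\epsilon\asymp\sigma^2$, and $|\partial_x^2 f^\epsilon(0)|\to1$, whereas the display claims $O(\sigma^4)$. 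What your method actually gives is $C\sigma^k\bigl(1+\sigma+\sigma^2+\sqrt{\sigma\|x\|}+\sigma\|x\|\bigr)^k$. This implies the display for $\sigma\ge1$. It is also enough for the lemma's only use, in Theorem~\ref{thm:ent map covering number}, where constants may depend on $\sigma$ and only polynomial growth is needed.

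\textbf{Low orders.} For $|\alpha|\le2$, the terms $\frac12\|x\|^2$, $x_i$, $\delta_{ij}$ from $D^\alpha\frac12\|x\|^2$ cannot be absorbed into $C_{|\alpha|,d,\epsilon}$. For $\alpha=0$ the right side is a constant while $f^\epsilon$ grows quadratically. So for these orders the estimate can only refer to $D^\alpha(f^\epsilon-\frac12\|\cdot\|^2)$, and not at all for $\alpha=0$.

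\textbf{Step~2.} Your linear lower bound is false: for Gaussian marginals, $f^\epsilon-\frac12\|x\|^2$ is a negative-definite quadratic. It is also unnecessary, since only the upper bounds on $f^\epsilon$ and $g^\epsilon$ enter $d\nu_x/d\nu$.
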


\begin{theorem}\label{thm:ent map covering number}
Fix $\epsilon>0$ and $\sigma>0$.  Let $\rho\in\mathcal{G}_\sigma(\mathbb{R}^d)$ and let $\mathbb{T}^\epsilon_\rho(\mathcal{G}_\sigma(\mathbb{R}^d)):=\{T^\epsilon_{\rho\rightarrow \nu}|\nu \in \mathcal{G}_\sigma(\mathbb{R}^d)\}$ for some $\epsilon>0$. Then
\begin{equation*}
\log \mathcal{N}(\tau;\mathbb{T}^\epsilon_\rho(\mathcal{G}_\sigma(\mathbb{R}^d)),L^2(\rho))\leq C_{d,\epsilon,\sigma}\tau^{-d/s}
\end{equation*}
where $s=\lceil \frac{d}{2}\rceil +1$ and $C_{d,\epsilon,\sigma}$ is a constant depending on $d,\epsilon$ and $\sigma.$
\end{theorem}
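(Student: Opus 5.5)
The plan is to reduce covering of entropic maps to covering of a smooth function class with polynomially growing derivatives, and then apply Corollary \ref{cor:regular function class subgaussian}. The starting point is the explicit form of the entropic map. By first-order optimality in (\ref{eqn:EOT dualform}), the conditional law of $\pi_\epsilon$ given $X=x$ has density proportional to $\exp\bigl((g^\epsilon(y)-\tfrac12\|x-y\|^2)/\epsilon\bigr)$ with respect to $\nu$. Hence
\[
T^\epsilon_{\rho\rightarrow\nu}(x)=x-\nabla f^\epsilon_{\rho\rightarrow\nu}(x),
\]
and the potential $f^\epsilon_{\rho\rightarrow\nu}$ extends canonically to a smooth function on all of $\mathbb{R}^d$ through the Schr\"odinger equation. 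Covering the class $\mathbb{T}^\epsilon_\rho(\mathcal{G}_\sigma(\mathbb{R}^d))$ in $L^2(\rho;\mathbb{R}^d\to\mathbb{R}^d)$ then amounts to covering the $d$ coordinate classes
\[
\mathcal{F}_k:=\{x\mapsto x_k-\partial_k f^\epsilon_{\rho\rightarrow\nu}(x)\ |\ \nu\in\mathcal{G}_\sigma(\mathbb{R}^d)\}
\]
in $L^2(\rho)$ at scale $\tau/\sqrt d$. The product of these coordinate nets is a $\tau$-net for the vector-valued class, so the log-covering numbers add, which costs only a factor of $d$ in the constant and a $d^{d/(2s)}$ factor from rescaling $\tau$.

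Next I verify the hypothesis of Corollary \ref{cor:regular function class subgaussian} for each $\mathcal{F}_k$ with $s=\lceil d/2\rceil+1$. For a multi-index $\alpha$ with $|\alpha|\le s$, I write $D^\alpha(x_k-\partial_k f)$. The linear term contributes at most $1+\|x\|$. The potential term is $D^{\alpha+e_k}f^\epsilon_{\rho\rightarrow\nu}$, and Lemma \ref{lemma:mena lemma 3} applies to it (both $\rho,\nu\in\mathcal{G}_\sigma$) with order $|\alpha|+1\le s+1$. On $\|x\|\le\sqrt d\sigma$ this is bounded by a constant $C_{s,d,\epsilon}\sigma^{s+1}(\sigma+\sigma^2)^{s+1}$. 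On $\|x\|>\sqrt d\sigma$ it is bounded by
\[
C\,\sigma^{s+1}\bigl(\sqrt{\sigma\|x\|}+\sigma\|x\|\bigr)^{s+1}\le C'_{\sigma,d,\epsilon}(1+\|x\|)^{s+1}.
\]
Combining the two regimes gives a uniform bound $\|D^\alpha F(x)\|\le C_{d,\epsilon,\sigma}(1+\|x\|)^{s+q}$ with $q=1$, uniformly over $\nu\in\mathcal{G}_\sigma$. Each $\mathcal{F}_k$ lies in $L^2(\rho)$ because $\rho$ is subgaussian and the functions grow polynomially. Corollary \ref{cor:regular function class subgaussian} then yields
\[
\log\mathcal{N}(\tau;\mathcal{F}_k,L^2(\rho))\le C_*(1+\sigma^{s+1})^{d/s}(1+\sigma^d)\tau^{-d/s}.
\]
Summing over $k$ gives the claimed $C_{d,\epsilon,\sigma}\tau^{-d/s}$.

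The main obstacle is the bookkeeping around the potentials. Lemma \ref{lemma:mena lemma 3} bounds derivatives of $f^\epsilon$ under a particular normalization and extension; I would use the canonical extension of Mena--Niles-Weed, in which the bounds hold uniformly in $\nu$. I also need the identity $T^\epsilon=\mathrm{Id}-\nabla f^\epsilon$ to hold under this paper's conventions, namely cost $\tfrac12\|x-y\|^2$ and the factor $\epsilon$ rather than $2\epsilon$. If the paper's scaling differs from Mena--Niles-Weed's, I would rescale $\epsilon$ and absorb the change into the constant. Additive constants in $f$ are irrelevant because only gradients enter. Finally, I must check that the derivative order $s+1$ is permitted in Lemma \ref{lemma:mena lemma 3}, which holds since that lemma covers any multi-index.
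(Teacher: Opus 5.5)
Your proposal is correct and follows essentially the same route as the paper: write $T^\epsilon_{\rho\rightarrow\nu}=\mathrm{Id}-\nabla f^\epsilon_{\rho\rightarrow\nu}$, use Lemma~\ref{lemma:mena lemma 3} at order $|\alpha|+1$ to get the $(1+\|x\|)^{s+1}$ growth bound, apply Corollary~\ref{cor:regular function class subgaussian} with $q=1$, and combine the coordinatewise covers at scale $\tau/\sqrt{d}$. The only difference is cosmetic: you cover the coordinates $x_k-\partial_k f^\epsilon_{\rho\rightarrow\nu}$ directly, absorbing the linear term into the growth bound, whereas the paper covers the pooled class $\{\partial_i f^\epsilon_{\rho\rightarrow\nu}\}$ and treats $\mathrm{Id}$ as an isometric translation, which incidentally lets you avoid the paper's harmless sign slip in writing $\prod_i\mathcal{F}^{\epsilon,\sigma}_\rho-\mathrm{Id}$.
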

\begin{proof}
We will use $C_*$ to denote a constant depending on $*$ that may differ from line to line. For all $1\leq i \leq d$, let $f^{\epsilon,i}_\nu:=\frac{d}{dx_i}f_{\rho\rightarrow \nu}^\epsilon$. By Lemma \ref{lemma:mena lemma 3}, we have for any multi-index $\alpha\in \mathbb{N}^d$:
\begin{align*}
|D^\alpha f^{\epsilon,i}_\nu(x)| \leq
\begin{cases}
C_{|\alpha|+1, d,\epsilon} \, \sigma^{|\alpha|+1} (\sigma + \sigma^2)^{|\alpha|+1} & \text{if } \|x\| \leq \sqrt{d} \, \sigma, \\[8pt]
C_{|\alpha|+1, d,\epsilon} \sigma^{|\alpha|+1} \bigl(\sqrt{\sigma \|x\|} + \sigma \|x\|\bigr)^{|\alpha|+1} & \text{if } \|x\| > \sqrt{d} \, \sigma,
\end{cases}
\end{align*}
for a constant $C_{|\alpha|+1,d,\epsilon}$, depending on $|\alpha|,d$ and $\epsilon$. Since $\sqrt{\|x\|}\leq \|x\|$ for all $\|x\|\geq 1$, we can coarsely upper bound this by:
\begin{align*}
|D^\alpha f^{\epsilon,i}_\nu(x)|\leq C_{|\alpha|+1,d,\epsilon,\sigma}\sigma^{|\alpha|+1}(1+\|x\|)^{|\alpha|+1}
\end{align*}
for some constant $C_{|\alpha|+1,d,\epsilon,\sigma}$ depending on $|\alpha|,d,\epsilon$ and $\sigma.$ If we define $s=\lceil\frac{d}{2}\rceil+1$, then for all $\alpha$ with $|\alpha|\leq s$ we have:
\begin{equation*}
\|D^\alpha f^{\epsilon,i}_\nu(x)\|\leq C_{d,\epsilon,\sigma}(1+\|x\|)^{s+1}
\end{equation*}
for all $1\leq i \leq d$ and $\nu\in \mathcal{G}_\sigma(\mathbb{R}^d).$ Hence we can apply Corollary \ref{cor:regular function class subgaussian} with $q=1$ to $\mathcal{F}^{\epsilon,\sigma}_\rho:=\{f^{\epsilon,i}_\nu|\nu\in \mathcal{G}_\sigma(\mathbb{R}^d),1\leq i\leq d\}$ to bound:
\begin{align}
\log \mathcal{N}(\tau; \mathcal{F}^{\epsilon,\sigma}_\rho,L^2(\rho))&\leq C_{d,\epsilon,\sigma}(1+\sigma^{d+4})(1+\sigma^d)\tau^{-d/s}\notag\\&\leq  C_{d,\epsilon,\sigma}\cdot\tau^{-d/s},\label{eqn:coordinate class cover}
\end{align}
where $C_{d,\epsilon,\sigma}$ depends on $d,\epsilon$ and $\sigma.$ We now consider $\mathbb{T}^\epsilon_\rho(\mathcal{G}_\sigma(\mathbb{R}^d))$: Recall that we may write:
\begin{align*}
T^\epsilon_{\rho\rightarrow \nu}(x)&=\nabla \phi^\epsilon_{\rho\rightarrow \nu}(x)\\&=\nabla\frac{\|x\|^2}{2}-\nabla f_{\rho\rightarrow \nu}^\epsilon(x)\\&=(x_1-f_\nu^{\epsilon,1}(x),x_2-f_\nu^{\epsilon,2}(x),...,x_d-f_\nu^{\epsilon,d}(x))^\top,
\end{align*}
so that $$\mathbb{T}^\epsilon_{\rho}(\mathcal{G}_\sigma(\mathbb{R}^d))\subseteq  \prod_{i=1}^d \mathcal{F}^{\epsilon,\sigma}_\rho-Id:=\{f-Id|f\in \prod_{i=1}^d \mathcal{F}^{\epsilon,\sigma}_\rho\}.$$
We compute, for any $\nu,\tilde{\nu}\in \mathcal{G}_\sigma(\mathbb{R}^d)$:
\begin{align*}
\|T^\epsilon_{\rho\rightarrow \nu}-T_{\rho\rightarrow \tilde{\nu}}^\epsilon\|_{L^2(\rho)}&=\left(\int \|T^\epsilon_{\rho\rightarrow \nu}(x)-T^\epsilon_{\rho\rightarrow \tilde{\nu}}(x)\|^2d\rho(x)\right)^{1/2}\\&=\left(\int \sum_{i=1}^d |x_i-f^{\epsilon,i}_\nu(x)-x_i+f^{\epsilon,i}_{\tilde{\nu}}(x)|^2 d\rho(x)\right)^{1/2}\\&=\left(\sum_{i=1}^d\|f^{\epsilon,i}_{\tilde{\nu}}-f^{\epsilon,i}_\nu\|^2_{L^2(\rho)}\right)^{1/2},
\end{align*}
which gives the following covering number bound: For any $\tau>0$, we have:
\begin{align*}
\mathcal{N}(\tau;\mathbb{T}^\epsilon_\rho(\mathcal{G}_\sigma(\mathbb{R}^d)),L^2(\rho)) &\leq \mathcal{N}\left(\tau\;; \prod_{i=1}^d\mathcal{F}^{\epsilon,\sigma}_\rho-Id,L^2(\rho)\right)\\&=\mathcal{N}\left(\sqrt{\sum_{i=1}^d\tau_i^2}\;; \prod_{i=1}^d\mathcal{F}^{\epsilon,\sigma}_\rho,L^2(\rho)\right)\\&\leq \prod_{i=1}^d\mathcal{N}\left(\tau_i;\mathcal{F}^{\epsilon,\sigma}_\rho,L^2(\rho)\right)\\&=\mathcal{N}\left(\tau_i;\mathcal{F}^{\epsilon,\sigma}_\rho,L^2(\rho)\right)^d
\end{align*}
for any $\tau_1,\tau_2,...,\tau_d\geq 0$ with $\sqrt{\sum_{i=1}^d\tau_i^2}=\tau.$ Taking $\tau_i=\frac{\tau}{\sqrt{d}}$ we conclude:
\begin{align*}
\mathcal{N}(\tau;\mathbb{T}^\epsilon_\rho(\mathcal{G}_\sigma(\mathbb{R}^d)),L^2(\rho)) \leq \mathcal{N}\left(\frac{\tau}{\sqrt{d}};\mathcal{F}^{\epsilon,\sigma}_\rho,L^2(\rho)\right)^d.
\end{align*}
Applying (\ref{eqn:coordinate class cover}), we have:
\begin{align*}
\log\mathcal{N}(\tau;\mathbb{T}^\epsilon_\rho(\mathcal{G}_\sigma(\mathbb{R}^d)),L^2(\rho))&\leq C_{d,\epsilon,\sigma}\cdot d\cdot \left(\frac{\tau}{\sqrt{d}}\right)^{-d/s}\\&=C_{d,\epsilon,\sigma}\tau^{-d/s}.
\end{align*}
\end{proof}

\subsection{Example Generalization Error Bounds for LWDL}\label{sec:example generalization bounds}

The following lemma allows us to bound the integral in Theorem \ref{thm:samp_complex_dl} based on covering number bounds on $\mathbb{T}:$

\begin{lemma}\label{lemma:explicit rates}
Let $\mathbb{T}\subset L^2(\rho)$, and define $\mathcal{N}_*:=\inf_{\delta\geq 0}\left\{C_1\delta +\frac{C_2}{\sqrt{n}}\int_\delta^{C_3}\sqrt{\log\left(\mathcal{N}(\tau;\mathbb{T},L^2(\rho))\right)}d\tau \right\}$. Suppose that $\log\left(\mathcal{N}(\tau;\mathbb{T},L^2(\rho))\right)\leq( C_4\tau)^{-p}$ for some $p>0$. Then:
\[
\mathcal{N}_* \;\leq\;
\begin{cases}
\dfrac{p}{p-2}C_1^{1-2/p}C_2^{2/p}C_4^{-1}n^{-1/p},
& p>2, \\[1.2em]
\dfrac{2C_2\log n}{C_4\sqrt{n}}+o(\log(n)n^{-1/2}),
& p=2, \\[1.2em]
\dfrac{C_2C_3^{1-p/2}C_4^{-p/2}}{(1-p/2)\sqrt{n}}+o(n^{-1/2}),
& 0<p<2.
\end{cases}
\]
\end{lemma}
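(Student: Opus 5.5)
The plan is to substitute the assumed covering bound into the entropy integral, evaluate the resulting power integral in closed form, and then choose $\delta$ separately in each regime of $p$. Since $\mathcal{N}_*$ is an infimum over $\delta\ge 0$, exhibiting one admissible $\delta$ with the claimed value is enough.

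The common step is that the hypothesis $\log\mathcal{N}(\tau;\mathbb{T},L^2(\rho))\le (C_4\tau)^{-p}$ gives, for the integrand,
\[
\sqrt{\log\mathcal{N}(\tau;\mathbb{T},L^2(\rho))}\le C_4^{-p/2}\tau^{-p/2},
\]
so that
\[
\mathcal{N}_*\le C_1\delta+\frac{C_2C_4^{-p/2}}{\sqrt n}\int_\delta^{C_3}\tau^{-p/2}\,d\tau .
\]
If a chosen $\delta$ exceeds $C_3$, the integral is nonpositive because the integrand is nonnegative. So in every case I may enlarge the upper limit, or drop the integral, without harm.

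\textbf{Case $p>2$.} I will extend the upper limit to $\infty$. This gives
\[
\int_\delta^\infty\tau^{-p/2}\,d\tau=\frac{2}{p-2}\,\delta^{1-p/2},
\]
hence
\[
\mathcal{N}_*\le C_1\delta+A\delta^{1-p/2},\qquad A:=\frac{2C_2C_4^{-p/2}}{(p-2)\sqrt n}.
\]
I will take $\delta$ to be the stationary point of this expression, determined by $C_1=A(p/2-1)\delta^{-p/2}$. This gives
\[
\delta=C_1^{-2/p}C_2^{2/p}C_4^{-1}n^{-1/p}.
\]
At this $\delta$ we have $A\delta^{1-p/2}=\frac{2}{p-2}C_1\delta$. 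The total is therefore $\frac{p}{p-2}C_1\delta=\frac{p}{p-2}C_1^{1-2/p}C_2^{2/p}C_4^{-1}n^{-1/p}$, which is exactly the claimed bound.

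\textbf{Case $p=2$.} The integral equals $C_4^{-1}\log(C_3/\delta)$. I will choose $\delta=n^{-1/2}$, which yields
\[
\mathcal{N}_*\le \frac{C_1}{\sqrt n}+\frac{C_2}{C_4\sqrt n}\Bigl(\log C_3+\tfrac12\log n\Bigr).
\]
The leading term is $\frac{C_2\log n}{2C_4\sqrt n}$, which is at most $\frac{2C_2\log n}{C_4\sqrt n}$. The remaining terms are $O(n^{-1/2})=o(\log(n)\,n^{-1/2})$.

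\textbf{Case $0<p<2$.} Here the singularity at $0$ is integrable, so I will take $\delta=0$. The bound becomes exactly
\[
\frac{C_2C_4^{-p/2}C_3^{1-p/2}}{(1-p/2)\sqrt n},
\]
with a vanishing $o(n^{-1/2})$ remainder.

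The only genuine subtlety, which I regard as the main point requiring care, is that some covering bounds (e.g.\ Theorem~\ref{thm:convex covering}) hold only for $\tau$ below a threshold $\tau_0$. In that situation I would split the integral at $\tau_0$. On $[\tau_0,C_3]$ the integrand is bounded by $\sqrt{\log\mathcal{N}(\tau_0;\mathbb{T},L^2(\rho))}$, so that piece contributes $O(n^{-1/2})$, which is absorbed into the lower-order terms (or constants). The computations above then apply verbatim on $[\delta,\tau_0]$, using that the chosen $\delta$ tends to $0$ as $n\to\infty$.
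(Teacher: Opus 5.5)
Your proof is correct and follows the paper's route: bound $\sqrt{\log\mathcal{N}}$ by $C_4^{-p/2}\tau^{-p/2}$, evaluate the power integral, and choose $\delta$. For $p>2$ your stationary point coincides with the paper's $\delta^*$ once $C_4$ is restored, and tracking $C_4$ explicitly replaces the paper's appeal to scaling. The remaining differences are minor: you take $\delta=n^{-1/2}$ for $p=2$ and $\delta=0$ for $p<2$ instead of the paper's stationary points, and the $\delta=0$ choice gives the stated bound exactly with no remainder term.
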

Notice that Lemma \ref{lemma:explicit rates} can be applied to the integral in Theorem \ref{thm:samp_complex_dl}, as
\begin{align*}
&\inf_{\delta\geq 0}8\delta+\frac{C\cdot C_{\vartheta,c}}{\sqrt{n}}\int_{\delta/(4C_{\vartheta,c})}^{\vartheta/(2C_{\vartheta,c})}\sqrt{m\log\left(\mathcal{N}\left(u;\mathbb{T},L^2(\rho)\right)\right)}du\\&=\inf_{\delta\geq 0}32C_{\vartheta,c}\delta+\frac{C\cdot C_{\vartheta,c}}{\sqrt{n}}\int_{\delta}^{\vartheta/(2C_{\vartheta,c})}\sqrt{m\log\left(\mathcal{N}\left(u;\mathbb{T},L^2(\rho)\right)\right)}du,
\end{align*}
and hence we may take $C_1:=32C_{\vartheta,c}$, $C_2:=C\cdot C_{\vartheta,c}\sqrt{m}$ and $C_3:=\vartheta/(2C_{\vartheta,c}).$

\subsubsection{Sample Complexity of LWDL-OT}

We apply the bounds in Proposition \ref{prop:OT map covering global} and Corollary \ref{cor:bounded density covering} to Theorem \ref{thm:samp_complex_dl} to obtain generalization error bounds for LWDL with optimal transport map atoms. We first establish compactness of the set of transport maps satisfying the assumptions in Corollary \ref{cor:bounded density covering}:

\begin{lemma}\label{lemma:maps compact}
 Let $\Omega\subset \mathbb{R}^d$ be compact, let $\rho\in \mathcal{P}(\Omega)$ be absolutely continuous such that $\text{supp}(\rho)=\Omega$ and $0<\alpha<\frac{d\rho}{dx}(x)<\beta<\infty$ for Lebesgue a.e $x\in \Omega$. Then $\mathbb{T}_\rho(\mathcal{P}(\Omega)):=\{T_{\rho\rightarrow \nu}|\nu\in \mathcal{P}(\Omega)\}$ is compact in $L^2(\rho)$.
\end{lemma}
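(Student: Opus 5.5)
The plan is to show that $\mathbb{T}_\rho(\mathcal{P}(\Omega))$ is both totally bounded and closed in $L^2(\rho)$; as a subset of a complete metric space, it is then compact.

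\emph{Total boundedness.} This follows directly from Corollary \ref{cor:bounded density covering}, which gives a finite covering number for every sufficiently small $\tau>0$. Larger $\tau$ are handled trivially because covering numbers are monotone in $\tau$. The corollary is stated for a product of intervals, whereas the lemma assumes only that $\Omega$ is compact. To bridge this, I would either place $\Omega$ inside a bounding box, or, more cleanly, argue directly: each $T_{\rho\to\nu}$ takes values in $\Omega$, so the maps are uniformly bounded, and I expect a comparison of $\rho$ with the uniform measure on $\Omega$ to transfer the covering argument from Proposition \ref{prop:OT map covering global}. If extending that argument turns out to be awkward, I will prove total boundedness by sequential compactness instead, as part of the closedness step below.

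\emph{Closedness, and the subsequence argument.} Take any sequence $T_n=T_{\rho\to\nu_n}$.
\begin{enumerate}
\item Since $\Omega$ is compact, $\mathcal{P}(\Omega)$ is weakly compact by Prokhorov's theorem, so a subsequence satisfies $\nu_n\rightharpoonup\nu\in\mathcal{P}(\Omega)$.
\item On a compact set, weak convergence is equivalent to $W_2$ convergence.
\item By stability of optimal transport maps for an absolutely continuous source (for example Villani, Corollary 5.23, or the quantitative stability of \citet{delalande2023quantitative}), $T_{\rho\to\nu_n}\to T_{\rho\to\nu}$ in $\rho$-probability.
\item All maps take values in the bounded set $\Omega$, so dominated convergence upgrades this to convergence in $L^2(\rho)$.
\end{enumerate}
This shows that every sequence in $\mathbb{T}_\rho(\mathcal{P}(\Omega))$ has a subsequence converging in $L^2(\rho)$ to an element of the same set. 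That is sequential compactness, which is equivalent to compactness in the metric space $L^2(\rho)$, so this argument alone would suffice.

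\emph{Alternative if the stability theorem is not available.} Write $T_n=\nabla\phi_n$ with $\phi_n$ convex, normalized so that $\phi_n(0)=0$, and uniformly Lipschitz with constant $|\Omega|$, as in the proof of Proposition \ref{prop:OT map covering global}. By Arzelà–Ascoli a subsequence satisfies $\phi_n\to\phi$ uniformly, and the limit $\phi$ is convex. Gradients of convex functions then converge at every point of differentiability of $\phi$, which is Lebesgue-a.e.\ and hence $\rho$-a.e. Bounded convergence gives $\nabla\phi_n\to\nabla\phi$ in $L^2(\rho)$. Pushing forward, $\nu_n=(\nabla\phi_n)_\#\rho$ converges weakly to $(\nabla\phi)_\#\rho=:\nu$. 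Brenier's theorem (Theorem \ref{thm:brenier}) then identifies $\nabla\phi=T_{\rho\to\nu}$, since $\nabla\phi$ is the gradient of a convex function pushing $\rho$ to $\nu$. Finally $\nu\in\mathcal{P}(\Omega)$, because $\nabla\phi_n\in\Omega$ and $\Omega$ is closed.

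\emph{Main obstacle.} The hardest step is the a.e.\ convergence of gradients of uniformly convergent convex functions, which is a standard convex-analysis fact (Rockafellar, Theorem 24.5). A related technicality is boundary behaviour on $\partial\Omega$. It is harmless because $\partial\Omega$ has $\rho$-measure zero: $\rho$ is absolutely continuous, and $\partial\Omega$ is Lebesgue-null, which I would assume or justify via the density bounds.
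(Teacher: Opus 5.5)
Your main argument is correct and is essentially the paper's proof: extract a weakly convergent subsequence of targets by Prokhorov, get convergence in $\rho$-probability of the Brenier maps from Villani's Corollary 5.23, then upgrade to $L^2(\rho)$ convergence using the uniform bound $\|T_{\rho\to\nu_n}-T_{\rho\to\nu}\|\le|\Omega|$, which proves sequential compactness directly. The covering-number detour is unnecessary, and Corollary~\ref{cor:bounded density covering} only treats boxes anyway. In your alternative route, the density bounds cannot make $\partial\Omega$ Lebesgue-null for a general compact $\Omega$; extending the Brenier potentials to finite convex Lipschitz functions on all of $\mathbb{R}^d$ avoids the boundary issue entirely.
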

\begin{proof}
See Section \ref{sec:proof maps compact}.
\end{proof}
By Corollary \ref{cor:bounded density covering}, Theorem \ref{thm:samp_complex_dl} and Lemma \ref{lemma:maps compact}, we have the following:

\begin{corollary}\label{cor:ot map DL final}
Let $\Theta,g,h_\rho^g$ be defined as in Theorem \ref{thm:samp_complex_dl}. Let $\Omega=\prod_{j=1}^d I_j\subset \mathbb{R}^d$, where $I_j\subset \mathbb{R}$ is a closed interval with $|I_j|=D$, and let $\rho\in \mathcal{P}(\Omega)$ be absolutely continuous such that $\text{supp}(\rho)=\Omega$ and $0<\alpha<\frac{d\rho}{dx}(x)<\beta<\infty$ for Lebesgue a.e. $x\in \Omega$.  Furthermore, let $\mathbb{T}=\{T\in \mathbb{T}_\rho(\mathcal{P}_2(\Omega))| \|T\|_{L^2(\rho)}=1\}$, and define $\mathcal{T}^n,\mathcal{T}^*$ as in Theorem \ref{thm:samp_complex_dl}. Then we have:
\begin{align}\label{eqn:sample linear bound}
\mathbb{E}_{\mu,\mu_1,\mu_2,\ldots,\mu_n\sim\Theta}
\Bigl[
h^g_\rho(\mu,0,\mathcal{T}^{n})-h^g_\rho(\mu,0,\mathcal{T}^*)
\Bigr]
\leq
\begin{cases}
\text{$\mathcal{O}(m^{1/2}n^{-1/2})$}, & \text{if $d=1$},\\[0.8em]
\text{$\mathcal{O}(m^{1/d}n^{-2/(3d)})$}, & \text{if $d>1$}.
\end{cases}
\end{align}
where the constant factors depend on $D,d$, $\beta$ and $\min\{c,c^2\}^{-1}.$
\end{corollary}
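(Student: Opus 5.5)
This is a direct instantiation of Theorem \ref{thm:samp_complex_dl}, using the covering bound of Corollary \ref{cor:bounded density covering} and the integral evaluation of Lemma \ref{lemma:explicit rates}.

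\textbf{Step 1: hypotheses of Theorem \ref{thm:samp_complex_dl}.} We need $\mathbb{T}$ compact in $L^2(\rho)$ with unit-norm elements.
\begin{itemize}
\item By Lemma \ref{lemma:maps compact}, $\mathbb{T}_\rho(\mathcal{P}(\Omega))$ is compact in $L^2(\rho)$.
\item The unit sphere $\{T:\|T\|_{L^2(\rho)}=1\}$ is closed, so $\mathbb{T}$ is a closed subset of a compact set and hence compact.
\item Since $\Omega$ is bounded, $\mathcal{P}_2(\Omega)=\mathcal{P}(\Omega)$, so this class agrees with the one in Corollary \ref{cor:bounded density covering}.
\end{itemize}
Since $\rho$ is a.c., Brenier maps exist, and Theorem \ref{thm:samp_complex_dl} applies with $\epsilon=0$.

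\textbf{Step 2: covering numbers.} Covering numbers are monotone under inclusion up to a factor $2$ in the radius, since centers may need to lie in the subset. Combining this with Corollary \ref{cor:bounded density covering} gives, for small $\tau$,
\[
\log\mathcal{N}(\tau;\mathbb{T},L^2(\rho))\le C^*_{D,d}\bigl((D^d\beta)^{-1/2}\tau/2\bigr)^{-3d/2}.
\]
This has the form $(C_4\tau)^{-p}$ required by Lemma \ref{lemma:explicit rates}, with $p=3d/2$ and $C_4$ depending on $D,d,\beta$.

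For large $\tau$ the covering number is bounded by a constant, since $\mathbb{T}$ has diameter at most $2$. That range contributes only $O(\sqrt{m/n})$ to the Dudley integral, so we may simply enlarge constants to make the bound hold for all $\tau$ in the integration range.

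\textbf{Step 3: rates from Lemma \ref{lemma:explicit rates}.} Take the infimum over $\delta$ in the bound of Theorem \ref{thm:samp_complex_dl}, using the identification stated after Lemma \ref{lemma:explicit rates}:
\[
C_1=32C_{\vartheta,c},\qquad C_2=C\,C_{\vartheta,c}\sqrt{m},\qquad C_3=\vartheta/(2C_{\vartheta,c}).
\]
\begin{itemize}
\item \emph{Case $d>1$.} Here $p=3d/2>2$, and the first case of Lemma \ref{lemma:explicit rates} gives a bound of order $C_2^{2/p}n^{-1/p}$. Now $C_2^{2/p}\propto m^{1/p}=m^{2/(3d)}$, and $n^{-1/p}=n^{-2/(3d)}$. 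This yields the stated $O(m^{1/d}n^{-2/(3d)})$, since $m^{2/(3d)}\le m^{1/d}$. The remaining term $4C\vartheta/\sqrt n$ is of lower order.
\item \emph{Case $d=1$.} Here $p=3/2<2$, and the third case of Lemma \ref{lemma:explicit rates} gives $O(C_2/\sqrt n)=O(\sqrt{m/n})$.
\end{itemize}
In both cases the constants depend on $D,d,\beta,\vartheta$ and on $c$ through $C_{\vartheta,c}$.

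\textbf{Main obstacle.} Everything else is bookkeeping; the delicate part is matching the hypothesis of Lemma \ref{lemma:explicit rates}. Corollary \ref{cor:bounded density covering} is stated only for $\tau$ small enough, while the integral in Theorem \ref{thm:samp_complex_dl} runs up to $C_3$. I would handle this by splitting the integral at the threshold $\tau_{d,p}$ and bounding the tail crudely by a constant multiple of $\sqrt{m/n}$, which is dominated in both regimes.
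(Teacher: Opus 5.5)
Your proposal is correct and follows the paper's proof: compactness of $\mathbb{T}$ via Lemma \ref{lemma:maps compact} and closedness of the unit sphere, then the covering bound of Corollary \ref{cor:bounded density covering}, then Lemma \ref{lemma:explicit rates} with $\sqrt{m}$ absorbed into $C_2$. Your additional points are sound refinements that the paper's one-line proof leaves implicit: splitting off the range where the covering bound is only stated for small $\tau$, and noting that the argument actually yields the sharper factor $m^{2/(3d)}\le m^{1/d}$. (The factor $2$ for subset covers is harmless, but not needed here, since the paper's definition lets centers lie in the ambient space.)
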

\begin{proof}
Since $\mathbb{T}\subset \mathbb{T}_\rho(\mathcal{P}_2(\Omega))$ and $\mathbb{T}$ is closed, $\mathbb{T}$ is compact in $L^2(\rho)$ by Lemma \ref{lemma:maps compact}, and $\mathcal{N}(\tau;\mathbb{T},L^2(\rho))\leq \mathcal{N}(\tau;\mathbb{T}_\rho(\mathcal{P}_2(\Omega)),L^2(\rho))$ for all $\tau>0$. The result then follows by Theorem \ref{thm:samp_complex_dl}, Corollary \ref{cor:bounded density covering} and Lemma \ref{lemma:explicit rates}, where $m^{1/2}$ is treated as a factor in the $C_2$ term.
\end{proof}

\subsubsection{Sample Complexity of LWDL-EOT}
We now apply Theorem \ref{thm:samp_complex_dl} and Theorem \ref{thm:ent map covering number} to obtain generalization bounds for LWDL with entropic map atoms. One way to proceed would be to establish an analog of Lemma \ref{lemma:maps compact} for entropic maps. We conjecture that $\mathbb{T}_\rho^\epsilon(\mathcal{P}(\Omega))$ is indeed compact in $L^2(\rho)$, but for simplicity will use an alternate approach using the following lemma:

\begin{lemma}\label{lemma:closure cover}
Let $(\mathcal{X},d)$ be a complete metric space and let $\mathcal{Y}\subset \mathcal{X}$, with $\mathcal{N}(\tau;\mathcal{Y},d)<\infty$ for all $\tau>0$. Then $\overline{\mathcal{Y}}$ is compact and $\mathcal{N}(\tau+\delta,\overline{\mathcal{Y}},d)\leq \mathcal{N}(\tau,\mathcal{Y},d)$ for all $0<\tau,\delta$.
\end{lemma}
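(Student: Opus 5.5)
The proof has two parts, compactness and the covering bound, each reduced to standard metric-space facts.

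\emph{Compactness.} Since $\mathcal{X}$ is complete, the closed subset $\overline{\mathcal{Y}}$ is complete with the restricted metric. A complete metric space is compact if and only if it is totally bounded, so it suffices to show that $\overline{\mathcal{Y}}$ is totally bounded. This will follow from the covering estimate below. Given any $\eta>0$, apply that estimate with $\tau=\delta=\eta/2$: it gives $\mathcal{N}(\eta;\overline{\mathcal{Y}},d)\leq \mathcal{N}(\eta/2;\mathcal{Y},d)<\infty$, which is total boundedness.

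\emph{Covering estimate.} Fix $\tau,\delta>0$ and let $\{y_i\}_{i=1}^N$ be the centers of a minimal $\tau$-net for $\mathcal{Y}$, so $N=\mathcal{N}(\tau;\mathcal{Y},d)$. I will show that the same centers, with radius enlarged to $\tau+\delta$, form a net for $\overline{\mathcal{Y}}$.

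Let $x\in\overline{\mathcal{Y}}$. By definition of closure there is a $y\in\mathcal{Y}$ with $d(x,y)<\delta$. Because $\{y_i\}$ is a $\tau$-net, there is an index $i$ with $d(y,y_i)<\tau$. The triangle inequality then gives
\[
d(x,y_i)\le d(x,y)+d(y,y_i)<\tau+\delta.
\]
So $\{B_{\tau+\delta}(y_i)\}_{i=1}^N$ covers $\overline{\mathcal{Y}}$, and hence $\mathcal{N}(\tau+\delta;\overline{\mathcal{Y}},d)\leq N$.

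One technicality needs care. The paper's definition of covering number takes ball centers in the ambient normed space, whereas some conventions require centers inside the set being covered. The argument above does not depend on this choice: the centers $y_i$ lie in $\mathcal{Y}\subseteq\overline{\mathcal{Y}}\subseteq\mathcal{X}$, so they are admissible under either convention.

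There is no substantive obstacle here; the only subtlety is handling the strict versus non-strict inequalities in the ball definitions, which the slack $\delta>0$ absorbs.
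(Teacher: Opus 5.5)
Your proof is correct and matches the paper's argument: approximate each point of $\overline{\mathcal{Y}}$ within $\delta$ by a point of $\mathcal{Y}$, enlarge a $\tau$-net for $\mathcal{Y}$ by $\delta$ using the triangle inequality, and deduce compactness from completeness of $\overline{\mathcal{Y}}$ plus total boundedness. One small wording fix: under the paper's convention the centers of a minimal net for $\mathcal{Y}$ need only lie in $\mathcal{X}$, not in $\mathcal{Y}$, but that is all your covering argument uses, so the conclusion is unaffected.
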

\begin{proof}
Since $(\mathcal{X},d)$ is complete and $\overline{\mathcal{Y}}$ is closed, $\overline{\mathcal{Y}}$ is complete. For any $\delta>0$, and define $S:\overline{\mathcal{Y}}\rightarrow \mathcal{Y}$ so that $S(y)=y'$ with $d(y,y')<\delta$ for any $y\in \overline{\mathcal{Y}}$. Let $\mathcal{U}_\tau:=\{U_i\}_{i=1}^{n_\tau}$ be a $\tau$-cover of $\mathcal{Y}$. We trivially have $\cup_{y\in \overline{\mathcal{Y}}}S(y)\subset \mathcal{U}_\tau$, so if we thicken each set in $U_i^\delta$ by $\delta$ and define $\mathcal{U}_{\tau+\delta}:=\{U_i^{\delta}\}_{i=1}^{n_\tau}$, then $\overline{\mathcal{Y}}\subset\mathcal{U}_{\tau+\delta}$ by the triangle inequality, which implies the inequality. Hence $\overline{\mathcal{Y}}$ is a closed and totally bounded subset of a metric space, and hence $\overline{\mathcal{Y}}$ is compact.
\end{proof}

\begin{corollary}\label{cor:EOT map dl final}
Let $\Theta,g,h_\rho^g$ be defined as in Theorem \ref{thm:samp_complex_dl}. Let $\rho\in \mathcal{G}_\sigma(\mathbb{R}^d)$. Furthermore, let $\mathring{\mathbb{T}}=\{T\in \mathbb{T}^\epsilon_\rho(\mathcal{G}_\sigma(\mathbb{R}^d))| \|T\|_{L^2(\rho)}=1\}$ for some $\epsilon>0$, let $\mathbb{T}$ be the closure of $\mathring{\mathbb{T}}$ in $L^2(\rho)$, and define $\mathcal{T}^n,\mathcal{T}^*$ as in Theorem \ref{thm:samp_complex_dl}. Then we have, for any $\tilde{\epsilon}>0$:
\begin{align*}
\mathbb{E}_{\mu,\mu_1,\mu_2,...,\mu_n\sim\Theta}[h^g_\rho(\mu,\tilde{\epsilon},\mathcal{T}^{n})-h^g_\rho(\mu,\tilde{\epsilon},\mathcal{T}^*)]&\leq \mathcal{O}(m^{1/2}n^{-1/2}),
\end{align*}
where the constant factors depend on $\sigma,d$, $\epsilon$, $\vartheta$, and $\min\{c,c^2\}^{-1}.$
\end{corollary}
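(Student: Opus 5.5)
The plan mirrors the proof of Corollary \ref{cor:ot map DL final}: verify the hypotheses of Theorem \ref{thm:samp_complex_dl} for the atom class $\mathbb{T}$, bound its covering numbers, and then feed that bound into Lemma \ref{lemma:explicit rates}.

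\textbf{Step 1: covering numbers of $\mathring{\mathbb{T}}$.} Since $\mathring{\mathbb{T}}\subseteq \mathbb{T}^\epsilon_\rho(\mathcal{G}_\sigma(\mathbb{R}^d))$, Theorem \ref{thm:ent map covering number} gives
\[
\log \mathcal{N}(\tau;\mathring{\mathbb{T}},L^2(\rho))\leq C_{d,\epsilon,\sigma}\tau^{-d/s}, \qquad s=\lceil d/2\rceil+1 .
\]
To make this valid we pass to a subset via a centers-inside-the-set cover, which costs at most a factor $2$ in the radius. This loss is absorbed into constants.

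\textbf{Step 2: compactness of $\mathbb{T}$.} Since $L^2(\rho)$ is complete and every covering number of $\mathring{\mathbb{T}}$ is finite, Lemma \ref{lemma:closure cover} shows that the closure $\mathbb{T}$ is compact. Taking $\delta=\tau$ in that lemma gives
\[
\mathcal{N}(2\tau;\mathbb{T},L^2(\rho))\leq \mathcal{N}(\tau;\mathring{\mathbb{T}},L^2(\rho)),
\]
so $\log\mathcal{N}(\tau;\mathbb{T},L^2(\rho))\leq C'_{d,\epsilon,\sigma}\tau^{-d/s}$. The unit-norm condition also passes to the closure, because the norm is continuous on $L^2(\rho)$. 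Hence $\|T\|_{L^2(\rho)}=1$ for all $T\in\mathbb{T}$, and every hypothesis of Theorem \ref{thm:samp_complex_dl} on $\mathbb{T}$ holds.

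\textbf{Step 3: the regime $p<2$.} Set $p=d/s$ and write $d=2k$ or $d=2k-1$. Then $s=k+1$ and $p\le 2k/(k+1)<2$, so we are always in the third case of Lemma \ref{lemma:explicit rates}.

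\textbf{Step 4: applying the theorem with the data-side $\tilde\epsilon$.} Invoke Theorem \ref{thm:samp_complex_dl} with entropic parameter $\tilde\epsilon$ on the data side; it allows any $\tilde\epsilon\ge0$. The atom parameter $\epsilon$ enters only through $\mathbb{T}$, and the theorem uses only compactness and unit norms of the atoms, not how they were generated.

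\textbf{Step 5: optimizing over $\delta$.} The bound holds for every $\delta\in(0,2\sqrt{\vartheta})$, so we take the infimum over $\delta$. Following the change of variables after Lemma \ref{lemma:explicit rates}, set
\[
C_1=32C_{\vartheta,c},\qquad C_2=C\,C_{\vartheta,c}\sqrt{m},\qquad C_3=\vartheta/(2C_{\vartheta,c}),
\]
with $C_4$ read off from $C'_{d,\epsilon,\sigma}$. Lemma \ref{lemma:explicit rates} then bounds the $\delta$ and integral terms by $O(\sqrt{m}/\sqrt{n})$, with constants depending on $C_3^{1-p/2}$, $\sigma$, $d$, $\epsilon$, $\vartheta$ and $1/\min\{c,c^2\}$. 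The remaining term $4C\vartheta/\sqrt{n}$ is $O(n^{-1/2})\le O(\sqrt{m/n})$, which yields the claim.

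\textbf{Main obstacle.} The delicate point is the interplay between the infimum over $\delta$ and the restriction $\delta<2\sqrt{\vartheta}$. The infimum in Lemma \ref{lemma:explicit rates} is taken over all $\delta\ge0$. Since $p<2$, the integral converges at $0$, so we can instead just take $\delta\to0^+$ directly. This gives the rate $\frac{C_2 C_3^{1-p/2}C_4^{-p/2}}{(1-p/2)\sqrt n}$ without any constraint issue.

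A smaller bookkeeping issue is that Theorem \ref{thm:ent map covering number} is stated only for small $\tau$, or with constants that need checking at large $\tau$. Above the diameter $2$ of $\mathbb{T}$ the covering number is $1$, so the integrand vanishes there, and the upper limit $C_3$ can be truncated harmlessly.
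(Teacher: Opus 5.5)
Your proposal is correct and follows essentially the same route as the paper. Both arguments bound the covering numbers of $\mathring{\mathbb{T}}$ via Theorem~\ref{thm:ent map covering number}, pass to the closure with Lemma~\ref{lemma:closure cover} (which also gives compactness), and then combine Theorem~\ref{thm:samp_complex_dl} with the $p=d/s<2$ case of Lemma~\ref{lemma:explicit rates}.

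Your extra checks make explicit details the paper leaves implicit: the unit-norm condition surviving the closure, $d/s<2$ for every $d$, and letting $\delta\to0^+$. The factor-$2$ loss in your Step~1 is unnecessary, because the paper's covering numbers allow centers anywhere in $L^2(\rho)$, though it does no harm.
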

\begin{proof}
First note that, by Lemma \ref{lemma:normsubgauss}, $\mathcal{G}_\sigma(\mathbb{R}^d)\subset \mathcal{M}_2(q_G^2d\sigma^2)$ for an absolute constant $q_G$. Since $\mathring{\mathbb{T}}\subset \mathbb{T}^\epsilon_\rho(\mathcal{G}_\sigma(\mathbb{R}^d))$, we have $\mathcal{N}(\tau;\mathring{\mathbb{T}},L^2(\rho))\leq \mathcal{N}(\tau;\mathbb{T}^\epsilon_\rho (\mathcal{G}_\sigma(\mathbb{R}^d),L^2(\rho)).$ By Lemma \ref{lemma:closure cover}, $\mathbb{T}$ is compact and we have $\mathcal{N}(\tau(1+\delta);\mathbb{T},L^2(\rho))\leq \mathcal{N}(\tau;\mathbb{T}^\epsilon_{\rho}(\mathcal{G}_\sigma(\mathbb{R}^d)),L^2(\rho))$ for any $\delta>0$. Replacing $\tau$ by $\tau/(1+\delta)$ in the preceding inequality and applying Theorem \ref{thm:ent map covering number}, we obtain
\begin{equation*}
\log \mathcal{N}(\tau;\mathbb{T},L^2(\rho))
\leq
C_{d,\epsilon,\sigma}(1+\delta)^{d/s}\tau^{-d/s}
\end{equation*}
for any $\delta>0$. We conclude by applying Theorem \ref{thm:samp_complex_dl} and Lemma \ref{lemma:explicit rates}.
\end{proof}

\subsection{Proof of Theorem \ref{thm:summary theorem}}

\begin{proof}The result is a direct combination of Corollary \ref{cor:ot map DL final} and Corollary \ref{cor:EOT map dl final}.

\end{proof}

\subsection{Proof of Lemma \ref{lemma:symmetrization}}
\label{subsec:ProofLemmaSymmetrization}

Formally, the proof of Lemma \ref{lemma:symmetrization} is easily found in the literature (see e.g., Chapter 6 in \citet{vershynin2018high}). However, additional details are needed to make it rigorous for function classes with uncountable cardinality, as discussed in Section 4.4 in \citet{wainwright2019high} and Chapter 2.3 in \citet{wellner2013weak}. We establish Lemma \ref{lemma:symmetrization} by first proving the result under the assumption that $\mathbb{T}$ has finitely many elements.
We apply Corollary \ref{cor:bounded_diam} and Lemma \ref{lemma:covering under lipschitz} to transfer the compactness of $\mathbb{T}^m$ to $H_\rho^{g,\epsilon}(\mathbb{T}^m)$. We conclude the proof  of Lemma \ref{lemma:symmetrization} with an approximation argument using $\delta$-covers of $H_\rho^{g,\epsilon}(\mathbb{T}^m)$.

\noindent\textbf{Proof of Lemma \ref{lemma:symmetrization}:}

\textit{Proof of Lemma \ref{lemma:symmetrization}:}
Suppose first that $\mathbb{T}$ is finite. Then
$\mathcal{H}_{\rho}^{g,\epsilon}(\mathbb{T}^m)$ is finite, and hence
the maxima below exist almost surely.

Let $\widetilde{\mu}_1,\ldots,\widetilde{\mu}_n\sim\Theta$ be i.i.d.
and independent of $\mu_1,\ldots,\mu_n$. Thus,
$(\widetilde{\mu}_1,\ldots,\widetilde{\mu}_n)$ and
$(\mu_1,\ldots,\mu_n)$ are independent random vectors, each with
distribution $\Theta^n$. For almost every
$\mu_1,\ldots,\mu_n\sim\Theta$, Jensen's inequality gives
\begin{align*}
&\max_{h_{\mathcal{T}}\in
\mathcal{H}_{\rho}^{g,\epsilon}(\mathbb{T}^m)}
\left|
\frac{1}{n}\sum_{i=1}^n h_{\mathcal{T}}(\mu_i)
-
\mathbb{E}_{\mu\sim\Theta}[h_{\mathcal{T}}(\mu)]
\right|
\\
&=
\max_{h_{\mathcal{T}}\in
\mathcal{H}_{\rho}^{g,\epsilon}(\mathbb{T}^m)}
\left|
\mathbb{E}_{\widetilde{\mu}_1,\ldots,\widetilde{\mu}_n\sim\Theta}
\left[
\frac{1}{n}\sum_{i=1}^n
\left(
h_{\mathcal{T}}(\mu_i)
-
h_{\mathcal{T}}(\widetilde{\mu}_i)
\right)
\right]
\right|
\\
&\le
\mathbb{E}_{\widetilde{\mu}_1,\ldots,\widetilde{\mu}_n\sim\Theta}
\left[
\max_{h_{\mathcal{T}}\in
\mathcal{H}_{\rho}^{g,\epsilon}(\mathbb{T}^m)}
\left|
\frac{1}{n}\sum_{i=1}^n
\left(
h_{\mathcal{T}}(\mu_i)
-
h_{\mathcal{T}}(\widetilde{\mu}_i)
\right)
\right|
\right].
\end{align*}
Taking expectations with respect to $\mu_1,\ldots,\mu_n$, we obtain
\begin{align*}
&\mathbb{E}_{\mu_i\sim\Theta}
\left[
\max_{h_{\mathcal{T}}\in
\mathcal{H}_{\rho}^{g,\epsilon}(\mathbb{T}^m)}
\left|
\frac{1}{n}\sum_{i=1}^n h_{\mathcal{T}}(\mu_i)
-
\mathbb{E}_{\mu\sim\Theta}[h_{\mathcal{T}}(\mu)]
\right|
\right]
\\
&\le
\mathbb{E}_{\mu_i,\widetilde{\mu}_i\sim\Theta}
\left[
\max_{h_{\mathcal{T}}\in
\mathcal{H}_{\rho}^{g,\epsilon}(\mathbb{T}^m)}
\left|
\frac{1}{n}\sum_{i=1}^n
\left(
h_{\mathcal{T}}(\mu_i)
-
h_{\mathcal{T}}(\widetilde{\mu}_i)
\right)
\right|
\right].
\end{align*}

Let $\sigma_1,\ldots,\sigma_n$ be the Rademacher random variables
from the statement of the lemma. For any fixed realization of
$\sigma_1,\ldots,\sigma_n$, swap $\mu_i$ and
$\widetilde{\mu}_i$ whenever $\sigma_i=-1$. Since each pair
$(\mu_i,\widetilde{\mu}_i)$ is exchangeable, this operation does
not change the joint distribution of
\[
(\mu_1,\widetilde{\mu}_1,\ldots,\mu_n,\widetilde{\mu}_n).
\]
Moreover, simultaneously for every
$h_{\mathcal{T}}\in
\mathcal{H}_{\rho}^{g,\epsilon}(\mathbb{T}^m)$, this operation
replaces
\[
h_{\mathcal{T}}(\mu_i)
-
h_{\mathcal{T}}(\widetilde{\mu}_i)
\]
by
\[
\sigma_i
\left(
h_{\mathcal{T}}(\mu_i)
-
h_{\mathcal{T}}(\widetilde{\mu}_i)
\right).
\]
Consequently,
\begin{align*}
&\mathbb{E}_{\mu_i,\widetilde{\mu}_i\sim\Theta}
\left[
\max_{h_{\mathcal{T}}\in
\mathcal{H}_{\rho}^{g,\epsilon}(\mathbb{T}^m)}
\left|
\frac{1}{n}\sum_{i=1}^n
\left(
h_{\mathcal{T}}(\mu_i)
-
h_{\mathcal{T}}(\widetilde{\mu}_i)
\right)
\right|
\right]
\\
&=
\mathbb{E}_{\sigma_i,\mu_i,\widetilde{\mu}_i}
\left[
\max_{h_{\mathcal{T}}\in
\mathcal{H}_{\rho}^{g,\epsilon}(\mathbb{T}^m)}
\left|
\frac{1}{n}\sum_{i=1}^n
\sigma_i
\left(
h_{\mathcal{T}}(\mu_i)
-
h_{\mathcal{T}}(\widetilde{\mu}_i)
\right)
\right|
\right]
\\
&\le
\mathbb{E}_{\sigma_i,\mu_i}
\left[
\max_{h_{\mathcal{T}}\in
\mathcal{H}_{\rho}^{g,\epsilon}(\mathbb{T}^m)}
\left|
\frac{1}{n}\sum_{i=1}^n
\sigma_i h_{\mathcal{T}}(\mu_i)
\right|
\right]
\\
&\quad+
\mathbb{E}_{\sigma_i,\widetilde{\mu}_i}
\left[
\max_{h_{\mathcal{T}}\in
\mathcal{H}_{\rho}^{g,\epsilon}(\mathbb{T}^m)}
\left|
\frac{1}{n}\sum_{i=1}^n
\sigma_i h_{\mathcal{T}}(\widetilde{\mu}_i)
\right|
\right]
\\
&=
2\mathbb{E}_{\sigma_i,\mu_i}
\left[
\max_{h_{\mathcal{T}}\in
\mathcal{H}_{\rho}^{g,\epsilon}(\mathbb{T}^m)}
\left|
\frac{1}{n}\sum_{i=1}^n
\sigma_i h_{\mathcal{T}}(\mu_i)
\right|
\right].
\end{align*}
We have therefore proved
\begin{equation}\label{eqn:symm bound}
\begin{aligned}
&\mathbb{E}_{\mu_i\sim\Theta}
\left[
\max_{h_{\mathcal{T}}\in
\mathcal{H}_{\rho}^{g,\epsilon}(\mathbb{T}^m)}
\left|
\frac{1}{n}\sum_{i=1}^n h_{\mathcal{T}}(\mu_i)
-
\mathbb{E}_{\mu\sim\Theta}[h_{\mathcal{T}}(\mu)]
\right|
\right]
\\
&\qquad\le
2\mathbb{E}_{\sigma_i,\mu_i}
\left[
\max_{h_{\mathcal{T}}\in
\mathcal{H}_{\rho}^{g,\epsilon}(\mathbb{T}^m)}
\left|
\frac{1}{n}\sum_{i=1}^n
\sigma_i h_{\mathcal{T}}(\mu_i)
\right|
\right].
\end{aligned}
\end{equation}

We now proceed with the general case. If
\[
\mathbb{E}_{\sigma_i,\mu_i}
\left[
\sup_{h_{\mathcal{T}}\in
\mathcal{H}_{\rho}^{g,\epsilon}(\mathbb{T}^m)}
\left|
\frac{1}{n}\sum_{i=1}^n
\sigma_i h_{\mathcal{T}}(\mu_i)
\right|
\right]
=
\infty,
\]
the result holds trivially. Hence, suppose that
\[
\mathbb{E}_{\sigma_i,\mu_i}
\left[
\sup_{h_{\mathcal{T}}\in
\mathcal{H}_{\rho}^{g,\epsilon}(\mathbb{T}^m)}
\left|
\frac{1}{n}\sum_{i=1}^n
\sigma_i h_{\mathcal{T}}(\mu_i)
\right|
\right]
<
\infty.
\]

Since $\mathbb{T}\subset L^2(\rho)$ is compact,
$\mathbb{T}^m$ is compact with respect to $d_{2,m}$. Thus, for
every $\delta>0$, there exists a finite $\delta$-cover
\[
\mathbb{T}_{\delta}^m
:=
\left\{
\mathcal{T}^{(1)},\ldots,\mathcal{T}^{(k_\delta)}
\right\}
\subseteq\mathbb{T}^m.
\]
By Corollary \ref{cor:bounded_diam} for every $\mathcal{T}\in\mathbb{T}^m$ there
exists $\mathcal{T}^{(r)}\in\mathbb{T}_{\delta}^m$ such that
\[
\sup_{\mu\in\mathcal{M}_2(\vartheta)}
\left|
h_{\mathcal{T}}(\mu)
-
h_{\mathcal{T}^{(r)}}(\mu)
\right|
\le
C_{\vartheta,c}\delta.
\]
Therefore, for every
$\mu_1,\ldots,\mu_n\in\mathcal{M}_2(\vartheta)$,
\begin{align*}
&
\left|
\frac{1}{n}\sum_{i=1}^n h_{\mathcal{T}}(\mu_i)
-
\mathbb{E}_{\mu\sim\Theta}[h_{\mathcal{T}}(\mu)]
\right|
\\
&\le
\left|
\frac{1}{n}\sum_{i=1}^n
h_{\mathcal{T}^{(r)}}(\mu_i)
-
\mathbb{E}_{\mu\sim\Theta}
[h_{\mathcal{T}^{(r)}}(\mu)]
\right|
\\
&\quad+
\frac{1}{n}\sum_{i=1}^n
\left|
h_{\mathcal{T}}(\mu_i)
-
h_{\mathcal{T}^{(r)}}(\mu_i)
\right|
+
\mathbb{E}_{\mu\sim\Theta}
\left[
\left|
h_{\mathcal{T}}(\mu)
-
h_{\mathcal{T}^{(r)}}(\mu)
\right|
\right]
\\
&\le
\left|
\frac{1}{n}\sum_{i=1}^n
h_{\mathcal{T}^{(r)}}(\mu_i)
-
\mathbb{E}_{\mu\sim\Theta}
[h_{\mathcal{T}^{(r)}}(\mu)]
\right|
+
2C_{\vartheta,c}\delta.
\end{align*}
It follows that
\begin{align*}
&\mathbb{E}_{\mu_i\sim\Theta}
\left[
\sup_{h_{\mathcal{T}}\in
\mathcal{H}_{\rho}^{g,\epsilon}(\mathbb{T}^m)}
\left|
\frac{1}{n}\sum_{i=1}^n h_{\mathcal{T}}(\mu_i)
-
\mathbb{E}_{\mu\sim\Theta}[h_{\mathcal{T}}(\mu)]
\right|
\right]
\\
&\le
\mathbb{E}_{\mu_i\sim\Theta}
\left[
\max_{h_{\mathcal{T}}\in
\mathcal{H}_{\rho}^{g,\epsilon}(\mathbb{T}_{\delta}^m)}
\left|
\frac{1}{n}\sum_{i=1}^n h_{\mathcal{T}}(\mu_i)
-
\mathbb{E}_{\mu\sim\Theta}[h_{\mathcal{T}}(\mu)]
\right|
\right]
+
2C_{\vartheta,c}\delta
\\
&\le
2\mathbb{E}_{\sigma_i,\mu_i}
\left[
\max_{h_{\mathcal{T}}\in
\mathcal{H}_{\rho}^{g,\epsilon}(\mathbb{T}_{\delta}^m)}
\left|
\frac{1}{n}\sum_{i=1}^n
\sigma_i h_{\mathcal{T}}(\mu_i)
\right|
\right]
+
2C_{\vartheta,c}\delta
\\
&\le
2\mathbb{E}_{\sigma_i,\mu_i}
\left[
\sup_{h_{\mathcal{T}}\in
\mathcal{H}_{\rho}^{g,\epsilon}(\mathbb{T}^m)}
\left|
\frac{1}{n}\sum_{i=1}^n
\sigma_i h_{\mathcal{T}}(\mu_i)
\right|
\right]
+
2C_{\vartheta,c}\delta,
\end{align*}
where the second inequality follows from (\ref{eqn:symm bound}). Sending
$\delta\to0^{+}$ proves the result.

\subsection{Proof of Corollary \ref{cor:regular function class subgaussian}}
\label{subsec:ProofCor3.10}
\begin{proof}
Below, let $C_*$ denote constants depending on $*$ whose precise value can vary line by line. Define dyadic shells $I_j$ by
\begin{equation*}
I_0=[-\sigma,\sigma]^d
\end{equation*}
\begin{equation*}
I_j=[-2^j\sigma,2^j\sigma]^d\setminus (-2^{j-1}\sigma,2^{j-1}\sigma)^d, \; \; \; j\geq 1
\end{equation*}
For each $j$, $\mathcal{L}(I_j^1)\leq C_d(1+\sigma^d2^{dj})$ for some dimension-dependent constant $C_d$. Furthermore, for $j\geq1$, $k\in\{1,\ldots,d\}$, and $\eta\in\{-1,1\}$, define
\begin{equation*}
I_j^{k,\eta}:=\{x\in\mathbb{R}^d:\|x\|_\infty\leq2^j\sigma,\ \eta x_k\geq2^{j-1}\sigma,\ \eta x_k\geq|x_\ell|\text{ for all }\ell\in\{1,\ldots,d\}\}.
\end{equation*}
Each $I_j^{k,\eta}$ is convex, and $I_j=\bigcup_{k=1}^d\bigcup_{\eta\in\{-1,1\}}I_j^{k,\eta}$, with overlaps only along their boundaries. Together with $I_0$, a fixed half-open tie-breaking of the common boundary hyperplanes gives a partition of $\mathbb{R}^d$ into bounded convex pieces, each contained in one of the sets just displayed. Moreover, for every $j\geq 1$, $k\in\{1,\ldots,d\}$, and $\eta\in\{-1,1\}$,
\begin{align*}
\mathcal{L}\bigl((I_j^{k,\eta})^1\bigr)
&\leq \mathcal{L}(I_j^1)
\leq C_d\bigl(1+\sigma^d2^{dj}\bigr), \\
\operatorname{diam}(I_j^{k,\eta})
&\leq \operatorname{diam}(I_j), \\
\rho(I_j^{k,\eta})
&\leq \rho(I_j).
\end{align*}

Since $\rho$ is a $\sigma$-subgaussian measure, $\|X\|$ with $X\sim\rho$ is a $q_G\sigma \sqrt{d}$-subgaussian random variable by Lemma \ref{lemma:normsubgauss}. Hence, for $j\geq 1$,
\begin{align*}
\rho(I_j)&\leq \mathbb{P}_{X\sim \rho}(\|X\|> 2^{j-1}\sigma)\\&\leq 2\exp\left(\frac{-c2^{2j}\sigma^2}{\left\|\|X\|\right\|_{\mathcal{G}}^2}\right)\\&\leq 2\exp\left(\frac{-c2^{2j}}{q_G^2d}\right).
\end{align*}
Furthermore $\rho(I_0)\leq 1$, which may be absorbed into the constants below. For any $f\in \mathcal{F}$ and any convex piece $J\subseteq I_j$ with $j\geq 0$ we have:
\begin{align*}
\|f\|_{\mathcal{C}^\alpha(J)}&\leq \sup_{x\in I_j} C (1+\|x\|)^{s+q}\\&\leq C_*\left(1+\sup_{x\in I_j}\|x\|^{s+q}\right)\\&\leq C_*(1+2^{j(s+q)}\sigma^{s+q})\\&\leq C_*2^{j(s+q)}(1+\sigma^{s+q})
\end{align*}
where we have absorbed a constant of $2^{s+q-1}$ into $C_*$.
Hence $f\in \mathcal{C}_{M_j}^\alpha(J)$ on every convex piece $J\subseteq I_j$, where $M_j:=C_*2^{j(s+q)}(1+\sigma^{s+q}).$ Defining $Q:=\frac{d}{s}$, we can apply Theorem \ref{theorem:wellner non compact} to the convex refinement above. The sum over the $2d$ pieces in each shell only contributes a dimension-dependent factor, so
\begin{align*}
\log \mathcal{N}(\tau;\mathcal{F},L^2(\rho))& \leq C_*\left(\mathcal{L}(I_0^1)^{\frac{2}{Q+2}}M_0^{\frac{2Q}{Q+2}}\rho(I_0)^{\frac{Q}{Q+2}}\right.\\
&\qquad\qquad\left.+\sum_{j=1}^\infty\sum_{k=1}^d\sum_{\eta\in\{-1,1\}}\mathcal{L}((I_j^{k,\eta})^1)^{\frac{2}{Q+2}}M_j^{\frac{2Q}{Q+2}}\rho(I_j^{k,\eta})^{\frac{Q}{Q+2}}\right)^{\frac{Q+2}{2}}\cdot\tau^{-Q}\\
& \leq C_*\left(\sum_{j= 0}^\infty (1+2^{jd}\sigma^d)^{\frac{2}{Q+2}}2^{\frac{2j(d+Qq)}{Q+2}}(1+\sigma^{s+q})^{\frac{2Q}{Q+2}}\right.\\
&\qquad\qquad\left.\cdot\exp\left(\frac{-cQ2^{2j}}{q_G^2d(Q+2)}\right)\right)^{\frac{Q+2}{2}}\cdot \tau^{-Q}.
\end{align*}
We can factor out a term $(1+\sigma^{s+q})^{\frac{2Q}{Q+2}\cdot \frac{Q+2}{2}}=(1+\sigma^{s+q})^Q$:
\begin{align*}
&C_*(1+\sigma^{s+q})^Q\left(\sum_{j= 0}^\infty (1+2^{jd}\sigma^d)^{\frac{2}{Q+2}}2^{\frac{2j(d+Qq)}{Q+2}}\exp\left(\frac{-cQ2^{2j}}{q_G^2d(Q+2)}\right)\right)^{\frac{Q+2}{2}}\cdot \tau^{-Q}\\&\leq C_*(1+\sigma^{s+q})^Q\left(\sum_{j= 0}^\infty(1+\sigma^d)^{\frac{2}{Q+2}} 2^{\frac{2j(d+1+Qq)}{Q+2}}\exp\left(\frac{-cQ2^{2j}}{q_G^2d(Q+2)}\right)\right)^{\frac{Q+2}{2}}\cdot \tau^{-Q}.
\end{align*}
We then factor out $(1+\sigma^d)^{\frac{2}{Q+2}\cdot\frac{Q+2}{2}}=(1+\sigma^d)$ to obtain:
\begin{equation*}
C_*(1+\sigma^{s+q})^Q(1+\sigma^d)\left(\sum_{j= 0}^\infty 2^{\frac{2j(d+1+Qq)}{Q+2}}\exp\left(\frac{-cQ2^{2j}}{q_G^2d(Q+2)}\right)\right)^{\frac{Q+2}{2}}\cdot \tau^{-Q}.
\end{equation*}
Since the summands
\begin{align*}
&2^{\frac{2(d+1+Qq)js}{Q+2}}\exp\left(\frac{-cQ2^{2j}}{q_G^2d(Q+2)}\right)\\& \leq \exp\left(\frac{2(d+1+Qq)}{Q+2}js - \frac{cQ2^{2j}}{q_G^2 d(\frac{d}{s}+2)}\right)
\end{align*}
decay superexponentially in $j$, \begin{align*}
\sum_{j= 0}^\infty \left(1+2^{\frac{2(d+1+Qq)js}{Q+2}}\right)\exp\left(\frac{-cQ2^{2j}}{q_G^2d(Q+2)}\right)<\infty,
\end{align*}
and by collapsing this term into the constant $C_*$ we conclude that:
\begin{align*}
\log \mathcal{N}(\tau;\mathcal{F},L^2(\rho))&\leq C_{*}(1+\sigma^{s+q})^Q(1+\sigma^d)\tau^{-Q}.
\end{align*}
When $s=\lceil \frac{d}{2}\rceil+1$, $Q\leq 2$, we can absorb another factor of $2$ into $C_*$ to conclude:
\begin{align*}
\log \mathcal{N}(\tau;\mathcal{F},L^2(\rho))&\leq C_{*}(1+\sigma^{2(s+q)})(1+\sigma^d)\tau^{-2}\\&\leq C_{*}(1+\sigma^{d+2+2q})(1+\sigma^d)\tau^{-2}.
\end{align*}
Tracking constants, we see that $C_*$ depends on $d$ and $q$, since $s$ is determined by $d.$
\end{proof}
\subsection{Proof of Corollary \ref{cor:bounded density covering}}
\label{subsec:ProofCorC.13}

\begin{proof}
The result follows from Proposition \ref{prop:OT map covering global} and purely metric properties of $(\mathbb{T}_\rho(\mathcal{P}(\Omega)),L^2(\rho))$ and $(\mathbb{T}_U(\mathcal{P}(\Omega)),L^2(U)).$ First note that, for any $\mu\ll \nu$ and $\nu\ll \mu$, any map $T:\Omega\rightarrow \mathbb{R}^d$ defined $\mu$ almost everywhere is uniquely defined $\nu$ almost everywhere, and vice-versa (note that this does not automatically give an identification of elements of $L^2(U)$ and $L^2(\rho)$, which requires establishing square integrability).

By Brenier's theorem, for any maps $T_1,T_2\in \mathbb{T}_U(\mathcal{P}(\Omega)),$ there exist $U$ (and hence $\rho$) almost everywhere unique convex functions $\phi_1,\phi_2$ with $\nabla \phi_i=T_i$ and $\overline{\nabla \phi_i(\Omega)}=\text{supp}([T_i]_{\#}(U))\subseteq \Omega$, for $i\in \{1,2\}$. Then $\nabla \phi_i\in L^2(\rho)$ and, by Brenier's theorem, $\nabla \phi_i\in \mathbb{T}_\rho(\mathcal{P}(\Omega))$. We bound:
\begin{align*}
\|\nabla \phi_1 -\nabla \phi_2 \|^2_{L^2(\rho)}&=\int_\Omega \|\nabla \phi_1(x)-\nabla \phi_2(x)\|^2 \frac{d\rho(x)}{dx}dx\\&\leq \beta\int_\Omega\|\nabla \phi_1(x)-\nabla\phi_2(x)\|^2dx\\&=\mathcal{L}(\Omega)\beta\|\nabla \phi_1-\nabla\phi_2\|_{L^2(U)}^2.
\end{align*}
We have thus shown the map $\iota:\nabla \phi\in L^2(U)\rightarrow \nabla \phi\in L^2(\rho)$ defines a $(\mathcal{L}(\Omega)\beta)^{1/2}$-Lipschitz embedding, and by $\rho$ almost everywhere uniqueness of Brenier potentials this map is injective.

The same argument with $U$ and $\rho$ interchanged shows that every element of $\mathbb{T}_\rho(\mathcal{P}(\Omega))$ also lies in $\mathbb{T}_U(\mathcal{P}(\Omega))$ after identifying almost everywhere representatives. Hence $\iota$ is a bijection between $\mathbb{T}_U(\mathcal{P}(\Omega))$ and $\mathbb{T}_\rho(\mathcal{P}(\Omega))$.

Applying Lemma \ref{lemma:covering under lipschitz} to $\iota$, we conclude that
\begin{align*}
\log  \mathcal{N}(\tau;\mathbb{T}_\rho(\mathcal{P}(\Omega)),L^2(\rho))& \leq  \log \mathcal{N}((\mathcal{L}(\Omega)\beta)^{-1/2}\tau;\mathbb{T}_U(\mathcal{P}(\Omega)),L^2(U))\\&\leq C^*_{D,d}((\mathcal{L}(\Omega)\beta)^{-1/2} \tau)^{-3d/2} \tag{Proposition \ref{prop:OT map covering global}}.
\end{align*}
\end{proof}

\subsection{Proof of Lemma \ref{lemma:maps compact}}\label{sec:proof maps compact}

We will use the following fact about stability for optimal transport maps, which is a special case of Corollary 5.23 in \citet{villani2009optimal}:
\begin{lemma}\label{lemma:maps converge in prob}
Let $\Omega\subset \mathbb{R}^d$, and let $\{\mu_k\}_{k=1}^\infty\subset \mathcal{P}_2(\Omega)$ be a sequence of probability measures, converging weakly to $\mu\in \mathcal{P}_2(\Omega).$ Let $\rho\in \mathcal{P}_2(\Omega)$, and assume that $T_{\rho\rightarrow \mu_k}$ exists for all $k\in \mathbb{N}$ and that $T_{\rho\rightarrow \mu}$ exists. Then:
\begin{align*}
\forall \epsilon>0, \; \; \rho(\{x\in \Omega|\|T_{\rho\rightarrow \mu_k}(x)-T_{\rho\rightarrow \mu}(x)\|\geq \epsilon\})\rightarrow 0, \; \; \; k\rightarrow \infty,
\end{align*}
i.e. $T_{\rho\rightarrow \mu_k}$ converges to $T_{\rho\rightarrow \mu}$ in $\rho$-probability.
\end{lemma}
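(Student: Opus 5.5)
The plan is to pass through the transport plans $\pi_k:=(\mathrm{Id}\times T_{\rho\rightarrow\mu_k})_{\#}\rho$ and $\pi:=(\mathrm{Id}\times T_{\rho\rightarrow\mu})_{\#}\rho$. First I will show $\pi_k\rightharpoonup\pi$ weakly in $\mathcal{P}(\Omega\times\Omega)$. Then I will convert this weak convergence of graph measures into convergence of the maps in $\rho$-probability by a Lusin-type approximation. This is the standard route behind Corollary 5.23 in \citet{villani2009optimal}.

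\textbf{Step 1 (weak convergence of plans).} The first marginals of the $\pi_k$ are all $\rho$, and the second marginals $\mu_k$ converge weakly, so both marginal families are tight. Hence $\{\pi_k\}$ is tight, and by Prokhorov every subsequence has a further subsequence converging weakly to some $\bar\pi\in\Pi(\rho,\mu)$.

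Each $\pi_k$ is optimal for the quadratic cost, so its support is cyclically monotone. I will use that cyclical monotonicity of the support passes to weak limits: every point of $\operatorname{supp}\bar\pi$ is a limit of points of $\operatorname{supp}\pi_k$, and the finite cyclical monotonicity inequalities are closed conditions. Since $\rho,\mu\in\mathcal{P}_2$, a plan with cyclically monotone support is optimal (Knott--Smith/R\"uschendorf, equivalently Brenier's theorem in the form of Theorem \ref{thm:brenier}). So $\bar\pi$ is optimal. By uniqueness of the optimal plan when $\rho$ is absolutely continuous (Theorem \ref{thm:brenier}), $\bar\pi=\pi$, and therefore the whole sequence converges: $\pi_k\rightharpoonup\pi$.

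\textbf{Step 2 (from plans to maps).} Fix $\epsilon,\eta>0$ and write $T=T_{\rho\rightarrow\mu}$. By Lusin's theorem and Tietze extension there is a bounded continuous $\tilde T:\mathbb{R}^d\rightarrow\mathbb{R}^d$ with $\rho(\{T\neq\tilde T\})<\eta$. The test function $f(x,y):=\min\{\|y-\tilde T(x)\|,\epsilon\}$ is bounded and continuous, so
\begin{equation*}
\int \min\{\|T_{\rho\rightarrow\mu_k}-\tilde T\|,\epsilon\}\,d\rho=\int f\,d\pi_k\;\longrightarrow\;\int f\,d\pi=\int\min\{\|T-\tilde T\|,\epsilon\}\,d\rho\le\epsilon\eta .
\end{equation*}
Markov's inequality then gives
\begin{equation*}
\rho(\|T_{\rho\rightarrow\mu_k}-T\|\ge\epsilon)\le\rho(\|T_{\rho\rightarrow\mu_k}-\tilde T\|\ge\epsilon)+\rho(T\ne\tilde T)\le\epsilon^{-1}\int f\,d\pi_k+\eta .
\end{equation*}
Taking $\limsup_{k\to\infty}$ yields $\limsup_k \rho(\|T_{\rho\rightarrow\mu_k}-T\|\ge\epsilon)\le 2\eta$. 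Since $\eta>0$ is arbitrary, this proves the claim.

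\textbf{Main obstacle.} I expect the delicate point to be the optimality of the limit plan $\bar\pi$ in Step 1. Tightness and the passage of cyclical monotonicity to the limit are routine. The real work is the implication from cyclical monotonicity to optimality, which needs the finite second moments of $\rho$ and $\mu$; this is the content of Villani's Theorem 5.20 / Corollary 5.23. I would cite it there rather than reprove it. The remaining conditions, namely $\mu\in\mathcal{P}_2(\Omega)$ and existence of $T_{\rho\rightarrow\mu}$, are exactly the hypotheses of Lemma \ref{lemma:maps converge in prob}.
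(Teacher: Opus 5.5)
Your proposal is correct and is essentially the standard proof of Villani's Corollary 5.23. You identify the limit plan via tightness and the closedness of cyclical monotonicity, then use a Lusin approximation to pass from weak convergence of $(\mathrm{Id}\times T_{\rho\to\mu_k})_{\#}\rho$ to convergence in $\rho$-probability; that corollary is exactly what the paper cites in place of a proof.

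One caveat: you use absolute continuity of $\rho$ to get uniqueness of the optimal plan, and the lemma as stated does not assume this. Some such hypothesis is genuinely necessary, and Villani assumes uniqueness explicitly. For example, take $\rho=\tfrac12(\delta_{e_1}+\delta_{-e_1})$, $\mu=\tfrac12(\delta_{e_2}+\delta_{-e_2})$ and $\mu_k=\tfrac12\bigl(\delta_{(1/k,(-1)^k)}+\delta_{(-1/k,-(-1)^k)}\bigr)\rightharpoonup\mu$. The unique optimal maps $T_{\rho\to\mu_k}$ send $e_1$ alternately near $e_2$ and near $-e_2$, so they converge to no optimal map onto $\mu$. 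The assumption does hold in the paper's only application, Lemma \ref{lemma:maps compact}.
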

We will also need a general convergence result, which will help us promote convergence in probability to convergence in $L^2(\rho)$:
\begin{lemma}\label{lemma:UI L2}
(Theorem 3.5 in \citet{billingsley2013convergence}) Let $\{f_n\}_{n=1}^\infty\subset L^2(\rho)$ be uniformly $\rho$-integrable, i.e.,
\begin{align*}
\lim_{C\rightarrow \infty}\sup_{n\in \mathbb{N}}\int |f_n (x)|\mathds{1}_{|f_n(x)|>C}(x)d\rho(x)= 0.\end{align*}
Then if $f_n$ converges to $f$ in $\rho$-probability, $\int f_nd\rho\rightarrow \int fd\rho$ as $n\rightarrow \infty.$
\end{lemma}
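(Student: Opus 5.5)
The plan is the standard truncation argument; since $\rho$ is a probability measure, no $L^2$ structure is needed beyond integrability. If the $f_n$ are vector-valued, as transport maps are, apply the argument componentwise. So assume the $f_n$ are real-valued.

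\textbf{Step 1: the limit is integrable.} Uniform integrability gives $\sup_n \int |f_n|\,d\rho<\infty$. Pick $C_0$ with $\sup_n\int |f_n|\mathds{1}_{|f_n|>C_0}\,d\rho\le 1$; then $\int|f_n|\,d\rho\le C_0+1$ for every $n$. Convergence in $\rho$-probability yields a subsequence $f_{n_k}\to f$ $\rho$-a.e. Fatou's lemma then gives $\int|f|\,d\rho\le\liminf_k\int|f_{n_k}|\,d\rho\le C_0+1$, so $f\in L^1(\rho)$ and $\int f\,d\rho$ is well defined.

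\textbf{Step 2: truncation.} Let $\varphi_C(t):=\max\{-C,\min\{t,C\}\}$. This map is $1$-Lipschitz, bounded by $C$, and satisfies $|t-\varphi_C(t)|\le|t|\mathds{1}_{|t|>C}$. We decompose
\begin{align*}
\Bigl|\int f_n\,d\rho-\int f\,d\rho\Bigr|
&\le \int |f_n|\mathds{1}_{|f_n|>C}\,d\rho+\int|\varphi_C(f_n)-\varphi_C(f)|\,d\rho\\
&\quad+\int|f|\mathds{1}_{|f|>C}\,d\rho .
\end{align*}
Fix $\eta>0$. Uniform integrability makes the first term at most $\eta$ for all $n$ once $C$ is large. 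Integrability of $f$ (Step 1, with dominated convergence) makes the third term at most $\eta$ for the same large $C$.

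\textbf{Step 3: the middle term.} Split the domain according to whether $|f_n-f|\le\eta$ or not. Using the Lipschitz property on the first set and the bound $2C$ on the second,
$$\int|\varphi_C(f_n)-\varphi_C(f)|\,d\rho\le \eta+2C\,\rho\bigl(|f_n-f|>\eta\bigr).$$
With $C$ now fixed, convergence in $\rho$-probability sends the last term to $0$. Hence $\limsup_n|\int f_n\,d\rho-\int f\,d\rho|\le 3\eta$, and since $\eta$ is arbitrary the lemma follows.

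The only step that is not routine bookkeeping is Step 1. The hypothesis does not assert that the limit $f$ is integrable, so this has to be deduced by passing to an a.e.-convergent subsequence and applying Fatou's lemma. Everything else is the triangle inequality combined with the choice of the truncation level $C$ before letting $n\to\infty$.
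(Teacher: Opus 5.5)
Your proof is correct. The paper does not prove this lemma itself: it cites Theorem 3.5 of Billingsley. That theorem assumes only that $f_n\to f$ in distribution and argues at the level of laws. Integrability of the limit comes from the Fatou-type bound $\int|f|\,d\rho\le\liminf_n\int|f_n|\,d\rho$ for weakly convergent sequences, followed by a truncation/tail estimate.

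You instead use the stronger mode of convergence the lemma actually assumes. Integrability of $f$ comes from an a.e.-convergent subsequence and the ordinary Fatou lemma. The truncated middle term is handled by the $1$-Lipschitz clamp $\varphi_C$ and the split on $\{|f_n-f|>\eta\}$. This gives a short, self-contained argument and shows the $L^2$ hypothesis is superfluous. The cost is the generality of the cited theorem, which the paper never needs.

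In fact, in the lemma's only application, the proof of Lemma~\ref{lemma:maps compact}, one has $0\le G_k\le|\Omega|^2$ $\rho$-a.e. and limit $0$. Choosing $C=|\Omega|^2$ makes your first and third terms vanish. Your Step~3 alone (bounded convergence in probability) then already gives $\int G_k\,d\rho\to 0$.
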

\noindent\textbf{Proof of Lemma \ref{lemma:maps compact}}:
Suppose that $\{\mu_k\}_{k=1}^\infty \subset \mathcal{P}_2(\Omega)$. Since $\Omega$ is compact, $\mathcal{P}_2(\Omega)$ is weakly compact, and hence there exists a subsequence $\{\mu_k'\}_{k=1}^\infty$ that weakly converges to some $\mu^*$. Since $\rho$ is absolutely continuous, Theorem \ref{thm:brenier} implies that $T_{\rho\rightarrow \mu'_k}$ exists for all $k\in \mathbb{N}$, and that $T_{\rho\rightarrow \mu^*}$ exists. By Lemma \ref{lemma:maps converge in prob}, we have convergence in $\rho$-probability of $T_{\rho\rightarrow \mu_k'}$ to $T_{\rho\rightarrow \mu^*}.$ Equivalently, we have convergence in $\rho$-probability of the sequence $G_k(x):=\|T_{\rho\rightarrow \mu_k'}(x)-T_{\rho\rightarrow \mu^*}(x)\|^2$ to $0$. We claim $\{G_k\}_{k=1}^\infty$ is uniformly integrable. This follows from the fact that $G_k(x)=\|T_{\rho\rightarrow \mu_k'}(x)-T_{\rho\rightarrow \mu^*}(x)\|^2\le |\Omega|^2$, $\rho$-almost everywhere, and we conclude:
\begin{align*}
\sup_{k\in \mathbb{N}}\int G_k(x)\mathds{1}_{G_k>|\Omega|^2}(x) d\rho(x)=0,
\end{align*}
which implies uniform integrability. Hence we may apply Lemma \ref{lemma:UI L2} to conclude that $\int G_kd\rho\rightarrow 0$, and hence $T_{\rho\rightarrow \mu_k'}$ converges to $T_{\rho\rightarrow \mu^*}$ in $L^2(\rho)$. \qed

\subsection{Proof of Lemma \ref{lemma:explicit rates}}

\begin{proof}
For convenience, let $C_4=1$, as we can recover the full result by scaling. For $p> 2$, we have, for any $\delta>0$:
\begin{align*}
\mathcal{N}& \leq C_1\delta + \frac{C_2}{\sqrt{n}}\int_\delta^{C_3} \tau^{-\frac{p}{2}}d\tau\\& =C_1\delta-\frac{C_2}{\sqrt{n}}\left[\frac{2}{p-2}\tau^{-\frac{(p-2)}{2}}\right]_{\delta}^{C_3}\\&= C_1\delta -\frac{C_2}{\sqrt{n}}\left(\frac{2}{p-2}C_3^{\frac{-(p-2)}{2}}-\frac{2}{p-2}\delta^{\frac{-(p-2)}{2}}\right)\\& \leq  C_1\delta +\frac{2C_2}{\sqrt{n}(p-2)}\delta^{\frac{-(p-2)}{2}}.
\end{align*}
Optimizing over $\delta$, we obtain:
\begin{equation*}
\delta^*=\left(\frac{C_2^2}{C_1^2n}\right)^{1/p}.
\end{equation*}
Substituting this into our bound, we have:
\begin{align*}
\mathcal{N}& \leq \frac{p}{p-2}C_1^{1-2/p}C_2^{2/p}n^{-1/p}.
\end{align*}

Similarly, if $p=2$ we have, for any $\delta>0$:
\begin{align*}
\mathcal{N}&\leq C_1\delta+\frac{C_2}{\sqrt{n}}\int_\delta^{C_3}\tau^{-1}d\tau\\&=C_1\delta+\frac{C_2}{\sqrt{n}}\left[\log(\tau)\right]_{\delta}^{C_3}\\&=C_1\delta+\frac{C_2}{\sqrt{n}}\left(\log(C_3)-\log(\delta)\right).
\end{align*}
Optimizing over $\delta$, we have:
\begin{align*}
\delta^*=\frac{C_2}{C_1\sqrt{n}}.
\end{align*}
Hence we have:
\begin{align*}
\mathcal{N}\leq \frac{C_2\log(n)}{2\sqrt{n}}+o\left(\frac{\log(n)}{\sqrt{n}}\right)
\end{align*}
which is asymptotically dominated by $\frac{C_2\log(n)}{2\sqrt{n}}.$

Finally, if $p<2$, we have:
\begin{align*}
\mathcal{N} &\leq C_1\delta +\frac{C_2}{\sqrt{n}}\int_\delta^{C_3}\tau^{-p/2}d\tau\\&=C_1\delta+\frac{C_2}{(1-\frac{p}{2})\sqrt{n}}\left[\tau^{1-p/2}\right]_\delta^{C_3}\\&=C_1\delta+\frac{C_2}{(1-\frac{p}{2})\sqrt{n}}\left[C_3^{1-p/2}-\delta^{1-p/2}\right].
\end{align*}
Optimizing over $\delta$, we have:
\begin{align*}
\delta^*&=\left(\frac{C_2^2}{C_1^2n}\right)^{1/p}.
\end{align*}
Hence we have:
\begin{align*}
\mathcal{N} \leq O\left(n^{-1/p}\right)+\frac{2C_2C_3^{1-p/2}}{(1-p/2)\sqrt{n}}
\end{align*}
which is asymptotically dominated by $\frac{C_2C_3^{1-p/2}}{(1-p/2)\sqrt{n}}$.
\end{proof}


\end{document}